\newtheorem{theorem}{Theorem}
\newtheorem{lemma}[theorem]{Lemma}
\newtheorem{proposition}[theorem]{Proposition}
\newtheorem{corollary}[theorem]{Corollary}
\theoremstyle{definition}
\newtheorem{definition}[theorem]{Definition}
\newtheorem{remark}[theorem]{Remark}
\titlespacing{\paragraph}{0pt}{0ex}{1ex}
\DeclareMathOperator*{\argmax}{argmax}
\algnewcommand{\LineComment}[1]{\State \(\triangleright\) #1}
\newcommand{\cbias}{C_{\text{\normalfont bias}}}
\newcommand{\cgrad}{C_{\text{\normalfont grad}}}
\newcommand{\defeq}{\vcentcolon=}
\newcommand{\dist}{\text{\normalfont dist}}
\newcommand{\net}{\mathcal{N}}
\newcommand{\nin}{n_0}
\newcommand{\var}{\mathrm{Var}}
\newcommand{\vol}{\operatorname{vol}}
\newcommand{\conv}{\operatorname{conv}}
\date{\today}
\title{On the Expected Complexity of Maxout Networks} 
\author{Hanna Tseran\\
Max Planck Institute for Mathematics in the Sciences
\\
hanna.tseran@mis.mpg.de}
\author{
  Hanna Tseran\\
  Max Planck Institute for Mathematics in the Sciences\\
  04103 Leipzig, Germany\\
  \texttt{hanna.tseran@mis.mpg.de}\\
   \And
   Guido Mont\'ufar \\
   Department of Mathematics and Department of Statistics, UCLA\\
   Los Angeles, CA 90095, USA; \\
   Max Planck Institute for Mathematics in the Sciences\\
   04103 Leipzig, Germany \\
   \texttt{montufar@math.ucla.edu} \\
}
\begin{document}

\maketitle

\begin{abstract}
Learning with neural networks relies on the complexity of the representable functions, but more importantly, the particular assignment of typical parameters to functions of different complexity. Taking the number of activation regions as a complexity measure, recent works have shown that the practical complexity of deep ReLU networks is often far from the theoretical maximum. In this work, we show that this phenomenon also occurs in networks with maxout (multi-argument) activation functions and when considering the decision boundaries in classification tasks. We also show that the parameter space has a multitude of full-dimensional regions with widely different complexity, and obtain nontrivial lower bounds on the expected complexity. Finally, we investigate different parameter initialization procedures and show that they can increase the speed of convergence in training. 
\end{abstract}


\section{Introduction}

We are interested in the functions parametrized by artificial feedforward neural networks with maxout units. Maxout units compute parametric affine functions followed by a fixed multi-argument activation function of the form $(s_1,\ldots, s_K)\mapsto \max\{s_1,\ldots, s_K\}$ and can be regarded as a natural generalization of ReLUs, which have a single-argument activation function $s\mapsto \max\{0,s\}$.
For any choice of parameters, these networks subdivide their input space into activation regions where different pre-activation features attain the maximum and the computed function is (affine) linear.
We are concerned with the
expected number of activation regions and their volume given probability distributions of parameters, as well as corresponding properties for the decision boundaries in classification tasks. 
We show that different architectures can attain very different numbers of regions with positive probability, but for
parameter distributions for which the conditional densities of bias values and the expected gradients of activation values are bounded, the expected number of regions is at most polynomial in the rank $K$ and the total number of units.

\paragraph{Activation regions of neural networks} 
For neural networks with piecewise linear activation functions, the number of activation regions serves as a complexity measure and summary description which has proven useful in the investigation of approximation errors, Lipschitz constants, speed of convergence, implicit biases of parameter optimization, and robustness against adversarial attacks.
In particular, \citet{pascanu2013number,NIPS2014_5422,telgarsky2015representation,pmlr-v49-telgarsky16} obtained depth separation results showing that deep networks can represent functions with many more linear regions than any of the functions that can be represented by shallow networks with the same number of units or parameters. This implies that certain tasks require enormous shallow networks but can be solved with small deep networks.
The geometry of the boundaries between linear regions has been used to study function-preserving transformations of the network weights \citep{phuong2019functional, serra2020lossless}
and robustness \citep{pmlr-v89-croce19a, lee2019towards}. 
\citet{steinwart2019sober} demonstrated empirically that the distribution of regions at initialization can be related to the speed of convergence of gradient descent, and \citet{NEURIPS2019_1f6419b1,86441} related the density of breakpoints at initialization to the curvature of the solutions after training.
The properties of linear regions in relation to training have been recently studied by \citet{Zhang2020Empirical}.
The number of regions has also been utilized to study the eigenvalues of the neural tangent kernel and Lipschitz constants \citep{nguyen2020tight}.

\paragraph{Maximum number of regions}
Especially the maximum number of linear regions has been studied intensively. 
In particular, \citet{montufar2017notes,serra2018bounding} improved the upper bounds from \citet{NIPS2014_5422} by accounting for output dimension bottlenecks across layers.
\citet{hinz2019framework} introduced a histogram framework for a fine grained analysis of such dimensions in ReLU networks. Based on this, \citet{xie2020general, Hinz2021UsingAH} obtained still tighter upper bounds for ReLU networks.
The maximum number of regions has been studied not only for fully connected networks, but also convolutional neural networks \citep{xiong2020number}, graph neural networks (GNNs) and message passing simplicial networks (MPSN) \citep{bodnar2021weisfeiler}. 

\paragraph{Expected number of regions}
Although the maximum possible number of regions gives useful complexity bounds and insights into different architectures, in practice one may be more interested in the expected behavior for typical choices of the parameters. 
The first results on the expected number of regions were obtained by \citet{pmlr-v97-hanin19a,NIPS2019_8328} for the case of ReLU networks or single-argument piecewise linear activations. 
They show that if one has a distribution of parameters such that the conditional densities of bias values are bounded and the expected gradients of activation values are bounded, then the expected number of linear regions can be much smaller than the maximum theoretically possible number. Moreover, they obtain bounds for the expected number and volume of lower dimensional linear pieces of the represented functions. 
These results do not directly apply to the case of maxout units, but we will adapt the proofs to obtain corresponding results. 

\paragraph{Regions of maxout networks}
Most previous works focus on ReLUs or single-argument activation functions.
In this case, the linear regions of individual layers are described by hyperplane arrangements, which have been investigated since the 19th century \citep{Steiner1826,10.2307/2303424,zaslavsky1975facing}. 
Hence, the main challenge in these works is the description of compositions of several layers.
In contrast, the linear regions of maxout layers are described by complex arrangements that are not so well understood yet. 
The study of maxout networks poses significant challenges already at the level of individual layers and in fact single units.
For maxout networks, the maximum possible number of regions has been studied by \citet{pascanu2013number,NIPS2014_5422,serra2018bounding}. 
Recently, \citet{sharp2021} obtained counting formulas and sharp (asymptotic) upper bounds for the number of regions of shallow (deep) maxout networks. 
However, their focus was on the maximum possible value, and not on the generic behavior, which we investigate here. 

\paragraph{Related notions}
The activation regions of neural networks can be approached from several perspectives. 
In particular, the functions represented by networks with piecewise linear activations correspond to so-called tropical rational functions and can be studied from the perspective of tropical geometry \citep{zhang2018tropical, charisopoulos2018tropical}.
In the case of piecewise affine convex nonlinearities, these can be studied in terms of so-called max-affine splines \citep{NIPS2019_9712}.
A related but complementary notion of network expressivity is trajectory length, proposed by \citet{raghu2017expressive}, which measures transitions between activation patterns along one-dimensional paths on the input space, which also leads to depth separation results. 
Recent work \citep{hanin2021deep} shows that ReLU networks preserve expected length. 

\paragraph{Contributions} 
We obtain the following results for maxout networks.
\begin{itemize}[leftmargin=*]
\itemsep.01em 
    \item There are widely different numbers of linear regions that are attained with positive probability over the parameters (Theorem~\ref{thm:positive_measure}). 
    There is a non-trivial lower bound on the number of linear regions that holds for almost every choice of the parameters (Theorem~\ref{thm:lower_bound}). 
    These results advance the maximum complexity analysis of \citet{sharp2021} from the perspective of generic parameters. 
    \item For common parameter distributions, the expected number of activation regions is polynomial in the number of units (Theorem~\ref{th:main_result}). 
    Moreover, the expected volume of activation regions of different dimensions is polynomial in the number of units (Theorem~\ref{th:semi_main_upper_bound}). 
    These results correspond to maxout versions of results from \citet{NIPS2019_8328} and \citet{pmlr-v97-hanin19a}. 
    \item For multi-class classifiers, we obtain an upper bound on the expected number of linear pieces (Theorem~\ref{th:decision_boundary}) and the expected volume (Theorem~\ref{th:decision_boundary_volume}) of the decision boundary, along with a lower bound on the expected distance between input points and decision boundaries (Corollary~\ref{cor:dist_to_db}). 
    \item We provide an algorithm and implementation for counting the number of linear regions of maxout networks (Algorithm~\ref{algorith:exact_count}).  
    \item We present parameter initialization procedures for maxout networks maximizing the number of regions or normalizing the mean activations across layers (similar to \citealt{glorot2010understanding,he2015delving}), and observe experimentally that these can lead to faster convergence of training. 
\end{itemize}

\section{Activation regions of maxout networks} 
\label{section:definitions}

We consider feedforward neural networks with $n_0$ inputs, $L$ hidden layers of widths $n_1,\ldots, n_L$, and no skip connections, which implement functions of the form $f = \psi \circ \phi_{L} \circ \cdots \circ \phi_1$. 
The $l$-th hidden layer implements a function $\phi_l\colon \mathbb{R}^{n_{l-1}}\to\mathbb{R}^{n_l}$ with output coordinates, i.e.\ units, given by trainable affine functions followed by a fixed real-valued activation function, and $\psi\colon\mathbb{R}^{n_L}\to\mathbb{R}^{n_{L+1}}$ is a linear output layer. 
We denote the total number of hidden units by $N=n_1+\cdots+ n_L$, and index them by $z\in [N]:=\{1,\ldots, N\}$. The collection of all trainable parameters is denoted by $\theta$. 

We consider networks with maxout units, introduced by \citet{goodfellow2013maxout}.
A rank-$K$ maxout unit with $n$ inputs implements a function $\mathbb{R}^n\to\mathbb{R}$; $x\mapsto \max_{k \in [K]} \{ w_{k} \cdot x + b_{k}\}$, where $w_{k} \in \mathbb{R}^{n}$ and $b_{k} \in \mathbb{R}$, $k\in [K]$, are trainable weights and biases. 
The activation function $(s_1,\ldots, s_K)\mapsto \max\{s_1,\ldots, s_K\}$ can be regarded as a multi-argument generalization of the rectified linear unit (ReLU) activation function $s\mapsto \max\{0,s\}$. 
The $K$ arguments of the maximum are called the pre-activation features of the maxout unit. 
For unit $z$ in a maxout network, we denote $\zeta_{z,k}(x;\theta)$ its $k$-th pre-activation feature, considered as a function of the input to the network. 

For any choice of the trainable parameters, the function represented by a maxout network is piecewise linear, meaning it splits the input space into countably many regions over each of which it is linear. 

\begin{definition}[Linear regions]
    Let $f\colon \mathbb{R}^{\nin}\to\mathbb{R}$ be a piecewise linear function. 
    A linear region of $f$ is a maximal connected subset of $\mathbb{R}^{\nin}$ on which $f$ has a constant gradient. 
\end{definition}

We will relate the linear regions of the represented functions to activation regions defined next. 

\begin{definition}[Activation patterns]
An activation pattern of a network with $N$ rank-$K$ maxout units is an assignment of a non-empty set $J_z\subseteq[K]$ to each unit $z\in[N]$. 
An activation pattern $J=(J_z)_{z\in[N]}$ with $\sum_{z\in[N]} (|J_z| - 1) = r$ is called an $r$-partial activation pattern. 
The set of all possible activation patterns is denoted $\mathcal{P}$, and the set of $r$-partial activation patterns is denoted $\mathcal{P}_r$. 
An activation sub-pattern is a pattern where we disregard all $J_z$ with $|J_z|=1$. The set of all possible activation sub-patterns is denoted $\mathcal{S}$, and the set of $r$-partial activation sub-patterns is denoted $\mathcal{S}_r$. 
\end{definition}

\begin{definition}[Activation regions]
Consider a network $\net$ with $N$ maxout units. For any parameter value $\theta$ and any activation pattern $J$, the corresponding activation region is 
    \begin{align*}
        \mathcal{R}(J, \theta) \defeq \big\{ x \in \mathbb{R}^{\nin} \ \big| \ \argmax\limits_{k \in [K]} \zeta_{z, k}(x; \theta)  = J_z  \ \text{ \normalfont for each $z\in[N]$} \big\}. 
    \end{align*}
For any $r \in \{0, \dots, \nin\}$ we denote the union of $r$-partial activation regions by
     \begin{align*}
        \mathcal{X}_{\net, r}(\theta) \defeq 
        \bigcup_{J\in \mathcal{P}_r} \mathcal{R}(J; \theta). 
    \end{align*} 
\end{definition}  

By these definitions, we have a decomposition of the input space as a disjoint union of activation regions, $\mathbb{R}^{\nin} = \sqcup_{J\in \mathcal{P}} \mathcal{R}(J,\theta)$. See Figure~\ref{fig:1}. 
Next we observe that for almost every choice of $\theta$, $r$-partial activation regions are either empty or relatively open convex polyhedra of co-dimension $r$. 
In particular, for almost every choice of the parameters, if $r$ is larger than $n_0$, the $r$-partial activation regions are empty. Therefore, in our discussion we only need to consider $r$ up to $n_0$. 

\begin{restatable}[$r$-partial activation regions are relatively open convex polyhedra]{lemma}{convactreglemma}
    \label{lem:conv_act_reg}
    Consider a maxout network $\net$. Let $r \in \{0, \dots, \nin \}$ and $J\in\mathcal{P}_r$.
    Then for any $\theta$, $\mathcal{R}(J,\theta)$ is a relatively open convex polyhedron in $\mathbb{R}^{\nin}$. 
    For almost every $\theta$, it is either empty or has co-dimension $r$. 
\end{restatable}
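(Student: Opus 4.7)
The plan is to first express the region $\mathcal{R}(J,\theta)$ as the solution set of a system of affine equalities and strict inequalities in $x$, then read off the polyhedral structure, and finally establish the generic codimension by a standard polynomial-vanishing argument. The strategy mirrors the approach taken for ReLU networks in \citet{pmlr-v97-hanin19a}, adapted to accommodate the multi-argument maximum.

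First I would define, for each fixed $\theta$ and each $J \in \mathcal{P}_r$, the \emph{pseudo-preactivations} $\tilde\zeta_{z,k}^J(x)$ by feeding $x$ forward through the network while, at every unit $z'$ in a layer strictly preceding that of $z$, using the affine branch $w_{z',k_{z'}}\cdot(\cdot)+b_{z',k_{z'}}$ for some fixed choice $k_{z'}\in J_{z'}$. By induction on depth, each $\tilde\zeta_{z,k}^J$ is an affine function of $x$. On $\mathcal{R}(J,\theta)$ the branches indexed by $J_{z'}$ all attain the maximum and are therefore numerically equal, so the choice of representative $k_{z'}$ is irrelevant and $\tilde\zeta_{z,k}^J$ coincides with the true piecewise-affine $\zeta_{z,k}(\,\cdot\,;\theta)$.

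Next I would verify the identity
\[
\mathcal{R}(J,\theta)=\bigcap_{z\in[N]}\Bigl\{x\in\mathbb{R}^{n_0}:\argmax_{k\in[K]}\tilde\zeta_{z,k}^J(x)=J_z\Bigr\}.
\]
The inclusion ``$\subseteq$'' is immediate from the first paragraph. For ``$\supseteq$'', an induction on $z$ ordered by layer depth shows that the equality conditions on the pseudo-preactivations force the true layer outputs at $x$ to coincide with the assumed affine branches, so the true and pseudo preactivations agree at $x$ and hence $x \in \mathcal{R}(J,\theta)$. Each individual set on the right-hand side is cut out by $|J_z|-1$ affine equalities together with $K-|J_z|$ strict affine inequalities, hence is a relatively open convex polyhedron in $\mathbb{R}^{n_0}$; the intersection inherits the same structure, giving the first assertion.

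For codimension, the total number of affine equalities is $\sum_z(|J_z|-1)=r$, so $\dim\mathcal{R}(J,\theta)\ge n_0-r$ whenever the region is non-empty, with equality precisely when the gradient vectors $\nabla_x(\tilde\zeta_{z,k}^J-\tilde\zeta_{z,k'}^J)$ associated to these equalities span an $r$-dimensional subspace of $\mathbb{R}^{n_0}$. These gradients are polynomial functions of $\theta$, so every $r\times r$ minor of the coefficient matrix is a polynomial in $\theta$. It therefore suffices to exhibit one parameter value at which some such minor is nonzero, since the zero set of a non-identically-zero polynomial has Lebesgue measure zero; the claim then follows by taking a union over the finitely many patterns in $\bigcup_{r\le n_0}\mathcal{P}_r$. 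A suitable $\theta$ can be built layer-by-layer, choosing the weights of each layer as a generic perturbation of the previously fixed ones so that the freshly contributed gradient directions remain linearly independent of those already accumulated. The hard part will be precisely this final genericity step: the coefficient vectors contributed by a deeper layer depend nonlinearly on all earlier weights, and one must check that the composed affine map from $\mathbb{R}^{n_0}$ to the relevant intermediate feature space has sufficient image dimension for generic parameters so that perturbing the new layer's weights actually produces new independent directions. This can be handled by an inductive transversality argument, but it is the technical crux of the lemma.
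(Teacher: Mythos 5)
Your proposal follows essentially the same route as the paper: write $\mathcal{R}(J,\theta)$ as the solution set of $r$ affine equalities and finitely many strict affine inequalities in $x$ (using the pattern-dependent affine pre-activations, which your pseudo-preactivations make explicit), conclude it is a relatively open convex polyhedron, and obtain co-dimension $r$ generically because linear dependence of the $r$ equality normals is a polynomial condition on $\theta$ whose zero set is null. The step you flag as the ``technical crux'' is exactly where the paper is also brief: rather than constructing a witness $\theta$ layer by layer, it notes that the Gram determinant $\det(V^\top V)$ of the vectors $v_{z,j}$ is a polynomial in the weights (sums of $\pm1$-coefficient monomials over computation paths, as in the proof of Lemma~\ref{lem:act_vs_lin}), hence not identically zero, so its vanishing locus has measure zero.
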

The proof of Lemma~\ref{lem:conv_act_reg} is given in Appendix \ref{app:intro_proofs}. 
Next we show that for almost every choice of $\theta$, 
$0$-partial activation regions and linear regions correspond to each other.

\begin{restatable}[Activation regions vs linear regions]{lemma}{actvslinlemma}
    \label{lem:act_vs_lin}
    Consider a maxout network $\net$. 
    The set of parameter values $\theta$ for which the represented function has the same gradient on two distinct activation regions is a null set.
    In particular, for almost every $\theta$,
    linear regions and activation regions correspond to each other.
\end{restatable}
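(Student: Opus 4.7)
The plan is to exhibit the bad parameter set as a finite union of measure-zero algebraic subsets, one per ordered pair of distinct activation patterns. Because each unit admits at most $2^{K}-1$ choices of non-empty $J_{z}\subseteq[K]$ and there are $N$ units, the set $\mathcal{P}$ is finite. Hence it suffices to prove, for each fixed $J\neq J'$ in $\mathcal{P}$, that
\[
B_{J,J'}\defeq\bigl\{\theta : f|_{\mathcal{R}(J,\theta)} \text{ and } f|_{\mathcal{R}(J',\theta)} \text{ are affine with equal gradient}\bigr\}
\]
is Lebesgue null; the desired exceptional set is then the finite union over all such pairs.

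For a fixed pattern $J$, the function $f$ restricted to $\mathcal{R}(J,\theta)$ coincides with the composition obtained by replacing each maxout unit by a fixed representative of its tied pre-activation features: for each unit $z$ pick any $k_{z}\in J_{z}$ and chain the affine maps $x\mapsto \zeta_{z,k_{z}}(x;\theta)$ through the layers. This composition is affine in $x$, with gradient given by the polynomial-in-$\theta$ matrix product $V\cdot M^{(L)}(J,\theta)\cdots M^{(1)}(J,\theta)$, where $V$ is the output layer's linear map and $M^{(l)}(J,\theta)$ is the matrix whose row for unit $z$ in layer $l$ is the selected weight vector $w_{z,k_{z}}$. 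Call the resulting gradient $g_{J}(\theta)\in\mathbb{R}^{\nin}$. Then $B_{J,J'}\subseteq\{\theta:g_{J}(\theta)=g_{J'}(\theta)\}$, and the right-hand side is a real-algebraic subset of parameter space.

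The crux is to show that $g_{J}-g_{J'}$ is not identically zero as a polynomial in $\theta$, for then its vanishing locus is a proper real-algebraic subvariety and hence Lebesgue null. Locate a unit $z^{*}$ where the selections disagree, let $l^{*}$ be the deepest such layer, and pick $k\in J_{z^{*}}\setminus J'_{z^{*}}$ and $k'\in J'_{z^{*}}$. In the case of a single-unit disagreement, $M^{(l^{*})}(J,\theta)-M^{(l^{*})}(J',\theta)$ is the rank-one matrix $e_{z^{*}}(w_{z^{*},k}-w_{z^{*},k'})^{\top}$, and so $g_{J}-g_{J'}$ factors as the product of a scalar depending only on $V$ and the layers above $l^{*}$ through the $z^{*}$-th column, and a row vector depending on $w_{z^{*},k}-w_{z^{*},k'}$ and the layers below $l^{*}$. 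Both factors are polynomials that are non-vanishing at generic parameter values, so the product is not the zero polynomial. The case of multiple simultaneous disagreements is handled by the same scheme applied to the deepest layer of disagreement, absorbing the shallow-layer differences into the "lower" factor, and invoking genericity to avoid accidental cancellation. This is the main technical obstacle, but it reduces to a standard genericity argument on polynomial identities in disjoint groups of parameters.

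Taking the union of the sets $B_{J,J'}$ over the finitely many ordered pairs $J\neq J'$ yields a Lebesgue null set. For any $\theta$ outside this union, distinct activation regions carry distinct gradients of $f$; combined with Lemma~\ref{lem:conv_act_reg}, which identifies the $0$-partial activation regions with the full-dimensional cells of the input-space partition, this produces a bijective correspondence between $0$-partial activation regions and linear regions of $f$.
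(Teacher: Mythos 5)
Your overall strategy is the same as the paper's: reduce to finitely many ordered pairs of activation patterns, observe that the gradient on each region is a polynomial function of $\theta$, show the difference of the two gradient polynomials is not identically zero, and conclude that the bad set is a finite union of proper algebraic (hence Lebesgue-null) subsets. The problem is that the one step you yourself identify as ``the main technical obstacle'' --- non-vanishing of $g_J-g_{J'}$ when the patterns disagree at several units --- is not actually proved. Your rank-one factorization handles a single disagreement, but with multiple disagreements the difference of the matrix products is a telescoping sum of such rank-one contributions across layers, and these terms could in principle cancel. Appealing to ``a standard genericity argument on polynomial identities in disjoint groups of parameters'' does not close this, and the premise is not even accurate: the matrices $M^{(l)}(J,\theta)$ and $M^{(l)}(J',\theta)$ share all rows corresponding to units where the patterns agree, so the two products are built from heavily overlapping, not disjoint, sets of variables. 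Asserting that a polynomial is generically non-vanishing is exactly the claim that it is not the zero polynomial, so this is circular as written.

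The gap is closed by the paper's device, which you should adopt: expand each coordinate of the gradient as a sum over computation paths through the network, $\partial_{x_i}\net(x,\theta)=\sum_{\gamma}\prod_{l}w_{\gamma}^{(l)}$, where each path selects one unit and one pre-activation feature per layer. Distinct paths yield distinct monomials (products of distinct weight variables), each appearing with coefficient exactly $1$, so in the difference $g_J-g_{J'}$ every monomial has coefficient in $\{-1,0,1\}$ and no cancellation between distinct monomials is possible. Since $J\neq J'$ forces the two path collections to differ (any path through a disagreeing unit uses a different feature weight), the symmetric difference is non-empty and the polynomial is not identically zero. With that substitution your argument matches the paper's proof; the final reduction to the correspondence with linear regions via Lemma~\ref{lem:conv_act_reg} is fine.
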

The proof of Lemma~\ref{lem:act_vs_lin} is given in Appendix \ref{app:intro_proofs}. 
We note that for specific parameters, linear regions can be the union of several activation regions and can be non-convex. 
Such situation is more common in ReLU networks, whose units can more readily output zero, thereby hiding the activation pattern of the units in the previous layers. 

To summarize the above observations, for almost every $\theta$, the $0$-partial activation regions are $\nin$-dimensional open convex polyhedra which agree with the linear regions of the represented function, and for $r=1,\ldots,\nin$ the $r$-partial activation regions are co-dimension-$r$ polyhedral pieces of the boundary between linear regions.
Next we investigate the number non-empty $r$-partial activation regions and their volume within given subsets of the input space. 
We are concerned with their generic numbers, where we use ``generic'' in the standard sense, to refer to a positive Lebesgue measure event.

\section{Numbers of regions attained with positive probability}
\label{sec:numbers}

We start with a simple upper bound. 
\begin{restatable}[Simple upper bound on the number of $r$-partial activation patterns]{lemma}{trivialupperboundlemma}
    \label{lem:trivial_upper_bound}
    Let $r\in\mathbb{N}_0$. The number of $r$-partial activation patterns and sub-patterns in a network with a total of $N$ rank-$K$ maxout units are upper bounded by $|\mathcal{P}_r| \leq \binom{r K}{2r} \binom{N}{r} K^{N - r}$ and $|\mathcal{S}_r|\leq \binom{r K}{2r} \binom{N}{r}$ respectively. 
\end{restatable}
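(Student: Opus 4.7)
The plan is to reduce the count to a choice of a small ``enhanced'' subset of units plus a bound on compatible subset-assignments via a subset-of-a-grid encoding. The key structural observation is that since every term $|J_z|-1$ in $\sum_{z\in[N]}(|J_z|-1)=r$ is a non-negative integer, the set of enhanced units $Z \defeq \{z \in [N]: |J_z|\geq 2\}$ has cardinality at most $r$. Everything else is bookkeeping on top of this fact.

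Given this, I would overcount $|\mathcal{P}_r|$ by first choosing a superset $A \subseteq [N]$ of $Z$ with $|A|=r$, which is possible because $|Z|\leq r$; this contributes a factor $\binom{N}{r}$ (assuming $r \leq N$, otherwise the bound is trivial). The $N-r$ units outside $A$ are then necessarily singletons, contributing $K^{N-r}$ choices for the element of $[K]$ assigned to each. For units inside $A$, the sum constraint reduces to $\sum_{z \in A}|J_z|=2r$, since units in $A\setminus Z$ each contribute $|J_z|=1$ and units in $Z$ contribute $|J_z|\geq 2$ with total excess exactly $r$. I would then encode each tuple $(J_z)_{z\in A}$ of non-empty subsets of $[K]$ with this sum as a subset $S\subseteq A\times[K]$ of size $2r$ via $(z,k)\in S \Leftrightarrow k \in J_z$, and note that dropping the requirement that every row be non-empty yields the bound $\binom{|A|K}{2r}=\binom{rK}{2r}$. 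Multiplying the three factors gives $|\mathcal{P}_r|\leq \binom{rK}{2r}\binom{N}{r}K^{N-r}$.

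The sub-pattern bound follows from the same argument with the singleton assignments on $[N]\setminus A$ disregarded, which drops the $K^{N-r}$ factor and yields $|\mathcal{S}_r|\leq \binom{rK}{2r}\binom{N}{r}$. The main source of looseness --- not really an obstacle, but worth flagging --- is the double-counting when $|Z| < r$ (a single pattern or sub-pattern is counted once per admissible $A$) and the relaxation from $|J_z|\geq 2$ to $|J_z|\geq 1$ on $A$; both are legitimate in the direction of an upper bound and allow the final formula to remain in closed form rather than becoming a sum over the exact number of enhanced units, which is all that the later counting arguments require.
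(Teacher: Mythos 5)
Your proof is correct and follows essentially the same route as the paper's: choose a set of $r$ candidate non-singleton units ($\binom{N}{r}$), assign a single maximizing feature to the remaining $N-r$ units ($K^{N-r}$), and bound the assignments on the chosen units by relaxing the per-unit non-emptiness constraint to a choice of $2r$ features out of $rK$ ($\binom{rK}{2r}$), dropping the last factor for sub-patterns. Your subset-of-$A\times[K]$ encoding just makes the paper's "$r$ maximizing plus $r$ additional features" count slightly more explicit.
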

The upper bound has asymptotic order $O(N^r K^{N+r})$ in $K$ and $N$. 
The proof of Lemma~\ref{lem:trivial_upper_bound} is given in Appendix \ref{app:intro_proofs}, where we also provide an exact but unhandy counting formula.

By definition, the number of $r$-partial activation patterns is a trivial upper bound on the number of non-empty $r$-partial activation regions for any choice of parameters.
Depending on the network architecture, this bound may not be attainable for any choice of the parameters.
\citet[Theorems~3.7 and 3.12]{sharp2021} obtained bounds for the maximum number of linear regions. For a shallow network with $n_0$ inputs and a single layer of $n_1$ rank-$K$ maxout units it has order $\Theta((n_1 K)^{n_0})$ in $K$ and $n_1$, and for a deep network with $n_0$ inputs and $L$ layers of $n_1,\ldots, n_L$ rank-$K$ maxout units it has order $\Theta(\prod_{l=1}^L(n_l K)^{n_0})$ in $K$ and $n_1,\ldots, n_L$.
Hence the maximum number of non-empty activation regions can be very large, especially for deep networks. 

Intuitively, linear regions have a non-zero volume and cannot `disappear' under small perturbations of parameters.
This raises the question about which numbers of linear regions are attained with positive probability, i.e.\ over positive Lebesgue measure subsets of parameter values. Figure~\ref{fig:1} shows that the number of linear regions of a maxout network is a very intricate function of the parameter values. 

For a network with $n_0$ inputs and a single layer of $n_1$ ReLUs, the maximum number of linear regions is $\sum_{j=0}^{n_0}\binom{n_1}{j}$, and is attained for almost all parameter values. This is a consequence of the generic behavior of hyperplane arrangements \citep[see][]{10.2307/2303424,zaslavsky1975facing,NIPS2014_5422}. 
In contrast, shallow maxout networks can attain different numbers of linear regions with positive probability. The intuitive reason is that the nonlinear locus of maxout units is described not only by linear equations $\langle w_i, x\rangle + b_i = \langle w_j, x\rangle +b_j$ but also linear inequalities $\langle w_i, x\rangle + b_i \geq \langle w_k, x\rangle +b_k$. See Figure~\ref{fig:1} for an example. 
We obtain the following result. 

\begin{figure}
    \centering
\begin{tikzpicture}
\definecolor{reg11}{rgb}{0.5, 0.0, 0.0}
\definecolor{reg21}{rgb}{0.8, 0.0, 0.0}
\definecolor{reg22}{rgb}{0, 0.0, 0.8}
\definecolor{reg32}{rgb}{0, 0.0, 0.5}
\definecolor{reg33}{rgb}{0, .8, .8}
\node
(P) at (0,0) {\includegraphics[width = 4cm]{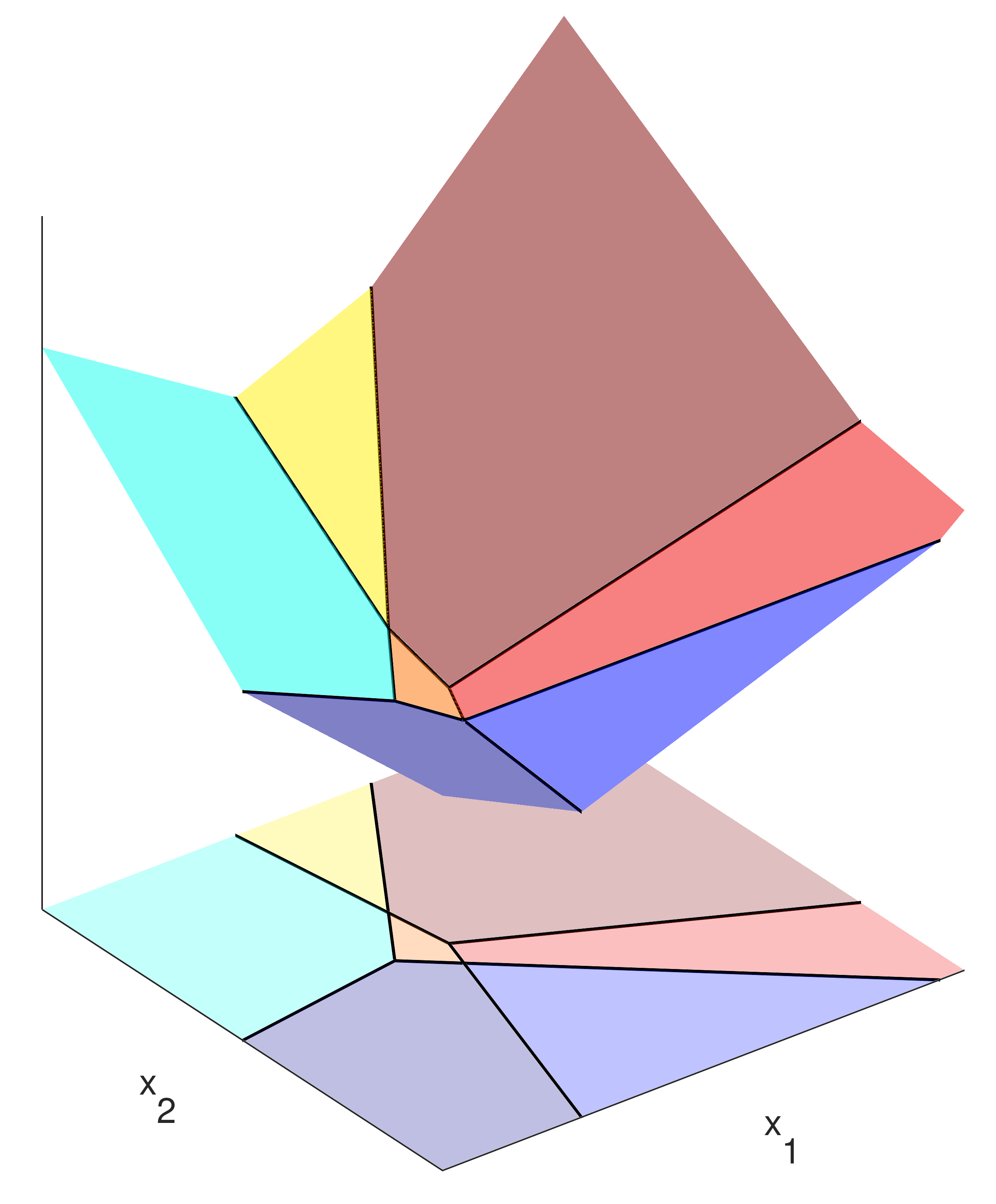}}; 

\node[fill=white, inner sep=1pt,opacity=1] at (0,2) {\small Activation regions}; 
\node at (1.25,-.9) {$\mathcal{X}_1$};  
\node at (.2,-1.1) {\scalebox{.5}{\textcolor{reg11}{$\mathcal{R}(1,1)$}}};  
\node at (.7,-1.7) {\scalebox{.5}{\textcolor{reg22}{$\mathcal{R}(2,2)$}}};  
\node at (-.25,-1.9) {\scalebox{.5}{\textcolor{reg32}{$\mathcal{R}(3,2)$}}};  
\node at (-1.1,-1.3) {\scalebox{.5}{\textcolor{reg33}{$\mathcal{R}(3,3)$}}};  

\node[fill=white] at (1.1,-2.2) {\small \textcolor{black}{$x_1$}};  
\node[fill=white] at (-1.4,-2) {\small \textcolor{black}{$x_2$}};  
\node[fill=white] at (-1.6,1.4) {\small \textcolor{white}{$f$}};  
 
\node (R) at (7,0){\includegraphics[width = 4cm]{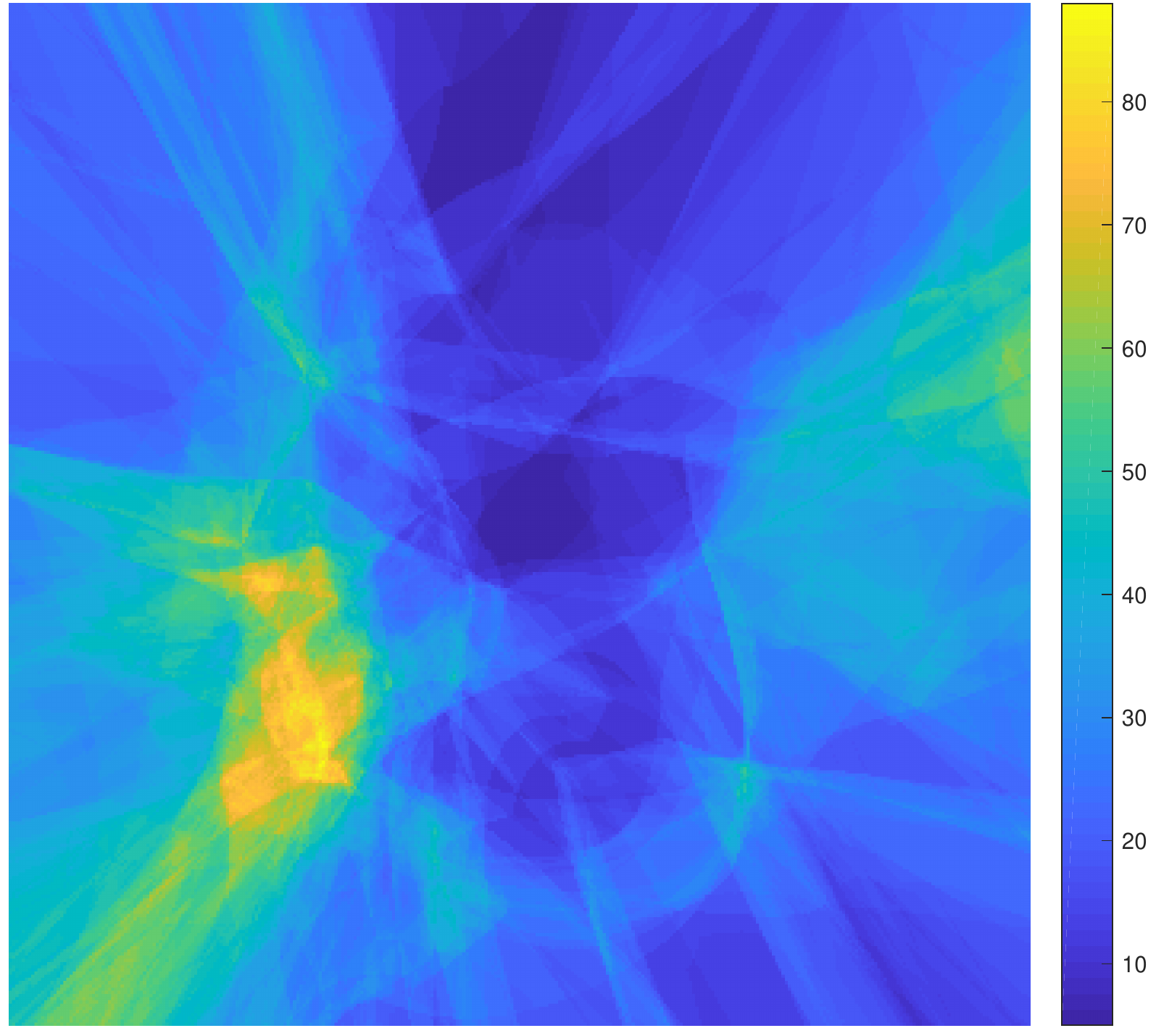}}; 
\node at (7,2) {\small Number of linear regions}; 
\node at (7,-2) {\small $\xi_1$};
\node at (4.75,0) {\rotatebox{90}{\small $\xi_2$}};

\draw[very thin] (4.2,1.2)--(5.5,1);
\draw[very thin] (4.2,-1.2)--(5.5,-1);
\node (F1) at (4,1.2) {\includegraphics[clip=true, trim=0cm 13.7cm 0cm .3cm, height=1.2cm]{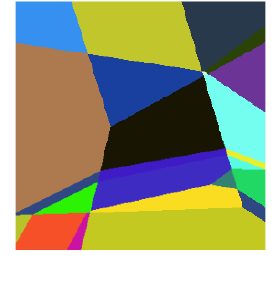}};
\node (F2) at (4,-1.2) {\includegraphics[clip=true, trim=0cm -.3cm 0cm 14.3cm, height=1.2cm]{images/regions_over_parameters/figure2Dslice_parameters_d_1_resolutioninput_320_resolutionpar_320_net_2_3_3_3_k_3_functions-trim.pdf}};

\end{tikzpicture}
    \caption{
        Left: Shown is a piecewise linear function $\mathbb{R}^2\to\mathbb{R}$ represented by a network with a layer of two rank-3 maxout units for a choice of the parameters. 
        The input space is subdivided into activation regions $\mathcal{R}(J;\theta)$ with linear regions separated by $\mathcal{X}_1(\theta)$. 
        Right: Shown is the number of linear regions of a 3 layer maxout network over a portion of the input space as a function of a 2D affine subspace of parameter values $\theta(\xi_1,\xi_2)$. 
        Shown are also two examples of the input-space subdivisions of functions represented by the network for different parameter values. 
        More details about this figure are given in Appendix~\ref{app:experiments}.  
        As the figure illustrates, the function taking parameters to number of regions is rather intricate. In this work we characterize values attained with positive probability and upper bound the expected value given a parameter distribution.
    }
    \label{fig:1}
\end{figure}

\begin{restatable}[Numbers of linear regions]{theorem}{theorempositivemeasure}
\label{thm:positive_measure}
\hspace{1em}
    \begin{itemize}[leftmargin=*]
    \item 
    Consider a rank-$K$ maxout unit with $n_0$ inputs. This corresponds to a network with an input layer of size $n_0$ and single maxout layer with a single maxout unit. 
    For each $1\leq k\leq K$, there is a set of parameter values for which the number of linear regions is $k$. For $\min\{K, n_0+1\}\leq k\leq K$, the corresponding set has positive measure, and else it is a null set. 
    \item 
    Consider a layer of $n_1$ rank-$K$ maxout units with $n_0$ inputs. This corresponds to a network with a single maxout layer, $L=1$, and $n_L=n_1$. 
    For each choice of $1\leq k_1,\ldots, k_{n_1} \leq K$, there are parameters for which the number of linear regions is $\sum_{j=0}^{n_0} \sum_{S \in {\binom{[n_1]}{j}}}\prod_{i\in S}(k_i-1)$.
    For $\min\{K,n_0+1\}\leq k_1,\ldots, k_{n_1}\leq K$, the corresponding set has positive measure. Here $S \in \binom{[n_1]}{j}$ means that $S$ is a subset of $[n_1]:=\{1,\ldots, n_1\}$ of cardinality $|S|=j$. 
    \item 
    Consider a network with $n_0$ inputs and $L$ layers of $n_1,\ldots, n_L$ rank-$K$ maxout units, $K\geq 2$, $\frac{n_l}{n_0}$ even. 
    Then, for each choice of $1\leq k_{li}\leq K$, $i=1,\ldots, n_0$, $l=1,\ldots, L$, there are parameters for which the number of linear regions is $\prod_{l=1}^{L} \prod_{i=1}^{n_0}(\frac{n_l}{n_0} (k_{li}-1)+1)$. 
    There is a positive measure subset of parameters for which the latter is the number of linear regions over $(0,1)^{n_0}$. 
    \end{itemize}
\end{restatable}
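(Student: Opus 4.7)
For Part 1, I would view the rank-$K$ maxout unit as the upper envelope of the $K$ affine functions $h_k(x)=w_k\cdot x+b_k$, so the number of linear regions equals the number of functions that are \emph{active} (attain the max on a full-dimensional set). The key geometric observation is that $h_i$ is dominated everywhere by $\max_{j\neq i}h_j$ if and only if $w_i\in\conv\{w_j : j\neq i\}$ and $b_i\le\sum_j\lambda_j b_j$ for some convex combination $w_i=\sum_j\lambda_j w_j$. Using this, for each $k\in[\min(K,\nin+1),K]$ I choose $k$ affine functions with weights in general position so that all are active (an open condition, since each $w_i$ is then an extreme point of $\conv\{w_1,\ldots,w_k\}$), then place the remaining $K-k$ weights strictly inside $\conv\{w_1,\ldots,w_k\}$ with biases small enough to be dominated; this gives positive measure because the $\nin$-dimensional hull has positive Lebesgue measure in $\mathbb{R}^{\nin}$. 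For $k<\min(K,\nin+1)$, the $K-k$ dominated weights must lie in the hull of $k$ active weights, which has dimension $\le k-1<\nin$, forcing measure-zero polynomial conditions on the parameters; existence for every $k$ (including the null-set values) then follows by allowing degenerate configurations such as coinciding weights for $k=1$.

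For Part 2, I would argue by induction on $n_1$, with Part 1 as the base case. For the inductive step, adding a generic maxout unit with $k_{n_1+1}$ active pieces to an existing arrangement produces $R_{n_1+1}=R_{n_1}+(k_{n_1+1}-1)\,S_{n_1}$ regions, where $S_{n_1}$ denotes the analogous region count of the $n_1$-unit arrangement restricted to a generic hyperplane of dimension $\nin-1$; this is a Zaslavsky-type recursion, valid because the new unit's nonlinear locus consists of $k_{n_1+1}-1$ bent hyperplane pieces that, restricted to each existing region, behave as independent generic hyperplanes. Combining this with the identity $e_j(x_1,\ldots,x_{n_1+1}) = e_j(x_1,\ldots,x_{n_1}) + x_{n_1+1}\,e_{j-1}(x_1,\ldots,x_{n_1})$ for elementary symmetric polynomials yields the claimed formula. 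Positive measure follows by choosing each unit's parameters sequentially so that unit $i$ attains its $k_i$ active pieces (open by Part 1) and so that the nonlinear loci of different units are mutually in general position (also an open condition).

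For Part 3, I would adapt the origami folding construction of \citet{NIPS2014_5422} from ReLUs to maxout units. For each coordinate $i\in[\nin]$ and each layer $l$, the $n_l/\nin$ units assigned to coordinate $i$ compute, through the subsequent layer's linear combination, a piecewise-linear zig-zag $\varphi_{l,i}\colon(0,1)\to(0,1)$ with $\frac{n_l}{\nin}(k_{li}-1)+1$ monotone pieces; this is achievable because each rank-$k_{li}$ maxout is a convex piecewise-linear function with $k_{li}-1$ breakpoints, and a signed sum of $n_l/\nin$ such convex functions---with alternating signs made possible by the evenness of $n_l/\nin$---realizes a zig-zag with exactly $\frac{n_l}{\nin}(k_{li}-1)$ interior breakpoints. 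The per-layer map $(0,1)^{\nin}\to(0,1)^{\nin}$ is then the coordinatewise product of these folds, and the composition across $L$ layers yields a map whose generic fibers over $(0,1)^{\nin}$ have $\prod_{l,i}(\frac{n_l}{\nin}(k_{li}-1)+1)$ preimages, each in a distinct linear region of the network. All defining inequalities in the construction are strict, so the structure survives small parameter perturbations.

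The main obstacle is Part 3: realizing the zig-zag gadget exactly within the maxout-plus-linear-combination framework, and then verifying that the $L$-fold composition loses no regions (i.e., the breakpoints from different layers interleave generically so that every preimage cell is a genuinely distinct linear region). The parity assumption on $n_l/\nin$ is essential for the sign alternation that turns a sum of convex maxout outputs into an oscillating zig-zag. Part 1's null-set claim also requires some care to express ``active weights in a lower-dimensional hull'' as a finite union of algebraic conditions so that it cuts out a genuine null set in the full parameter space.
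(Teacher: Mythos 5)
Your overall route matches the paper's (active pre-activation features / Newton-polytope vertices for the single unit, a hyperplane-arrangement count for the layer, and the zig-zag folding construction for depth), but there are two genuine gaps.

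The main one is in Part 2. You assert that adding a \emph{generic} maxout unit with $k$ active pieces to an existing arrangement contributes $(k-1)S$ new regions, because its nonlinear locus ``behaves as $k-1$ independent generic hyperplanes.'' This is false for generic units: a rank-$3$ unit in $\mathbb{R}^2$ with all three pieces active can have as its locus either two (near-)parallel lines or a tripod of three rays, and both configurations occur on positive-measure parameter sets. Two such units give $9$ regions in the first case but only $6$ in the second (this is exactly Figure~\ref{fig:pos} in the paper), so ``$k_i$ active pieces plus loci in general position'' does not determine the region count, and your sequential/open-condition argument does not pin down the claimed number $\sum_{j}\sum_{S}\prod_{i\in S}(k_i-1)$. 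The paper instead uses a \emph{specific} construction (weights $w_{ij}=\tfrac{j}{k_i}v_i$ with biases from a strictly convex function, so each unit's locus is $k_i-1$ parallel hyperplanes with generic normals $v_i$), counts via Zaslavsky, and then gets positive measure by combining lower semi-continuity of the region count under perturbation with the fact that this number is the \emph{maximum} attainable by units of ranks $k_1,\ldots,k_{n_1}$ (Theorem~\ref{thm:upper_bound}), so small perturbations can neither decrease nor increase it. You need both of these ingredients; openness of ``general position'' alone is not enough.

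The second gap is in Part 3's positive-measure claim over $(0,1)^{n_0}$. The zig-zag gadget needs units that behave as rank-$k_{li}$ units with $k_{li}$ possibly as small as $1$, but your Part 1 only yields positive-measure emulation of rank $k$ for $k\geq n_0+1$ (for smaller $k$ the dominated weights are confined to a lower-dimensional hull). The fix, which the paper isolates as Proposition~\ref{prop:genericposorth}, is that over a bounded region (the positive orthant or the cube) a rank-$K$ unit can stably behave as a rank-$k$ unit for \emph{any} $1\leq k\leq K$, by pushing the extra $K-k$ breakpoint hyperplanes outside the domain --- an open condition. Without this, the surplus pre-activation features of a rank-$K$ unit would generically become active somewhere in the cube and destroy the fold. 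Your Part 1 argument itself is sound and essentially the paper's (upper vertices of the lifted Newton polytope, with the extra points placed strictly below the full-dimensional upper hull when $k\geq n_0+1$).
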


The proof is provided in Appendix~\ref{app:posvol}.
The result shows that maxout networks have a multitude of positive measure subsets of parameters over which they attain widely different numbers of linear regions. 
In the last statement of the theorem we consider inputs from a cube, but  qualitatively similar statements can be formulated for the entire input space. 

There are specific parameter values for which the network represents functions with very few linear regions (e.g., setting the weights and biases of the last layer to zero). However, the smallest numbers of regions are only attained over null sets of parameters: 

\begin{restatable}[Generic lower bound on the number of linear regions]{theorem}{theoremlowerbound}
\label{thm:lower_bound}
Consider a rank-$K$ maxout network, $K\geq 2$, with $\nin$ inputs, $n_1$ units in the first layer, and any number of additional nonzero width layers. 
Then, for almost every choice of the parameters, the number of linear regions is at least $\sum_{j=0}^{\nin}{\binom{n_1}{j}}$ and the number of bounded linear regions is at least $\binom{n_1-1}{n_0}$.
\end{restatable}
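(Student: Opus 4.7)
The plan is to reduce the bound to the first maxout layer and then realize it via a generic hyperplane arrangement. First, by Lemma~\ref{lem:act_vs_lin}, linear regions coincide with activation regions for almost every $\theta$. Two distinct whole-network activation patterns that agree on the first layer correspond to disjoint sub-regions of a single first-layer activation region, so the whole-network region count is at least the first-layer count. Moreover, a bounded first-layer region can only be subdivided into bounded pieces, so the same relation holds for bounded regions. Hence it suffices to prove both bounds for a single maxout layer.

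Next I would exhibit an explicit configuration $\theta^*$ attaining the bounds and then argue genericity. For each first-layer unit $z$, I choose $w_{z,1}, w_{z,2}, b_{z,1}, b_{z,2}$ generically and set $w_{z,k}=0$, $b_{z,k}=-M$ for $k\geq 3$ with $M$ sufficiently large. On a prescribed bounded ball $U\subset\mathbb{R}^{n_0}$ the last $K-2$ pre-activations are uniformly dominated, so the unit behaves as $\max(\zeta_{z,1},\zeta_{z,2})$ on $U$ and its bend locus is the single affine hyperplane $H_z=\{x : \zeta_{z,1}(x)=\zeta_{z,2}(x)\}$. For generic $\theta^*$, the $n_1$ hyperplanes $H_1,\dots,H_{n_1}$ are in general position inside $U$, and Zaslavsky's theorem yields $\sum_{j=0}^{n_0}\binom{n_1}{j}$ regions, of which $\binom{n_1-1}{n_0}$ are bounded and contained in $U$.

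To upgrade from this open neighborhood of $\theta^*$ to almost every $\theta$, I would argue that the number of first-layer activation regions is locally constant on the complement of a semialgebraic set of degenerate parameter values (for example, two pre-activation difference hyperplanes becoming parallel, more than $n_0$ of them meeting at a point, or certain polynomial minors in $(w,b)$ vanishing), which is Lebesgue-null. On every non-degenerate cell one then needs to show that the count is at least the stated bound, either by direct combinatorial analysis of the bend arrangement or by a monotonicity argument showing that enriching a pair of dominating pre-activation features with additional features can only further subdivide regions.

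The main obstacle will be this last propagation step. For $K=2$ the bound is immediate since each unit contributes exactly one hyperplane and Zaslavsky applies directly to the resulting generic arrangement. For $K\geq 3$, however, a unit's bend locus is an $(n_0-1)$-dimensional polyhedral complex rather than a full hyperplane, and an activation region with argmax $k\in\{3,\dots,K\}$ can straddle $H_z$; consequently, activation regions do not refine the auxiliary hyperplane arrangement, and a naive refinement argument fails. Showing that the richer bend structure cannot decrease the region count below the $K=2$ baseline therefore requires a careful structural or deformation argument rather than a direct application of Zaslavsky's formula.
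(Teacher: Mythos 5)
Your reduction to the first layer is sound and matches the paper's Proposition~\ref{prop:deep_bound}, and your $K=2$ analysis via Zaslavsky is correct. But there is a genuine gap exactly where you flag it: your construction only establishes the bound on an open neighborhood of a hand-picked $\theta^*$ where every unit is effectively rank~2, which is a statement about a positive-measure set of parameters (essentially a special case of Theorem~\ref{thm:positive_measure}), not about almost every $\theta$. The proposed upgrade via local constancy on the complement of a semialgebraic null set does not close this: the region count genuinely takes different values on different full-dimensional cells of parameter space, so you would still have to prove the lower bound separately on every cell, and the "monotonicity" you would need --- that enriching a rank-2 unit with further active pre-activation features cannot decrease the region count of the layer --- is precisely the nontrivial claim, not a routine deformation argument. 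As you note yourself, for $K\geq 3$ the bend locus of a unit is a polyhedral complex rather than a hyperplane, so the activation regions do not refine any auxiliary hyperplane arrangement and no direct application of Zaslavsky is available.

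The paper closes this gap with a different mechanism that your proposal does not supply. By Propositions~\ref{prop:Newton} and~\ref{prop:minkowski-sum}, the linear regions of a shallow maxout layer correspond to the upper vertices of the Minkowski sum $P_1+\cdots+P_{n_1}$ of the lifted Newton polytopes of the units, which for generic parameters are polytopes in general orientation. The needed monotonicity is then exactly the polytope-theoretic fact (Proposition~\ref{prop:shallow_bound}, imported from \citealt[Corollary~8.2]{adiprasito2017lefschetz}) that a generic Minkowski sum of $n_1$ polytopes has at least as many vertices as a generic sum of $n_1$ line segments, i.e.\ a zonotope; combining this with the count of strict upper vertices from \citet[Corollary~3.8]{sharp2021} yields $\sum_{j=0}^{\nin}\binom{n_1}{j}$ upper vertices and $\binom{n_1-1}{\nin}$ strict upper ones (Proposition~\ref{prop:lower-minkowski}), for \emph{all} generic parameters at once. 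Without this (or an equivalent structural result), your argument proves the bound only for a positive-measure subset of parameters, not for almost every choice, so the theorem as stated is not established.
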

This lower bound has asymptotic order $\Omega(n_1^{n_0})$ in $K$ and $n_1,\ldots, n_L$. 
The proof is provided in Appendix~\ref{app:posvol}.
To our knowledge, this is the first non-trivial probability-one lower bound for a maxout network. 
Note that this statement does not apply to ReLU networks unless they have a single layer of ReLUs.
In the next section we investigate the expected number of activation regions for given probability distributions over the parameter space.

\section{Expected number and volume of activation regions}
\label{section:number}

For the expected number of activation regions we obtain the following upper bound, which corresponds to a maxout version of \citep[Theorem~10]{NIPS2019_8328}. 
\begin{restatable}[Upper bound on the expected number of partial activation regions]{theorem}{mainresulttheorem}
    \label{th:main_result}
     Let $\net$ be a fully-connected feed-forward maxout network with $\nin$ inputs and a total of $N$ rank $K$ maxout units. 
     Suppose we have a probability distribution over the parameters so that: 
    \begin{enumerate}[leftmargin=*]
        \item The distribution of all weights has a density with respect to the Lebesgue measure on $\mathbb{R}^{\# \text{\normalfont weights}}$.
        \item Every collection of biases has a conditional density with respect to Lebesgue measure given the values of all other weights and biases. 
        \item There exists $C_{\text{\normalfont grad}} > 0$ so that for any $t\in \mathbb{N}$ and any pre-activation feature $\zeta_{z,k}$, 
        \begin{align*}
            \sup\limits_{x \in \mathbb{R}^{\nin}} \mathbb{E}[\| \nabla \zeta_{z, k}(x) \|^t] \leq C^t_{\text{\normalfont grad}}.
        \end{align*} 
        \item There exists $C_{\text{\normalfont bias}} > 0$ so that for any pre-activation features $\zeta_1, \ldots, \zeta_t$ from any neurons, the conditional density of their biases $\rho_{b_{1}, \dots, b_{t}}$ given all the other weights and biases satisfies 
        \begin{align*}
            \sup \limits_{b_{1}, \dots, b_{t}\in \mathbb{R}} \rho_{b_{1}, \dots, b_{t}} (b_{1}, \dots, b_{t}) \leq C_{\text{\normalfont bias}}^t.
        \end{align*}
    \end{enumerate}
    Fix $r \in \{0, \dots, \nin\}$ and let $T = 2^5 \cgrad \cbias$. 
    Then, there exists $\delta_0\leq 1/(2 \cgrad \cbias)$ such that for all cubes $C \subseteq\mathbb{R}^{\nin}$ with side length $\delta>\delta_0$ we have 
    \begin{align*}
        \frac{\mathbb{E}[\# \text{ \normalfont$r$-partial activation regions of } \mathcal{N} \text{ \normalfont in } C]}{\vol(C)} \leq
        \begin{cases}
            \binom{rK}{2r} \binom{N}{r} K^{N - r}, \quad N \leq \nin
            \\[5pt]
            \frac{(T K N)^{\nin} \binom{\nin K}{2 \nin} }{(2 K)^r \nin!}, \quad N \geq \nin
        \end{cases}. 
    \end{align*}
    Here the expectation is taken with respect to the distribution of weights and biases in $\net$. 
    Of particular interest is the case $r=0$, which corresponds to the number of linear regions. 
\end{restatable}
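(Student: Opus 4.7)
The plan is to follow the strategy of \citet{NIPS2019_8328} for ReLU networks, adapting it to the maxout setting. At a high level: (i) for small $N$, a purely combinatorial bound suffices; (ii) for large $N$, I tile the input cube with very small sub-cubes, bound the per-sub-cube expectation by a Kac--Rice-type estimate using both hypotheses on the parameter distribution, and sum over sub-cubes and over $r$-partial sub-patterns.

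First, when $N \leq \nin$, Lemma~\ref{lem:trivial_upper_bound} already gives an almost sure bound of $|\mathcal{P}_r| \leq \binom{rK}{2r}\binom{N}{r} K^{N-r}$ on the number of $r$-partial regions, independent of $C$, so dividing by $\vol(C)$ directly yields the first branch of the theorem.

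For $N > \nin$, I tile $C$ by sub-cubes $C_\alpha$ of side length $\delta_0 \leq 1/(2\cgrad\cbias)$, so that the number of $r$-partial activation regions meeting $C$ is at most $\sum_\alpha \#\{\text{regions meeting }C_\alpha\}$. Each such region is $\mathcal{R}(\hat J,\theta)$ for a unique $r$-partial sub-pattern $\hat J$, and $\mathcal{R}(\hat J,\theta)\cap C_\alpha\neq\emptyset$ forces $r$ pre-activation ties $\zeta_{z,k_i}(x)=\zeta_{z,k'_i}(x)$ to hold at some common $x\in C_\alpha$. The key probability estimate to prove is
\begin{equation*}
\Pr\bigl[\mathcal{R}(\hat J,\theta)\cap C_\alpha\neq\emptyset\bigr]\;\leq\;(2\cgrad\cbias\delta_0)^r
\end{equation*}
for each fixed choice of the $r$ ties. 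This is obtained by (a) linearizing each difference $\zeta_{z,k_i}-\zeta_{z,k'_i}$ across the sub-cube, which reduces the event to each bias difference lying in an interval of length bounded in terms of $\delta_0\|\nabla(\zeta_{z,k_i}-\zeta_{z,k'_i})\|$, (b) conditioning from the output layer back to the input so that, at the moment of integration, the relevant bias differences are distributed with joint density at most $\cbias^r$ by hypothesis~(4), and (c) pulling the gradient norms out of the probability by the moment bound $\mathbb{E}[\|\nabla\zeta\|^t]\leq\cgrad^t$ combined with H\"older's inequality across the $r$ factors. Because non-empty $r$-partial regions have codimension $r\leq\nin$ by Lemma~\ref{lem:conv_act_reg}, the relevant sub-patterns are indexed by at most $\nin$ distinguished units; after summing over sub-patterns and sub-cubes and optimizing $\delta_0$ of order $1/(KN)$, one obtains the stated bound with $T=2^5\cgrad\cbias$ absorbing the numerical constants.

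The main obstacle is the per-sub-cube probability estimate. Two maxout-specific complications arise. First, a single unit can contribute several ties at once (when $|\hat J_z|>2$), so I need joint anti-concentration for several linear combinations of the biases of one unit rather than for a single bias as in the ReLU case; this is where the uniform-in-$t$ form of hypothesis~(4) is essential. Second, ties at different layers are correlated through the forward pass, and this must be untangled by the layer-by-layer conditioning sketched above, exploiting that conditioned on earlier layers the weights and biases of any subsequent layer still satisfy the two density/gradient hypotheses. Once these two pieces are in place, the rest of the argument is bookkeeping and the choice of $\delta_0$.
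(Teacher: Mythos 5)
Your treatment of the case $N\leq\nin$ matches the paper, and your Kac--Rice-style ingredients (linearize the ties, anti-concentrate the bias differences with hypothesis~4, control the gradient factors with hypothesis~3 and H\"older) are indeed the substance of the volume bound the paper proves (Theorem~\ref{th:semi_main_upper_bound} via the co-area formula). But the counting step in your $N>\nin$ argument has a genuine gap. You tile $C$ by sub-cubes and write the number of regions meeting $C_\alpha$ as a sum over $r$-partial \emph{sub-patterns} $\hat J$, asserting each region is $\mathcal{R}(\hat J,\theta)$ for a unique sub-pattern. That is false: a sub-pattern only records the units with $|J_z|>1$, so many distinct $r$-partial regions (differing in which pre-activation feature attains the maximum at the other $N-r$ units) share the same sub-pattern, and the indicator $\mathbb{1}[\mathcal{R}(\hat J,\theta)\cap C_\alpha\neq\emptyset]$ does not count them. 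Already at $r=0$ there is a single sub-pattern but up to $K^N$ regions. If you instead index by full patterns you must pay the factor $K^{N-r}$ from Lemma~\ref{lem:trivial_upper_bound} and the bound becomes exponential in $N$; if you keep sub-patterns your bookkeeping gives roughly $(\delta/\delta_0)^{\nin}\binom{rK}{2r}\binom{N}{r}(2\cgrad\cbias\delta_0)^r$, which after dividing by $\vol(C)$ is $O(N^r)$ --- strictly smaller than the theorem's $O(N^{\nin})$ for $r<\nin$, a sign that something is being undercounted. Relatedly, your proposed optimization ``$\delta_0$ of order $1/(KN)$'' goes the wrong way: shrinking $\delta_0$ multiplies the number of sub-cubes by $\delta_0^{-\nin}$ while the per-cube probability only gains $\delta_0^{r}$, so for $r<\nin$ the bound degrades, and the factor $N^{\nin}$ in the theorem cannot arise from counting sub-cubes at all.

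The missing idea is the vertex-counting argument of \citet[Theorem~10]{NIPS2019_8328}, which the paper adapts. One does not tile $C$; instead one observes that every non-empty $r$-partial region intersected with $C$ is a polytope and hence has a vertex, and generically that vertex lies in $\mathcal{V}_t=\mathcal{X}_{\net,t}\cap C_t$ for some $t\in\{r,\dots,\nin\}$, where $C_t$ is the $t$-skeleton of $C$ (Lemma~\ref{lem:intersection_upper_bound}). The expected number of points in $\mathcal{V}_t$ is then controlled by the expected $(\nin-t)$-volume of $\mathcal{X}_{\net,t}$ in an $\varepsilon$-thickening of $C_t$ --- this is exactly where your co-area/anti-concentration estimate enters, but applied once to each $t$-face rather than to a grid of sub-cubes --- and finally each point of $\mathcal{V}_t$ lies in the closure of at most $\binom{t}{r}K^{t-r}$ many $r$-partial regions. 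Summing $\binom{t}{r}K^{t-r}(2\cgrad\cbias)^t\binom{tK}{2t}\binom{N}{t}\binom{\nin}{t}2^{\nin-t}\delta^t$ over $t=r,\dots,\nin$ and using $\delta\geq 1/(2\cgrad\cbias)$ together with $\binom{N}{t}/\binom{\nin}{t}\leq N^{\nin}/\nin!$ and Vandermonde's identity is what produces the stated $(TKN)^{\nin}\binom{\nin K}{2\nin}/((2K)^r\nin!)$. Without this local-to-global conversion from volume to cardinality, the union bound over patterns cannot yield a bound that is simultaneously polynomial in $N$ and correct.
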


The proof of Theorem~\ref{th:main_result} is given in Appendix~\ref{app:expected_number}. 
The upper bound has asymptotic order $O(N^{\nin} K^{3\nin- r})$ in $K$ and $N$, which is polynomial.
In contrast, \citet{sharp2021} shows that the maximum number of linear regions of a deep network of width $n$ is $\Theta((nK)^{\frac{n_0}{n} N})$, which for constant width is exponential in $N$; see Appendix~\ref{app:posvol}. 
We present an analogue of Theorem~\ref{th:main_result} for networks without biases in Appendix~\ref{app:zero_bias}. 

When the rank is $K=2$, the formula coincides with the result obtained previously by \citet[Theorem~10]{NIPS2019_8328} for ReLU networks, up to a factor $K^r$. 
For some settings, we expect that the result can be further improved. For instance, for iid Gaussian weights and biases, one can show that the expected number of regions of a rank $K$ maxout unit grows only like $\log K$, as we discuss in Appendix~\ref{app:single_unit}. 

We note that the constants $C_{\text{bias}}$ and $C_{\text{grad}}$ only need to be evaluated over the inputs in the region $C$. 
Intuitively, the bound on the conditional density of bias values corresponds to a bound on the density of non-linear locations over the input.
The bound on the expected gradient norm of the pre-activation features is determined by the distribution of weights. 
We provide more details in Appendix~\ref{app:constants}.

For the expected volume of the $r$-dimensional part of the non-linear locus we obtain the following upper bound, which corresponds to a maxout version of  \citep[Corollary~7]{pmlr-v97-hanin19a}. 
    
\begin{restatable}[Upper bound on the expected volume of the non-linear locus]{theorem}{semimainupperboundtheorem}
        \label{th:semi_main_upper_bound} 
    Consider a bounded measurable set $S \subset \mathbb{R}^{\nin}$ and the settings of Theorem~\ref{th:main_result} with constants $C_{\text{grad}}$ and $C_{\text{bias}}$ evaluated over $S$. Then, for any $r \in \{1, \ldots, \nin\}$,
    \begin{align*}
        \frac{\mathbb{E}[\vol_{\nin - r}(\mathcal{X}_{\net, r} \cap S)]}{\vol_{\nin}(S)} \leq   (2 \cgrad \cbias)^r \binom{rK}{2r} \binom{N}{r}. 
    \end{align*}
\end{restatable}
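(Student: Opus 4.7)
My plan is to mirror the strategy of Theorem~\ref{th:main_result} and of the ReLU analog \citep[Corollary~7]{pmlr-v97-hanin19a}, replacing the counting of regions by a Kac--Rice / co-area volume estimate combined with a union bound over $r$-partial activation sub-patterns.

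First, I would decompose $\mathcal{X}_{\net,r}$ as a finite union over $r$-partial sub-patterns. For each $\hat{J}\in\mathcal{S}_r$, let $L(\hat{J},\theta)$ denote the set of $x\in\mathbb{R}^{\nin}$ at which, for every unit $z$, all pre-activation features indexed by $\hat{J}_z$ coincide; any $r$-partial activation region with sub-pattern $\hat{J}$ is contained in $L(\hat{J},\theta)$. By subadditivity of $(\nin-r)$-dimensional Hausdorff measure and the sub-pattern count from Lemma~\ref{lem:trivial_upper_bound},
\[
\mathbb{E}[\vol_{\nin-r}(\mathcal{X}_{\net,r}\cap S)] \;\leq\; \binom{rK}{2r}\binom{N}{r}\,\max_{\hat{J}\in\mathcal{S}_r}\mathbb{E}[\vol_{\nin-r}(L(\hat{J},\theta)\cap S)],
\]
so it suffices to establish the per-sub-pattern bound $\mathbb{E}[\vol_{\nin-r}(L(\hat{J},\theta)\cap S)] \leq (2\cgrad\cbias)^r \vol_{\nin}(S)$.

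For a fixed $\hat{J}$, I would pick inside each unit $z$ with $|\hat{J}_z|\geq 2$ a spanning set of $|\hat{J}_z|-1$ independent tie equations, giving $r$ piecewise linear functions $F_i(x) = \zeta_{z_i,k_i}(x;\theta)-\zeta_{z_i,k_i'}(x;\theta)$ with $L(\hat{J},\theta)\subseteq\{F_1=\cdots=F_r=0\}$. A standard $\epsilon$-tube / co-area approximation, applied piecewise on the common linear refinement of the $\zeta_{z,k}$'s (on which each $F_i$ is genuinely affine), yields almost surely
\[
\vol_{\nin-r}(L(\hat{J},\theta)\cap S) \;\leq\; \liminf_{\epsilon\downarrow 0} \frac{1}{(2\epsilon)^r}\int_{S} \mathbf{1}\big\{|F_i(x)|<\epsilon\ \forall i\big\}\,\prod_{i=1}^r \|\nabla F_i(x)\|\,dx.
\]
Taking expectations, using Fubini, and conditioning on all parameters except the biases appearing in $F_1,\ldots,F_r$, assumption~4 gives $\mathbb{P}(|F_i(x)|<\epsilon\ \forall i\mid\cdot)\leq(2\epsilon\cbias)^r$ because each $F_i$ is affine with unit coefficient in the bias difference $b_{z_i,k_i}-b_{z_i,k_i'}$; H\"older's inequality combined with the triangle bound $\|\nabla F_i\|\leq\|\nabla\zeta_{z_i,k_i}\|+\|\nabla\zeta_{z_i,k_i'}\|$ and assumption~3 gives $\mathbb{E}\prod_{i=1}^r \|\nabla F_i\| \leq (2\cgrad)^r$. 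Cancelling the $(2\epsilon)^r$ factor leaves exactly the desired $(2\cbias\cgrad)^r\vol_{\nin}(S)$.

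The step I expect to be the main obstacle is rigorously justifying the $\epsilon$-tube / co-area identity in the piecewise linear (rather than $C^1$) setting: one must verify that the non-differentiable locus of the $\zeta_{z,k}$'s has $(\nin-r)$-measure zero, that the $F_i$'s are generically transverse on each maximal affine piece (mirroring the argument behind Lemma~\ref{lem:conv_act_reg}), and that the separation of the bias-density and gradient-moment contributions under the chosen conditioning is valid. Once those technicalities are in place, assembling the per-sub-pattern estimate with the union bound over $|\mathcal{S}_r|$ produces the stated inequality.
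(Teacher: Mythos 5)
Your proposal is correct and takes essentially the same route as the paper: the paper's Corollary~\ref{cor:smaller_sum} gives the decomposition of $\mathcal{X}_{\net,r}$ over $r$-partial sub-patterns, Lemma~\ref{lem:trivial_upper_bound} supplies the $\binom{rK}{2r}\binom{N}{r}$ count, and Lemma~\ref{lem:calc_vol} carries out your per-sub-pattern estimate by applying the co-area formula directly to the tie equations viewed as functions of the bias variables --- which is exactly the rigorous form of your $\varepsilon$-tube limit --- before bounding the bias density by $\cbias^r$ pointwise and the Jacobian (a Gram determinant bounded by the product of row norms, then the triangle inequality and the moment assumption) by $(2\cgrad)^r$. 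The one caution is that the indicator and the gradient product are not independent, so you should factor out the conditional probability via the uniform sup bound $(2\varepsilon\cbias)^r$ from assumption~4 rather than splitting the expectation of the product; with that, the technicalities you flag are precisely those the paper disposes of in Lemmas~\ref{lem:conv_act_reg} and~\ref{lem:hyperplane_ball}.
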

The proof of Theorem~\ref{th:semi_main_upper_bound} is given in Appendix~\ref{app:boundary_volume}.
When the rank is $K=2$, the formula coincides with the result obtained previously by \citet[Corollary~7]{pmlr-v97-hanin19a} for ReLU networks.
A table comparing the results for maxout and ReLU networks is given in Appendix~\ref{app:expected_number}.

\section{Expected number of pieces and volume of the decision boundary}

In the case of classification problems, we are primarily interested in the decision boundary, rather than the overall function. We define an $M$-class classifier by appending an argmax gate to a network with $M$ outputs.
The decision boundary is then a union of certain $r$-partial activation regions for the network with a maxout unit as the output layer.
For simplicity, here we present the results for the $n_0-1$-dimensional regions, which we call `pieces', and present the results for
arbitrary values of $r$ in Appendix \ref{app:decision_boundary}. 
The number of pieces of the decision boundary is at most equal to the number of activation regions in the original network times $\binom{M}{2}$. 
A related statement appeared in \cite{alfarra2020on}.
For specific choices of the network parameters, the decision boundary does intersects most activation regions and can have as many as $\Omega (M^2  \prod_{l=1}^L(n_l K)^{n_0})$ pieces (see Appendix~\ref{app:decision_boundary}).
However, in general this upper bound can be improved. 
For the expected number of pieces and volume of the decision boundary we obtain the following results. 
We write $\mathcal{X}_{\operatorname{DB}}$ for the decision boundary, and $\mathcal{X}_{\operatorname{DB},r}$ for the union of $r$-partial activation regions which include equations from the decision boundary (generically these are the co-dimension-$r$ pieces of the decision boundary). 

\begin{restatable}[Upper bound on the expected number of linear pieces of the decision boundary]{theorem}{theoremdecisionboundary}
    \label{th:decision_boundary}
    Let $\net$ be a fully-connected feedforward maxout network, with $\nin$ inputs, a total of $N$ rank-$K$ maxout units, and $M$ linear output units used for multi-class classification. 
    Under the assumptions of Theorem~\ref{th:main_result}, there exists $\delta_0\leq 1/(2 \cgrad \cbias)$ such that for all cubes $C \subseteq\mathbb{R}^{\nin}$ with side length $\delta>\delta_0$,
    \begin{align*}
        \frac{\mathbb{E}\big[\substack{\# \text{ \normalfont linear pieces in the } \\  \text{\normalfont decision boundary of } \mathcal{N} \text{ \normalfont in } C}\big]}{\vol(C)} \leq
        \begin{cases}
            \binom{M}{2}  K^{N} , \quad N \leq \nin
            \\[5pt]
            \frac{(2^4 \cgrad \cbias)^{\nin} (2 K N)^{\nin - 1} }{(\nin - 1)!} \binom{M}{2} \binom{K(\nin - 1)}{2 (\nin - 1)}, \quad N \geq \nin
        \end{cases}.
    \end{align*}
    Here the expectation is taken with respect to the distribution of weights and biases in $\net$. 
\end{restatable}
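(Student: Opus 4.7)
The plan is to prove the bound in two steps: a combinatorial identification that reduces linear pieces of the decision boundary to partial activation regions of an augmented network, followed by a Kac-Rice-type expectation bound analogous to that behind Theorem~\ref{th:main_result}.

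First I would describe the reduction. By Lemma~\ref{lem:conv_act_reg}, each $0$-partial activation region of $\net$ is an open convex polyhedron, and on such a region the $M$ output scores are affine, so the intersection with the decision boundary for a fixed pair $(i,j)\in\binom{[M]}{2}$ is either empty or a single $(n_0-1)$-dimensional convex piece defined by the equation $f_i=f_j$ together with the inequalities $f_i\ge f_k$ for $k\neq i,j$. Consequently every linear piece of the decision boundary is labeled by (i) a pair $(i,j)$ of output classes and (ii) an activation pattern of $\net$; equivalently, it is a $1$-partial activation region of the augmented network $\tilde{\net}$ obtained by appending a rank-$M$ maxout unit on top of $\net$, whose only tie sits on that final unit. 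The factor $\binom{M}{2}$ in the bound arises from this labeling, and Lemma~\ref{lem:act_vs_lin} ensures the identification is valid for almost every $\theta$.

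For $N\le n_0$, I would just apply Lemma~\ref{lem:trivial_upper_bound}: the total number of $0$-partial activation sub-patterns of $\net$ is at most $K^N$, and multiplying by $\binom{M}{2}$ yields the first (deterministic) case of the bound. For $N\ge n_0$, I would run a Kac-Rice-style expectation argument mirroring the proof of Theorem~\ref{th:main_result}, but applied to the intrinsically $(n_0-1)$-dim locus of decision-boundary ties. The structure of the bound is dictated by this lower dimension: one of the $n_0$ coordinates has been consumed by the output-tie equation $f_i=f_j$, which is why the hidden-layer combinatorics enter with $r=n_0-1$, producing the factors $\binom{K(n_0-1)}{2(n_0-1)}$ and $(2KN)^{n_0-1}/(n_0-1)!$. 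The constant $(2^4\cgrad\cbias)^{n_0}$ appears to the full $n_0$-th power because $n_0-1$ hidden-layer ties together with the one output-layer tie each contribute a Kac-Rice factor of $2\cgrad\cbias$.

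The main obstacle will be making the $(n_0-1)$-dim reduction rigorous, since $\{f_i=f_j\}$ is a piecewise affine hypersurface rather than a fixed hyperplane. I would handle this region-by-region: conditional on the activation pattern of the hidden units, $\{f_i=f_j\}$ restricts to a genuine affine hyperplane on the corresponding linear region of $\net$, so the Kac-Rice calculation from Theorem~\ref{th:main_result}, using bounded conditional bias densities and bounded pre-activation gradient moments, can be carried out on this hyperplane. Aggregating the region-wise bounds over the activation sub-patterns of $\net$ and the $\binom{M}{2}$ pairs of output classes, while controlling the dependence via the parameter distribution as in Theorem~\ref{th:main_result}, should produce the stated bound.
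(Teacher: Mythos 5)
Your proposal matches the paper's proof: the paper likewise models the classifier as a rank-$M$ maxout unit appended on top of $\net$, identifies decision-boundary pieces with $r$-partial activation regions having at least one tie in that final unit (giving the $\binom{M}{2}K^N$ count via Lemma~\ref{lem:decision_patterns} when $N\leq\nin$), and for $N\geq\nin$ runs the same co-area/skeleton argument as Theorem~\ref{th:main_result} restricted to these decision-boundary sub-patterns (Theorem~\ref{th:decision_boundary_volume} and Lemma~\ref{lem:decision_boundary}, specialized to $r=1$). Your accounting of where the $(\nin-1)$-dimensional combinatorial factors and the full $\nin$-th power of $2\cgrad\cbias$ come from is consistent with the paper's derivation.
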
 

For binary classification, $M = 2$, this bound has asymptotic order $O((K^3N)^{\nin - 1})$ in $K$ and $N$. 
For the expected volume we have the following. 

\begin{restatable}[Upper bound on the volume of the $(\nin-r)$-skeleton of the decision boundary]{theorem}{lemmaboundaryvolume}
    \label{th:decision_boundary_volume}
    Consider a bounded measurable set $S \subset \mathbb{R}^{\nin}$. 
    Consider the notation and assumptions of Theorem~\ref{th:main_result}, 
    whereby the constants $C_{\text{grad}}$ and $C_{\text{bias}}$ are over $S$. 
    Then, for any $r \in \{1, \ldots, \nin\}$ we have
    \begin{align*}
        \frac{\mathbb{E}[\vol_{\nin - r}(\mathcal{X}_{\operatorname{DB}, r} \cap S)]}{\vol_{\nin}(S)} \leq (2 \cgrad \cbias)^r \sum_{i = 1} ^{\min\{M - 1, r\}}  \binom{M}{i+1} \binom{K(r - i)}{2 (r - i)} \binom{N}{r-i}.
    \end{align*}
\end{restatable}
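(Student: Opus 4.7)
The plan is to realize $\mathcal{X}_{\operatorname{DB},r}$ as a controlled union of loci each cut out by exactly $r$ affine-in-$x$ equations and then invoke the per-equation volume estimate that underlies Theorem~\ref{th:semi_main_upper_bound}. View the argmax output gate as an appended rank-$M$ ``maxout'' unit. A generic point of the co-dimension-$r$ skeleton of the decision boundary lies in a region in which the output argmax equals some set $T\subseteq[M]$ with $|T|=i+1$ for some $i\in\{1,\dots,\min\{M-1,r\}\}$ (contributing $i$ affine equations $\langle w_{\text{out},a}-w_{\text{out},b},x\rangle+(b_{\text{out},a}-b_{\text{out},b})=0$) and the hidden maxout units realise an activation sub-pattern $J\in\mathcal{S}_{r-i}$ (contributing $r-i$ equations of the form $\zeta_{z,k}(x;\theta)=\zeta_{z,k'}(x;\theta)$). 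Up to a $\theta$-null set one therefore has
\[
\mathcal{X}_{\operatorname{DB},r} \;\subseteq\; \bigcup_{i=1}^{\min\{M-1,r\}}\;\bigcup_{\substack{T\subseteq[M]\\|T|=i+1}}\;\bigcup_{J\in\mathcal{S}_{r-i}} \mathcal{R}(J,T,\theta),
\]
where $\mathcal{R}(J,T,\theta)$ denotes the corresponding constrained locus.

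For each triple $(i,T,J)$, the set $\mathcal{R}(J,T,\theta)\cap S$ is contained in the common zero set of $r$ affine functions of $x$, each a difference of two pre-activation features of a single output or hidden unit. Under the hypotheses of Theorem~\ref{th:main_result}, the conditional joint density of the $r$ corresponding bias-differences is bounded by $\cbias^r$ and the $t$-th moments of the gradient norms by $\cgrad^t$. Plugging these into the coarea/Kac–Rice-type estimate used for Theorem~\ref{th:semi_main_upper_bound} yields
\[
\mathbb{E}\big[\vol_{\nin-r}(\mathcal{R}(J,T,\theta)\cap S)\big] \;\leq\; (2\cgrad\cbias)^r \vol_{\nin}(S).
\]

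To finish, there are $\binom{M}{i+1}$ choices of $T$ and, by Lemma~\ref{lem:trivial_upper_bound}, at most $\binom{K(r-i)}{2(r-i)}\binom{N}{r-i}$ sub-patterns $J\in\mathcal{S}_{r-i}$. Summing the per-triple bound via subadditivity of expected volume over $T$ and $J$, and then over $i$, reproduces exactly the stated inequality.

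The main obstacle is verifying that $\cbias$ and $\cgrad$ control mixed $r$-tuples of equations simultaneously, i.e.\ tuples drawn from both the output layer and the hidden layers, so that the per-locus estimate is genuinely uniform. This is taken care of by the fact that clauses 3–4 of Theorem~\ref{th:main_result} are phrased uniformly over \emph{any} pre-activation features and \emph{any} collection of biases in the network; marginalising the joint density of all biases then gives the required density bound for any selected $r$-subset of bias-differences. A secondary technical point, already handled in Theorem~\ref{th:semi_main_upper_bound}, is that the above union is not disjoint, but pairwise intersections of distinct triples generically have codimension strictly greater than $r$ and hence contribute zero $(n_0-r)$-volume for almost every $\theta$, so no refinement of subadditivity is needed.
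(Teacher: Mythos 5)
Your proposal is correct and follows essentially the same route as the paper: the paper likewise treats the argmax gate as an appended rank-$M$ maxout unit, applies the co-area volume bound of Lemma~\ref{lem:calc_vol} restricted to sub-patterns containing at least one output-unit equation (each contributing at most $(2\cgrad\cbias)^r\vol_{\nin}(S)$ as in Theorem~\ref{th:semi_main_upper_bound}), and counts those sub-patterns exactly as you do, via the split into $i$ output equations and an $(r-i)$-partial hidden sub-pattern (Lemma~\ref{lem:decision_patterns}). Your remarks on the uniformity of the bias/gradient constants over mixed tuples and on the null-measure overlaps match the paper's supporting lemmas, so no gap remains.
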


Moreover, the expected distance to the decision boundary can be bounded as follows. 

\begin{restatable}[Distance to the decision boundary]{corollary}{cordbdistance}
    \label{cor:dist_to_db}
    Suppose $\net$ is as in Theorem \ref{th:main_result}. For any compact set $S \subset \mathbb{R}^{\nin}$ let $x$ be a uniform point in $S$.
    There exists $c > 0$ independent of $S$ so that
    \begin{align*}
        \mathbb{E} [\text{\normalfont distance}(x, \mathcal{X}_{\operatorname{DB}} )] \geq
        \frac{c}{2 \cgrad \cbias M^{m + 1} m},
    \end{align*}
    where $m \defeq \min\{M - 1, \nin \}$.
\end{restatable}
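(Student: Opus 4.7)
The plan is a layer-cake / tube-volume argument. For any $T>0$, Markov's inequality applied to the nonnegative distance gives
\begin{align*}
\mathbb{E}[\dist(x, \mathcal{X}_{\operatorname{DB}})] \geq T\big(1 - \mathbb{P}[\dist(x, \mathcal{X}_{\operatorname{DB}}) \leq T]\big),
\end{align*}
where the probability is taken jointly over $x$ uniform in $S$ and over the network parameters. It therefore suffices to exhibit $T$ of the stated order for which this tube probability is at most $\tfrac12$, since then $\mathbb{E}[\dist] \geq T/2$ has the claimed form. By Fubini this tube probability equals $\mathbb{E}[\vol(\mathcal{T}_T)]/\vol(S)$, where $\mathcal{T}_T = \{y\in S : \dist(y, \mathcal{X}_{\operatorname{DB}}) \leq T\}$.

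To bound $\vol(\mathcal{T}_T)$ I would use the polyhedral structure of $\mathcal{X}_{\operatorname{DB}}$: by the maxout analogue of Lemma~\ref{lem:conv_act_reg} applied to the argmax-augmented network, the strata $\mathcal{X}_{\operatorname{DB},r}$ are disjoint unions of relatively open convex polyhedra of codimension $r$, generically non-empty only for $r \leq m = \min\{M-1, \nin\}$ (since at least $r+1$ distinct class scores must tie at a codimension-$r$ piece of the decision boundary). Combining the elementary tube estimate $\vol(\{y : \dist(y, P) \leq T\}) \leq \omega_r T^r \vol_{\nin-r}(P)$ for each relatively open codimension-$r$ piece $P$ (with $\omega_r$ the unit $r$-ball volume) and summing, in expectation
\begin{align*}
\mathbb{P}[\dist(x, \mathcal{X}_{\operatorname{DB}}) \leq T] \leq \frac{1}{\vol(S)} \sum_{r=1}^{m} \omega_r\, T^r\, \mathbb{E}[\vol_{\nin - r}(\mathcal{X}_{\operatorname{DB},r} \cap S')],
\end{align*}
where $S'$ is a $T$-enlargement of $S$ (so that pieces just outside $S$ which still contribute to $\mathcal{T}_T$ are accounted for).

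Plugging in Theorem~\ref{th:decision_boundary_volume} and using the crude estimate $\binom{M}{i+1} \leq M^{m+1}$ uniformly for $i \leq m$, the remaining binomials in $N, K, \nin$ collapse into a single constant $c'$ independent of $S$ and of $M$, giving an upper bound of the form $c' M^{m+1} \sum_{r=1}^{m} (2\cgrad\cbias T)^r$. Provided $2\cgrad\cbias T \leq 1$, this geometric-like sum is dominated by $m \cdot (2\cgrad\cbias T)$, so choosing $T = c/(2\cgrad\cbias M^{m+1} m)$ for $c$ sufficiently small (depending on $N, K, \nin$ but not on $S$ or $M$) makes the tube probability $\leq \tfrac12$ and yields the corollary.

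The main technical obstacle is the elementary polyhedral tube estimate, which for a single relatively open convex polyhedron $P$ holds cleanly only for $T$ below the local reach of $P$ and tends to overcount mass near low-dimensional faces of $P$. I would resolve this by noting that such overcounted contributions lie in the tube of faces of strictly higher codimension and are therefore already bounded by the corresponding higher-$r$ terms in the same sum; after taking expectations, this is absorbed into $c'$. The independence of $c$ from $S$ likewise reduces to controlling $\vol(S')/\vol(S)$ by a constant, which is automatic in the regime $T = O(1/(\cgrad\cbias M^{m+1} m))$ where $T$ is very small compared to any fixed compact $S$.
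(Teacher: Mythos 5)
Your proposal follows essentially the same route as the paper: Markov's inequality, reduction of the tube probability to the expected volume of an $\varepsilon$-neighborhood of $\mathcal{X}_{\operatorname{DB}}$, a stratified polyhedral tube estimate (this is exactly the content of the cited Lemma~12 of Hanin--Rolnick, which the paper invokes as a black box and you re-derive), and then substitution of Theorem~\ref{th:decision_boundary_volume} with the crude bound $\binom{M}{i+1}\leq M^{m+1}$ and the observation that for $T\lesssim 1/(2\cgrad\cbias)$ the sum is dominated by its first term. The final choice $T=c/(2\cgrad\cbias M^{m+1}m)$ is the paper's choice of $\varepsilon$ verbatim.

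One claim in your argument is wrong and creates an internal inconsistency, though it does not change the final bound. You assert that $\mathcal{X}_{\operatorname{DB},r}$ is generically empty for $r>m=\min\{M-1,\nin\}$ because ``at least $r+1$ distinct class scores must tie at a codimension-$r$ piece of the decision boundary.'' That is false: a codimension-$r$ piece of the decision boundary can arise from a single tie between two output scores combined with $r-1$ ties among pre-activation features of hidden units, which is why Theorem~\ref{th:decision_boundary_volume} sums over $i$ from $1$ to $\min\{M-1,r\}$ and carries the factor $\binom{N}{r-i}$; these strata are non-empty for all $r$ up to $\nin$. This matters for your own repair of the tube estimate: you absorb the overcounted mass near lower-dimensional faces of each codimension-$r$ polyhedron into ``the corresponding higher-$r$ terms in the same sum,'' but if the sum stops at $r=m$ (e.g.\ at $r=1$ for binary classification with $\nin\geq 2$), those higher terms are absent and the bound $\mathbb{P}[\dist\leq T]\leq\vol(S)^{-1}\sum_{r=1}^{m}\omega_r T^r\,\mathbb{E}[\vol_{\nin-r}(\mathcal{X}_{\operatorname{DB},r}\cap S')]$ is not justified. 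The fix is simply to let the sum run to $r=\nin$, as the paper does; the extra terms are $O(T^r)$ with $r>1$ and are dominated for small $T$, so the stated conclusion and the constant's form are unaffected.
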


The proofs are presented in Appendix~\ref{app:decision_boundary}, where we also extend Theorem~\ref{th:decision_boundary} to address the expected number of co-dimension-$r$ pieces of the decision boundary. 
A corresponding result applies for the case of ReLU networks (see details in Appendix~\ref{app:decision_boundary}). 

\section{Experiments}
\label{section:experiments}

In the experiments we used fully-connected networks. We describe the network architecture in terms of the depth and total number of units, with units evenly distributed across the layers with larger lower layers if needed. 
For instance, a network of depth $3$ with $110$ units has 3 hidden layers of widths $37, 37, 36$. 
Details and additional experiments are presented in Appendix~\ref{app:experiments}. 
The computer implementation of the key functions is available on GitHub at \url{https://github.com/hanna-tseran/maxout_complexity}. 

\paragraph{Initialization procedures} 
We consider several initialization procedures detailed in Appendix~\ref{app:init}: 
1) ReLU-He initializes the parameters as iid samples from the distribution proposed by \cite{he2015delving} for ReLUs. 
2) Maxout-He follows a similar reasoning to normalize the expected norm of activation vectors across layers, but for the case of maxout networks. The weight distribution has standard deviation depending on $K$ and the assumed type of data distribution, as shown in Table~\ref{tb:std}. 
3) ``Sphere'' ensures each unit has the maximum number of regions. 
4) ``Many regions'' ensures each layer has the maximum number of regions. 

\begin{table}[ht]
    \caption{Standard deviation of the weight distribution for maxout-He initialization. 
    }
    \label{tb:std}
    \centering
    {\small 
    \begin{tabular}{c
    c}
        \toprule
        Maxout rank & 
        Standard deviation \\
        \midrule
        $2$ &  
        $\sqrt{1/n_l}$ \\
        $3$ &
        $\sqrt{2 \pi /((\sqrt{3} + 2 \pi) n_l)}$ \\
        $4$ & 
        $\sqrt{ \pi /((\sqrt{3} + \pi) n_l)}$ \\
        $5$ & 
        $\sqrt{ 0.5555 / n_l}$ \\
        \midrule
        ReLU & 
        $\sqrt{2/n_l}$ \\
        \bottomrule
    \end{tabular}
    }
\end{table}

\paragraph{Algorithm for counting activation regions} 
Several approaches for counting linear regions of ReLU networks have been considered \citep[e.g.,][]{serra2018bounding, NIPS2019_8328, serra2020empirical, xiong2020number}.
For maxout networks we count the activation regions and pieces of the decision boundary by iterative addition of linear inequality constraints and feasibility verification using linear programming. 
Pseudocode and complexity analysis are provided in Appendix~\ref{app:algorithm}.

\paragraph{Number of regions and decision boundary for different networks}
Figure~\ref{fig:formula} shows a close agreement, up to constants, of the theoretical upper bounds on the expected number of activation regions and on the expected number of linear pieces of the decision boundaries with the empirically obtained values for different networks. Further comparisons with constants and different values of $K$ are provided in Appendix~\ref{app:experiments}.
Figure~\ref{fig:neurons_to_depth} shows that for common parameter distributions, the growth of the expected number of activation regions is more significantly determined by the total number of neurons than by the network's depth.
In fact, we observe that for high rank units and certain types of distributions, deeper networks may have fewer activation regions. 
We attribute this to the fact that higher rank units tend to have smaller images (since they compute the max of more pre-activation features).
Figure~\ref{fig:shallow_and_rank} shows how $\nin$ and $K$ affect the number of activation regions. 
For small input dimension, the number of regions per unit tends to be smaller than $K$. 
Indeed, for iid Gaussian parameters the number of regions per unit scales as $\log K$ (see Appendix~\ref{app:single_unit}). 

\paragraph{Number of regions during training} 
We consider the 10 class classification task with the MNIST dataset~\citep{lecun2010mnist} and optimization with Adam \citep{DBLP:journals/corr/KingmaB14} using different initialization strategies. 
Notice that for deep skinny fully connected networks the task is non-trivial. 
Figure~\ref{fig:training} shows how the number of activation regions evolves during training. Shown are also the linear regions and decision boundaries over a 2D slice of the input space through 3 training data points. Figure~\ref{fig:loss} shows the training loss and accuracy curves for the different initializations. We observe that maxout networks with maxout-He, sphere, and many regions converge faster than with naive He initialization.

\begin{figure}
    \centering
    \begin{tabular}{cc}
        {\small Number of activation regions} & {\small Number of pieces in the decision boundary}\\
        \includegraphics[trim=10 0 5 0, clip, width=0.35\textwidth]{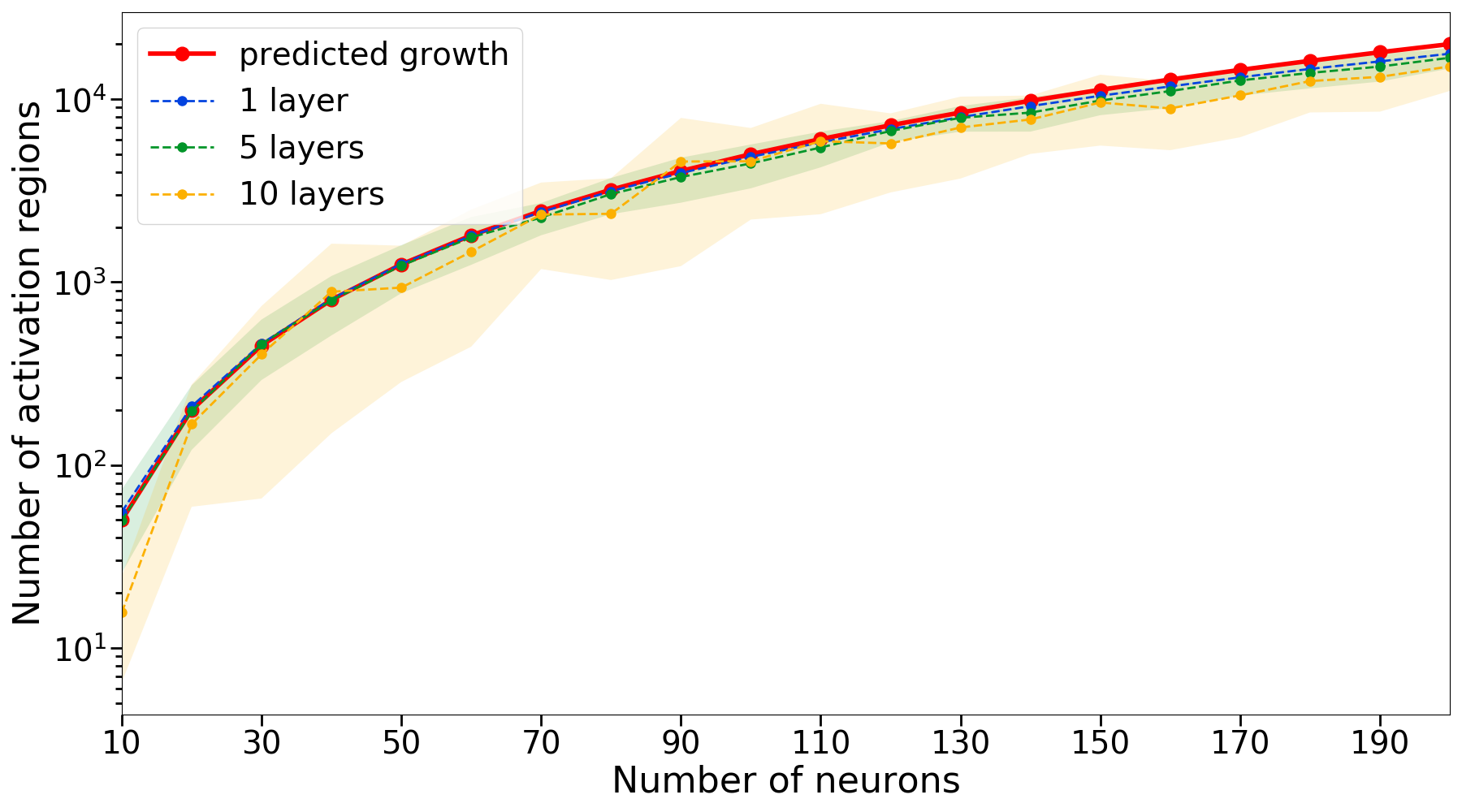}&
        \includegraphics[width=0.35\textwidth]{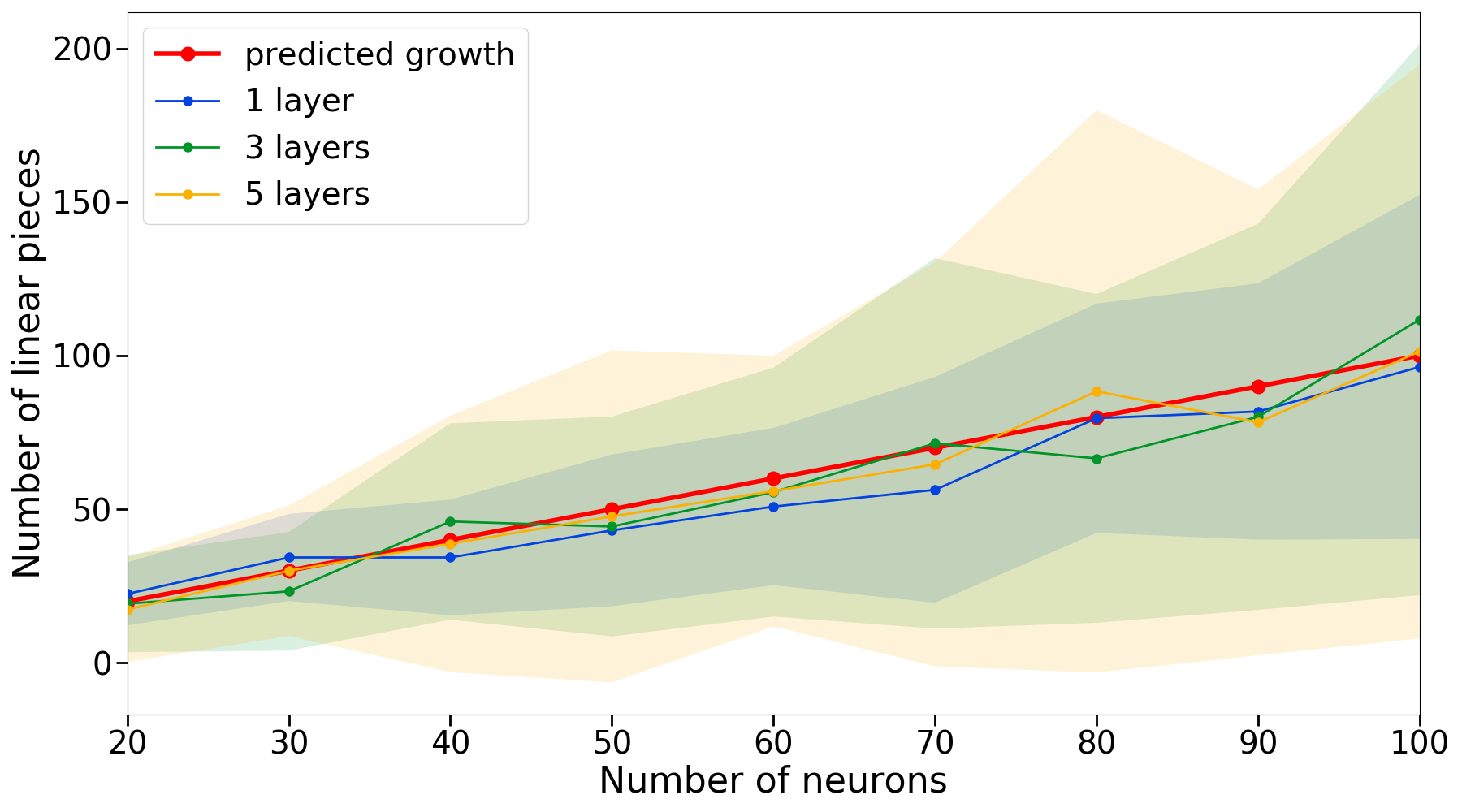}
    \end{tabular}
    \caption{Shown are means and stds for 30 maxout-He normal initializations for networks with $K = 2$ and $\nin = 2$. Left: Comparison of the theoretically predicted growth $O(N^{\nin} / \nin!)$ and the experimentally obtained number of regions for networks with different architectures. Right: Comparison of the theoretically predicted growth $O(N)$ and the experimentally obtained number of linear pieces of the decision boundary for networks with different architectures.
    }
    \label{fig:formula} 
\end{figure}
    
\begin{figure}
    \centering
    \begin{tabular}{cc}
        {\small Maxout rank $K = 2$} & {\small Maxout rank $K = 5$}\\
        \begin{subfigure}{.47\textwidth}
            \centering
            \includegraphics[trim=10 10 10 20, clip, width=.52\textwidth] {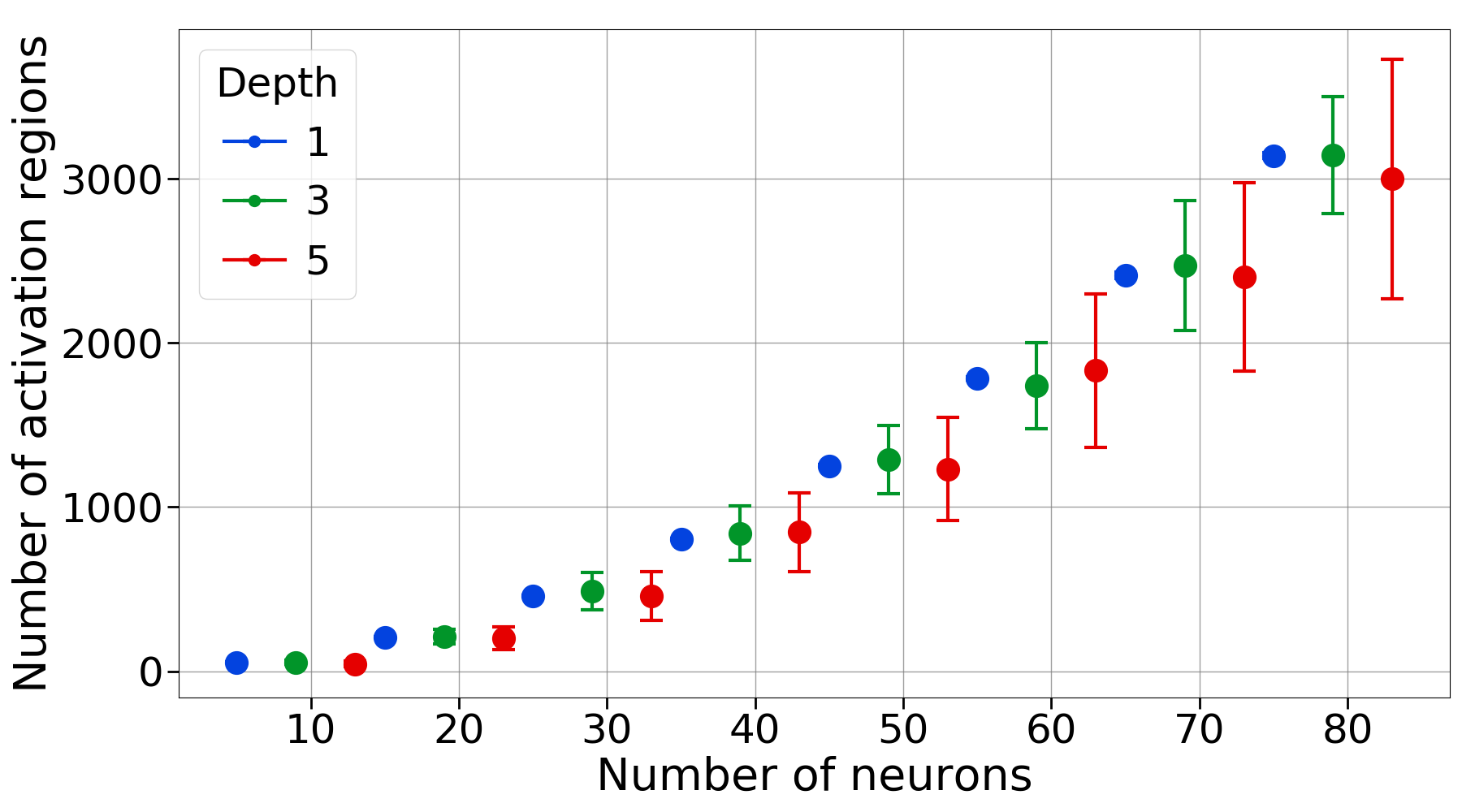}
            \includegraphics[trim=30 40 15 10, clip, width=.42\textwidth] {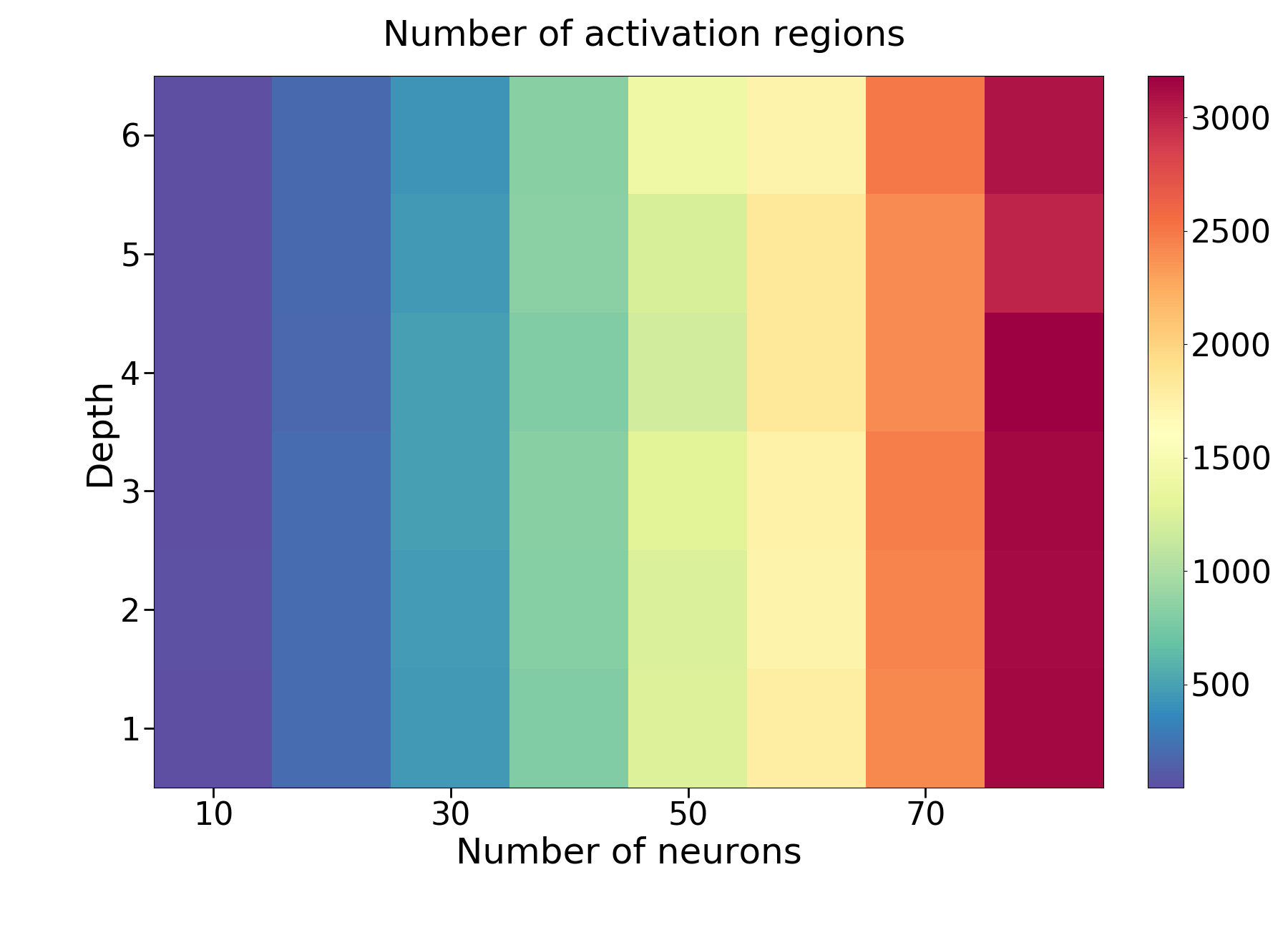}
        \end{subfigure}&
        \begin{subfigure}{.47\textwidth}
            \centering
            \includegraphics[trim=10 10 10 20, clip, width=.52\textwidth] {images/neurons_to_depth//f42a25d6537d431fa276af1075803f320.png}
            \includegraphics[trim=30 40 15 10, clip, width=.42\textwidth] {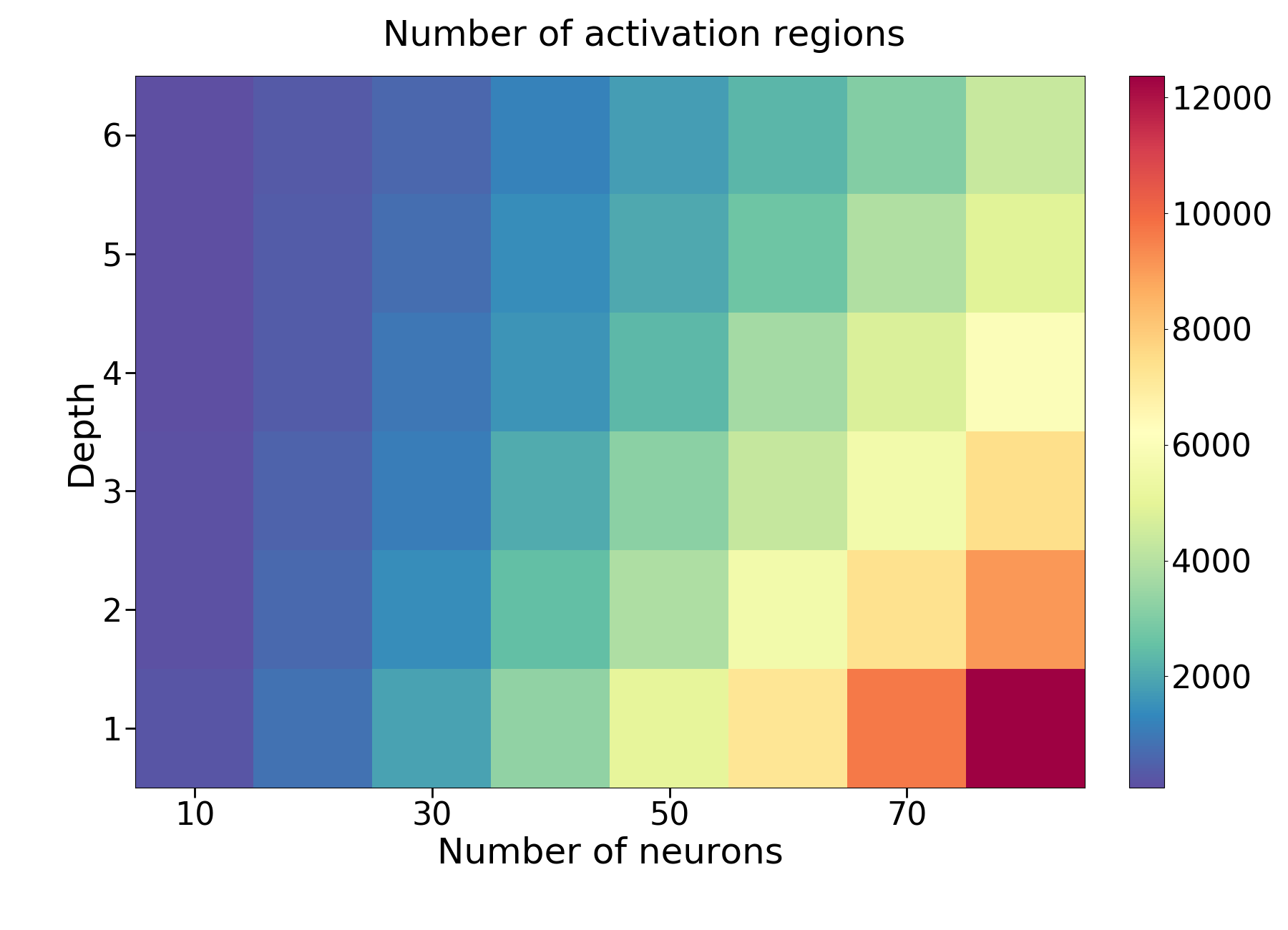}
        \end{subfigure}
    \end{tabular}
    \caption{
    Effect of the depth and number of neurons on the number of activation regions at initialization for networks with $\nin = 2$. 
    Shown are means and stds for 30 maxout-He normal initializations.
}
    \label{fig:neurons_to_depth} 
\end{figure}

\begin{figure}
    \begin{subfigure}{1.\textwidth}
        \centering
        \begin{tabular}{cc}
        {\small Contribution per unit in a shallow network} & {\small Effect of the maxout rank} \\
        \includegraphics[width=0.35\textwidth]{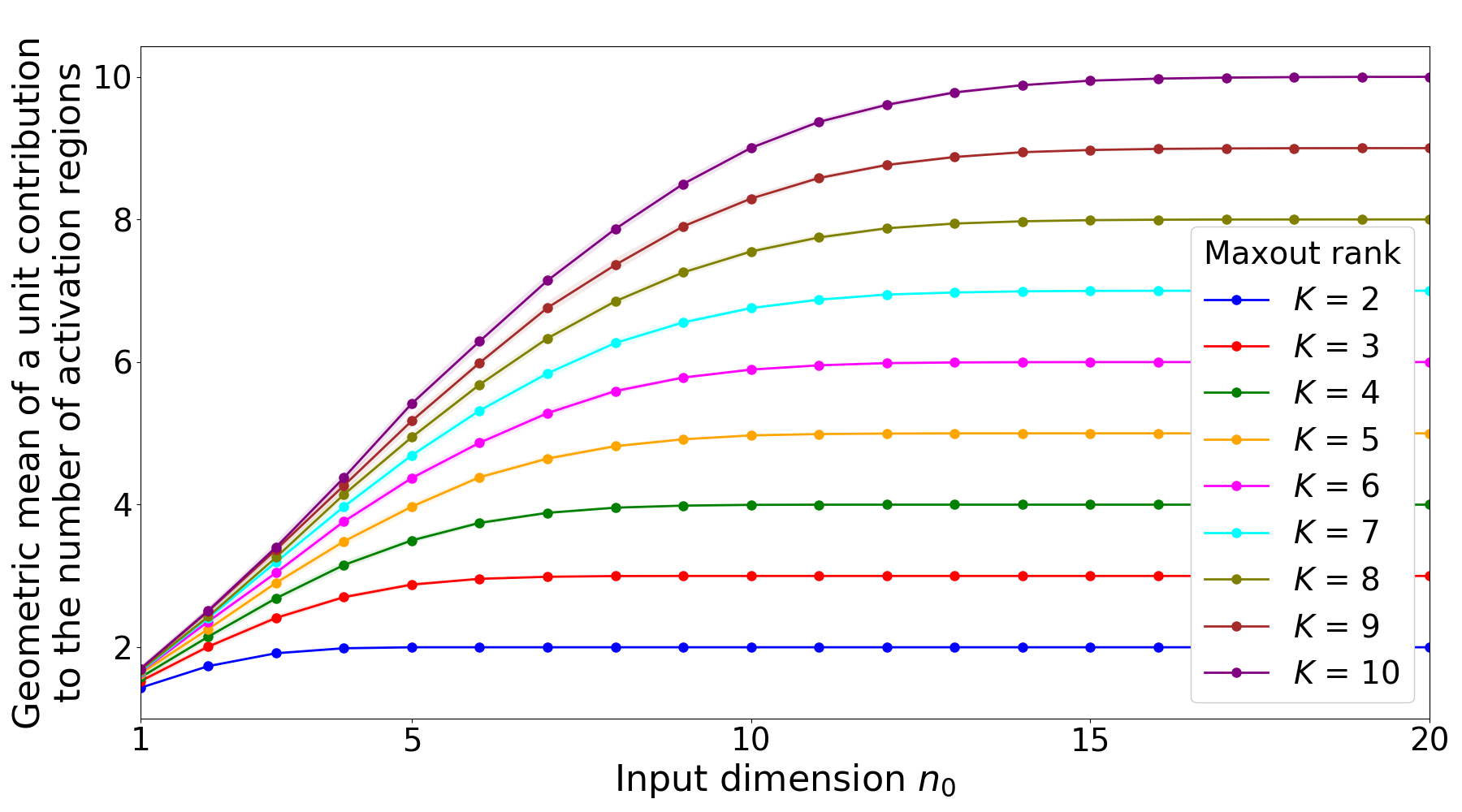}&
        \includegraphics[width=0.35\textwidth]{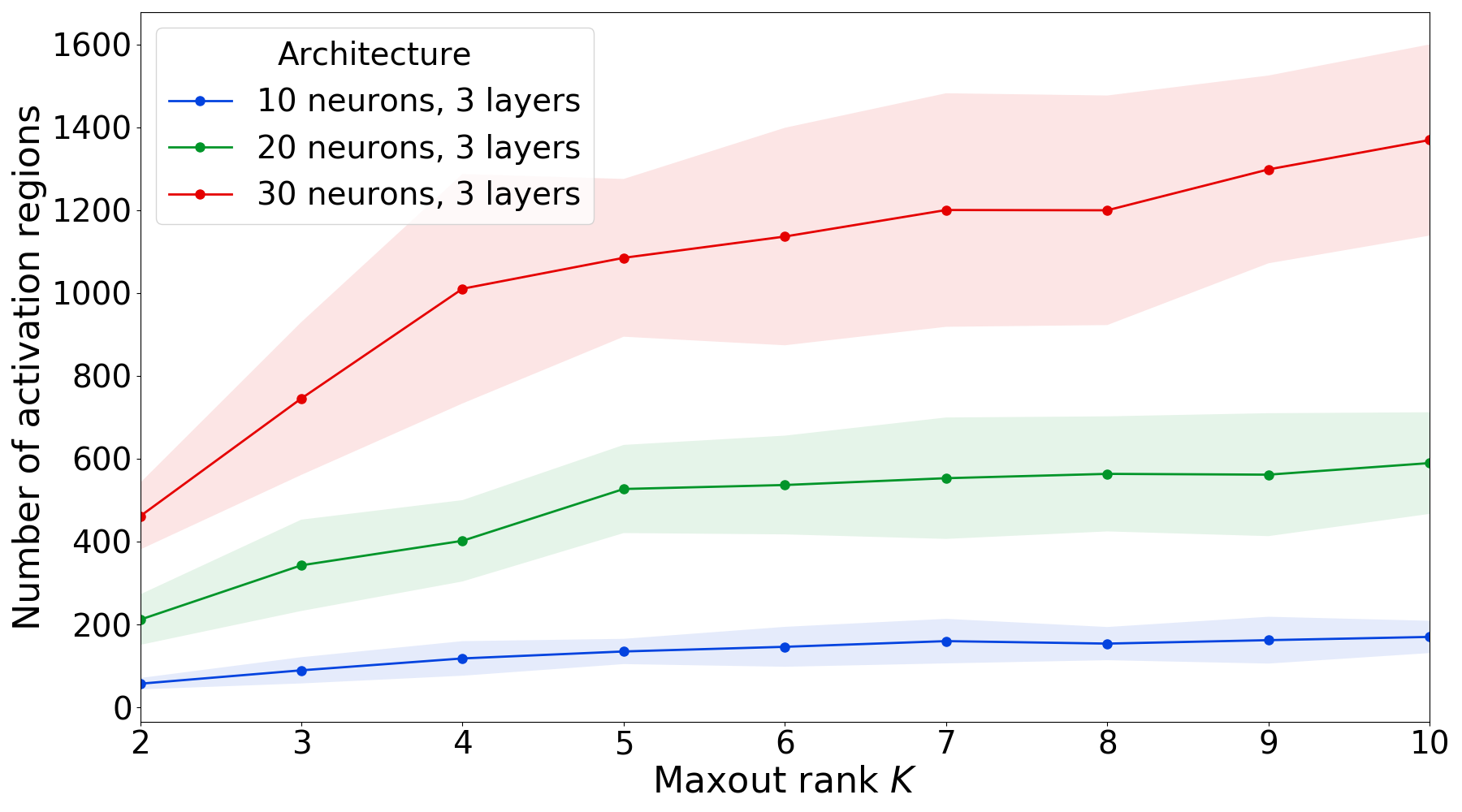}
        \end{tabular}
    \end{subfigure}
    \caption{
        Left: Plotted is $\#\text{regions}^{1/N}$ for a shallow network with $N=5$.
        The multiplicative contribution per unit increases with the input dimension until the trivial upper bound $K$ is reached.
        Right: Number of regions of 3 layer networks with $\nin = 2$ depending on $K$. 
        Shown are means and stds for 30 ReLU-He normal initializations.
        }
    \label{fig:shallow_and_rank} 
\end{figure}

\begin{figure}
    \begin{tabular}{cccc}
        \centering
        & {\small Number of regions during training} & {\small Before training} & {\small After 100 epochs} \\
        
        \begin{minipage}[c]{0.03\textwidth}
            \begin{flushright}
                {\rotatebox{90}{\small Linear regions}}
            \end{flushright}
        \end{minipage}
        &
        \begin{minipage}[c]{0.38\textwidth}
            \begin{center}
                \includegraphics[trim=10 10 10 10, clip, width=0.9\textwidth]{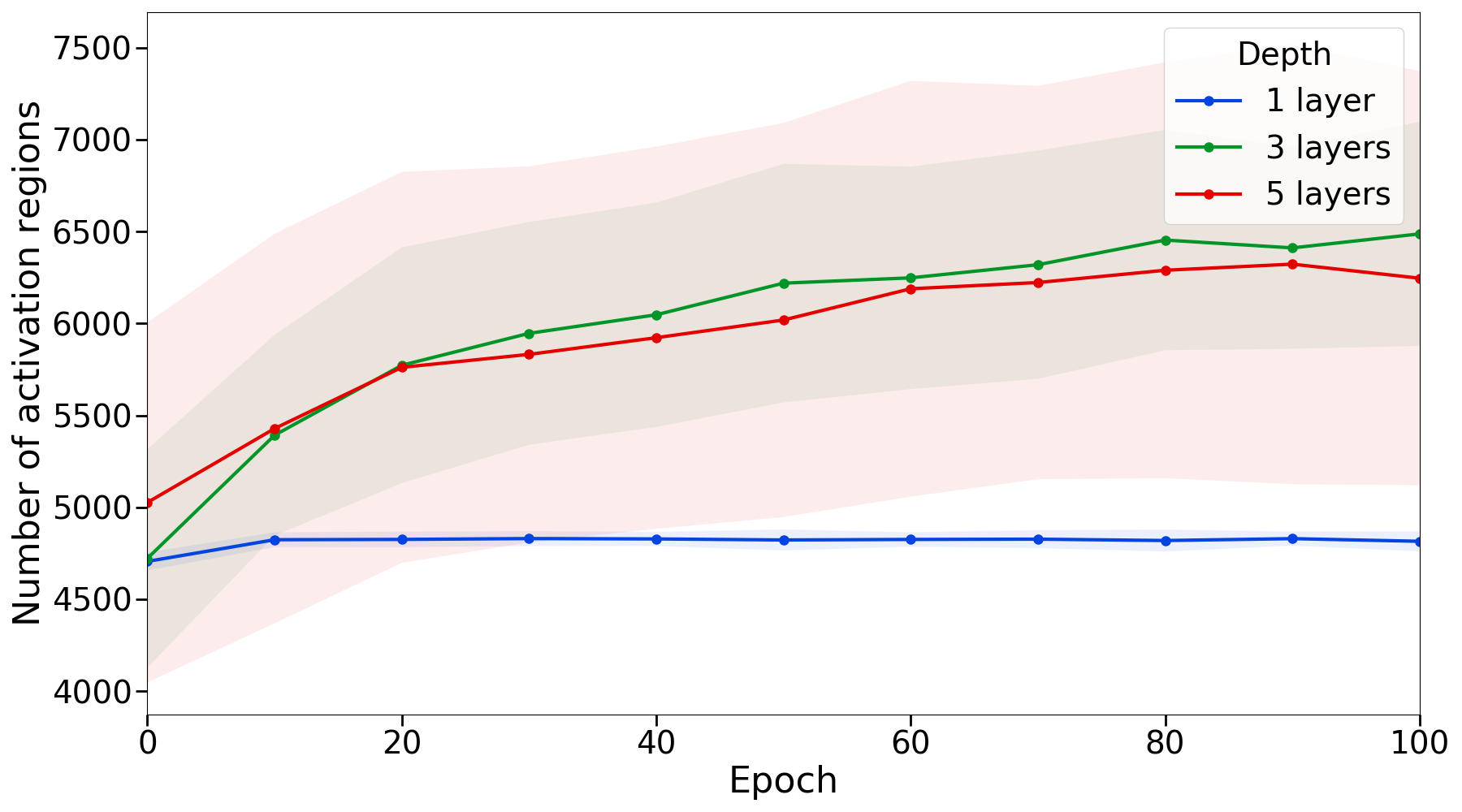}
            \end{center}
        \end{minipage}
        &
        \begin{minipage}[c]{0.2\textwidth} 
            \begin{center}
                \includegraphics[width=0.9\textwidth]{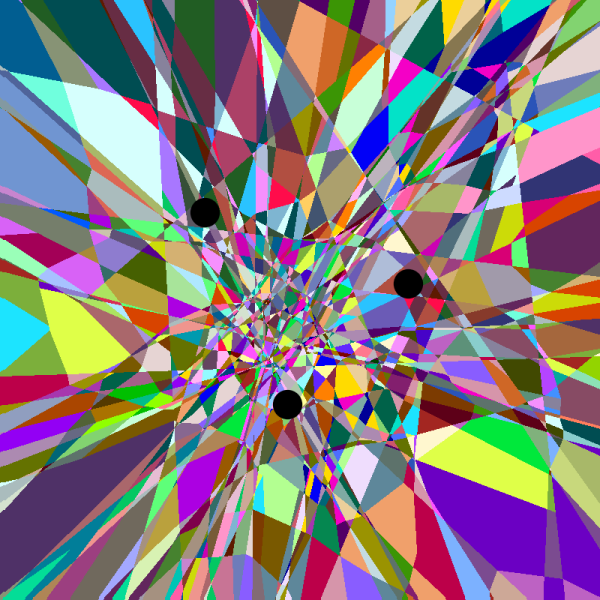}
            \end{center}
        \end{minipage}
        &
        \begin{minipage}[c]{0.2\textwidth} 
            \begin{center}
                \includegraphics[width=0.9\textwidth]{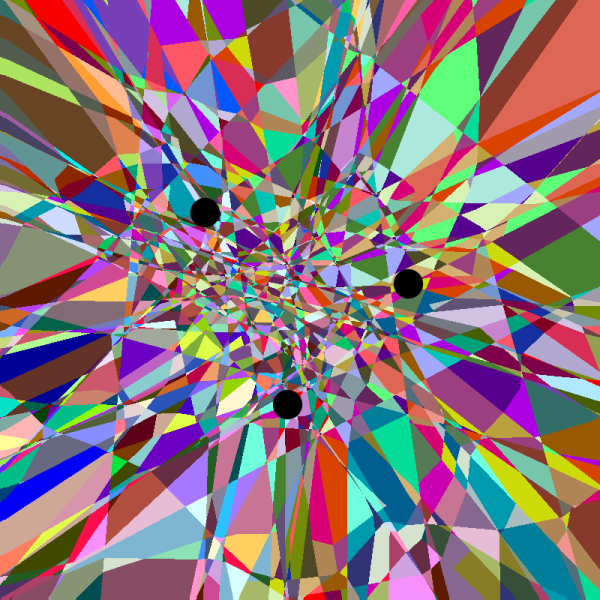}
            \end{center}
        \end{minipage} \\
       
        \vspace{.01cm}\\
        & {\small Number of pieces during training} & {\small Before training} & {\small After 100 epochs} \\
        
        \begin{minipage}[c]{0.03\textwidth}
            \begin{flushright}
                {\rotatebox{90}{\small Decision boundary}}
            \end{flushright}
        \end{minipage}
        &
        \begin{minipage}[c]{0.38\textwidth} 
            \begin{center}
                \includegraphics[trim=10 10 10 10, clip, width=0.9\textwidth]{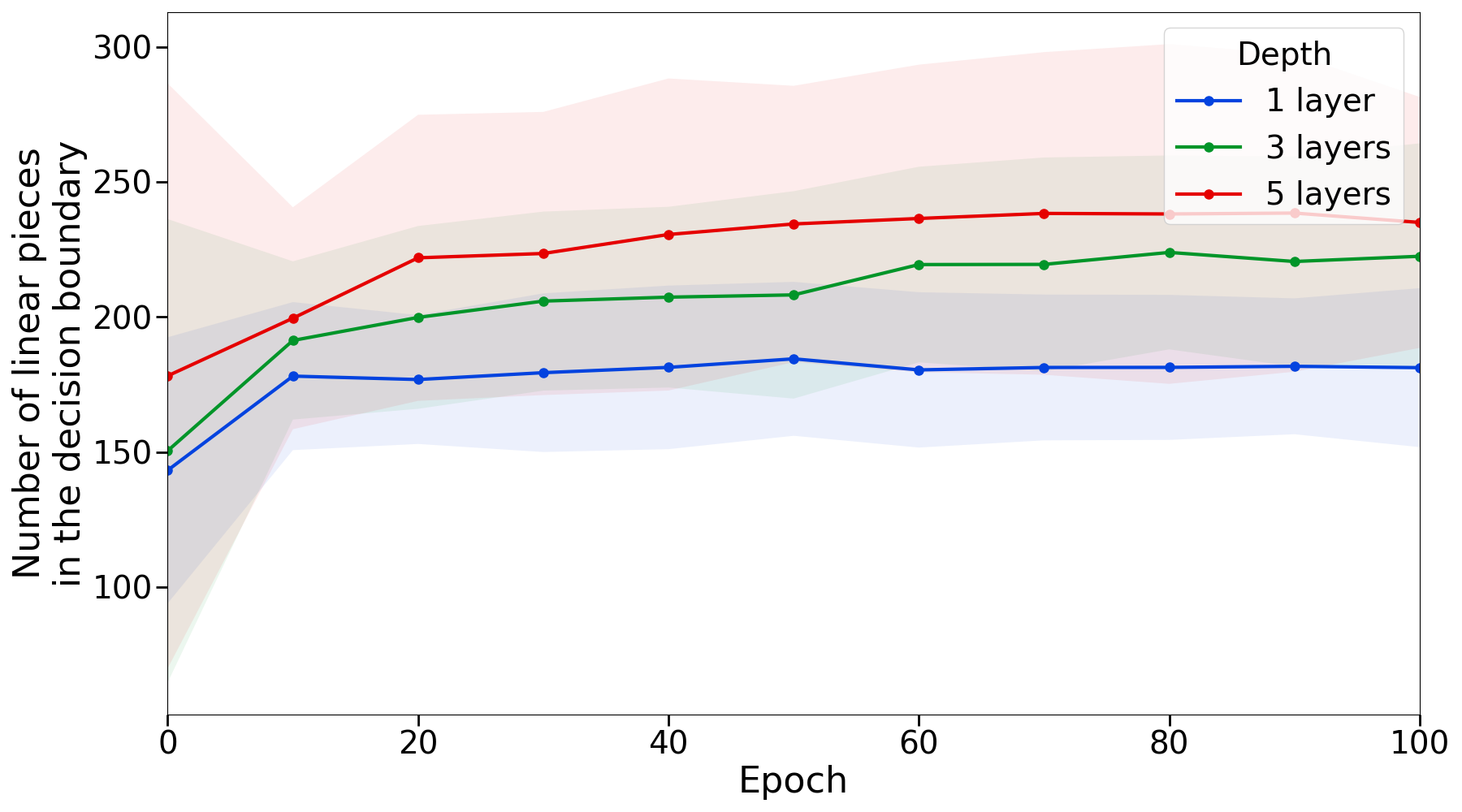}
            \end{center}
        \end{minipage}
         &
        \begin{minipage}[c]{0.2\textwidth} 
            \begin{center}
                \includegraphics[width=0.9\textwidth]{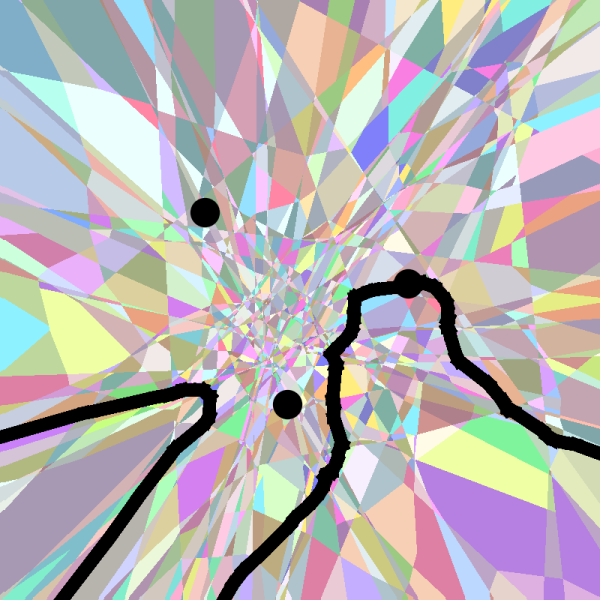}
            \end{center}
        \end{minipage} &
        \begin{minipage}[c]{0.2\textwidth} 
          \begin{center}
              \includegraphics[width=0.9\textwidth]{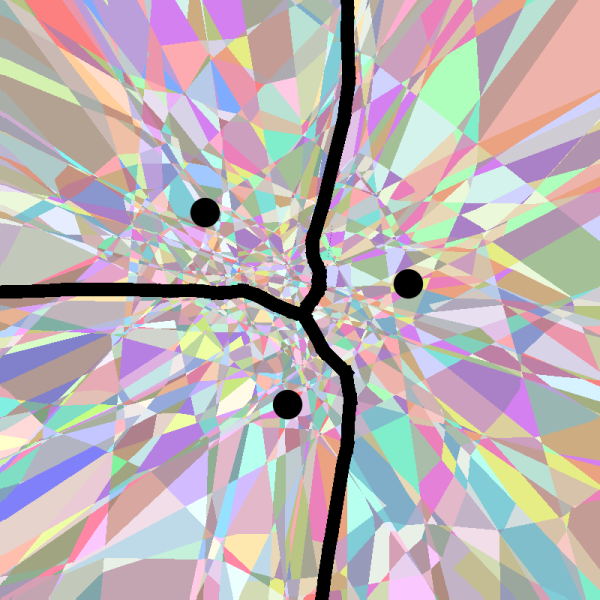}
          \end{center}
        \end{minipage}
    \end{tabular}
    \caption{Evolution of the linear regions and the decision boundary during training on the MNIST dataset in a slice determined by three random points from different classes. 
    The network had $100$ maxout units of rank $K = 2$, and was initialized using maxout-He normal initialization. The right panel is for the $3$ layer network. 
    As expected, for the shallow rank-$2$ network, the number of regions is approximately constant. For deep networks we observe a moderate increase in the number of regions as training progresses, especially around the training data.  However, the number of regions remains far from the theoretical maximum. This is consistent with previous observations for ReLU networks. 
    There is also a slight increase in the number of linear pieces in the decision boundary, and at the end of training the decision boundary clearly separates the three reference points. 
    }
    \label{fig:training} 
\end{figure}

\begin{figure}
    \begin{subfigure}{1.\textwidth}
        \centering
        \begin{tabular}{cc}
        {\small Loss on the training set} & {\small Accuracy on the validation set} \\
        \includegraphics[width=0.35\textwidth]{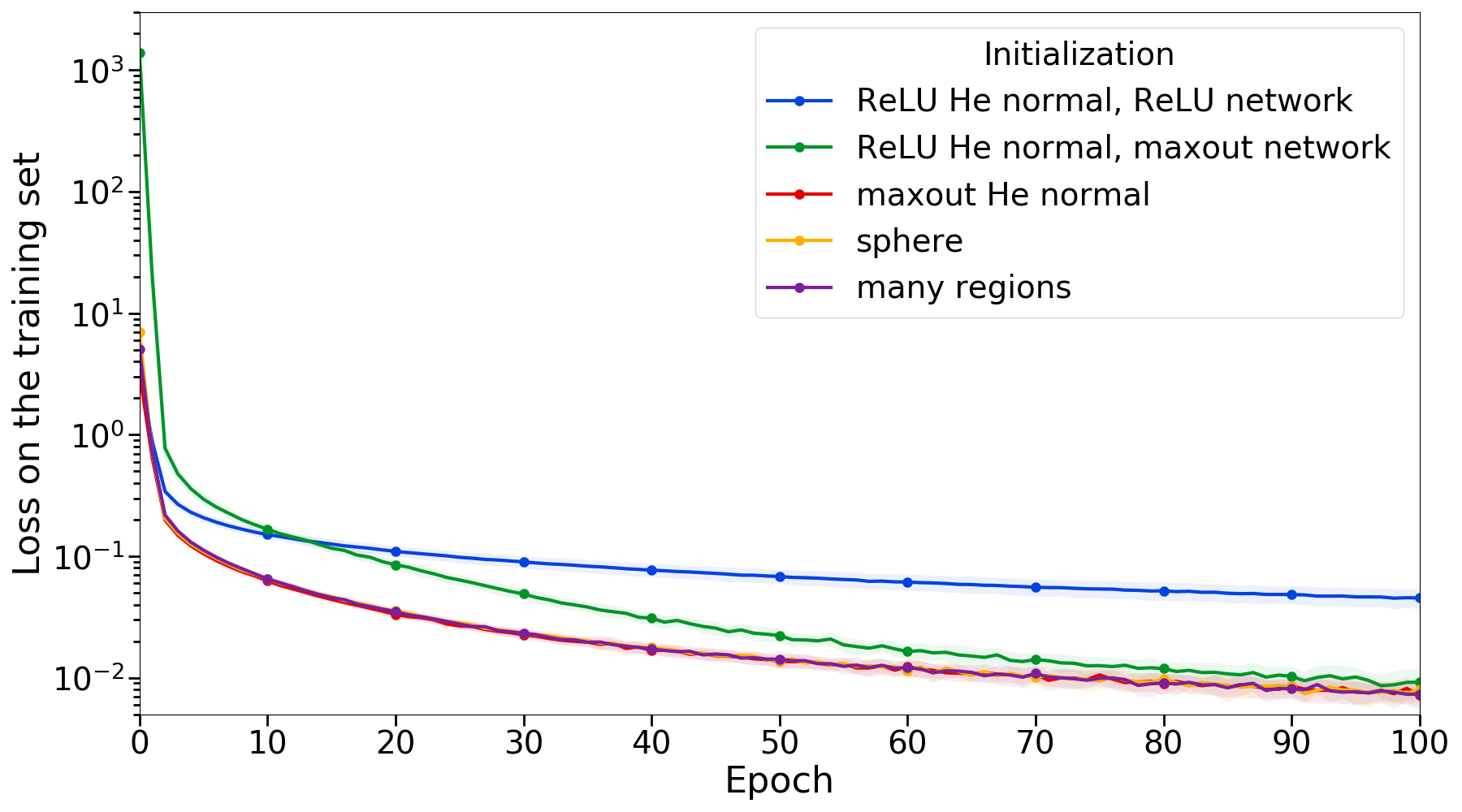}&
        \includegraphics[width=0.35\textwidth]{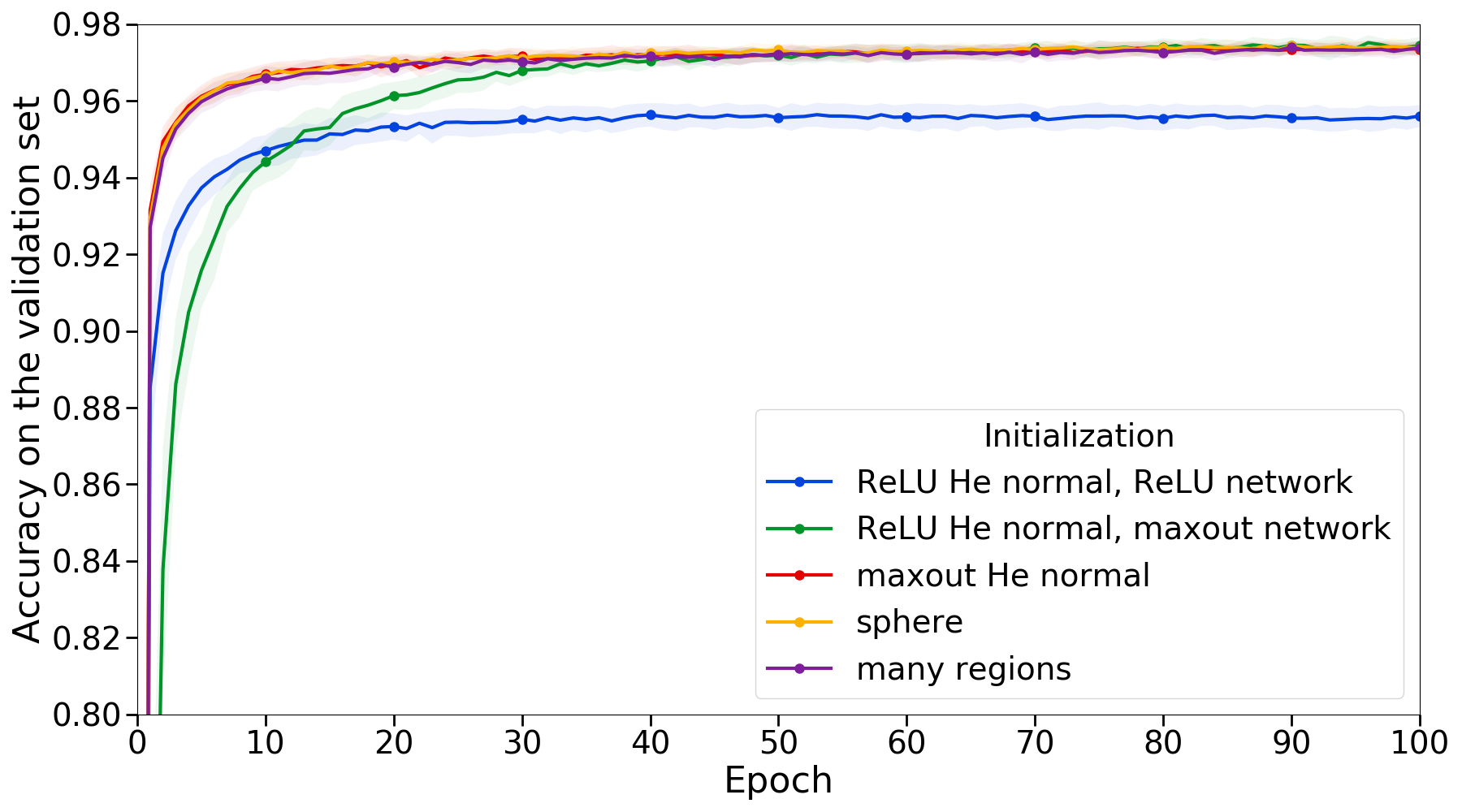}
        \end{tabular}
    \end{subfigure}
    \caption{
        Comparison of training on MNIST with different initializations. All networks had $200$ units, $10$ layers, and maxout networks had rank $K = 5$. Shown are averages and std (barely noticeable) over 30 repetitions. 
        The type of initialization has a significant impact on the training time of maxout networks, with maxout-He, sphere, and many regions giving better results for deep networks and larger maxout rank (more details on this in Appendix~\ref{app:experiments}). 
    }
    \label{fig:loss} 
\end{figure}

\newpage 
\section{Discussion}
\label{section:discussion}

We advance a line of analysis recently proposed by \citet{pmlr-v97-hanin19a,NIPS2019_8328}, where the focus lies on the expected complexity of the functions represented by neural networks rather than worst case bounds. Whereas previous works focus on single-argument activations, our results apply to networks with multi-argument maxout units. 
We observe that maxout networks can assume widely different numbers of linear regions with positive probability and then computed an upper bound on the expected number of regions and volume given properties of the parameter distribution, covering the case of zero biases. 
Further, taking the standpoint of classification, we obtained corresponding results for the decision boundary of maxout (and ReLU) networks, along with bounds on the expected distance to the decision boundary.

Experiments show that the theoretical bounds capture the general behavior. We present algorithms for enumerating the regions of maxout networks and proposed parameter initialization strategies with two types of motivations, one to increase the number of regions, and second, to normalize the variance of the activations similar to \citet{glorot2010understanding} and \citet{he2015delving}, but now for maxout. We observed experimentally that this can improve training in maxout networks.

\paragraph{Limitations} In our theory and experiments we have considered only fully connected networks. 
The analysis and implementation of other architectures for experiments with more diverse datasets are interesting extensions.
By design, the results focus on parameter distributions which have a density.

\paragraph{Future work}
In future work we would like to obtain a fine grained description of the distribution of activation regions over the input space depending on the parameter distribution and explore the relations to speed of convergence and implicit biases in gradient descent.
Of significant interest would be an extension of the presented results to specific types of parameter distributions, including such which do not have a density or those one might obtain after training. 

\paragraph{Discussion of potential negative societal impacts} 
This is foundational research and not tied to particular applications. 
To the best of our knowledge there are no direct paths to negative societal impacts. 
 
\paragraph{Funding transparency statement}
 This project has received funding from the European Research Council (ERC) under the European Union’s Horizon 2020 research and innovation programme (grant agreement n\textsuperscript{o} 757983).  

\newpage 
{\small
    \bibliographystyle{apalike}
    \bibliography{main}
}
\newpage

\appendix

\section*{Appendix} 

The appendix is organized as follows. 
\begin{itemize}[leftmargin=*]
    \item Appendix~\ref{app:intro_proofs} Proofs related to activation patterns and activation regions. 
    \item Appendix~\ref{app:posvol} Proofs related to the numbers of regions attained with positive probability. 
    \item Appendix~\ref{app:single_unit} Expected number of regions for large rank. 
    \item Appendix~\ref{app:boundary_volume} Proofs related to the expected volume of activation regions. 
    \item Appendix~\ref{app:expected_number} Proofs related to the expected number of activation regions. 
    \item Appendix~\ref{app:constants} Upper bounding the constants. 
    \item Appendix~\ref{app:zero_bias} Proofs related to the expected number of regions for networks with zero bias. 
    \item Appendix~\ref{app:decision_boundary} Proofs related to the decision boundary. 
    \item Appendix~\ref{app:algorithm} Algorithm for counting regions and pieces of the decision boundary. 
    \item Appendix~\ref{app:init} Initialization procedures. 
    \item Appendix~\ref{app:experiments} Details on the experiments and additional experiments. 
\end{itemize}

\section{Proofs related to activation patterns and activation regions}
\label{app:intro_proofs}  

\subsection{Number of activation patterns}

\trivialupperboundlemma*
\begin{proof}[Proof of Lemma~\ref{lem:trivial_upper_bound}]
    To get an $r$-partial activation pattern one needs at most $r$ neurons. The number of ways to choose them is $\binom{N}{r}$. The number of ways to choose a pre-activation feature that attains a maximum in the rest of neurons is $K^{N - r}$. The $r$ chosen neurons have in total $r K$ pre-activation features. Out of them, we need to choose $r$ features that attain maximum, and $r$ additional features to construct the pre-activation pattern, so $2r$ features in total. 
    We ignore the restriction that there needs to be at least one feature from each neuron, which gives us an upper-bound ${r \binom{K}{2r}}$. 
    Notice that this way we also count $r$-partial patterns that require less than $r$ neurons. Combining everything, we get the desired result. 
    For the sub-patters, we simply ignore the term $K^{n-r}$. 
\end{proof}

We will use the above upper bound in our calculations due to its simplicity. 
For completeness, we note that the exact number of partial activation patterns can be given as follows. 

\begin{proposition}[Number of $r$-partial activation patterns]
\label{proposition:maxnrpartialactivation}
For a network with a total of $N$ rank-$K$ maxout units the number of distinct $r$-partial activation patterns is 
$$
|\mathcal{P}_r| = \sum_{\substack{(N_0,\ldots, N_{K-1})\in\mathbb{N}_0^K\colon \\  \sum_{j=0}^{K-1} N_j = N, \sum_{j=0}^{K-1} j N_j = r}} \binom{N}{ N_0,\ldots, N_{K-1}} \prod_{j=0}^{K-1} \binom{K}{1+j}^{N_j}. 
$$
If $K=2$ then the summation index takes only one value $(N_0,N_1)=(N-r,r)$ and the expression simplifies to $ \binom{N}{N-r} 2^{N-r}$. 
\end{proposition}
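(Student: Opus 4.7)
The plan is to proceed by a direct counting argument, organizing patterns according to the multiset of sizes $(|J_z|)_{z\in[N]}$. For each unit $z$, write $j_z = |J_z| - 1 \in \{0,1,\ldots,K-1\}$. Since $J_z$ must be a nonempty subset of $[K]$, these shifted sizes can take any value in $\{0,\ldots,K-1\}$, and the defining constraint $\sum_{z\in[N]}(|J_z|-1)=r$ of an $r$-partial activation pattern translates to $\sum_{z\in[N]} j_z = r$.

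Next I would stratify the enumeration by the profile $(N_0,\ldots,N_{K-1})$, where $N_j$ counts the number of units $z$ with $j_z=j$. The profile vectors that appear are exactly those in $\mathbb{N}_0^K$ satisfying the two constraints $\sum_{j=0}^{K-1} N_j = N$ and $\sum_{j=0}^{K-1} j N_j = r$. Given a fixed profile, I would count patterns in two independent steps: (i) assign each of the $N$ units a shifted size $j_z$ in accordance with the profile, which is a sequencing of a multiset and is counted by the multinomial $\binom{N}{N_0,\ldots,N_{K-1}}$; (ii) for each unit of shifted size $j$, choose its actual subset $J_z\subseteq[K]$ of cardinality $1+j$, contributing a factor $\binom{K}{1+j}$, and so $\prod_{j=0}^{K-1}\binom{K}{1+j}^{N_j}$ overall. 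Multiplying and summing over all admissible profiles yields the displayed formula.

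Finally I would verify the $K=2$ specialization. Here $j_z\in\{0,1\}$, so the constraints $N_0+N_1 = N$ and $N_1 = r$ force the unique profile $(N_0,N_1)=(N-r,r)$; the multinomial reduces to $\binom{N}{r}=\binom{N}{N-r}$, and the subset factor becomes $\binom{2}{1}^{N-r}\binom{2}{2}^{r}=2^{N-r}$, giving $|\mathcal{P}_r| = \binom{N}{N-r}\, 2^{N-r}$ as claimed.

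There is no real obstacle: the argument is a textbook combinatorial decomposition, and the only minor care required is in checking that the two counting steps are independent (which they are, because the profile determines the multiset of sizes but not which subset of $[K]$ of that size each unit gets) and that the sum is over the correct set of profiles. A sanity check is that summing over all $r$ recovers $\sum_{r}|\mathcal{P}_r| = (2^K-1)^N$, the total number of assignments of nonempty subsets of $[K]$ to $N$ units, which follows from the multinomial theorem applied to $\bigl(\sum_{j=0}^{K-1}\binom{K}{1+j}\bigr)^N = (2^K-1)^N$.
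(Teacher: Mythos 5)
Your proposal is correct and follows essentially the same argument as the paper: stratify patterns by the profile $(N_0,\ldots,N_{K-1})$ of units according to $|J_z|-1$, count the assignment of profiles to units with the multinomial coefficient, and multiply by $\binom{K}{1+j}$ for each unit's choice of subset. The sanity check $\sum_r |\mathcal{P}_r| = (2^K-1)^N$ is a nice extra confirmation not present in the paper, but the core decomposition is identical.
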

\begin{proof}
We have $N$ neurons. For a given activation pattern, for $j=0,\ldots, K-1$, denote $N_j$ the number of neurons with $(1+j)$ pre-activation features attaining the maximum.
Since every neuron has indecision in the range $0,\ldots, K-1$, we have $\sum_{j=0}^{K-1} N_j=N$. 
The $r$-partial activation patterns are precisely those for which $\sum_j j N_j=r$. 
The number of distinct ways in which we can partition the set of $N$ neurons into $K$ sets of cardinalities $N_0,\ldots, N_{K-1}$ is precisely $\binom{N}{N_0,\ldots, N_{K-1}}$. 
For each $j$, the number of ways in which a given neuron can have $(1+j)$ pre-activation features attaining the maximum is $\binom{K}{1+j}$. 
\end{proof}

\subsection{Generic correspondence between activation regions and linear regions}

For a fixed activation pattern $J$, a \emph{computation path} $\gamma$ is a path in the computation graph of the network $\net$ that goes from input to the output through one of the units in each layer, where $\gamma = (\gamma_0,\gamma_1,\ldots,\gamma_L)$, $\gamma_l\in[n_l]\times[K]$ specifies a unit and a corresponding pre-activation feature in layer $l$.
For any input $x$ in the activation region $\mathcal{R}(J,\theta)$, the gradient with respect to $x$ can be expressed through the computation paths as 
\begin{align*}
\nabla \net(x,\theta) = W_x^{(L+1)}W_x^{(L)} \cdots W_x^{(1)}, \qquad  
\frac{\partial}{\partial x_i} \net(x,\theta) = \sum_{\substack{\text{paths }\gamma\\\text{ starting at $i$}}} \prod_{l=1}^{L+1} w_{\gamma}^{(l)},
\end{align*}
where in $W_x^{(l)} \in \mathbb{R}^{n_{l}\times n_{l-1}}$ is a piecewise constant matrix valued function of the input $x$ with rows corresponding to the pre-activation features that attain the maximum according to the pattern $J$, 
and $w_{\gamma}^{(l)}\in\mathbb{R}$ are corresponding weights on the edge of $\gamma$ between the layer $(l - 1)$ and $l$, again depending on $J$. 
For a simple example of when one linear region is a union of several activation regions in a maxout network, consider a network with one of the weights in the single linear output unit set to zero. Such a situation can happen, for instance, at initialization, though with probability $0$. Then, switching between the maximums in the unit in the previous layer to which this weight connects will not be visible when we compute the gradient, and several activation regions created by the transitions between maximums in this unit will become a part of the same linear region. 

\actvslinlemma*
\begin{proof}[Proof of the Lemma~\ref{lem:act_vs_lin}]
    Consider two different non-empty activation regions corresponding to activation patterns $J_1$ and $J_2$ for which $\nabla \net (x; \theta)$ has the same value. 
    This means that $\nin$ equations of the form
    \begin{align*}
         \sum_{\text{paths }\gamma \in \Gamma_{1, i}} \prod_{l=1}^{L+1} w_{\gamma}^{(l)} = \sum_{\text{paths }\gamma \in \Gamma_{2, i}} \prod_{l=1}^{L+1} w_{\gamma}^{(l)}
    \end{align*}
    are satisfied, where $\Gamma_{1, i}, \Gamma_{2, i}$ are collections of paths starting at $i$ corresponding to the activation patterns $J_1$ and $J_2$ respectively.
    For different values of $i$ the sets of paths differ only at the input layer. 
    
    Based on this equation, there exists $c_{\gamma, i} \in \{\pm 1\}$ and a non-empty collection of paths $\Gamma_i$  (the symmetric difference of $\Gamma_{1,i}$ and $\Gamma_{2,i}$) so that 
    \begin{align*}
        \sum_{\text{paths }\gamma \in \Gamma_i} c_{\gamma, i} \prod_{l=1}^{L+1} w_{\gamma}^{(l)} = 0.
    \end{align*}
    This is a polynomial equation in the weights of the network. 
    Each monomial occurs either with coefficient $1$ or $-1$. In particular, this polynomial is not identically zero. 
    The zero set of a polynomial is of measure zero on $\mathbb{R}^{\# \text{\normalfont weights}}$ unless it is identically zero, see e.g. \citet{caron2005zero}.
    We have a system of $\nin$ such equations (one for each $i$). 
    The intersection of the solution sets is again a set of measure zero.
    The total number of pairs of activation regions is finite, upper bounded by $\binom{K^N}{2}$. 
    A countable union of measure zero sets is of measure zero, thus the set of weights for which two activation regions have the same gradient values has measure zero with respect to the Lebesgue measure on $\mathbb{R}^{\# \text{\normalfont weights}}$.
\end{proof}

\subsection{Partial activation regions} 

Now we introduce several objects that are needed to discuss $r$-partial activation regions. 

\begin{definition}
    \label{def:indecision_locus}
    
    Fix a value $\theta$ of the trainable parameters.
    For a neuron $z$ in $\net$ and a set $J_z \subseteq [K]$, the \textbf{$J_z$-activation region} of a unit $z$ is
    \begin{align*}
        \mathcal{H}(J_z;\theta) \defeq \{x_0 \in \mathbb{R}^{\nin} \ | \ \argmax\limits_{k\in [K]} \zeta_{z, k} (x_{l(z) - 1}; \theta) = J_z  \}.
    \end{align*}
    More generally, for a set of neurons $\mathcal{Z} = \{z\}$ and a corresponding list of sets $J_{\mathcal{Z}} = (J_z)_{z\in\mathcal{Z}}$,  the corresponding $J_\mathcal{Z}$-activation region is 
    \begin{align}
        \mathcal{H}(J_\mathcal{Z}; \theta) \defeq \bigcap\limits_{z\in \mathcal{Z}} \mathcal{H}({J_z};\theta). 
        \label{eq:big_h}
    \end{align}
    
If we specify an activation pattern for every neuron, $J_{[N]}$, so that $\mathcal{Z}=[N]$, then we write 
    \begin{align*}
 \mathcal{R}(J_{[N]}; \theta) = \mathcal{H}(J_{[N]};\theta). 
    \end{align*}

Recall that an activation pattern $J_{[N]}$ with with the property that  $\sum_{z}(|J_z|-1)=r$ is called an $r$-partial activation pattern. 
To distinguish such patterns, we denote them by $J^r \in\mathcal{P}_r$.
The union of all corresponding activation regions is denoted 
    $$
    \mathcal{X}_{\mathcal{N},r}(\theta) = \bigcup_{J^r\in\mathcal{P}_r} \mathcal{R}(J^r;\theta). 
    $$    
\end{definition}

\convactreglemma*
\begin{proof}[Proof of Lemma~\ref{lem:conv_act_reg}]
    Fix an $r$-partial activation pattern $J^r\in\mathcal{P}_r$.
    Over the activation region $\mathcal{R}(J;\theta)$, the $k$-th pre-activation feature of each neuron $z$ is a linear function of the input to the network, namely 
    \begin{align*}
         w^*_{z, k} \cdot x + b^*_{z, k} =  w^{(l(z))}_{z, k} (w^{(l(z) - 1)} \cdots ( w^{(1)} \cdot x + b^{(1)}) \cdots + b^{(l(z) - 1)}) + b^{(l(z))}_{z, k}, 
    \end{align*} 
    where $w^*_{z, k}$ and $b^*_{z, k}, k \in [K]$ denote the  weights and biases of this linear function, which depend on the weights and biases and activation values of the units up to unit $z$.
    For each $z$ specify a fixed element $j_0\in J_z$. 
    The activation region can be written as
    \begin{align*}
        \begin{split}
            \bigcap\limits_{z \in [N]} \big\{ x \in \mathbb{R}^{\nin} \ | \ & w^*_{z, j_0} \cdot x + b^*_{z, j_0} = w^*_{z, j} \cdot x + b^*_{z, j}, \quad \forall j \in J_z \setminus \{j_0\};\\
            &w^*_{z, j_0} \cdot x + b^*_{z, j_0} > w^*_{z, i} \cdot x + b^*_{z, i}, \quad \forall i \in [K] \setminus J_z \big\}. 
        \end{split}
    \end{align*}
    This means that an $r$-partial activation region is determined by a set of strict linear inequalities and $r$ linear equations.
    The equations are represented by vectors $v_{z,j} =(w^*_{z,j_0},b^*_{z_{j_0}})-(w^*_{z,j},b^*_{z,j})$ for all $j\in J_z\setminus\{j_0\}$ for all $z$ for which $|J_z|>1$. For generic parameters these equations are linearly independent. Indeed, the vectors being linearly dependent means that there is a matrix $V^\top V$, where $V$ has rows $v_{z,j}$, with vanishing determinant.
    By similar arguments as in the proof of Lemma~\ref{lem:act_vs_lin}, the set of parameters solving a polynomial system has measure zero.
    Hence, for generic choices of parameters, the $r$ linear equations are independent and the polyhedron will have a co-dimension $r$ (or otherwise be empty). 
\end{proof}

The same result can be obtained for $r$-partial activation regions of ReLU networks since ReLU activation regions can be similarly written as a system of linear equations and inequalities.

We can make a statement about the shape of $r$-partial activation regions of maxout networks.
Recall that a \emph{convex polyhedron} is the closure of the solution set to finite system of linear inequalities.
If it is bounded, it is called a convex polytope.
The dimension of a polyhedron is the dimension of the smallest affine space containing it. 

The next statement follows immediately from Lemma \ref{lem:conv_act_reg}.
    \begin{lemma}[$\mathcal{X}_{\net, r}$ consists of $(\nin -r)$-dimensional pieces]
        \label{lem:hyperplane_ball}
        With probability $1$ with respect to the distribution of the network parameters $\theta$, for any $x \in \mathcal{X}_{\net, r}$ there exists $\varepsilon > 0$ (depending on $x$ and $\theta$) s.t.\ $\mathcal{X}_{\net, r}$ intersected with the $\varepsilon$ ball $B_{\varepsilon}(x)$ is equal to the intersection of this ball with an $(\nin - r)$-dimensional affine subspace of $\mathbb{R}^{\nin}$.
    \end{lemma}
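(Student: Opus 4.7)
The plan is to reduce this to a local argument that leverages Lemma~\ref{lem:conv_act_reg}. That earlier lemma already shows that, for a full-measure set of $\theta$, each $r$-partial activation region $\mathcal{R}(J^r;\theta)$ is either empty or a relatively open convex polyhedron of codimension exactly $r$, hence lies in a unique $(n_0-r)$-dimensional affine subspace of $\mathbb{R}^{n_0}$. So it suffices to argue that near any point $x \in \mathcal{X}_{\net,r}$, the union $\mathcal{X}_{\net,r}$ agrees with a single such region, and that this region fills a relative neighborhood of $x$ inside its affine hull.

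Fix $\theta$ in the full-measure set of Lemma~\ref{lem:conv_act_reg} and pick $x \in \mathcal{X}_{\net,r}$. Then $x \in \mathcal{R}(J^r;\theta)$ for some $J^r \in \mathcal{P}_r$, i.e.\ for each unit $z$ we have $J_z = \argmax_{k\in[K]} \zeta_{z,k}(x;\theta)$. Each pre-activation feature $\zeta_{z,k}(\cdot;\theta)$ is continuous in the input, so the strict inequalities $\zeta_{z,i}(x;\theta) < \zeta_{z,j}(x;\theta)$ for $i\notin J_z$, $j\in J_z$ persist on some ball $B_{\varepsilon_1}(x)$. Consequently, for every $y\in B_{\varepsilon_1}(x)$ and every unit $z$, the maximizing set at $y$ is contained in $J_z$.

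Now, the combinatorial punchline: if $y \in B_{\varepsilon_1}(x)$ lies in some other $r$-partial region $\mathcal{R}(\tilde J^r;\theta)$, then $\tilde J_z \subseteq J_z$ for all $z$, giving $|\tilde J_z|\leq |J_z|$. But both patterns are $r$-partial, so $\sum_z(|\tilde J_z|-1) = r = \sum_z(|J_z|-1)$, which forces $|\tilde J_z|=|J_z|$ for every $z$ and hence $\tilde J^r = J^r$. Thus $\mathcal{X}_{\net,r}\cap B_{\varepsilon_1}(x) = \mathcal{R}(J^r;\theta)\cap B_{\varepsilon_1}(x)$.

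Finally, let $A$ be the $(n_0-r)$-dimensional affine hull of $\mathcal{R}(J^r;\theta)$ from Lemma~\ref{lem:conv_act_reg}. Since $\mathcal{R}(J^r;\theta)$ is relatively open in $A$ and contains $x$, there exists $\varepsilon_2 > 0$ with $A \cap B_{\varepsilon_2}(x) \subseteq \mathcal{R}(J^r;\theta)$. Setting $\varepsilon \defeq \min(\varepsilon_1,\varepsilon_2)$ yields $\mathcal{X}_{\net,r}\cap B_\varepsilon(x) = A \cap B_\varepsilon(x)$, as required. There is no serious obstacle here beyond the combinatorial step that neighboring $r$-partial patterns must be sub-patterns and then the codimension-$r$ budget forces equality; everything else is continuity plus the prior lemma.
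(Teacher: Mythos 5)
Your proof is correct and follows the same route as the paper, which simply asserts that this lemma ``follows immediately'' from Lemma~\ref{lem:conv_act_reg}. You in fact supply the detail the paper leaves implicit --- the continuity argument showing nearby argmax sets are contained in $J_z$, plus the codimension-budget step forcing a unique $r$-partial pattern locally --- and that filled-in argument is sound.
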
 
    
    \begin{restatable}[$r$-partial activation regions are relatively open convex polyhedra]{corollary}{smallersumcor}
        \label{cor:smaller_sum}
        Recall that an an $r$-partial activation sub-pattern $\hat{J} \in \mathcal{S}_r$ is a list $\hat{J} = (J_z)_{z\in Z}$ of sets $J_z\subseteq [K]$, $z\in Z\subseteq[N]$ with $|J_z|>1$ and $\sum_{z \in Z} (|J_z| - 1) = r$. 
        For almost all choices of the parameter (i.e., except for a null set  with respect to the Lebesgue measure), 
        \begin{align*}
            \vol_{\nin - r} \left( \mathcal{X}_{\net, r}(\theta)\right)
            = \sum_{\hat{J} \in\mathcal{S}_r} \vol_{\nin - r} (\mathcal{H}(\hat{J} ;\theta)). 
        \end{align*}    
    \end{restatable}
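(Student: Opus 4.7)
The plan is to rewrite each side of the claimed equation as a sum indexed by $r$-partial activation patterns, and then verify the equality term by term after regrouping. First I would use the fact that $\mathcal{X}_{\net,r}(\theta)=\bigsqcup_{J\in\mathcal{P}_r}\mathcal{R}(J;\theta)$ is by construction a disjoint union, together with countable additivity of $(n_0-r)$-dimensional Hausdorff measure, to write
$$\vol_{n_0-r}\bigl(\mathcal{X}_{\net,r}(\theta)\bigr) = \sum_{J\in\mathcal{P}_r}\vol_{n_0-r}\bigl(\mathcal{R}(J;\theta)\bigr).$$
Each $J\in\mathcal{P}_r$ projects to a unique sub-pattern $\hat{J}\in\mathcal{S}_r$ obtained by forgetting the coordinates with $|J_z|=1$, and conversely the fibre of this projection consists of exactly those $J$ which extend $\hat{J}$ by choosing a singleton $\{k_z\}\subseteq[K]$ at every unit $z$ outside the support of $\hat{J}$. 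Regrouping the sum by $\hat{J}$ reduces the corollary to showing, for each fixed $\hat{J}\in\mathcal{S}_r$ and for generic $\theta$, that
$$\vol_{n_0-r}\bigl(\mathcal{H}(\hat{J};\theta)\bigr) \;=\; \sum_{\substack{J\in\mathcal{P}_r\\ J\text{ extends }\hat{J}}}\vol_{n_0-r}\bigl(\mathcal{R}(J;\theta)\bigr).$$

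For this inner identity I would examine $\mathcal{H}(\hat{J};\theta)$ set-theoretically. By definition it consists of the points $x\in\mathbb{R}^{n_0}$ whose argmax at each unit in the support of $\hat{J}$ equals the prescribed $J_z$, with no restriction on the remaining units. Any such $x$ belongs to $\mathcal{R}(J;\theta)$ for the unique full pattern $J$ that records the actual argmax at every unit of the network. If all unspecified units have a single maximizer at $x$, then $J\in\mathcal{P}_r$ extends $\hat{J}$; otherwise $J\in\mathcal{P}_{r'}$ for some $r'>r$ and its sub-pattern strictly refines $\hat{J}$. This yields the decomposition
$$\mathcal{H}(\hat{J};\theta) \;=\; \Bigl(\bigsqcup_{\substack{J\in\mathcal{P}_r\\ J\text{ extends }\hat{J}}}\mathcal{R}(J;\theta)\Bigr)\;\sqcup\;E_{\hat{J}}(\theta),$$
where $E_{\hat{J}}(\theta)$ is the finite union of those $\mathcal{R}(J';\theta)$ with $J'\in\mathcal{P}_{r'}$, $r'>r$, that extend $\hat{J}$.

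The main obstacle is controlling $E_{\hat{J}}(\theta)$, and this is exactly where the genericity of $\theta$ enters. By Lemma~\ref{lem:conv_act_reg}, for almost every $\theta$ each region $\mathcal{R}(J';\theta)$ with $J'\in\mathcal{P}_{r'}$ is either empty or is a relatively open convex polyhedron of co-dimension $r'$; since $r'\ge r+1$, such a polyhedron lies in an affine subspace of dimension at most $n_0-r-1$ and therefore carries no $(n_0-r)$-dimensional Hausdorff mass. Because $\mathcal{S}_r$ and the set of $r'>r$ patterns are finite, the intersection of the full-measure parameter sets supplied by Lemma~\ref{lem:conv_act_reg} across all relevant $\hat{J}$ and $J'$ is still of full Lebesgue measure. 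On this common generic set, $\vol_{n_0-r}(E_{\hat{J}}(\theta))=0$ for every $\hat{J}\in\mathcal{S}_r$ simultaneously, so countable additivity yields the inner identity, and summing over $\hat{J}\in\mathcal{S}_r$ gives the corollary.
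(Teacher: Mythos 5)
Your proposal is correct and follows essentially the same route as the paper: both regroup the $r$-partial activation regions by their sub-pattern $\hat{J}$ and then observe that $\mathcal{H}(\hat{J};\theta)$ differs from the union of the corresponding $\mathcal{R}(J;\theta)$ only by regions of strictly larger co-dimension, which by Lemma~\ref{lem:conv_act_reg} (equivalently Lemma~\ref{lem:hyperplane_ball}) carry no $(\nin-r)$-dimensional volume for generic $\theta$. The only cosmetic difference is that you build the identity bottom-up from the disjoint union over full patterns, whereas the paper rewrites $\mathcal{X}_{\net,r}$ top-down via the indecision loci; the key genericity step is identical.
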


\begin{proof}[Proof of Corollary~\ref{cor:smaller_sum}]
    Given $\hat{J}\in \mathcal{S}_r$, we denote $Z\subseteq [N]$ the corresponding list of neurons.
    Using the notion of indecision loci from Definition \ref{def:indecision_locus}, we can re-write $\mathcal{X}_{\net, r}(\theta)$ as 
        \begin{align*}
            &\mathcal{X}_{\net, r}(\theta) 
            = \bigcup_{J \in \mathcal{P}_r} \mathcal{R}(J ; \theta)
            = \bigcup_{J \in \mathcal{P}_r} \mathcal{H} (J ; \theta) 
            = \bigcup_{J \in \mathcal{P}_r} \bigcap\limits_{z\in[N]} \mathcal{H}(J_z;\theta)\\ 
            = &\bigcup_{J \in \mathcal{P}_r} \left[\bigcap\limits_{z\in Z} \mathcal{H}(J_z;\theta) \cap \bigcap\limits_{z\in [N]\setminus Z} \mathcal{H}(J_z; \theta)\right]\\
            = &\bigcup_{\hat{J}\in\mathcal{S}_r} \left[\bigcap\limits_{z\in Z} \mathcal{H}(J_z; \theta) \cap \bigcup_{J_{z}\in [K], z\in [N]\setminus Z} \bigcap\limits_{z\in[N]\setminus Z} \mathcal{H}(J_z;\theta)\right]\\
            = &\bigcup_{\hat{J}\in \mathcal{S}_r} \left[ \bigcap_{z\in Z} \mathcal{H}(J_z;\theta) \cap \bigcap_{z \notin Z} \bigcup_{k \in [K]} \mathcal{H}(J_z=\{k\}; \theta)\right].
        \end{align*}
    
    Therefore,
    \begin{align*}
        \vol_{\nin - r} \left( \mathcal{X}_{\net, r}(\theta)\right)
        = \sum_{\hat J \in \mathcal{S}_r} \vol_{\nin - r} \left( \bigcap_{z\in Z} \mathcal{H} ({J_z}; \theta) \cap \bigcap_{z \notin Z} \bigcup_{k \in [K]} \mathcal{H}(J_z=\{k\}; \theta)\right).
    \end{align*}
    
    Notice that $\left(\bigcap_{z \notin Z} \bigcup_{k \in [K]} \mathcal{H}(J_z=\{k\}; \theta)\right)^c$ is a zero measure set in $\mathcal{X}_{\mathcal{N},r}(\theta)$, because over that set, by Lemma~\ref{lem:hyperplane_ball} the co-dimension of the corresponding activation regions is larger than $r$. 
    Therefore, for any given $\hat J =(J_z)_{z\in Z}\in \mathcal{S}_r$,  
    \begin{align*}
        \vol_{\nin - r} \left( \bigcap_{z\in Z} \mathcal{H} ({J_z}; \theta) \cap \bigcap_{z \notin Z} \bigcup_{k \in [K]} \mathcal{H}(J_z=\{k\}; \theta)\right) 
        = \vol_{\nin - r} \left( \bigcap_{z\in Z} \mathcal{H} ({J_z}; \theta) \right).
    \end{align*}
    This completes the proof. 
\end{proof}

\section{Proofs related to the generic numbers of regions}
\label{app:posvol}

\subsection{Number of regions and Newton polytopes}
We start with the observation that the linear regions of a maxout unit correspond to the upper vertices of a polytope constructed from its parameters. 

\begin{definition}
\label{def:Newton-polytope}
Consider a function of the form $f \colon \mathbb{R}^n\to\mathbb{R};\; f(x) = \max\{w_j\cdot x + b_j\}$, 
where $w_j\in\mathbb{R}^n$ and $b_j\in \mathbb{R}$, $j=1,\ldots,M$. 
The \emph{lifted Newton polytope} of $f$ is defined as $P_f := \operatorname{conv}\{(w_j,b_j)\in\mathbb{R}^{n+1}\colon j=1,\ldots, M\}$.
\end{definition}

\begin{definition}
Let $P$ be a polytope in $\mathbb{R}^{n+1}$ and let $F$ be a face of $P$. 
An outer normal vector of $F$ is a vector $v\in\mathbb{R}^{n+1}$ with $\langle v, p-q \rangle > 0$ for all $p\in F$, $q\in P\setminus F$ and $\langle v, p-q \rangle = 0$ for all $p,q\in F$.
The face $F$ is an \emph{upper face} of $P$ if it has an outer normal vector $v$ whose last coordinate is positive, $v_{n+1}>0$.
It is a \emph{strict upper face} if each of its outer normal vectors has a positive last coordinate. 
\end{definition} 
The Newton polytope is a fundamental object in the study of polynomials. 
The naming in the context of piecewise linear functions stems from the fact that piecewise linear functions can be regarded as differences of so-called tropical polynomials. The connections between such polynomials and neural networks with piecewise linear activation functions have been discussed in several recent works \citep{zhang2018tropical,charisopoulos2018tropical,alfarra2020on}. 
For details on tropical geometry, see \citep{MaclaganSturmfels15,JoswigBook}. 
Although in the context of (tropical) polynomials the coefficients are integers, such a restriction is not needed in our discussion. 

\begin{figure}
    \centering
\begin{tikzpicture}
\node at (0,0) { 
\begin{tikzpicture}[dot/.style={circle,inner sep=1pt,fill,label={#1},name=#1},
  extended line/.style={shorten >=0,shorten <=-#1},
  extended line/.default=1.5cm]
  
\coordinate (A) at (-1,-.5);
\coordinate (B) at (-.5,1);
\coordinate (C) at (1,-1);
\coordinate (O) at (-.125,-.125);

\draw[blue!80, thick, fill=blue!10] (A) node[below left] {$(w_1,-b_1)$}
  -- (B) node[above]{$(w_2,-b_2)$}
  -- (C) node[right]{$(w_3,-b_3)$}
  -- cycle; 
\fill[blue!80] (A) circle[radius=1.5pt];
\fill[blue!80] (B) circle[radius=1.5pt];  
\fill[blue!80] (C) circle[radius=1.5pt];  

\node at (.75,-.125) {\textcolor{blue!80}{$P'_f$}};
\node at (.75,1) {\textcolor{red!80}{$N_{P'_f}$}};  

\draw[extended line, red!80, thick] ($(A)!(O)!(B)$) -- (O);
\draw[extended line, red!80, thick] ($(B)!(O)!(C)$) -- (O);

\draw [red!80, thick,name path=pathAC] (O) -- ($($(A)!(O)!(C)$)!-1.75cm!(O)$);

\draw[black!80,name path=pathinput] (-2.5,-1.75) -- (1.5,-1.75) node[below] {\textcolor{black!80}{$\mathbb{R}^{\nin}\times\{-1\}$}};

\path [name intersections={of=pathAC and pathinput,by=E}];
\node [circle, inner sep=1pt, minimum size=1.5pt, fill=red!80, draw=none] at (E) {}; 
\end{tikzpicture} 
};
\end{tikzpicture}
    \caption{The linear regions of a function $f(x) = \max_{j}\{\langle w_j,x\rangle + b_j\}$ correspond to the lower vertices of the polytope $P'_f = \operatorname{conv}_j\{(w_j,-b_j)\}\subseteq \mathbb{R}^{\nin+1}$, or, equivalently, the upper vertices of the lifted Newton polytope $P_f = \operatorname{conv}_j\{(w_j,b_j)\}\subseteq \mathbb{R}^{\nin+1}$. 
    The linear regions of $f$ can also be described as the intersection of the normal fan $N_{P'_f}$, consisting of outer normal cones of faces of $P'_f$, with the affine space $\mathbb{R}^{\nin}\times\{-1\}$. 
    } 
    \label{fig:newton-polytope}
\end{figure}

A convex analysis interpretation of the Newton polytope can be given as follows. Consider a piecewise linear convex function $f\colon \mathbb{R}^n\to\mathbb{R};\; x\mapsto \max_j\{w_j\cdot x + b_j\}$. Then the upper faces of its lifted Newton polytope $P_f$ correspond to the graph $\{(x^\ast, -f^\ast(x^\ast))\colon x^\ast\in \mathbb{R}^n\cap \operatorname{dom}(f^\ast)\}$ of the negated convex conjugate $f^\ast\colon \mathbb{R}^n\to\mathbb{R};\; x^\ast\mapsto \sup_{x\in\mathbb{R}^n} \langle x,x^\ast\rangle -f(x)$, which is a convex piecewise linear function.
This implies that the upper vertices of $P_f$ are the points $(w_j,b_j)\in\mathbb{R}^{n+1}$ for which $f(x) = w_j\cdot x +b_j$ over a neighborhood of inputs. Hence the upper vertices of the Newton polytope correspond to the linear regions of $f$. 
This relationship holds more generally for boundaries between linear regions and other lower dimensional linear features of the graph of the function. 
We will use the following result, which is well known in tropical geometry \cite[see][]{JoswigBook}. 

\begin{proposition}[Regions correspond to upper faces] 
\label{prop:Newton}
The $r$-partial activation regions of a function $f(x) = \max_j\{w_j\cdot x+b_j\}$ correspond to the $r$-dimensional upper faces of its lifted Newton polytope $P_f$. Moreover, the bounded activation regions correspond to the strict upper faces of $P_{f}$.
\end{proposition}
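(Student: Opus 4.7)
The plan is to use the standard duality between a polytope and its normal fan. First, I would observe that for any $x \in \mathbb{R}^n$, evaluating the linear functional $\langle\,\cdot\,, (x,1)\rangle$ on $\mathbb{R}^{n+1}$ at a vertex $(w_j, b_j)$ of $P_f$ returns exactly the $j$-th pre-activation $w_j \cdot x + b_j$. Consequently, the face of $P_f$ maximizing this functional equals $\conv\{(w_j, b_j) : j \in J(x)\}$, where $J(x) := \argmax_{k\in[K]}\{w_k \cdot x + b_k\}$. Since the outer normal $(x,1)$ has positive last coordinate, only upper faces of $P_f$ arise this way.

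This observation yields the desired bijection: to each upper face $F$ I associate the pattern $J(F) := \{j : (w_j, b_j) \in F\}$, and to each activation pattern $J$ attained by some input I associate the face $F_J := \conv\{(w_j, b_j) : j \in J\}$. Under this correspondence, $\mathcal{R}(J(F))$ is precisely the image under projection to the first $n$ coordinates of $\operatorname{relint}(N_F) \cap \{v \in \mathbb{R}^{n+1} : v_{n+1} = 1\}$, where $N_F$ is the outer normal cone of $F$. By the classical face--normal-cone duality, $\dim N_F = n + 1 - \dim F$; and since $F$ is upper, $N_F$ is not contained in the hyperplane $\{v_{n+1}=0\}$, so its intersection with $\{v_{n+1}=1\}$ has dimension $\dim N_F - 1 = n - \dim F$. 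Hence $\dim \mathcal{R}(J(F)) = n - \dim F$, i.e., the codimension in $\mathbb{R}^n$ of the activation region equals the dimension of the corresponding upper face, yielding the first assertion.

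For the strict-upper-face / bounded-region statement, I would argue that $\mathcal{R}(J(F))$ is bounded if and only if the slice $N_F \cap \{v_{n+1}=1\}$ is bounded, which in turn holds if and only if $N_F$ contains no nonzero vector whose last coordinate vanishes. By the definition of the outer normal cone, this last condition is equivalent to saying that every nonzero outer normal of $F$ has strictly positive last coordinate, which is precisely the definition of a strict upper face.

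The main obstacle is handling degenerate configurations, where a subset $J$ may index an affinely dependent tuple $\{(w_j, b_j)\}_{j \in J}$ so that a naive bookkeeping $|J| = r+1 \Rightarrow \dim F_J = r$ can fail. I would address this by restricting the correspondence to activation patterns $J$ that actually occur as $J(x)$ for some $x$ (equivalently, to genuine faces of $P_f$), at which point the slicing dimension count above forces $\dim F_J$ to match the codimension of $\mathcal{R}(J)$ in $\mathbb{R}^n$, independent of any genericity assumption on $P_f$.
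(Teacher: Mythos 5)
The paper does not actually prove Proposition~\ref{prop:Newton}; it states it as a standard fact from tropical geometry (citing Joswig), with the convex-conjugate remark and the normal-fan picture in Figure~\ref{fig:newton-polytope} as the only justification. Your argument --- identifying each nonempty activation region with the projection of $\operatorname{relint}(N_F)\cap\{v_{n+1}=1\}$ and reading off codimension and boundedness from the face/normal-cone duality --- is precisely that standard argument, correctly carried out, and the degeneracy caveat you flag (affinely dependent $(w_j,b_j)$ making $|J|-1$ exceed $\dim F_J$) is the right one to flag. One small remark: the boundedness criterion you actually derive, namely $N_F\setminus\{0\}\subseteq\{v_{n+1}>0\}$, is the correct one and is slightly stronger than the paper's literal definition of a strict upper face (which only constrains outer normals in the relative interior of $N_F$, and so would admit, e.g., an upper vertex lying on a vertical facet whose region is unbounded); the discrepancy lies in the paper's loose definition rather than in your proof.
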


The situation is illustrated in Figure~\ref{fig:newton-polytope}. 

\subsection{Bounds on the maximum number of linear regions}
\label{sec:upperbound} 

For reference, we briefly recall results providing upper bounds on the maximum number of linear regions of maxout networks. 
The maximum number of regions of maxout networks was studied by \cite{pascanu2013number, NIPS2014_5422}, showing that deep networks can represent functions with many more linear regions than any of the functions that can be represented by a shallow network with the same number of units or parameters.
\citet{serra2018bounding} obtained an upper bound for deep maxout networks based on multiplying upper bounds for individual layers.
These bounds were recently improved by \citet{sharp2021}, who obtained the following result, here stated in a simplified form. 

\begin{restatable}[Maximum number of linear regions, \citealt{sharp2021}]{theorem}{theoremupperbound}
\label{thm:upper_bound}
\hspace{1em}
\begin{itemize}[leftmargin=*]
    \item
For a network with $n_0$ inputs and a single layer of $n_1$ rank-$K$ maxout units, the maximum number of linear regions is $\sum_{j=0}^{n_0}\binom{n_1}{j}(K-1)^j$.
\item 
For a network with $n_0$ inputs and $L$ layers of $n_1,\ldots, n_L$ rank-$K$ maxout units, if $n\leq n_0$, $\frac{n_l}{n}$ even, and $e_l = \min\{n_0,\ldots, n_{l-1}\}$, the maximum number of linear regions is lower bounded by $\prod_{l=1}^{L} (\frac{n_l}{n} (K-1)+1)^n$ and upper bounded by $\prod_{l=1}^L \sum_{j=0}^{e_l} \binom{n_l}{j} (K-1)^j$.
\end{itemize}
\end{restatable}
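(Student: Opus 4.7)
The plan is to handle the shallow and deep cases separately, both relying on the Newton polytope framework of Proposition~\ref{prop:Newton}. For the shallow part, I would identify the linear regions of a layer of $n_1$ rank-$K$ maxout units with the upper vertices of a Minkowski sum of Newton polytopes. Specifically, letting $P_i \subseteq \mathbb{R}^{n_0+1}$ denote the Newton polytope of the $i$-th unit, the common refinement of the partitions induced by each unit corresponds, by the standard duality from tropical geometry (the mixed subdivision/Cayley trick), to the upper faces of the Minkowski sum $P_1 + \cdots + P_{n_1}$. Each summand has at most $K$ vertices, so the number of upper vertices of this Minkowski sum in $\mathbb{R}^{n_0+1}$ is bounded by a combinatorial count on vertices of Minkowski sums in the spirit of Gritzmann and Sturmfels, yielding $\sum_{j=0}^{n_0} \binom{n_1}{j}(K-1)^j$. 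For generic parameters that place the $P_i$ in sufficiently general position the bound is attained, which gives the exact maximum.

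For the deep upper bound I would induct on $L$. The key observation is that on each activation region of layers $1, \ldots, l-1$, the map from network input $x \in \mathbb{R}^{n_0}$ to the pre-activations of layer $l$ is affine, and its image lies in an affine subspace of dimension at most $e_l = \min\{n_0, n_1, \ldots, n_{l-1}\}$, since the rank of a composition of affine maps is bounded by the smallest intermediate dimension. Layer $l$ therefore subdivides each existing region into at most as many pieces as a single-layer network with $n_l$ rank-$K$ units acting on an $e_l$-dimensional input could create, which by the shallow bound is $\sum_{j=0}^{e_l} \binom{n_l}{j}(K-1)^j$. Multiplying this refinement factor across layers gives the stated product upper bound.

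For the deep lower bound I would use an explicit folding construction generalizing \citet{pascanu2013number, NIPS2014_5422} from ReLU to rank-$K$ maxout. Each layer is set to act as a product of coordinate-wise sawtooth maps: groups of $\tfrac{n_l}{n_0}$ units compute a piecewise linear function that folds an interval into $\tfrac{n_l}{n_0}(K-1)+1$ segments, and the $n_0$ groups together partition the input into $(\tfrac{n_l}{n_0}(K-1)+1)^{n_0}$ axis-aligned cells. The parity hypothesis that $\tfrac{n_l}{n_0}$ is even is precisely what ensures the sawtooth images align endpoint-to-endpoint, so that successive layers genuinely multiply rather than collapse the number of regions. The main obstacle I anticipate is establishing the sharp vertex count in the shallow case: while the correspondence between regions and upper vertices of Minkowski sums is standard, extracting the precise constant $(K-1)^j$ requires either a careful deletion-restriction argument on the underlying tropical hypersurface arrangement or invoking a specific polytope enumeration lemma; once this is in hand, the inductive deep upper bound and the folding-based deep lower bound follow with relatively routine bookkeeping.
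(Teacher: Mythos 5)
First, note that the paper does not prove this statement at all: Theorem~\ref{thm:upper_bound} is imported verbatim (in simplified form) from \citet{sharp2021} and is only recalled for reference in Appendix~\ref{app:posvol}, so there is no in-paper proof to compare against. Judged against the argument in the cited reference, your outline follows the same route: the shallow case via upper vertices of the Minkowski sum of lifted Newton polytopes (the machinery of Propositions~\ref{prop:Newton} and~\ref{prop:minkowski-sum}), the deep upper bound by induction using the dimension bottleneck $e_l=\min\{n_0,\ldots,n_{l-1}\}$ on the affine image of each region, and the deep lower bound via the zig-zag/folding construction (which the paper itself reuses in the proof of Theorem~\ref{thm:positive_measure}). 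Your deep upper bound and lower bound sketches are sound, modulo writing $n_0$ where the statement allows a general $n\leq n_0$.

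Two points need correction or completion. First, your claim that the shallow bound $\sum_{j=0}^{n_0}\binom{n_1}{j}(K-1)^j$ is attained ``for generic parameters that place the $P_i$ in sufficiently general position'' is false for maxout and contradicts the central message of Section~\ref{sec:numbers}: already a single rank-$K$ unit with $K>n_0+1$ attains strictly fewer than $K$ regions on a positive measure set of parameters, because generically some of the $K$ lifted points fail to be \emph{upper} vertices of $P_i$. Attainment of the maximum is a positive-measure event, not a probability-one one, and requires an explicit construction, e.g.\ placing each unit's lifted points on a strictly convex curve so that its nonlinear locus is $K-1$ parallel hyperplanes, with the normal directions of distinct units in general position; Zaslavsky's count for such an arrangement then gives exactly the stated formula. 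Second, the combinatorial core --- that the number of upper vertices of a Minkowski sum of $n_1$ polytopes with at most $K$ vertices each in $\mathbb{R}^{n_0+1}$ is at most $\sum_{j=0}^{n_0}\binom{n_1}{j}(K-1)^j$ --- is exactly the nontrivial content of the theorem, and you explicitly defer it to an unspecified ``deletion-restriction argument or polytope enumeration lemma.'' As it stands this is the main gap: without that count (Theorems~3.6--3.7 of \citealt{sharp2021}) the shallow upper bound, and hence the per-layer factor in the deep upper bound, is unproved.
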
 

\subsection{Numbers of regions attained over positive measure subsets of parameters}

A layer of maxout units can attain several different numbers of linear regions with positive probability over the parameters. This is illustrated in Figure~\ref{fig:pos}. 
We obtain the following result, describing numbers of linear regions that can be attained by maxout units, layers, and deep maxout networks with positive probability over the parameters. 

\begin{figure}
    \centering
\begin{tikzpicture}[every node/.style={black,above right, inner sep=1pt}]

\path[fill=gray!10] (-1.25,-1.25) rectangle (1.25cm,1.25cm);
\node at (-1.25,-1.25) {\textcolor{black!80}{$\mathbb{R}^{\nin}$}};

\definecolor{color0}{rgb}{1,0,0}
\definecolor{color1}{RGB}{250,128,114}
\definecolor{color2}{RGB}{139,0,0}

\draw[name path=line11, double=color0, white, thick] (0,0) -- (1.25,1) node [right] {}; 
\draw[name path=line12, double=color0, white, thick] (0,0) -- (1.25,-1) node [right] {}; 
\draw[name path=line13, double=color0, white, thick] (0,0) -- (-1.25,0) node [left] {}; 

\fill[name intersections={of=line11 and line12,total=\t}, color0, draw=white, thick] {(intersection-1) circle (1.5pt) node {}};

\coordinate (O) at (-.5,.25);
\coordinate (A) at (-.5,-1.25);
\coordinate (B) at (-1,1.25);
\coordinate (C) at (0,1.25);

\draw[name path=line21, double=color1, white, thick] (O) -- (A) node [above] {}; 
\draw[name path=line22, double=color1, white, thick] (O) -- (B) node [above] {}; 
\draw[name path=line23, double=color1, white, thick] (O) -- (C) node [below] {}; 

\fill[name intersections={of=line21 and line22,total=\t}, color1, draw=white, thick] {(intersection-1) circle (1.5pt) node {}};

\coordinate (O) at (.625,-1);
\coordinate (A) at (.625,-1.25);
\coordinate (B) at (.125,1.25);
\coordinate (C) at (1.125,1.25);

\draw[name path=line21, double=color2, white, thick] (O) -- (A) node [above] {}; 
\draw[name path=line22, double=color2, white, thick] (O) -- (B) node [above] {}; 
\draw[name path=line23, double=color2, white, thick] (O) -- (C) node [below] {}; 

\fill[name intersections={of=line21 and line22,total=\t}, color2, draw=white, thick] {(intersection-1) circle (1.5pt) node {}};

\end{tikzpicture}
    \caption{A layer of maxout units of rank $K\geq 3$ attains several different numbers of linear regions with positive probability over the parameters. For a layer with two rank-$3$ maxout units, some neighborhoods of parameters give 6 linear regions and others 9, with nonlinear loci given by perturbations of the red-pink and red-darkred lines. }
    \label{fig:pos}
\end{figure}
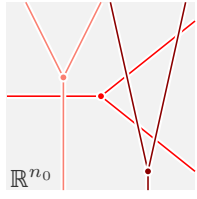

\theorempositivemeasure*

The strategy of the proof is as follows. We first show that there are parameters such that individual rank-$K$ maxout units behave as rank-$k$ maxout units, for any $1\leq k\leq K$, and there are positive measure subsets of the parameters for which they behave as rank-$k$ maxout units, for any $n+1\leq k\leq K$.
Further, there are positive measure subsets of the parameters of individual rank-$K$ maxout units for which, over the positive orthant $\mathbb{R}^n_{\geq0}$, they behave as rank-$k$ maxout units, for any $1\leq k\leq K$.
Then we use a similar strategy as \citet{sharp2021} to construct parameters of a network with units of pre-specified ranks which attain a particular number of linear regions. 

\begin{proposition}
\label{prop:genericposorth}
Consider a rank-$K$ maxout unit with $n$ inputs restricted to $\mathbb{R}^n_{\geq0}$. 
For any $1\leq k\leq K$, there is a positive measure subset of parameters for which the behaves as a rank-$k$ maxout unit. Moreover, this set can be made to contain parameters representing any desired function that can be computed by a rank-$k$ maxout unit.
\end{proposition}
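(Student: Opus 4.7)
The plan is to realize the required positive-measure set as an open ball around a single explicit witness parameter $\theta^\ast$ whose rank-$K$ unit represents $g$ on $\mathbb{R}^n_{\geq 0}$. Writing $g(x) = \max_{j=1}^{k}(w'_j \cdot x + b'_j)$, I would set $(w^\ast_j, b^\ast_j) = (w'_j, b'_j)$ for $j \leq k$ so that the first $k$ pre-activation features reproduce $g$, and I would neutralize the remaining $K-k$ pre-activation features on the positive orthant by taking $w^\ast_j = w'_1 - c\mathbf{1}$ and $b^\ast_j = b'_1 - \epsilon$ for fixed constants $c,\epsilon > 0$, where $\mathbf{1} = (1,\dots,1)$. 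This design yields the uniform strict gap
\begin{equation*}
(w'_1 \cdot x + b'_1) - (w^\ast_j \cdot x + b^\ast_j) \;=\; c\,\langle \mathbf{1}, x\rangle + \epsilon \;>\; 0 \qquad \text{for all } x \in \mathbb{R}^n_{\geq 0},\ j > k,
\end{equation*}
so each extra feature lies strictly below the first (hence strictly below $g$) on $\mathbb{R}^n_{\geq 0}$, and therefore $f_{\theta^\ast}|_{\mathbb{R}^n_{\geq 0}} = g$.

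Next I would show that this dominance property survives sufficiently small parameter perturbations, thereby thickening $\theta^\ast$ into an open set. Under a perturbation $(w_j, b_j) = (w^\ast_j + \delta w_j, b^\ast_j + \delta b_j)$, the gap between the first and $j$-th pre-activations becomes
\begin{equation*}
(w_1 \cdot x + b_1) - (w_j \cdot x + b_j) \;=\; c\,\langle \mathbf{1}, x\rangle + \epsilon + (\delta w_1 - \delta w_j)\cdot x + (\delta b_1 - \delta b_j),
\end{equation*}
and on $\mathbb{R}^n_{\geq 0}$ one has $(\delta w_1 - \delta w_j)\cdot x \geq -\|\delta w_1 - \delta w_j\|_\infty \langle \mathbf{1}, x\rangle$. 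Hence whenever $\|\delta w_1 - \delta w_j\|_\infty < c$ and $|\delta b_1 - \delta b_j| < \epsilon$ the gap stays strictly positive for every $x\in\mathbb{R}^n_{\geq 0}$. These conditions, together with any bound on the perturbations of the first $k$ parameters, define an open (hence positive Lebesgue measure) neighborhood $U \subset \mathbb{R}^{K(n+1)}$ of $\theta^\ast$; for every $\theta \in U$ only the first $k$ pre-activation features can attain the maximum on $\mathbb{R}^n_{\geq 0}$, so the unit behaves there as a rank-$k$ maxout unit. Since $\theta^\ast \in U$ realizes $g$, the set $U$ meets both requirements of the proposition.

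The main obstacle is that $\mathbb{R}^n_{\geq 0}$ is unbounded, so dominance has to hold uniformly in $x$ and a merely pointwise strict inequality is not enough to be preserved under perturbation. The chosen witness avoids this by imposing strict componentwise inequality $w^\ast_j < w'_1$ in addition to a strict bias gap; this produces a gap that grows linearly in $\langle \mathbf{1}, x\rangle$, and that linear slack is what uniformly absorbs small weight perturbations over the entire orthant.
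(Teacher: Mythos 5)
Your proposal is correct and takes essentially the same approach as the paper: both construct the extra $K-k$ pre-activation features by shifting the weights by a large negative multiple of the all-ones vector and lowering the biases, so that on $\mathbb{R}^n_{\geq 0}$ these features are dominated with a margin growing linearly in $\langle \mathbf{1},x\rangle$, which is then stable under small (hence positive-measure) perturbations. Your writeup is if anything a bit more explicit about why the uniform-in-$x$ gap over the unbounded orthant survives perturbation, a point the paper handles via the sign conditions on the separating hyperplanes' normals and intercepts.
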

\begin{proof}
We need to show that for any choices of $(w_i,b_i)$, $i\in[k]$, there are generic choices of $(w_j,b_j)$, $j\in[K]\setminus[k]$, so that for each $J\subseteq[K]$ with $J\not\subseteq [k]$, the corresponding activation region $\mathcal{R}(J, \theta)$ does not intersect $\mathbb{R}^n_{\geq0}$.
Notice that, if $j\in J\setminus[k]$, then the corresponding activation region $\mathcal{R}(J,\theta)$ is contained in the arrangement consisting of hyperplanes $H_{ji} = \{x\colon (w_j-w_i)\cdot x +(b_j-b_i) =0 \}$, $i\in J\setminus\{j\}$.
For each $j\in[K]\setminus[k]$, we choose $w_j = j c (-1,\ldots, -1)+\epsilon_j$, $b_j=-j c'+\epsilon'_j$ for some $c > 2\max\{\|w_{i}\|_\infty \colon i\in[k]\}$, $c'>2\max\{b_i\colon i\in [k]\}$ and small $\epsilon_j\in\mathbb{R}^n$, $\epsilon'_j\in\mathbb{R}$.
Then, for each $j\in [K]\setminus[k]$ and $i\in[K]$, $j<j$, the hyperplane $H_{ji}$ has a normal vector $(w_j-w_i)\in\mathbb{R}_{<0}$ and an intercept $b_j-b_i<0$, and hence it does not intersect $\mathbb{R}^{n}_{\geq0}$. 
\end{proof}

We are now ready to prove the theorem. 

\begin{proof}[Proof of Theorem~\ref{thm:positive_measure}] 
\emph{Single unit.} 
Consider a maxout unit $\max_{j\in[K]}\{w_j\cdot x + b_j\}$. 
To have this behave as a rank-$k$ maxout unit, $1\leq k\leq K$, we simply set $(w_j,b_j) = (w_1,b_1-1)$, $j\in[K]\setminus[k]$. This is a non-generic choice of parameters.
Consider now a rank-$k$ maxout unit with $n+1\leq k$ and generic parameters $(w_i,b_i)$, $i\in[k]$. 
We want to show that there are generic choices of $(w_j,b_j)$, $j\in[K]\setminus[k]$ so that $\max_{j\in[K]}\{w_j\cdot x + b_j\} = \max_{i\in[k]}\{w_i\cdot x + b_i\}$ for all $x\in\mathbb{R}^n$. 
In view of Proposition~\ref{prop:Newton}, this is equivalent to $(w_j,b_j)$, $j\in[K]\setminus[k]$ not being upper vertices of the lifted Newton polytope $P = \conv\{(w_j,b_j)\colon j\in [K]\}$. 
Since any generic $n+1$ points in $\mathbb{R}^n$ are affinely independent, we have that the convex hull $\conv\{w_i\in\mathbb{R}^n\colon i\in [k]\}$ has full dimension $n$. 
Hence, any $w_j = \frac{1}{k}\sum_{i\in[k]}w_i + \epsilon_j $ and $b_j = \min_{i\in[k]}\{b_i\}-1 + \epsilon'_j$ with sufficiently small $\epsilon_j\in \mathbb{R}^n$, $\epsilon'_j\in \mathbb{R}$, $j\in[K]\setminus[k]$ are strictly below $\conv\{(w_i,b_i)\colon i\in[k]\}$ and are not upper vertices of $P$. 

\emph{Single layer.} We use the previous item to obtain $n_1$ maxout units of ranks $k_1,\ldots, k_{n_1}$, either in the non-generic or in the generic cases.
Then we apply the construction of parameters and the region counting argument from \citet[Proposition~3.4]{sharp2021} to this layer, to obtain a function with $\sum_{j=0}^{n_0}\sum_{S\subseteq \binom{[n_1]}{j}} \prod_{i\in S}(k_i-1)$ linear regions.
For each of the units $i=1,\ldots, n_1$, one may choose a generic vector $v_i\in\mathbb{R}^n$ and define the weights and biases of the pre-activation features as $w_{ij} = \frac{j}{k_i} v_i$ and $b_{ij} = -g(\frac{j}{k_i}+\epsilon_i)$, $j=1,\ldots, k_i$, where $g\colon\mathbb{R}\to\mathbb{R}$ is any strictly convex function and $\epsilon_i$ is chosen generically.
Then the non-linear locus of each unit consists of $k_i-1$ parallel hyperplanes with a generic shift $\epsilon_i$, and the normal vectors $v_i$ of different units are in general position.
The number of regions defined by such an arrangement of hyperplanes in $\mathbb{R}^n$ can be computed using Zaslavsky's theorem, giving the indicated result.
It remains to show that, for $n_0+1\leq k_1,\ldots, k_{n_1}$, there are positive measure perturbations of these parameters that do change the number of regions. 
By the lower semi-continuity discussed in Section~\ref{sec:numbers}, the number of regions does not decrease for sufficiently small generic perturbations of the parameters. 
To show that it does not increase, we note that, by Theorem~\ref{thm:upper_bound} this number of regions is the maximum that can be attained by a layer of $n_1$ maxout units of ranks $k_1,\ldots, k_n$. 

\emph{Deep network.} For the first statement, we use the first item to obtain maxout units of any desired ranks $1\leq k_{li}\leq K$, $l=1,\ldots, L$, $i=1,\ldots, {n_l}$, and then apply the construction of parameters from \citet[Proposition~3.11]{sharp2021} to this network, to obtain the indicated number of regions. 

For the second statement, we use Proposition~\ref{prop:genericposorth} to have the units behave as maxout units of any desired ranks over $[0,1]^{n_0}$.
For the $l$-th layer, we divide the $n_l$ units into $n_0$ blocks $x^{(l)}_{ij}$, $i=1,\ldots, n_0$, $j=1,\ldots, \frac{n_l}{n_0}$. 
For $i=1,\ldots, n_0$, the $i$-th block consists of $\frac{n_l}{n_0}$ maxout units of rank $k_{li}$.
We can choose the weights and biases so that over $[0,1]^{n_0}$, the nonlinear locus of the $i$-th block $(x^{(1)}_{i,1},\ldots, x^{(1)}_{i,\frac{n_1}{n_0}})$ consists of $\frac{n_l}{n_0}(k_{li}-1)$ parallel hyperplanes with normal $e_i$, and the alternating sum $\sum_{j=1}^{n_l/n_0}(-1)^j x^{(l)}_{ij}$ is a zig-zag function along the direction $e_i$ which maps $(0,1)^{n_0}$ to $(0,1)$, and maps any point in $\mathbb{R}^{n_0}\setminus [0,1]^{n_0}$ to a point in $\mathbb{R}\setminus [0,1]$.
In this way, the $l$-th layer, followed by a linear layer $\mathbb{R}^{n_l}\to\mathbb{R}^{n_0}$, maps $(0,1)^{n_0}$ onto $(0,1)^{n_0}$ in a $\prod_{i=1}^{n_0}(\frac{n_l}{n_0}(k_{li}-1)+1)$ to one manner. 
Sufficiently small perturbations of the parameters do not affect this general behavior.
The composition of $L$ such layers gives the desired number of regions over $(0,1)^{n_0}$. 
\end{proof}

\subsection{Minimum number of activation regions} 

One can easily construct parameters so that the represented function is identically zero. However, these are very special parameters. 
Moreover, it can be shown that the number of linear regions of a maxout network is a lower semi-continuous function of the parameters, in the sense that sufficiently small generic perturbations of the parameters do not decrease the number of linear regions \citep[Proposition~3.2]{sharp2021}. 
Hence, the question arises: What is the smallest number of linear regions that will occur with positive probability over the parameter space (i.e.\ for all parameters except for a null set).
For example, in the case of shallow ReLU networks, it is known that the number of regions for generic parameters is equal to the maximum.
For maxout networks we saw in Theorem~\ref{thm:positive_measure} that several numbers of linear regions can happen with positive probability. 
We prove the following lower bound on the number of regions for maxout networks with generic parameters. 

\theoremlowerbound*

First we observe that for generic parameters, the number of linear regions of the function represented by a network is bounded below by the number of linear regions of the network restricted to the first layer.
This is not trivial, since the deeper layers could in principle map the values from the first layer to a constant value, resulting in a function with a single linear region. However, for maxout networks this only happens for a null set of parameters.  
\begin{proposition}
\label{prop:deep_bound}
The number of activation regions of a maxout network is at least as large as the number of regions of the first layer. 
Moreover, for generic parameters the number of linear regions is equal to the number of activation regions. 
\end{proposition}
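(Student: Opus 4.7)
The plan is to establish the two claims separately. For the first claim, I would argue that the activation regions of the full network are a refinement of the activation regions of the first layer, so the count of the former is at least the count of the latter. For the second claim, I would invoke Lemma~\ref{lem:act_vs_lin} to conclude that generically, distinct activation regions correspond to distinct linear regions.

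For the refinement argument, I would proceed as follows. Every activation pattern $J = (J_z)_{z\in[N]}$ of the full network restricts to an activation pattern $J^{(1)} = (J_z)_{z \in [n_1]}$ of the first layer. From the definition of activation region, it is immediate that $\mathcal{R}(J,\theta) \subseteq \mathcal{R}(J^{(1)}, \theta)$, since the conditions defining the full network region include all the conditions defining the first layer region. Hence the map $\mathcal{R}(J,\theta) \mapsto \mathcal{R}(J^{(1)},\theta)$ is well defined on non-empty regions. Conversely, for any non-empty first-layer activation region $\mathcal{R}(J^{(1)},\theta)$, pick any point $x$ in it; then $x$ has a unique full-network activation pattern $J$ extending $J^{(1)}$, and $\mathcal{R}(J,\theta)$ is a non-empty subset of $\mathcal{R}(J^{(1)},\theta)$. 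Therefore the above map is surjective onto the non-empty first-layer regions. This yields the first inequality. Combined with Lemma~\ref{lem:act_vs_lin} applied to a single-layer maxout network (which shows that for generic parameters the first layer's activation regions coincide with its linear regions), we obtain the desired lower bound by the number of regions of the first layer.

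For the second claim, Lemma~\ref{lem:act_vs_lin} already states that the set of parameters for which two distinct activation regions of the full network yield the same gradient of the represented function is a null set. On its complement, distinct activation regions have distinct gradients, so the activation regions are exactly the maximal connected constant-gradient sets, i.e.\ the linear regions; by Lemma~\ref{lem:conv_act_reg} each activation region is already a connected (in fact convex) set, so there is no further merging beyond the gradient coincidence. Hence the number of linear regions equals the number of activation regions for almost every $\theta$.

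The main subtlety is the converse direction of the refinement argument, namely exhibiting a full-network activation region inside each first-layer activation region. This is routine once one observes that every input point has a unique full activation pattern and that the resulting region is a non-empty subset of the prescribed first-layer region. No delicate genericity analysis is required beyond what is already supplied by Lemma~\ref{lem:act_vs_lin}, which handles the second claim immediately.
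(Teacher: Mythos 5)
Your proposal is correct and follows essentially the same route as the paper: the full network's activation regions refine those of the first layer (the paper phrases this as "the number of regions never reduces as we pass through the network"), and the generic equality of activation and linear regions is delegated to Lemma~\ref{lem:act_vs_lin}. Your write-up merely makes the refinement/surjectivity step more explicit than the paper does.
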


\begin{proof}
The number of regions never reduces as we pass through the network. 
The region is either kept as it is or split into parts by a neuron.
The fact that for generic parameters activation regions correspond to linear regions is Lemma~\ref{lem:act_vs_lin}. 
\end{proof}

In order to lower bound the number of regions of a single layer, we use the correspondence between linear regions and the upper vertices of the corresponding lifted Newton polytope, Proposition~\ref{prop:Newton}.
We first observe that the Newton polytope of a shallow maxout units is the Minkowski sum of the Newton polytopes of the individual units.
Recall that the Minkowski sum of two sets $A$ and $B$ is the set $A+B = \{a+b\colon a\in A, b\in B\}$. 

\begin{proposition}
\label{prop:minkowski-sum}
Consider a layer of maxout units, $f\colon \mathbb{R}^n\to \mathbb{R}^m; \; f_i(x) = \max\{w_{ir}\cdot x + b_{ir}\colon r=1,\ldots, k\}$. Let $f(x) = \sum_{i=1}^m f_i(x)$.
Then the lifted Newton polytope of $f$ is the Minkowski sum of the lifted Newton polytopes of $f_1,\ldots, f_m$, $P_f = \sum_{i=1}^m P_{f_i}$.
\end{proposition}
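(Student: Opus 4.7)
The plan is to observe that $f$ is itself a single max-affine function whose affine pieces are indexed by the Cartesian product of the individual unit index sets, and then to invoke the classical identity that the Minkowski sum of convex hulls of finite point sets equals the convex hull of their Minkowski sum. With these two ingredients the claim becomes almost a tautology at the level of Newton polytopes.

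For the first ingredient, I would use distributivity of $\max$ over $+$ to rewrite
\[
f(x) = \sum_{i=1}^m \max_{r\in[k]}\{w_{ir}\cdot x + b_{ir}\}
= \max_{(r_1,\ldots,r_m)\in[k]^m}\Bigl\{\Bigl(\sum_{i=1}^m w_{ir_i}\Bigr)\cdot x + \sum_{i=1}^m b_{ir_i}\Bigr\},
\]
so that $f$ is exhibited as a maxout function with $k^m$ pre-activation features, the one indexed by $(r_1,\ldots,r_m)$ having slope $\sum_i w_{ir_i}$ and intercept $\sum_i b_{ir_i}$. By Definition~\ref{def:Newton-polytope} this gives
\[
P_f = \conv\Bigl\{\Bigl(\sum_{i=1}^m w_{ir_i},\,\sum_{i=1}^m b_{ir_i}\Bigr) : (r_1,\ldots,r_m)\in[k]^m\Bigr\} = \conv(A_1 + \cdots + A_m),
\]
where $A_i := \{(w_{ir},b_{ir}) : r\in[k]\}$ and the inner sum denotes a Minkowski sum of finite point sets in $\mathbb{R}^{n+1}$.

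For the second ingredient, I would appeal to the elementary identity
\[
\conv(A_1)+\cdots+\conv(A_m) = \conv(A_1+\cdots+A_m)
\]
valid for arbitrary finite (indeed, compact) subsets $A_i \subseteq \mathbb{R}^{n+1}$. The inclusion $\supseteq$ is immediate because $A_1+\cdots+A_m \subseteq \sum_i \conv(A_i)$ and the right-hand side is convex; the reverse inclusion is obtained by rewriting a generic point $\sum_i \sum_{j}\lambda_{ij} a_{ij}$ with $\sum_j \lambda_{ij}=1$ as the convex combination $\sum_{j_1,\ldots,j_m} \bigl(\prod_i \lambda_{i j_i}\bigr) \bigl(\sum_i a_{i j_i}\bigr)$ of elements of $A_1+\cdots+A_m$. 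Combined with $P_{f_i} = \conv(A_i)$ and the previous display, this yields $P_f = P_{f_1} + \cdots + P_{f_m}$.

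I do not expect a serious obstacle: the whole argument is distributing $\max$ over $+$ and then applying a standard Minkowski-sum identity. The only small care point is keeping track of indices when rewriting the multi-step convex combination in the second step, but this is purely bookkeeping.
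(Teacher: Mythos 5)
Your proof is correct and takes essentially the same route the paper intends: the paper's own proof is a one-line appeal to ``direct calculation'' with a citation to the tropical-geometry references, and the calculation it has in mind is exactly your two ingredients --- distributing $\max$ over $+$ to exhibit $f$ as a max over $[k]^m$, and the identity $\sum_i \conv(A_i) = \conv\bigl(\sum_i A_i\bigr)$. Your write-up simply supplies the details the paper omits, including the product-of-weights convex combination needed for the nontrivial inclusion, and it is sound.
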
 
\begin{proof}
This follows from direct calculation. 
Details can be found in the works of \cite{zhang2018tropical} and \cite{sharp2021}. 
\end{proof}

Next, a family of polytopes $P_i = \operatorname{conv}\{(w_{i,r},b_{i,r})\in\mathbb{R}^{\nin+1}\colon r=1,\ldots,K\}$ with generic $(w_{i,r},b_{i,r})$, $r=1,\ldots, K$, $i=1,\ldots, n_1$, is in general orientation.
For such a family, the Minkowski sum $P=P_1+\cdots+P_{n_1}$ has at least as many vertices as a Minkowski sum of $n_1$ line segments in general orientation:  

\begin{proposition}[{\citealt[Corollary 8.2]{adiprasito2017lefschetz}}]
\label{prop:shallow_bound}
    The number of vertices of a Minkowski sum of $m$ polytopes in general orientation is lower bounded by the number of vertices of a sum of $m$ line segments in general orientations. 
\end{proposition}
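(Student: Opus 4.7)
The plan is to reformulate the vertex count of a Minkowski sum as a combinatorial intersection number in a graded algebra built from the summands, and then to invoke a hard-Lefschetz-type inequality to compare the polytopal sum against the zonotope case. The argument would proceed in three stages, with the real work concentrated in the middle one.

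\textbf{Stage 1 (reformulation).} Vertices of $P = P_1 + \cdots + P_m$ correspond bijectively to the full-dimensional cones of the common refinement $\mathcal{N}(P_1) \wedge \cdots \wedge \mathcal{N}(P_m)$ of the normal fans; equivalently, to equivalence classes of generic directions $c \in \mathbb{R}^d$ under the relation ``$c$ and $c'$ select the same vertex of every $P_i$.'' The general-orientation hypothesis guarantees that this common refinement is non-degenerate, so every maximal cone has the expected full dimension and $f_0(P)$ equals the number of such cones.

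\textbf{Stage 2 (segment reduction).} From each $P_i$ I would pick a segment $S_i = [v_i^-, v_i^+]$ joining two of its vertices, choosing the directions $u_i = v_i^+ - v_i^-$ to be in general linear position in $\mathbb{R}^d$ (possible because general orientation forces each $P_i$ to have at least two vertices and permits a generic selection across $i$). The target inequality is $f_0(\sum_i P_i) \geq f_0(\sum_i S_i)$. The tempting fan-refinement proof fails: normal cones of vertices of $P_i$ other than $v_i^\pm$ can straddle the hyperplane $u_i^\perp$, so $\mathcal{N}(P_i)$ does not refine the coarse fan $\mathcal{N}(S_i)$. Instead, I would realize $f_0$ as the top-degree intersection number of the classes $[P_i]$ in McMullen's polytope algebra (equivalently, in the Chow ring of the toric variety of the common refinement) and exploit the positivity $[P_i] - [S_i] \succeq 0$ together with the hard Lefschetz theorem to push the inequality down to the relevant degree.

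\textbf{Stage 3 (zonotope count).} Once $f_0(\sum_i P_i) \geq f_0(\sum_i S_i)$ is established, the right-hand side is the vertex count of a generic zonotope. By Zaslavsky's theorem applied to the central arrangement $\{u_i^\perp\}_{i=1}^m$ in $\mathbb{R}^d$, this count equals $2\sum_{j=0}^{d-1}\binom{m-1}{j}$, the number of chambers of $m$ hyperplanes through the origin in general position, which is the desired lower bound.

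\textbf{Main obstacle.} The crux is Stage 2. The intuition ``the extra vertices of each $P_i$ can only add cones to the common refinement'' is morally correct, but it does not lift to a pointwise injection of fans, so genuine algebraic input is required. I expect the cleanest route is to invoke the relative Stanley--Reisner framework of Adiprasito and Sanyal, which packages precisely the hard-Lefschetz inequality needed to compare a product of polytope classes with the corresponding product of their segment sub-classes and thereby certifies the segment case as extremal.
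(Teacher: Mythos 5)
The paper does not actually prove this proposition: it is imported verbatim as Corollary~8.2 of Adiprasito's work on combinatorial Lefschetz theorems, so there is no internal argument to compare yours against. Judged on its own terms, your proposal correctly maps the problem. Stage~1 (vertices of the sum correspond to maximal cones of the common refinement of the normal fans) and Stage~3 (the generic zonotope has $2\sum_{j=0}^{d-1}\binom{m-1}{j}$ vertices by Zaslavsky) are both standard and correct, and your Stage~3 count agrees with the expression used in Proposition~\ref{prop:lower-minkowski}. You also correctly reject the naive fan-refinement argument, which indeed fails because $\mathcal{N}(P_i)$ need not refine $\mathcal{N}(S_i)$.

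The genuine gap is Stage~2, and it is the entire mathematical content of the statement. First, the reduction is circular as a proof: the ``hard-Lefschetz inequality that packages precisely what is needed'' \emph{is} the cited Corollary~8.2 (or its immediate ancestors in the same paper), so invoking that framework amounts to citing the theorem rather than proving it. Second, the specific mechanism you sketch does not go through as stated: $f_0(P_1+\cdots+P_m)$ is not the top-degree intersection number of the classes $[P_1],\ldots,[P_m]$ in McMullen's polytope algebra --- mixed volumes are such intersection numbers, but the vertex count of a Minkowski sum is a sum over mixed cells of all types (via the Cayley trick), not a single degree-$d$ product of $m$ classes. Consequently the proposed step ``positivity $[P_i]-[S_i]\succeq 0$ plus hard Lefschetz'' has no degree in which to operate, and there is in any case no positivity of differences of polytope classes of the kind you would need. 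The actual proof in the source is a nontrivial extension of $g$-theorem-type Lefschetz machinery beyond the projective/positive setting, which is exactly why the elementary intuition cannot be repaired by a short algebraic argument. Your proposal is an accurate diagnosis of where the difficulty lies, but it does not contain a proof of the inequality.
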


From this, we derive a lower bound on the number of upper vertices of a Minkowski sum of polytopes in general orientations. 

\begin{proposition}
\label{prop:lower-minkowski}
The number of upper vertices of a Minkowski sum of $n_1$ polytopes in $\mathbb{R}^{\nin+1}$ in general orientation is at least $\sum_{j=0}^{\nin}\binom{n_1}{j}$, and the number of strict upper vertices is at least $\binom{n_1-1}{ \nin}$. 
\end{proposition}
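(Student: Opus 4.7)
The plan is to reduce to a generic hyperplane arrangement in $\mathbb{R}^{\nin}$ and then apply Zaslavsky's theorem, using the dictionary between upper vertices of Minkowski sums and linear regions of sums of maxout units supplied by Propositions~\ref{prop:Newton} and~\ref{prop:minkowski-sum}. By Proposition~\ref{prop:minkowski-sum}, the Minkowski sum $P=\sum_i P_i$ is the lifted Newton polytope of the convex piecewise linear function $f(x)=\sum_{i=1}^{n_1}\max_{v\in P_i}\{\pi(v)\cdot x + v_{\nin+1}\}$, where $\pi\colon\mathbb{R}^{\nin+1}\to\mathbb{R}^{\nin}$ forgets the last coordinate. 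Proposition~\ref{prop:Newton} then identifies the upper vertices of $P$ with the linear regions of $f$, and the strict upper vertices with the bounded linear regions. It therefore suffices to exhibit at least $\sum_{j=0}^{\nin}\binom{n_1}{j}$ linear regions for $f$, with at least $\binom{n_1-1}{\nin}$ of them bounded.

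For this I would specialize each $P_i$ to a generic upper edge $e_i\subset P_i$ with endpoints $(w_i,\beta_i)$ and $(w_i',\beta_i')$, and analyze the resulting rank-$2$ maxout sum $\tilde f(x)=\sum_{i=1}^{n_1}\max\{w_i\cdot x+\beta_i,\,w_i'\cdot x+\beta_i'\}$. Up to a global affine term, $\tilde f$ equals $\sum_i\max\{0,(w_i'-w_i)\cdot x+(\beta_i'-\beta_i)\}$, so its non-linear locus is the arrangement of the $n_1$ hyperplanes $H_i=\{(w_i-w_i')\cdot x+(\beta_i-\beta_i')=0\}$ in $\mathbb{R}^{\nin}$. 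General orientation of the segments in $\mathbb{R}^{\nin+1}$ translates into general position of the $H_i$ in $\mathbb{R}^{\nin}$, and Zaslavsky's theorem then produces exactly $\sum_{j=0}^{\nin}\binom{n_1}{j}$ regions and exactly $\binom{n_1-1}{\nin}$ bounded regions, establishing both inequalities for $\tilde f$.

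It remains to transfer these lower bounds from $\tilde f$ back to $f$, and this is where I expect the main obstacle. One must argue that inflating each sub-edge $e_i$ back to the polytope $P_i$ can only increase the number of (bounded) linear regions. The direct route is to invoke an upper-vertex version of Proposition~\ref{prop:shallow_bound}: the stated Adiprasito--Lefschetz inequality controls all vertices rather than just the upper ones, but the conclusion for upper vertices follows by applying the same inequality to the truncated downward polyhedra $(P_i-\mathbb{R}_{\geq 0}e_{\nin+1})\cap H$ for a common lower half-space $H$, whose vertices above the cut are precisely the upper vertices of $\sum_i P_i$. Alternatively, lower semi-continuity of the region count under continuous deformation of parameters \citep[Proposition~3.2]{sharp2021}, cited just above Proposition~\ref{prop:deep_bound}, applied to the deformation shrinking $P_i\rightsquigarrow e_i$, gives the same conclusion (and an analogous statement for bounded regions, which cannot merge with unbounded ones under small generic perturbations). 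Either variant requires checking that the deformation stays in the general-orientation stratum throughout.
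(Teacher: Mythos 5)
Your reduction to a generic hyperplane arrangement is a reasonable opening move, and the Zaslavsky counts for the degenerate edge configuration $\tilde f$ are correct, but the transfer step that you yourself flag as the main obstacle is a genuine gap, and neither of your two proposed fixes closes it. The inequality you need --- that enlarging each upper edge $e_i$ back to the full polytope $P_i$ cannot decrease the number of (bounded) upper vertices of the Minkowski sum --- is essentially the content of the proposition itself and cannot be obtained cheaply: vertex counts of Minkowski sums are not monotone under enlarging summands (two copies of a triangle in the plane sum to a triangle with $3$ vertices, while two of its non-parallel edges sum to a parallelogram with $4$), so the claim genuinely depends on the general-orientation hypothesis and needs an argument at the level of \citet{adiprasito2017lefschetz}. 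Lower semi-continuity gives the inequality in the wrong direction: it shows that configurations \emph{near} the degenerate edge configuration have at least $\sum_{j}\binom{n_1}{j}$ regions, but $P_1,\ldots,P_{n_1}$ is not a small perturbation of $e_1,\ldots,e_{n_1}$, and a lower semi-continuous count may well be smaller at the far end of the deformation path. The truncation route fails for a different reason: the Minkowski sum of the prisms $(P_i-\mathbb{R}_{\geq 0}e_{\nin+1})\cap H$ carries, besides the upper vertices of $\sum_i P_i$, a bottom layer of vertices coming from the projected sum $\sum_i \pi(P_i)$ (with $\pi$ forgetting the last coordinate), and Proposition~\ref{prop:shallow_bound} only bounds the \emph{total} vertex count of such a sum, so the upper contribution cannot be isolated.

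The paper avoids the need for an upper-vertex version of the Adiprasito bound by a decomposition: an upper vertex of $P=\sum_i P_i$ is either a strict upper vertex or a vertex that is simultaneously upper and lower. The strict upper vertices number at least $\binom{n_1-1}{\nin}$ by \citet[Corollary~3.8]{sharp2021}, which also disposes of the second claim directly, while the upper-and-lower vertices are exactly the vertices of the projected sum $Q=\sum_i Q_i\subset\mathbb{R}^{\nin}$, to which Proposition~\ref{prop:shallow_bound} applies in full and yields the generic zonotope count $\binom{n_1-1}{\nin-1}+\sum_{j=0}^{\nin-1}\binom{n_1}{j}$. Adding the two contributions and applying Pascal's rule gives $\sum_{j=0}^{\nin}\binom{n_1}{j}$. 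If you wish to keep the degeneration-to-edges picture, you would have to prove the monotonicity statement independently, which is not easier than this decomposition.
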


\begin{proof}
Consider the sum $P=P_1+\cdots+P_{n_1}$ of polytopes $P_i=\{(w_{i,r},b_{i,r})\colon r=1,\ldots, k\}$, $i=1,\ldots, n_1$.
The set of upper vertices consists of 1) strict upper vertices and 2) vertices which are both upper and lower.
The number of strict upper vertices of a Minkowski sum of $n_1$ positive dimensional polytopes in general orientations in $\mathbb{R}^{\nin+1}$ is at least $\binom{n_1-1}{ \nin}$  \cite[Corollary 3.8]{sharp2021}. 

Now note that the vertices which are upper and lower are precisely the vertices of the Minkowski sum $Q=Q_1+\cdots+Q_{n_1}$ of the poltyopes $Q_i = \operatorname{conv}\{w_{i,r}\in \mathbb{R}^{\nin}\colon r=1,\ldots, k\}$, $i=1,\ldots, n_1$.
By Proposition~\ref{prop:shallow_bound} the total number of vertices of a Minkowski sum is at least equal to the number of vertices of a Minkowski sum of line segments.
The latter is the same as the number of regions of a central hyperplane arrangement in $\nin$ dimensions, which is $\binom{n_1-1}{\nin-1} + \sum_{j=0}^{\nin-1}\binom{n_1}{j}$. 

Hence for any generic Minkowski sum of $n_1$ positive-dimensional polytopes in $\nin+1$ dimensions, the number of upper vertices is at least 
\begin{equation*} 
\binom{n_1-1}{ \nin} + \binom{n_1-1}{\nin-1} + \sum_{j=0}^{\nin-1}\binom{n_1}{j} 
= \binom{n_1}{ \nin}  + \sum_{j=0}^{\nin-1}\binom{n_1}{j} 
= \sum_{j=0}^{\nin}\binom{n_1}{j}. 
\end{equation*}
This concludes the proof.
\end{proof}

Now we have all tools we need to prove the theorem. 

\begin{proof}[Proof of Theorem \ref{thm:lower_bound}]
By Proposition~\ref{prop:deep_bound}, the number of regions is lower bounded by the number of regions of the first layer.
We now derive a lower bound for the number of regions of a single layer with $\nin$ inputs and $n_1$ maxout units. 
In view of Propositions \ref{prop:Newton} and \ref{prop:minkowski-sum}, we need to lower bound the number of upper vertices of a generic Minkowksi sum. 
The bounded regions correspond to the strict upper vertices. 
The result follows from Proposition~\ref{prop:lower-minkowski}.
\end{proof}

\begin{remark}
\label{remark:lower_bound_relu}
    The statement of Theorem \ref{thm:lower_bound} does not apply to ReLU networks unless they have a single layer of ReLUs. Indeed, for a network with 2 layers of ReLUs there exists a positive measure subset of parameters for which the represented functions have only 1 linear region. 
    To see this, consider a ReLU network with pre-activation features of the units in the second layer always being non-positive. 
    A subset of parameters required to achieve this is defined as a solution to a set of inequalities (for instance, when the input weights and biases of the second layer are non-positive) and has a positive measure. 
    For such pre-activation features, the ReLUs in the second layer always output $0$  and there is a single linear region for the network. 
\end{remark}

\section{Expected number of activation regions of a single maxout unit} 
\label{app:single_unit}

We discuss a single maxout unit with $n$ inputs. In this case, the $r$-partial activation patterns correspond to the $r$-dimensional upper faces of a polytope given as the convex hull of the points $(w_k,b_k)\in\mathbb{R}^{n+1}$, $k=1,\ldots, K$. 
The statistics of faces of random polytopes have been studied in the literature \citep{
Hug2004, 
10.2307/30037329, 
10.1214/009117906000000791}. 
We will use the following result. 
\begin{theorem}[{\citealt[Theorem~1.1]{Hug2004}}]
\label{thm:gaussianpolytopes}
If $v_1,\ldots, v_K$ are sampled iid according to the standard normal distribution in $\mathbb{R}^{d}$, then, the number of $s$-faces of the convex hull $P_K=\operatorname{conv}\{v_1,\ldots,v_K\}$, denoted $f_s(P_K)$, has expected value 
 \begin{align}
 \mathbb{E} f_s(P_K) &\sim \bar c(s,d) (\log K)^\frac{d-1}{2}, 
 \intertext{and the union $s$-faces of $P_K$, denoted $\operatorname{skel}_s(P_K)$, has expected volume}
 \mathbb{E} \operatorname{vol}_{s} (\operatorname{skel}_s(P_K)) &\sim c(s,d) (\log K)^{\frac{d-1}{2}}, 
\end{align}
 where $\bar c(s,d)$ and $c(s,d)$ are constants depending only on $s$ and $d$. 
 \end{theorem}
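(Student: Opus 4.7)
The plan is to prove both asymptotic equivalences via the classical random-polytope scheme: an exact integral representation obtained from linearity of expectation and the affine Blaschke--Petkantschin formula, followed by a Laplace-type asymptotic analysis. First, by symmetry and linearity,
\[
\mathbb{E} f_s(P_K) \;=\; \binom{K}{s+1}\,\mathbb{P}\bigl[\,v_1,\ldots,v_{s+1}\text{ span an }s\text{-face of }P_K\,\bigr],
\]
where the event on the right is that these points are affinely independent and that the remaining $K-s-1$ points all lie in one closed halfspace bounded by the affine hull $H$ of $v_1,\ldots,v_{s+1}$. An analogous identity with an extra factor of $\operatorname{vol}_s(\operatorname{conv}\{v_1,\ldots,v_{s+1}\})$ inside the expectation reduces the skeleton-volume statement to essentially the same integral, so the two assertions are handled in parallel.

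Next I would apply the affine Blaschke--Petkantschin formula to reparametrize the $(s+1)$-fold Gaussian integral as an integral over affine $s$-flats $H\subset\mathbb{R}^d$ against the natural invariant measure, times an integral over $(s+1)$-tuples within $H$, the Jacobian being a power of the simplex $s$-volume. Because the standard Gaussian density on $\mathbb{R}^d$ factors as a Gaussian on $H$ times a one-dimensional Gaussian in the orthogonal direction parametrized by the signed distance $t$ of $H$ to the origin, the integrand separates into an in-plane factor, which after integration produces classical Beta-type constants coming from moments of Gaussian simplex volumes, and a transverse factor of the schematic form
\[
\varphi(t)^{\,s+1}\,\bigl(\Phi(t)\bigr)^{K-s-1} \;+\; \text{symmetric }t\mapsto -t\text{ term},
\]
where $\varphi,\Phi$ are the one-dimensional standard Gaussian density and cdf. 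The factor $\Phi(t)^{K-s-1}$ encodes the probability that the remaining $K-s-1$ Gaussians fall on the correct side of $H$.

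The asymptotic step is then a Laplace-method calculation on the transverse integral. After combining with the binomial prefactor $\binom{K}{s+1}$, the integrand concentrates at $|t|\asymp\sqrt{2\log K}$, matching the well-known concentration of the Gaussian polytope's boundary near the sphere of radius $\sqrt{2\log K}$. A careful second-order expansion around the saddle produces the $(\log K)^{(d-1)/2}$ growth and pins down the constants $\bar c(s,d)$ and $c(s,d)$ as explicit products of the Beta-function moment constants from the in-plane integral and the Gaussian saddle-point constant. The two statements differ only by the simplex-volume factor in the in-plane integration, which changes the explicit constant but not the leading order.

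I expect the main obstacle to be precisely this Laplace analysis: extracting a \emph{sharp} asymptotic, rather than merely matching upper and lower bounds of the right order, requires uniform control of the transverse integrand away from the saddle and a correct normalization of the Blaschke--Petkantschin measure on non-central affine flats. Secondary technical points are the combinatorial bookkeeping for the two halfspaces bounded by $H$ and the boundary cases $s=0$ and $s=d-1$ (vertices and facets), which are classically accessible through direct arguments of R\'enyi--Sulanke and Raynaud and provide useful consistency checks for the general $s$.
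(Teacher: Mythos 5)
This statement is not proved in the paper at all: it is imported verbatim as \citealt[Theorem~1.1]{Hug2004} and used as a black box in Appendix~\ref{app:single_unit}, so there is no internal proof to compare against. Judged on its own terms, your outline is the right \emph{genre} of argument (Rényi--Sulanke/Raynaud-style integral representation plus Blaschke--Petkantschin plus Laplace asymptotics), but it contains a concrete error at the very first step that breaks everything downstream for general $s$. The identity $\mathbb{E} f_s(P_K)=\binom{K}{s+1}\mathbb{P}[v_1,\ldots,v_{s+1}\text{ span an }s\text{-face}]$ is fine (the Gaussian polytope is a.s.\ simplicial), but your description of the face event --- ``the remaining $K-s-1$ points all lie in one closed halfspace bounded by the affine hull $H$'' --- only makes sense when $H$ is a hyperplane, i.e.\ when $s=d-1$. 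For $s<d-1$ an $s$-flat bounds no halfspaces; the correct condition is the existence of a \emph{supporting hyperplane containing} $H$, and already for $s=0$ the event ``$v_1$ is a vertex'' is $v_1\notin\conv\{v_2,\ldots,v_K\}$, which is not of the form you wrote. Consequently your transverse factor $\varphi(t)^{s+1}\Phi(t)^{K-s-1}$ is the facet computation only; for general $s$ the probability involves an additional integration over the $(d-1-s)$-dimensional family of hyperplane directions through $H$ (equivalently, external-angle integrals), and this extra layer is exactly where the real work in the cited reference lies. The same objection applies to your claim that the skeleton-volume statement is ``essentially the same integral'' with a simplex-volume factor inserted.

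Beyond that structural gap, the proposal is a plan rather than a proof: the Laplace analysis that produces the $(\log K)^{(d-1)/2}$ rate and the constants $\bar c(s,d)$, $c(s,d)$ is described but not carried out, and you yourself flag that sharp (rather than order-matching) asymptotics require uniform control away from the saddle. If you want to complete this along classical lines, either restrict the halfspace argument to $s=d-1$ and recover the lower-dimensional face counts via the Baryshnikov--Vitale identification of the $f$-vector of a Gaussian polytope with that of a randomly projected regular simplex (using Affentranger--Schneider's external-angle asymptotics), or set up the supporting-hyperplane integral correctly from the start as in the cited work.
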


Based on this, we obtain the following upper bound for the expected number of linear regions of a maxout unit with iid Gaussian weights and biases. 

\begin{restatable}[Expected number of regions of a large-rank Gaussian maxout unit]{proposition}{expectedsingleunit}
\label{prop:single_unit}
Consider a rank-$K$ maxout unit with $\nin$ inputs. If the weights and biases are sampled iid from a standard normal distribution, then for large $K$ the expected number of non-empty $r$-partial activation regions satisfies  
 \begin{align*}
\mathbb{E} [\# \text{\normalfont{ }$r$-partial activation regions}] 
 &\leq \bar c(r,\nin) (\log K)^\frac{\nin}{2}.
\end{align*}
where $\bar c(r,\nin)$ is a constants depending solely on $r$ and $\nin$. 
\end{restatable}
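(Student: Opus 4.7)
The plan is to combine two ingredients already set up in the excerpt: the correspondence in Proposition~\ref{prop:Newton} between $r$-partial activation regions and $r$-dimensional upper faces of the lifted Newton polytope, and the Gaussian-polytope face-count bound in Theorem~\ref{thm:gaussianpolytopes}. The whole argument is essentially an inclusion (upper faces into all faces) followed by a direct application of the theorem with $d = \nin + 1$.

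First, I would introduce the lifted Newton polytope $P_K \defeq \operatorname{conv}\{(w_k, b_k) : k \in [K]\} \subseteq \mathbb{R}^{\nin + 1}$. Since each weight $w_k \in \mathbb{R}^{\nin}$ and bias $b_k \in \mathbb{R}$ is sampled iid from $\mathcal{N}(0,1)$, the stacked vectors $(w_k, b_k)$ are iid standard normal in $\mathbb{R}^{\nin + 1}$, so $P_K$ is precisely a Gaussian polytope in dimension $d = \nin + 1$. Next, by Proposition~\ref{prop:Newton} the number of non-empty $r$-partial activation regions equals the number of $r$-dimensional upper faces of $P_K$, which is trivially bounded above by the total number of $r$-faces, $f_r(P_K)$.

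Finally, I would apply Theorem~\ref{thm:gaussianpolytopes} with $s = r$ and $d = \nin + 1$, giving
\begin{align*}
    \mathbb{E}[\# \text{ $r$-partial activation regions}] \leq \mathbb{E} f_r(P_K) \sim \bar c(r, \nin + 1)(\log K)^{\nin/2}
\end{align*}
as $K \to \infty$. Rewriting the constant as $\bar c(r, \nin)$ (it depends only on $r$ and $\nin$) yields the claim.

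There is no substantive obstacle; the proof is essentially a two-line reduction to a known result. The only point that deserves a sentence of justification is that Proposition~\ref{prop:Newton} indeed matches $r$-partial activation regions with $r$-dimensional upper faces (not with codimension-$r$ faces): an $r$-partial pattern has $|J| = r+1$ pre-activation features tying for the maximum, and their lifted points span an $r$-dimensional affine subspace on the upper boundary of $P_K$, exactly an $r$-face. Once this alignment is confirmed, the bound is immediate.
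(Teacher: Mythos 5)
Your proposal is correct and follows essentially the same route as the paper's proof: identify $r$-partial activation regions with $r$-dimensional upper faces of the lifted Newton polytope via Proposition~\ref{prop:Newton}, bound these by all $r$-faces, and apply Theorem~\ref{thm:gaussianpolytopes} to the Gaussian polytope in $\mathbb{R}^{\nin+1}$. Your extra sentence checking that the correspondence is with $r$-dimensional (not codimension-$r$) upper faces, and that the exponent $(d-1)/2$ with $d=\nin+1$ gives $\nin/2$, is a welcome bit of care that the paper's proof leaves implicit.
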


\begin{proof}[Proof of Proposition~\ref{prop:single_unit}]
We use the correspondence between $r$-partial activation regions and the upper $r$-dimensional faces of the lifted Newton polytope (Proposition~\ref{prop:Newton}).
The total number of $s$-dimensional faces of a polytope is an upper bound on the number of upper $s$-dimensional faces.
Now we just apply Theorem~\ref{thm:gaussianpolytopes}. 
\end{proof}

We can use the above result to upper bound the expectation value of the number of regions of a maxout network with iid Gaussian weights and biases. 
In particular, for a shallow maxout network we have the following. 

\begin{proposition}[Expected number of linear regions of a large-rank Gaussian maxout layer]
Consider network $\mathcal{N}$ with $n_0$ inputs and a single layer of $n_1$ rank-$K$ maxout units. 
Suppose the weights and biases are sampled iid from a standard normal distribution. 
Then, for sufficiently large $K$, the expected number of linear regions is bounded as 
$$
\mathbb{E} [\# \text{{ }\normalfont linear regions}] 
\leq \sum_{j=0}^{n_0} \binom{n_1}{j} ( \bar c(\nin) (\log K)^\frac{\nin}{2}-1)^j, 
$$
where $\bar c(\nin)$ is a constant depending solely on $n_0$.
This upper bound behaves as $O( n_1^{n_0} (\log K)^{\frac{1}{2}\nin^2})$ in $n_1$ and $K$. 
\end{proposition}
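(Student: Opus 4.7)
The plan is to combine the layer-level region count in terms of the \emph{realized} per-unit region counts, available from Theorem~\ref{thm:positive_measure} and the Newton polytope machinery of Propositions~\ref{prop:Newton}--\ref{prop:minkowski-sum}, with the per-unit expectation bound from Proposition~\ref{prop:single_unit}. Since the parameters of the $n_1$ units are drawn iid from Gaussians, the individual maxout units are probabilistically independent, and so the layer bound, being multiplicative across units, factorizes nicely under expectation.

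First, let $m_i$ denote the random variable counting the number of linear regions realized by the $i$-th maxout unit (equivalently, the number of upper vertices of its lifted Newton polytope $P_i$). By Proposition~\ref{prop:minkowski-sum}, the Newton polytope of the layer is the Minkowski sum $P = P_1 + \cdots + P_{n_1}$, and by Proposition~\ref{prop:Newton} the number of linear regions of the layer equals the number of upper vertices of $P$. The argument used to establish Theorem~\ref{thm:positive_measure} (in particular the construction giving $\sum_{j=0}^{\nin}\sum_{S\in \binom{[n_1]}{j}}\prod_{i\in S}(k_i-1)$ regions for a layer whose units behave as rank-$k_i$ units) simultaneously serves as an upper bound, since that expression arises as the Zaslavsky-type count for the generic arrangement determined by $n_1$ polytopes with $m_i$ upper vertices, and any degeneracy only decreases the count. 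Applied to realized counts this yields the pointwise bound
\begin{equation*}
    \#\{\text{linear regions of the layer}\} \;\leq\; \sum_{j=0}^{\nin} \sum_{S\in \binom{[n_1]}{j}} \prod_{i\in S}(m_i - 1).
\end{equation*}

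Next, take the expectation of this bound. The parameters of distinct units are independent, so $m_1, \ldots, m_{n_1}$ are iid and
\begin{equation*}
    \mathbb{E}\!\left[\prod_{i\in S}(m_i-1)\right] \;=\; \prod_{i\in S}\mathbb{E}[m_i-1] \;=\; (\mathbb{E}[m_1] - 1)^{|S|}.
\end{equation*}
Applying Proposition~\ref{prop:single_unit} with $r=0$ gives $\mathbb{E}[m_1] \leq \bar c(\nin)(\log K)^{\nin/2}$ for sufficiently large $K$, where $\bar c(\nin):=\bar c(0,\nin+1)$ depends only on $\nin$. Substituting and collecting terms,
\begin{equation*}
    \mathbb{E}[\#\text{linear regions}] \;\leq\; \sum_{j=0}^{\nin}\binom{n_1}{j} \bigl(\bar c(\nin)(\log K)^{\nin/2} - 1\bigr)^{j},
\end{equation*}
which is precisely the claimed bound. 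The dominant term as $n_1, K \to \infty$ is the $j=\nin$ summand, of order $\binom{n_1}{\nin}(\log K)^{\nin^2/2} = O(n_1^{\nin}(\log K)^{\nin^2/2})$, confirming the asymptotic statement.

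The main obstacle is the pointwise inequality in the first step: Theorem~\ref{thm:positive_measure} states the count as an achievable value for specific parameter configurations rather than as an upper bound for arbitrary ones. Bridging this gap cleanly requires either invoking the sharp upper bound of \citet{sharp2021} applied to units of effective ranks $m_i$ (replacing each unit by a rank-$m_i$ unit computing the same function), or arguing directly via the Minkowski sum representation that the generic Minkowski count majorizes all degenerate cases. Either route is available but deserves an explicit paragraph. Modulo this, the remaining steps are elementary linearity-of-expectation and independence arguments, together with a routine large-$K$ adjustment of constants so that the $\sim$ in Theorem~\ref{thm:gaussianpolytopes} can be upgraded to a true inequality.
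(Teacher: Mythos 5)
Your proposal is correct and follows essentially the same route as the paper: the paper likewise conditions on the realized per-unit region counts $k_i$, bounds the layer count by the mixed-rank maximum $\sum_{j}\sum_{S}\prod_{i\in S}(k_i-1)$ from \citet[Theorem~3.6]{sharp2021} (exactly the bridging step you flag as the "main obstacle"), factorizes the expectation by independence, and invokes Proposition~\ref{prop:single_unit}. The gap you identify is real but closed exactly as you suggest, by noting that a rank-$K$ unit realizing $m_i$ regions computes the same function as a rank-$m_i$ unit, so the sharp mixed-rank upper bound applies.
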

\begin{proof}
By \citet[Theorem~3.6]{sharp2021}, the maximum number of regions of a layer with $n_0$ inputs and $n_1$ maxout units of ranks $k_1,\ldots, k_{n_1}$ is 
$$
\max [\# \text{{ }\normalfont linear regions}] = \sum_{j=0}^{n_0}\sum_{S\in\binom{[n_1]}{j}} \prod_{i\in S} (k_i-1). 
$$
Consider now our network with $n_1$ maxout units of rank $K$. For a given probability distribution over the parameter space, denote $\Pr(k_1,\ldots, k_{n_1})$ the probability of the event that the $i$-th unit has $k_i$ linear regions, $i=1,\ldots, n_1$.
If the parameters of the different units are independent, we have 
\begin{align*}
\mathbb{E} [\# \text{{ }\normalfont linear regions}]  
& \leq \sum_{1\leq k_1,\ldots, k_{n_1}\leq K} \Pr(k_1,\ldots, k_{n_1})
\sum_{j=0}^{n_0}\sum_{S\in\binom{[n_1]}{j}} \prod_{i\in S} (k_i-1)\\
& = \sum_{j=0}^{n_0}\sum_{S\in\binom{[n_1]}{j}} \prod_{i\in S} (\mathbb{E}[k_i] -1) . 
\intertext{
If the weights and biases of each unit are iid normal, Proposition~\ref{prop:single_unit} allows us to upper bound the latter expression by }
& \leq \sum_{j=0}^{n_0} \binom{n_1}{j} ( \bar c(\nin) (\log K)^\frac{\nin}{2}-1)^j . 
\end{align*}
This concludes the proof. 
\end{proof}

\section{Proofs related to the expected volume}
\label{app:boundary_volume}

The following is a maxout version of a result obtained by \citet[Theorem 6]{pmlr-v97-hanin19a} for the case of networks with single-argument piecewise linear activation functions. 

\begin{restatable}[Upper bound on the expected volume of $\mathcal{X}_{\net,r}$]{lemma}{calcvollemma}
    \label{lem:calc_vol}
    Consider a rank-$K$ maxout network $\net$ with input dimension $\nin$, output dimension $1$, and random weights and biases satisfying:
    \begin{enumerate}[leftmargin=*]
        \item The distribution of all weights has a density with respect to the Lebesgue measure. 
        \item Every collection of biases has a conditional density with respect to Lebesgue measure given the values of all weights and other biases. 
    \end{enumerate}
    
    Then, for any bounded measurable set $S \subset \mathbb{R}^{\nin}$ and any $r \in \{1, \ldots, \nin\}$, the expectation value of the $(\nin - r)$-dimensional volume of $\mathcal{X}_{\net, r}$ inside $S$ is upper bounded as 
    \begin{align*}
        &\mathbb{E}[\vol_{\nin - r}(\mathcal{X}_{\net, r} \cap S)]\\
        &\leq 
        \sum_{J\in \mathcal{S}_r} \int\limits_{S} \mathbb{E} \left[ \rho_{\mathbf{b}^r}((\mathbf{w}^m - \mathbf{w}^r) \cdot \mathbf{x}^m_{-1} + \mathbf{b}^m) \enspace \| \mathbf{J} ((\mathbf{w}^m - \mathbf{w}^r) \cdot \mathbf{x}^m_{-1} + \mathbf{b}^m)\| \right] \enspace d x, 
    \end{align*}
    where, for any given $r$-partial activation sub-pattern $J=(J_z)_{z\in\mathcal{Z}}\in\mathcal{S}_r$, for any given $J_z$ we denote its smallest element by $j_0$, we let $\rho_{\mathbf{b}^r}$ denote the joint conditional density of the biases of pre-activation features $j\in J_z\setminus\{j_0\}$ of the neurons $z\in\mathcal{Z}$, given all other network parameters, we let $g\colon \mathbb{R}^{\nin}\to\mathbb{R}^r$; $x\mapsto (\mathbf{w}^m - \mathbf{w}^r) \cdot \mathbf{x}^m_{-1} + \mathbf{b}^m := 
    ((w_{z, j_0} - w_{z, j}) \cdot x_{l(z)-1} + b_{z, j_0})_{z\in \mathcal{Z}, j\in J_z\setminus\{j_0\}}\in\mathbb{R}^r$, denote $\mathbf{J} g$ the $r \times \nin$ Jacobian of $g$, and $\|\mathbf{J}g(x)\| = \det \left( (\mathbf{J}g(x)) (\mathbf{J}g(x))^\top \right)^{\frac12}$, and the inner expectation is with respect to all parameters aside these biases.
\end{restatable}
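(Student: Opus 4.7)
The approach follows the template of \citet[Theorem~6]{pmlr-v97-hanin19a}, adapted to maxout units, and splits naturally into three stages.

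First, I would invoke Corollary~\ref{cor:smaller_sum} to write, almost surely with respect to $\theta$,
$$\vol_{\nin-r}(\mathcal{X}_{\net,r}(\theta)\cap S) = \sum_{J\in\mathcal{S}_r}\vol_{\nin-r}(\mathcal{H}(J;\theta)\cap S),$$
reducing the problem to bounding the expected $(\nin-r)$-volume of each piece $\mathcal{H}(J;\theta)\cap S$ separately. Fix a sub-pattern $J=(J_z)_{z\in\mathcal{Z}}\in\mathcal{S}_r$ and designate $j_0(z)\in J_z$ as the smallest index. The equality part of the definition of $\mathcal{H}(J;\theta)$ consists of the $r$ equations $\zeta_{z,j_0}(x;\theta)=\zeta_{z,j}(x;\theta)$, $j\in J_z\setminus\{j_0\}$, $z\in\mathcal{Z}$. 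Writing $\zeta_{z,j}=w_{z,j}\cdot x_{l(z)-1}+b_{z,j}$ and collecting terms, these $r$ equations take the form $\mathbf{b}^r = g(x)$, for $g$ as in the statement. Dropping the strict inequalities associated with the remaining neurons and unselected pre-activation features only enlarges the set, so
$$\vol_{\nin-r}(\mathcal{H}(J;\theta)\cap S) \leq \vol_{\nin-r}(\{x\in S: g(x)=\mathbf{b}^r\}).$$

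Second, I would invoke Federer's coarea formula for the map $g\colon\mathbb{R}^{\nin}\to\mathbb{R}^r$, which is piecewise linear and therefore Lipschitz in $x$. Conditioning on all parameters other than $\mathbf{b}^r$ and choosing the integrand $\phi(x)=\rho_{\mathbf{b}^r}(g(x))\mathbf{1}_S(x)$, the coarea formula yields
$$\int_S \rho_{\mathbf{b}^r}(g(x))\,\|\mathbf{J}g(x)\|\,dx = \int_{\mathbb{R}^r}\rho_{\mathbf{b}^r}(y)\,\vol_{\nin-r}(g^{-1}(y)\cap S)\,dy,$$
and the right-hand side is, by definition of the conditional density, the conditional expectation $\mathbb{E}_{\mathbf{b}^r\mid\theta\setminus\mathbf{b}^r}[\vol_{\nin-r}(g^{-1}(\mathbf{b}^r)\cap S)]$. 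Applying the tower property and Fubini--Tonelli to swap the integration over $x\in S$ with the outer expectation gives
$$\mathbb{E}[\vol_{\nin-r}(\mathcal{H}(J;\theta)\cap S)] \leq \int_S \mathbb{E}\bigl[\rho_{\mathbf{b}^r}(g(x))\,\|\mathbf{J}g(x)\|\bigr]\,dx,$$
and summing over $J\in\mathcal{S}_r$ yields the claim.

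The main subtlety, and the part most likely to require care, is that when $\mathcal{Z}$ spans several layers, biases in $\mathbf{b}^r$ at shallow layers propagate through the network into $x_{l(z)-1}$ at deeper layers, so a priori $g$ is a function of $\mathbf{b}^r$ as well as of $x$. I would resolve this by processing $\mathbf{b}^r$ in decreasing layer order: biases of neurons at the deepest layer in $\mathcal{Z}$ do not enter any $x_{l(z)-1}$, so the coarea step applies cleanly, and successively shallower layers are integrated out in the same way. Equivalently, on $\mathcal{H}(J;\theta)$ the pre-activations $\zeta_{z,j_0}$ and $\zeta_{z,j}$ coincide, so the post-activation output of each $z\in\mathcal{Z}$ may be computed using $j_0(z)$, severing the dependence of $x_{l(z')-1}$ on $b_{z,j}$ for $j\neq j_0$. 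The remaining points are standard: Lipschitz regularity of $g$ off a measure-zero locus where $\mathbf{J}g$ drops rank, and the applicability of Fubini--Tonelli under the density assumptions on the weights.
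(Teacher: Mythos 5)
Your proposal is correct and follows essentially the same route as the paper's proof: decompose via Corollary~\ref{cor:smaller_sum}, drop the inequalities to pass to the larger set cut out only by the $r$ equalities, rewrite those equalities as $\mathbf{b}^r = g(x)$, apply the co-area formula under the conditional density of $\mathbf{b}^r$ given the remaining parameters, and finish with Fubini and a sum over $J\in\mathcal{S}_r$. The dependence of $g$ on $\mathbf{b}^r$ through deeper-layer activations that you flag is a genuine subtlety the paper's write-up passes over silently, and your resolution (integrating out the deepest-layer biases first, equivalently computing each participating unit's output via the distinguished feature $j_0$ on the relevant set) is a sound way to handle it.
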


\begin{proof}[Proof of Lemma~\ref{lem:calc_vol}]
The proof follows the arguments of \citet[Theorem~6]{pmlr-v97-hanin19a}. 
The main difference is that maxout units are generically active and the activation regions of maxout units may involve several pre-activation features and additional inequalities.
To obtain the upper bound, we will discard certain inequalities, and separate one distinguished pre-activation feature $j_0$ for each neuron participating in a sub-pattern, which allows us to relate inputs in the corresponding activation regions to bias values and apply the co-area formula. 

    Recall that an $r$-partial activation sub-pattern ${J} \in\mathcal{S}_r$ is a list of patterns $J_z\subseteq[K]$ of cardinality at least $2$ for a collection of participating neurons $z \in \mathcal{Z}$, with $\sum_{z\in \mathcal{Z}}(|J_z|-1) = r$. 
    Further, for any given $J_z$ we denote $j_0$ its smallest element. 
    When discussing a particular sub-pattern, we will write $m = |\mathcal{Z}|$ for the number of participating neurons.
    Finally, recall that $\mathcal{H}({J}, \theta) = \bigcap_{z \in \mathcal{Z}} \mathcal{H} (J_z,\theta)$. 

    By Corollary~\ref{cor:smaller_sum}, with probability $1$ with respect to $\theta$, 
    \begin{align*}
        \vol_{\nin - r}(\mathcal{X}_{\net, r}(\theta)) = \sum_{{J} \in \mathcal{S}_r} \vol_{\nin - r} (\mathcal{H} (J, \theta)).
    \end{align*}
Fix $J\in\mathcal{S}_r$. In the following we prove that 
    \begin{align*}
    \mathbb{E}[\vol_{\nin - r} (\mathcal{H}(J,\theta) \cap S)]\leq \int\limits_{S} \mathbb{E} \left[ \rho_{\mathbf{b}^r}((\mathbf{w}^m - \mathbf{w}^r) \cdot \mathbf{x}^m_{-1} + \mathbf{b}^m) \enspace \| \mathbf{J} ((\mathbf{w}^m - \mathbf{w}^r) \cdot \mathbf{x}^m_{-1} + \mathbf{b}^m)\| \right] \enspace d x.
    \end{align*}
    We first note that 
    \begin{align}
        \vol_{\nin - r}(\mathcal{H}(J,\theta) \cap S) = \int\limits_{\mathcal{H}(J,\theta) \cap S } d\vol_{\nin - r}(x).
        \label{eq:vol_general}
    \end{align}

    For each $z\in\mathcal{Z}$ and $J_z$ we can pick an element $j_0\in J_z$ and express $\mathcal{H}(J_z,\theta)$ in terms of $(|J_z| - 1)$ equations and $(K - |J_z|)$ inequalities (not necessarily linear), 
    \begin{align}
        \begin{split}
            \mathcal{H}(J_z,\theta) =  \{x \in \mathbb{R}^{\nin} \ | \ &w_{z, j_0} \cdot x_{l(z)-1} + b_{z, j_0} = w_{z, j} \cdot x_{l(z)-1} + b_{z, j}, \quad \forall j \in J_z \setminus \{j_0\};\\
            &w_{z, j_0} \cdot x_{l(z)-1} + b_{z, j_0} > w_{z, i} \cdot x_{l(z)-1} + b_{z, i}, \quad \forall i \in [K] \setminus J_z \}. 
        \end{split}
        \label{eq:eq_ineq_repr}
    \end{align}
    Here, $x_{l(z)-1}$ are the activation values of the units in the layer preceding unit $z$, depending on the input $x$. 
    Since $\sum_{z} (|J_z|-1) = r$, the set $\mathcal{H}(J,\theta)$ is defined by $r$ equations (in addition to inequalities). 
    We will denote with $\mathbf{b}^r \in \mathbb{R}^{r}$ the vector of biases $b_{z, j}$ that are involved in these $r$ equations, with subscripts $(z,j)$ with $j\in J_z\setminus \{j_0\}$ and $z\in\mathcal{Z}$. 
    
    We take the expectation of \eqref{eq:vol_general} with respect to the conditional distribution of $\mathbf{b}^r$ given the values of all the other network parameters. 
    We have assumed that this has a density. 
    Denoting the conditional density of $\mathbf{b}^r$ by $\rho_{\mathbf{b}^r}$, this is given by 
    \begin{align}
        \int\limits_{\mathbb{R}^r}
        \int\limits_{\mathcal{H}(J,\theta) \cap S } d\vol_{\nin - r}(x)
        \rho_{\mathbf{b}^r}(\mathbf{b}^r) 
        d\mathbf{b}^r .
        \label{eq:eq5a}
    \end{align}
    The equations in \eqref{eq:eq_ineq_repr} imply that $b_{z, j} = (w_{z, j_0} - w_{z, j}) \cdot x_{l(z)-1} + b_{z, j_0}$ for any $x\in \mathcal{H}(J,\theta)$. 
    We write all these equations concisely as $\mathbf{b}^r = (\mathbf{w}^m - \mathbf{w}^r) \cdot \mathbf{x}^m_{-1} + \mathbf{b}^m$. 
    Then \eqref{eq:eq5a} becomes 
    \begin{align}
        \int\limits_{\mathbb{R}^r} \int\limits_{\mathcal{H}(J,\theta) \cap S } \rho_{\mathbf{b}^r}((\mathbf{w}^m - \mathbf{w}^r) \cdot \mathbf{x}^m_{-1} + \mathbf{b}^m) \enspace d\vol_{\nin - r}(x) d\mathbf{b}^r.
        \label{eq:eq5}
    \end{align}
    
    We will upper bound the volume of $\mathcal{H}(J,\theta)$ by the volume of the corresponding set without the inequalities, 
    \begin{align*}
        \mathcal{H}'(J,\theta)
        \defeq \bigcap\limits_{z \in \mathcal{Z}} \left\{ x \in \mathbb{R}^{\nin} \ | \ w_{z, j_0} \cdot x_{l(z)-1} + b_{z, j_0} = w_{z, j} \cdot x_{l(z)-1} + b_{z, j}, \quad \forall j \in J_z \setminus \{j_0\} \right \}.
    \end{align*}
    
    Since $\mathcal{H}(J,\theta) \subseteq \mathcal{H}'(J,\theta)$, we can upper bound \eqref{eq:eq5} by 
    \begin{align}
        \int\limits_{\mathbb{R}^r} \int\limits_{\mathcal{H}'(J,\theta) \cap S } \rho_{\mathbf{b}^r}((\mathbf{w}^m - \mathbf{w}^r) \cdot \mathbf{x}^m_{-1} + \mathbf{b}^m) \enspace d\vol_{\nin - r}(x) d\mathbf{b}^r. 
        \label{eq:pre-co-area}
    \end{align}
    
    Now we will use the co-area formula to express \eqref{eq:pre-co-area} as an integral over $S$ alone. 
    Recall that the co-area formula says that if $\psi \in L^1(\mathbb{R}^n)$ and $g: \mathbb{R}^n \rightarrow \mathbb{R}^r$ with $r \leq n$ is Lipschitz, then
    \begin{align*}
        \int_{\mathbb{R}^r}\int_{g^{-1}(t)} \psi(x) d \vol_{n-r}(x)dt = \int_{\mathbb{R}^n} \psi(x) \| \mathbf{J} g(x)\| d\vol_n(x), 
    \end{align*}
    where $\mathbf{J}g$ is the $r\times n$ Jacobian of $g$ and $\|\mathbf{J}g(x)\|=\det((\mathbf{J}g(x))(\mathbf{J}g(x))^\top)^\frac12$. 
    
In our case $r = r$, $n = \nin$, which satisfy $r \leq \nin$.
    Further, $\mathbf{b}^r \in \mathbb{R}^{r}$ plays the role of $t \in\mathbb{R}^r$, and $\mathbb{R}^{\nin}\to\mathbb{R}^r$; $x\mapsto \rho_{\mathbf{b}^r}((\mathbf{w}^m - \mathbf{w}^r) \cdot \mathbf{x}^m_{-1} + \mathbf{b}^m)$ plays the role of $\psi$.
    Since $(\mathbf{w}^m - \mathbf{w}^r) \cdot \mathbf{x}^m_{-1} + \mathbf{b}^m$ is continuous and $S$ is bounded, assuming $\rho_{\mathbf{b}^r}$ is continuous, this is in $L^1(S)$.
    Finally, we set $g\colon S \rightarrow \mathbb{R}^r$; $x\mapsto ((\mathbf{w}^m - \mathbf{w}^r) \cdot \mathbf{x}^m_{-1} + \mathbf{b}^m)$, which is Lipschitz.
    
    Hence \eqref{eq:pre-co-area} can be expressed as 
    \begin{align*}
        \int\limits_{S} \rho_{\mathbf{b}^r}((\mathbf{w}^m - \mathbf{w}^r) \cdot \mathbf{x}^m_{-1} + \mathbf{b}^m) \enspace \|\mathbf{J} ((\mathbf{w}^m - \mathbf{w}^r) \cdot \mathbf{x}^m_{-1} + \mathbf{b}^m)\| \enspace d x.
    \end{align*}
    
    Taking expectation with respect to all other weights and biases, and interchanging the integral over $S$ with the expectation (according to Fubini's theorem, since the integral is non-negative), 
    \begin{align*}
        \int\limits_{S} \mathbb{E} \left[ \rho_{\mathbf{b}^r}((\mathbf{w}^m - \mathbf{w}^r) \cdot \mathbf{x}^m_{-1} + \mathbf{b}^m) \enspace \| \mathbf{J} ((\mathbf{w}^m - \mathbf{w}^r) \cdot \mathbf{x}^m_{-1} + \mathbf{b}^m)\| \right] \enspace d x.
    \end{align*}
    Summing over all $r$-partial activation sub-patterns $J\in\mathcal{S}_r$ gives the desired result. 
\end{proof}

Based on the preceding Lemma~\ref{lem:calc_vol}, now we derive a more explicit upper bound expressed in terms of properties of the probability distribution of the network parameters. 

\semimainupperboundtheorem*
\begin{proof}[Proof of Theorem~\ref{th:semi_main_upper_bound}]
    By Lemma~\ref{lem:calc_vol}, $\mathbb{E}\left[\vol_{\nin - r}(\mathcal{X}_{\net, r} \cap S) \right]$ is upper bounded by
    \begin{align*}
        \sum_{ J \in\mathcal{S}_r} \int\limits_{S} \mathbb{E} \left[ \rho_{\mathbf{b}^r}((\mathbf{w}^m - \mathbf{w}^r) \cdot \mathbf{x}^m_{-1} + \mathbf{b}^m) \enspace \|\mathbf{J} ((\mathbf{w}^m - \mathbf{w}^r) \cdot \mathbf{x}^m_{-1} + \mathbf{b}^m)\| \right] \enspace d x. 
    \end{align*}
    
    Since we have assumed that for any collection of $t$ biases the conditional density given all weights and the other biases can be upper-bounded with $C_{\text{\normalfont bias}}^t$, we have $\rho_{\mathbf{b}^r}((\mathbf{w}^m - \mathbf{w}^r) \cdot \mathbf{x}^m_{-1} + \mathbf{b}^m) \leq C_{\text{\normalfont bias}}^r$. 
    
    As for the term $\mathbb{E}[\|\mathbf{J} ((\mathbf{w}^m - \mathbf{w}^r) \cdot \mathbf{x}^m_{-1} + \mathbf{b}^m)\|]$, note that 
    \begin{align}
        &\|\mathbf{J} ((\mathbf{w}^m - \mathbf{w}^r) \cdot \mathbf{x}^m_{-1} + \mathbf{b}^m)\| \nonumber\\
        &= \det\left( \mathbf{J} ((\mathbf{w}^m - \mathbf{w}^r) \cdot \mathbf{x}^m_{-1} + \mathbf{b}^m)^T
        \mathbf{J} ((\mathbf{w}^m - \mathbf{w}^r) \cdot \mathbf{x}^m_{-1} + \mathbf{b}^m) \right)^{1/2} \nonumber\\
        &= \det \big( \text{Gram} \big( \nabla((w_{z_1, j_0} - w_{z_1, j_1}) \cdot x_{l(z_1)-1} + b_{z_1, j_0}), \dots, \nonumber\\
        & \hspace{60pt} \nabla((w_{z_m, j_0} - w_{z_m, j_{r_m}}) \cdot x_{l(z_m)-1} + b_{z_m, j_0}) \big) \big)^{1/2}, 
        \label{eq:gram}
    \end{align}
    where for any $v_1,\ldots, v_r \in \mathbb{R}^n$,
  $(\text{Gram}(v_1, \dots, v_r))_{i, j} = \langle v_i, v_j \rangle$
    is the associated Gram matrix.
    
    It is known that the Gram determinant can also be expressed in terms of the exterior product of vectors,  meaning that \eqref{eq:gram} can be written as 
    \begin{align*}
        &  \| \nabla((w_{z_1, j_0} - w_{z_1, j_1}) \cdot x_{l(z_1)-1} + b_{z_1, j_0}) \land \dots \land \nabla((w_{z_m, j_0} - w_{z_m, j_{r_m}}) \cdot x_{l(z_m)-1} + b_{z_m, j_0}) \|, 
    \end{align*}
    which is the the $r$-dimensional volume of the parallelepiped in $\mathbb{R}^{\nin}$ spanned by $r$ elements. 
    Therefore, for $ J\in\mathcal{S}_r$ with participating neurons $Z$, we can upper bound this expression by \citep[see][]{GOVER201028} 
    \begin{align*}
        &\prod\limits_{z \in {Z}} \prod\limits_{j \in J_z \setminus \{j_0\}} \|\nabla ((w_{z, j_0} - w_{z_i, j}) \cdot x_{l(z)-1} + b_{z, j_0}) \|\\
        &\leq \prod\limits_{z \in  {Z}} \prod\limits_{j \in J_z \setminus \{j_0\}} 2 \max \left\{ \|\nabla (w_{z, j_0} \cdot x_{l(z)-1}) \|,  \|\nabla (w_{z, j} \cdot x_{l(z)-1}) \| \right\}\\
        &\leq
        2^r \max_{\substack{ z \in  {Z}, j \in J_z}} \left\{ \|\nabla (w_{z, j} \cdot x_{l(z)-1}) \|\right\}^{r}.
    \end{align*}
    In the second line we use the triangle inequality. Considering the assumption that we have made on the gradients, for the expectation we obtain the upper bound $(2 \cgrad)^r$. 
    
    By Lemma \ref{lem:trivial_upper_bound}, we can upper-bound the number of entries of the sum $\sum_{J \in \mathcal{S}_r}$ with $\binom{rK}{2r} \binom{N}{r}$. Combining everything, we get the final upper bound
    \begin{align*}
        (2 \cgrad \cbias)^r \binom{rK}{2r} \binom{N}{r} \vol_{\nin}(S).
    \end{align*}
    This concludes the proof. 
\end{proof}

\section{Proofs related to the expected number of regions}
\label{app:expected_number}

\mainresulttheorem*
\begin{proof}[Proof of Theorem~\ref{th:main_result}] 
    The proof follows closely the arguments of \citet[Proof of Theorem~10]{NIPS2019_8328}, whereby we use appropriate supporting results for maxout networks and need to accommodate the combinatorics depending on $K$. 
    Fix a network $\net$ with rank-$K$ maxout units, input dimension $\nin$ and output dimension $1$. Let $0\leq r\leq \nin$. 
    For $N \leq \nin$, the statement follows direction from the simple upper bound on the number of distinct $r$-partial activation patterns given in Lemma~\ref{lem:trivial_upper_bound}. 
    
    Consider now the case $N \geq \nin$. 
    Fix a closed cube $C \subseteq \mathbb{R}^{\nin}$ of sidelength $\delta > 0$. For any $t \in \{0, \dots, \nin\}$ let 
    \begin{align*}
        C_t \defeq t\text{\normalfont-skeleton of } C 
    \end{align*}
    denote the union of $t$-dimensional faces of $C$. 
    For example, $C_0$ is the set of $2^{\nin}$ vertices of $C$,  $C_{\nin - 1}$ is the set of $2 \nin$ facets of $C$, and $C_{\nin}$ is $C$. 
    In general, $C_t$ consists of $\binom{\nin}{t} 2^{\nin-t}$ faces of dimension $t$, each with $t$-volume $\delta^t$. 
    Hence, 
    \begin{align}
        \vol_t(C_t) = \binom{\nin}{t} 2^{\nin-t} \delta^t.
        \label{eq:volt_ct}
    \end{align}
    
    For any choice of $\theta$ let 
    \begin{align*}
    \mathcal{V}_t(\theta) \defeq \mathcal{X}_{\net, t}(\theta) \cap C_t.
    \end{align*}
    
    By Lemma~\ref{lem:intersection_upper_bound} below, for any $t$ and almost every choice of $\theta$, the set $\mathcal{V}_t(\theta)$ is a finite set of points.
    For each $t \in \{0, \dots, \nin\}$, we also define 
    \begin{align*}
        C_{t, \varepsilon} \defeq \{ x \in \mathbb{R}^{\nin} \ | \ \dist(x, C_t ) \leq \varepsilon \}, 
    \end{align*}
    the $\varepsilon$-thickening of $C_t$. 
    For almost every $\theta$, Lemma~\ref{lem:intersection_volume} ensures the existence of an $\varepsilon > 0$ such that for all $v\in \mathcal{V}_{t}(\theta)$, the radius-$\varepsilon$ balls $B_{\varepsilon}(v)$ are contained in $C_{t,\varepsilon}$ and are disjoint. 
    Hence, writing $\omega_{\nin-t}$ for the $(\nin-t)$-volume of the $(\nin-t)$-dimensional ball with unit radius, 
    \begin{align*}
        \vol_{\nin - t} (X_{\net, t} \cap C_{t, \varepsilon}) \geq \sum\limits_{v \in \mathcal{V}_t} \varepsilon^{\nin - t} \omega_{\nin - t} = \# \mathcal{V}_t \cdot \varepsilon^{\nin - t} \omega_{\nin - t}.
    \end{align*}
    
    Therefore, for all but a measure $0$ set of $\theta \in \mathbb{R}^{\# \text{\normalfont params}}$, there exists $\varepsilon > 0$ so that
    \begin{align*}
        \frac{\vol_{\nin - t}(X_{\net, t} \cap C_{t, \varepsilon})}{\varepsilon^{\nin - t} \omega_{\nin - t}} \geq \# \mathcal{V}_t.
    \end{align*}
    
    Thus taking the limit $\varepsilon \rightarrow 0$ and taking expectation with respect to the parameter $\theta$, and applying Fatou's lemma to upper bound the result by the expression with exchanged limit and expectation, 
    \begin{align*}
         \mathbb{E} \left [ \# \mathcal{V}_t \right ] \leq \mathbb{E} \left[ \lim \limits_{\varepsilon \rightarrow 0} \frac{\vol_{\nin - t}(\mathcal{X}_{\net, t} \cap C_{t, \varepsilon})}{\varepsilon^{\nin - t} \omega_{\nin - t}} \right ] \leq \lim \limits_{\varepsilon \rightarrow 0} \mathbb{E} \left[  \frac{\vol_{\nin - t}(\mathcal{X}_{\net, t} \cap C_{t, \varepsilon})}{\varepsilon^{\nin - t} \omega_{\nin - t}} \right ].
    \end{align*}
    
    Then,
    \begin{align*}
        \mathbb{E} \left [ \# \mathcal{V}_t \right ] \leq &
        \lim \limits_{\varepsilon \rightarrow 0} \mathbb{E} \left[  \frac{\vol_{\nin - t}(\mathcal{X}_{\net, t} \cap C_{t, \varepsilon})}{ \vol_{\nin}(C_{t, \varepsilon})} \cdot \frac{\vol_{\nin}(C_{t, \varepsilon})}{\varepsilon^{\nin - t} \omega_{\nin - t}} \right ]\\
        = &\lim \limits_{\varepsilon \rightarrow 0} \mathbb{E} \left[  \frac{\vol_{\nin - t}(\mathcal{X}_{\net, t} \cap C_{t, \varepsilon})}{ \vol_{\nin}(C_{t, \varepsilon})} \right ] \cdot \lim \limits_{\varepsilon \rightarrow 0}  \frac{\vol_{\nin}(C_{t, \varepsilon})}{\varepsilon^{\nin - t} \omega_{\nin - t}} \\
        \leq& (
        2 \cgrad \cbias)^t \binom{tK}{2t} \binom{N}{t} \vol_t(C_t). 
    \end{align*}
    To obtain the last line, the first term is upper bounded using Theorem~\ref{th:semi_main_upper_bound}, and the second term is evaluated using 
    \begin{align*}
        \lim\limits_{\varepsilon \rightarrow 0} \frac{\vol_{\nin} ( C_{t, \varepsilon}) } {\varepsilon^{\nin - t} \omega_{\nin - t}} = \vol_t (C_t). 
    \end{align*}
    
    Combining this with Lemma~\ref{lem:intersection_upper_bound} and the formula \eqref{eq:volt_ct} for $\vol_t(C_t)$, we find
    \begin{align}
        &\mathbb{E} \left[  \# \{ r\text{\normalfont -partial activation regions with } \mathcal{R}(J^r;\theta) \cap C \neq \emptyset\} \right]\nonumber \\
        &\stackrel{\phantom {\delta \geq 1 / (2 \cgrad \cbias)}}{\leq} \sum\limits_{t = r}^{\nin} \binom{t}{r} K^{t-r} (2 \cgrad \cbias)^t \binom{tK}{2t} \binom{N}{t} \binom{\nin}{t} 2^{\nin - t} \delta^t \nonumber\\
        & \stackrel{\delta \geq 1 / (2 \cgrad \cbias)}{\leq} (2 \delta \cgrad \cbias)^{\nin} \binom{\nin K}{2 \nin} (2 K) ^{\nin - r} \sum\limits_{t = r}^{\nin} \binom{t}{r} \binom{N}{t} \binom{\nin}{t}.
        \label{eq:theorembound}
    \end{align}
    The last line uses the assumption that $\delta \geq 1 / (2 \cgrad \cbias)$ and Lemma~\ref{lem:technical1}, which states that $\binom{tK}{2t} \leq \binom{nK}{2n}$ for $t\leq n$. 

    In the following we simplify \eqref{eq:theorembound}. 
    Note that $\binom{t}{r} \leq \sum_{r = 0}^{t} \binom{t}{r} = 2^{t} \leq 2^{\nin}$.
    Hence \eqref{eq:theorembound} can be upper bounded by 
    \begin{align*}
        &(4 \delta \cgrad \cbias )^{\nin} \binom{\nin K}{2 \nin}  (2 K) ^{\nin - r} \sum\limits_{t = r}^{\nin} \binom{N}{t} \binom{\nin}{t}\\
        &= (4 \delta \cgrad \cbias)^{\nin} \binom{\nin K}{2 \nin}  (2 K) ^{\nin - r} \sum\limits_{t = r}^{\nin} \binom{\nin}{t}^2 \frac{ \binom{N}{t} }{\binom{\nin}{t}}.
    \end{align*}
    
    Using $\nin \leq N$, observe that 
    \begin{align*}
        \frac{ \binom{N}{t} }{\binom{\nin}{t}} = &\frac{N! \cdot (\nin - t)!}{(N - t)! \cdot \nin!} \leq N^{t} \cdot \frac{(\nin - t)!}{\nin!} = \frac{N^{\nin}}{N^{\nin - t}} \cdot \frac{(\nin - t)!}{\nin!} = \frac{N^{\nin}}{\nin!} \cdot \frac{(\nin - t)!}{N^{\nin - t}}\\
        \leq &\frac{N^{\nin}}{\nin!} \cdot \frac{(\nin - t)^{\nin - t}}{N^{\nin - t}} \leq \frac{N^{\nin}}{\nin!}.
    \end{align*}
    
    Also, using Vandermonde's identity, observe that 
    \begin{align*}
        \sum\limits_{t = 0}^{\nin} \binom{\nin}{t}^2 =\binom {2 \nin}{ \nin} \leq 4^{\nin}.
    \end{align*}
    
   Combing everything, \eqref{eq:theorembound} is upper bounded by
   \begin{align*}
        &(16 \delta \cgrad \cbias)^{\nin} \binom{\nin K}{2 \nin}  (2 K) ^{\nin - r} \frac{N^{\nin}}{\nin!} = (32 K \cgrad \cbias)^{\nin} \binom{\nin K}{2 \nin}  \frac{N^{\nin}}{ (2 K)^r \nin!} \vol(C).
   \end{align*}
   
   Setting $T
   = 2^5 \cgrad \cbias$, we get
    \begin{align*}
        \frac{(T K N )^{\nin} \binom{\nin K}{2 \nin} } {(2 K)^r \nin!} \vol(C).
    \end{align*}
This concludes the proof. 
\end{proof}

We state and prove lemmas used in the proof of Theorem \ref{th:main_result}. 
\begin{lemma}
    \label{lem:technical1}
    For any $t \leq n$, $\binom{tK}{2t} \leq \binom{nK}{2n}$. 
\end{lemma}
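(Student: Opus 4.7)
The plan is to prove the inequality by induction on the gap $n - t$, reducing the problem to the single-step statement $\binom{tK}{2t} \le \binom{(t+1)K}{2(t+1)}$ for every $t \ge 0$. Iterating that step immediately gives the lemma.

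To establish the single-step inequality, I would apply the Vandermonde convolution identity to the right-hand side with the natural split $(t+1)K = tK + K$ and $2(t+1) = 2t + 2$, obtaining
\begin{equation*}
  \binom{(t+1)K}{2(t+1)} \;=\; \sum_{i=0}^{\min\{K,\,2t+2\}} \binom{K}{i}\binom{tK}{2t+2-i}.
\end{equation*}
Since every term on the right is nonnegative, I can lower bound the sum by any single term. Choosing the index $i = 2$ isolates the term $\binom{K}{2}\binom{tK}{2t}$, which matches the left-hand side up to the factor $\binom{K}{2}$. Using that the maxout rank satisfies $K \ge 2$ (so that the combinatorial quantity $\binom{tK}{2t}$ in the theorem statement is meaningful), we have $\binom{K}{2} \ge 1$, giving $\binom{(t+1)K}{2(t+1)} \ge \binom{tK}{2t}$ as required.

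There is essentially no obstacle to this argument; the only mild observation is the right choice of index in the Vandermonde expansion, which is dictated by wanting to recover $\binom{tK}{2t}$ exactly (hence consume the extra $2$ units of the lower parameter against a $\binom{K}{2}$ factor from the new $K$ units). For the boundary case $K = 2$ the identity degenerates to $\binom{tK}{2t} = 1 = \binom{nK}{2n}$, and for $K \ge 3$ the factor $\binom{K}{2}$ is strictly greater than one, so the sequence is in fact strictly increasing in $t$. This is consistent with the use of the lemma in the proof of Theorem~\ref{th:main_result}, where it is applied to replace $\binom{tK}{2t}$ by $\binom{n_0 K}{2n_0}$ for $t \le n_0$.
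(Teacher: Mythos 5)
Your proof is correct, but it takes a genuinely different route from the paper. The paper argues directly: it cross-multiplies the factorials to reduce the claim to an inequality between products of consecutive integers, namely $\prod_{i=1}^{2(n-t)} (2t + i) \cdot \prod_{j=1}^{(K-2)(n-t)} (Kt - 2t + j) \leq \prod_{k=1}^{K(n-t)} (Kt + k)$, and then matches the factors on the left against two disjoint blocks of factors on the right, each left factor being dominated termwise. Your argument instead inducts on $n-t$ and proves the single step $\binom{tK}{2t} \leq \binom{(t+1)K}{2(t+1)}$ by isolating the $i=2$ term $\binom{K}{2}\binom{tK}{2t}$ in the Vandermonde expansion of $\binom{tK+K}{2t+2}$. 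Both arguments are elementary and both implicitly require $K \geq 2$ (the lemma is in fact false for $K=1$, e.g.\ $t=0<n$ gives $1 \not\leq 0$); the paper's proof needs it for the product over $(K-2)(n-t)$ factors to make sense, while yours surfaces it cleanly as $\binom{K}{2} \geq 1$. Your approach also gives the slightly stronger conclusion $\binom{nK}{2n} \geq \binom{K}{2}^{\,n-t}\binom{tK}{2t}$ for free, whereas the paper's is a one-shot computation with no induction. Either suffices for the application in Theorem~\ref{th:main_result}.
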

\begin{proof}
To see this, note that $\binom{tK}{2t} \leq \binom{nK}{2n}$ is equivalent to  the following: 
    \begin{align*}
        \frac{(Kr)!}{(2r)!(Kr - 2r)!} &\leq \frac{(Kn)!}{(2n)!(Kn - 2n)!}\\
        \frac{(2n)!}{(2r)!} \frac{(Kn - 2n)!}{(Kr - 2r)!}  &\leq  \frac{(Kn)!}{(Kr)!}\\
        \prod_{i=1}^{2n - 2r} (2r + i) \prod_{j=1}^{(K-2)n - (K-2) r} (Kr - 2r + j) &\leq \prod_{k=1}^{Kn - Kr} (Kr + k). 
    \end{align*}
    Since $\prod_{i=1}^{2n - 2r} (2r + i) \leq \prod_{k=1}^{2n - 2r} (Kr + k)$ and $\prod_{j=1}^{(K-2)n - (K-2) r} (Kr - 2r + j) \leq \prod_{k=2n - 2r + 1}^{Kn - Kr} (Kr + k)$ the inequality holds. 
\end{proof}

\begin{lemma}
    \label{lem:intersection_upper_bound}
    For almost every $\theta$, for each $t\in\{0,\ldots, \nin\}$, the set $\mathcal{V}_t(\theta) = \mathcal{X}_{\mathcal{N},t}(\theta)\cap C_t$ consists of finitely many points and  
    \begin{align}
            \# \{ r\text{\normalfont -partial activation regions } \mathcal{R}(J^r,\theta)
            \text{ \normalfont with } \mathcal{R}(J^r,\theta) \cap C \neq \emptyset \} \leq \sum\limits_{t=r}^{\nin} \binom{t}{r} K^{t-r} \# \mathcal{V}_t(\theta),
        \label{eq:rpart_with_v}
    \end{align}
    where $\# \mathcal{V}_t(\theta)$ is the number of points in $\mathcal{V}_t(\theta)$. 
\end{lemma}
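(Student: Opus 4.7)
The plan is to establish the lemma in three steps, corresponding to finiteness, the vertex-assignment argument, and the combinatorial counting.

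For the first claim that $\mathcal{V}_t(\theta)$ is a finite set of points for almost every $\theta$, I would invoke Lemma~\ref{lem:conv_act_reg}: each non-empty $t$-partial activation region lies on a codimension-$t$ affine subspace of $\mathbb{R}^{\nin}$. The skeleton $C_t$ is a finite union of $t$-dimensional faces of $C$, each of which meets a generic codimension-$t$ affine subspace in at most a single point. Combined with the facts that there are only finitely many $t$-partial patterns (Lemma~\ref{lem:trivial_upper_bound}) and finitely many $t$-faces of $C$, this shows that $\mathcal{V}_t(\theta) = \mathcal{X}_{\net,t}(\theta) \cap C_t$ is finite whenever the intersections are transverse; the degenerate configurations (parallelism, or higher-dimensional overlap) are cut out by polynomial equations in $\theta$ and so form a measure-zero set, by the same kind of argument used in the proof of Lemma~\ref{lem:act_vs_lin}.

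For the bound \eqref{eq:rpart_with_v}, the strategy is to associate to each $r$-partial activation region a vertex of its closure inside $C$ and then count how many regions can share a given vertex. Given $J^r \in \mathcal{P}_r$ with $\mathcal{R}(J^r,\theta) \cap C \ne \emptyset$, the intersection is a relatively open convex polyhedron of dimension $\nin - r$ (by Lemma~\ref{lem:conv_act_reg} and full-dimensionality of $C$), so its closure inside $C$ is a bounded polytope admitting at least one vertex $v$. At $v$, exactly $\nin$ linearly independent constraints are active generically: $\nin - t$ coordinate constraints coming from a face of $C$ (so $v$ sits on that $t$-face, i.e.\ $v \in C_t$), and $t$ equalities between pre-activation features. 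These $t$ equalities read off the argmax pattern $J^t$ at $v$ with $\sum_z (|J^t_z|-1) = t$, so $v \in \mathcal{R}(J^t,\theta) \cap C_t \subseteq \mathcal{V}_t(\theta)$, and the inclusion $v \in \overline{\mathcal{R}(J^r,\theta)}$ forces $J^r_z \subseteq J^t_z$ for every unit $z$.

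It then remains to bound, for each $v \in \mathcal{V}_t(\theta)$ with pattern $J^t$, the number of coarsenings $J^r$ (i.e.\ $J^r_z \subseteq J^t_z$ with $\sum_z(|J^r_z|-1)=r$) by $\binom{t}{r}K^{t-r}$. The encoding I have in mind labels the $t$ indecision directions of $J^t$ by $[t]$ and represents each valid $J^r$ as a pair consisting of an $r$-subset $S\subseteq[t]$ (the directions retained as equations of $J^r$) together with a function $[t]\setminus S \to [K]$ (selecting, for each released direction, the pre-activation feature that is forced into $J^r_z$ at the corresponding unit). Summing the resulting bound $\binom{t}{r}K^{t-r} \cdot \#\mathcal{V}_t(\theta)$ over $t \in \{r,\ldots,\nin\}$ then yields \eqref{eq:rpart_with_v}. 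The main obstacle is making this last combinatorial step fully rigorous: the naive keep-or-drop encoding does not automatically produce a subset of size exactly $r$ when $J^r_z$ fails to contain a distinguished element of $J^t_z$, so one has to fix a canonical ordering on each $J^t_z$ and treat the two cases (smallest element retained or dropped) separately to exhibit an injection into the claimed set of pairs.
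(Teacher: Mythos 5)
Your proposal is correct and follows essentially the same route as the paper's proof: finiteness of $\mathcal{V}_t(\theta)$ via generic transversality of the codimension-$t$ locus with the $t$-faces of $C$, assignment of each nonempty $r$-partial region to a vertex of its closure in $C$ lying in some $\mathcal{V}_t$ with a refined pattern $J^t\supseteq J^r$, and the per-vertex count $\binom{t}{r}K^{t-r}$. The combinatorial subtlety you flag at the end (subsets $J^r_z\subseteq J^t_z$ not containing the distinguished element) is real but resolvable, e.g.\ via $\binom{1+t_z}{1+s_z}\le K\binom{t_z}{s_z}$ and Vandermonde; the paper's own proof states this step even more briefly than you do.
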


\begin{proof} 
    The proof is similar to the proof of \cite[Lemma 12]{NIPS2019_8328}. The difference lies in the types of equations that appear in the partial activation regions of maxout networks.
    The dimension of $\mathcal{V}_t(\theta)$ is $0$ with probability $1$, because the set $C_t$ has dimension $t$ and, by Lemma \ref{lem:hyperplane_ball}, with probability $1$ the set $\mathcal{X}_{\net, t}$ coincides locally with a subspace of codimension $t$. 
    The intersection of two generic affine spaces of complementary dimension has dimension $0$. 
    
    Now we prove \eqref{eq:rpart_with_v}. 
    If $J^r$ is an $r$-partial activation pattern and $\mathcal{R}(J^r,\theta)\cap C\neq\emptyset$, then the closure $\operatorname{cl} \mathcal{R}(J^r,\theta) \cap C$ is a non-empty polytope.
    The intersection is bounded because $C$ is bounded, and, by Lemma~\ref{lem:conv_act_reg}, the closure of $\mathcal{R}(J^r,\theta)$ is a polyhedron. 
    As a non-empty polytope, this set has at least one vertex.
    Generically, if a vertex is in an $(\nin-t)$-face of $\operatorname{cl}\mathcal{R}(J^r,\theta)$, then it is in a $t$-face of $C$. 
    Hence, with probability $1$,  
    \begin{align*}
        \mathcal{R}(J^r,\theta) \cap C \neq \emptyset \quad \Rightarrow \quad \exists \ t \in \{r, \dots, \nin\} \quad \text{s.t.} \quad \operatorname{cl}\mathcal{R}(J^r,\theta) \cap \mathcal{V}_t \neq \emptyset. 
    \end{align*}
    Thus, with probability $1$,
    \begin{align*}
    \# \{ r\text{\normalfont -partial activation regions with } \mathcal{R}(J^r,\theta) \cap C \neq \emptyset\} \leq \sum\limits_{t = r}^{\nin} T_t \# \mathcal{V}_t,
    \end{align*}
    where $T_t$ is the maximum over all $v \in \mathcal{V}_t$ of the number of $r$-partial activation regions whose closure contains $v$.
    
    To complete the proof, it remains to check that, with probability 1, 
    \begin{align*}
        T_t\leq \binom{t}{r} K^{t-r}.
    \end{align*}
    
    By the definition of $\mathcal{X}_{\mathcal{N},t}$, each $v\in \mathcal{V}_t$ is an element of exactly one $t$-partial activation region defined by $t$ equations. 
    To upper bound the number of $r$-partial activation regions that contain $v$, we upper bound the number of ways in which one can get an $r$-partial region from this $t$-partial region. 
    We have $\binom{t}{r}$ options to pick $r$ equations that will remain satisfied. 
    In each case, there are at most $t-r$ neurons for which we need to specify a pre-activation feature attaining the maximum, for a total of at most $K^{t-r}$ options. 
    This concludes the proof. 
\end{proof}

\begin{lemma}
    \label{lem:intersection_volume}
    Fix $t \in \{0, \dots \nin \}$. For almost every choice of $\theta$, there exists $\varepsilon > 0$ (depending on $\theta$) so that the balls $\mathcal{B}_{\varepsilon}(v)$ of radius $\varepsilon$ centered at $v \in \mathcal{V}_t$ are disjoint and 
    \begin{align*}
        \vol_{\nin - t}(\mathcal{X}_{\net, t} \cap \mathcal{B}_{\varepsilon}(v)) = \varepsilon^{\nin - t} \omega_{\nin - t},
    \end{align*}
    where $\omega_t$ is the volume of a unit ball in $\mathbb{R}^t$.
\end{lemma}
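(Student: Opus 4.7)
The plan is to combine the finiteness of $\mathcal{V}_t(\theta)$ established in Lemma~\ref{lem:intersection_upper_bound} with the local affine-subspace structure of $\mathcal{X}_{\mathcal{N},t}$ provided by Lemma~\ref{lem:hyperplane_ball}. The argument reduces to choosing $\varepsilon$ small enough to satisfy two finite conditions simultaneously, and there is no substantive obstacle.

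First, I would fix $\theta$ outside the null sets where Lemmas~\ref{lem:intersection_upper_bound} and \ref{lem:hyperplane_ball} fail, so that $\mathcal{V}_t(\theta)$ is a finite set of points and $\mathcal{X}_{\mathcal{N},t}(\theta)$ has the local hyperplane structure at each of its points. If $\mathcal{V}_t(\theta)$ is empty the statement is vacuous for any $\varepsilon>0$, so I may assume it is non-empty. Since it is finite, the pairwise distance $d \defeq \min\{\|v-v'\| : v,v'\in\mathcal{V}_t(\theta),\ v\neq v'\}$ is strictly positive (with the convention $d=+\infty$ if $|\mathcal{V}_t|=1$), and any $\varepsilon_1 < d/2$ makes the balls $B_{\varepsilon_1}(v)$ pairwise disjoint.

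Second, for each $v\in\mathcal{V}_t(\theta)$, Lemma~\ref{lem:hyperplane_ball} supplies an $\varepsilon_v>0$ and an $(n_0-t)$-dimensional affine subspace $A_v\subseteq\mathbb{R}^{n_0}$ (passing through $v$, since $v\in\mathcal{X}_{\mathcal{N},t}$) such that
\begin{align*}
\mathcal{X}_{\mathcal{N},t}\cap B_{\varepsilon_v}(v) = A_v \cap B_{\varepsilon_v}(v).
\end{align*}
Finiteness of $\mathcal{V}_t(\theta)$ lets me set $\varepsilon \defeq \min\bigl(\varepsilon_1,\min_{v\in\mathcal{V}_t(\theta)}\varepsilon_v\bigr)>0$, which simultaneously guarantees disjointness of the balls and the affine description of the non-linear locus inside each ball.

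Third, since $A_v$ passes through the centre $v$ of $B_\varepsilon(v)$, the slice $A_v\cap B_\varepsilon(v)$ is an $(n_0-t)$-dimensional Euclidean ball of radius $\varepsilon$, whose $(n_0-t)$-volume is precisely $\varepsilon^{n_0-t}\omega_{n_0-t}$. Substituting into the identity above and summing over $v\in\mathcal{V}_t(\theta)$ gives the claim. The only place one must be a little careful is verifying that the affine subspace from Lemma~\ref{lem:hyperplane_ball} contains $v$ itself, but this is immediate from $v\in\mathcal{X}_{\mathcal{N},t}$ and the fact that Lemma~\ref{lem:hyperplane_ball} describes $\mathcal{X}_{\mathcal{N},t}$ in a neighbourhood of each of its points.
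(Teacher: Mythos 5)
Your proof is correct and follows essentially the same route as the paper's: both use the finiteness of $\mathcal{V}_t(\theta)$ from Lemma~\ref{lem:intersection_upper_bound} to pick $\varepsilon$ small enough for disjointness, and the local affine-subspace description from Lemma~\ref{lem:hyperplane_ball} to identify each intersection $\mathcal{X}_{\net,t}\cap B_\varepsilon(v)$ with an $(\nin-t)$-dimensional ball of radius $\varepsilon$ through the centre. Your explicit check that the affine subspace passes through $v$ is a small but welcome addition of detail that the paper leaves implicit.
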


\begin{proof}
    The proof is similar to the proof of \citet[Lemma~13]{NIPS2019_8328}, whereby we use Lemma~\ref{lem:intersection_upper_bound} and the results for maxout networks obtained in Section~\ref{app:intro_proofs}.
    By Lemma~\ref{lem:intersection_upper_bound}, with probability $1$ over $\theta$, each $\mathcal{V}_t$ is a finite set of points. 
    Hence, we may choose $\varepsilon > 0$ sufficiently small so that the balls $B_{\varepsilon}(v)$ are disjoint. Moreover, by Lemma \ref{lem:hyperplane_ball}, in a sufficiently small neighborhood of $v \in \mathcal{V}_t$, the set $\mathcal{X}_{\net, t}$ coincides with a $(\nin - t)$-dimensional subspace.
    The $(\nin - t)$-dimensional volume of this subspace in $B_{\varepsilon}(v)$ is the volume of $(\nin - t)$-dimensional ball of radius $\varepsilon$, which equals $\varepsilon^{\nin - t} \omega_{\nin - t}$, completing the proof.
\end{proof}

To conclude this section, we compare the results on the numbers of activation regions of maxout and ReLU networks in Table \ref{tb:comparison}. 
\begin{table}[ht]
    \caption{Comparison of the activation region results for maxout and ReLU networks.}
    \label{tb:comparison}
    \centering
    {\small 
    \begin{tabular}{p{0.29\textwidth} >{\centering}p{0.3\textwidth} >{\centering\arraybackslash}p{0.3\textwidth} }
        \toprule
        & ReLU network & Maxout network \\
        \midrule
        Generic lower bound on the number of linear regions for a deep network & $1$, \  Remark~\ref{remark:lower_bound_relu} & $\sum_{j=0}^{\nin}{\binom{n_1}{j}}$, \ Theorem~\ref{thm:lower_bound} \\
        \midrule
        Trivial upper-bound on the number of $r$-partial activation regions & $\binom{N}{r} 2^{N - r}$, \ \citep[Theorem~10]{NIPS2019_8328} & $\binom{r K}{2r} \binom{N}{r} K^{N - r}$, \ Lemma~\ref{lem:trivial_upper_bound}, see also Proposition~\ref{proposition:maxnrpartialactivation}\\
        \midrule
        Upper-bound on the expected number of $r$-partial activation regions, $N \geq \nin$
        &
        $\frac{(T N)^{\nin}}{2^r \nin!}$, $T = 2^5 \cgrad \cbias$, \ \citep[Theorem~10]{NIPS2019_8328} 
        & 
        $\frac{(T K N)^{\nin} \binom{\nin K}{2 \nin} }{(2 K)^r \nin!}$, $T = 2^5 \cgrad \cbias$, \ Theorem~\ref{th:main_result}
        \\
        \midrule
        Upper bound on the expected $(\nin - r)$-dimensional volume of the non-linear locus
        & $(2 \cgrad \cbias)^r \binom{N}{r}$, \ \citep[Corollary~7]{pmlr-v97-hanin19a} & $(2 \cgrad \cbias)^r \binom{rK}{2r} \binom{N}{r}$, \ Theorem~\ref{th:semi_main_upper_bound} \\
        \bottomrule
    \end{tabular}
    }
\end{table}

\section{Upper bounding the constants}
\label{app:constants}

We briefly discuss the constants $C_{\text{bias}}$ and $C_{\text{bias}}$ in the hypothesis of Theorem~\ref{th:main_result}. 
The constant $C_{\text{bias}}$ can be evaluated at initialization using the definition, since we know the distribution of biases. Recall that we defined $C_{\text{\normalfont bias}}$ as an upper bound on
\begin{align*}
    \left( \sup \limits_{b_{1}, \dots, b_{t}\in \mathbb{R}} \rho_{b_{1}, \dots, b_{t}} (b_{1}, \dots, b_{t}) \right)^{1/t},
\end{align*}
where $\rho_{b_{1}, \dots, b_{t}}$ is the conditional distribution of any collection of biases given all the other weights and biases in $\net$ and $t \in \mathbb{N}$.
If the biases are sampled independently, independently of the weights, this equals $\sup_{b \in \mathbb{R}} \rho_{b} (b)$. 
Then, for instance, 
for a normal distribution with standard deviation $\sqrt{C/n_l}$, the constant $C_{\text{bias}}$ can be chosen as 
\begin{align*}
    \max_{l \in \{0, \dots, L - 1\}} \sqrt{\frac{n_l}{2 \pi C}}. 
\end{align*}

The constant $C_{\text{grad}}$ was defined as an upper bound on
\begin{align*}
    \left(\sup\limits_{x \in \mathbb{R}^{\nin}} \mathbb{E}[\| \nabla \zeta_{z, k}(x) \|^t]\right)^{1/t}.
\end{align*}

Therefore we need to upper-bound $\mathbb{E} \left[ \| \nabla \zeta_{z, k}(x) \|^{t} \right]$.
This expression stands for the $t$-th moment of the L$2$ norm of the gradient of a pre-activation feature $\zeta_{z,k}$ in a network, with respect to the input to the network. 

One possible calculation is as follows. We consider $J_x = [\nabla_x \mathcal{N}_1(x;\theta),\ldots, \nabla_x \mathcal{N}_{n_L}(x;\theta)]^\top$ the Jacobian of the output vector  with respect to the input, for a given parameter $\theta$ and input $x$. 
Note that the gradient $\nabla \zeta_{z, k}(x)$ for a pre-activation feature of a unit in the $l$-th layer of a network is a row in the Jacobian matrix of an $l$-layer network. 
Therefore, $ \| \nabla \zeta_{z, k}(x) \|$ can be upper-bounded by the spectral norm $\|J_x\|$ of the Jacobian, and the moments of the Jacobian norm can be used as an upper-bound on the $t$-th moments of the gradient norm, $t \geq 1$. 

\begin{proposition}[Upper bound on the moments of the Jacobian matrix norm]
    \label{prop:jacobian_upper}
    Let $\net$ be a fully-connected feed-forward network with maxout units of rank $K$ and a linear last layer. 
    Let the network have $L$ layers of widths $n_1, \ldots, n_L$ and $\nin$ inputs. 
    Assume that the weights and biases of the units in the $l$-th layer are sampled iid from a Gaussian distribution with mean $0$ and variance $c / n_{l-1}$, $l = 1, \ldots, L$ and $c$ is some constant $c \in \mathbb{R}, c > 0$. Then 
    \begin{align*}
        &\mathbb{E}[\| J_x \|^t] \leq 
        c^{t/2} n_0^{-t/2} \mathbb{E}[\chi_{n_{L}}^t]  \prod_{l = 1}^{L-1} \mathbb{E} \left[ \left( \frac{c}{n_l} \sum_{i=1}^{n_l}  m^{(K)}_{n_{l-1}, i} \right)^{t/2} \right],
    \end{align*}
    where $J_x$ is the Jacobian as defined above, $x \in \mathbb{R}^{n_0}$; $t \geq 1, t \in \mathbb{N}$; 
    $m^{(K)}_{n_{l-1}, i}$ is the largest order statistic in a sample of size $K$ of $\chi_{n_{l-1}}^2$ variables. 
    Recall that the largest order statistic is a random variable defined as the maximum of a random sample, and that a sum of squares of $n$ independent Gaussian variables has a chi-squared distribution $\chi_n^2$. 
\end{proposition}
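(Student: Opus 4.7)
}
The plan is to decompose the Jacobian into a product of layer-wise Jacobians, apply sub-multiplicativity of the spectral norm, replace the spectral norm of each layer by its (larger) Frobenius norm, bound the Frobenius norm by quantities that do not depend on the activation pattern, and finally factor the expectation using independence of the weights across layers.

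First, I would write $J_x = W^{(L)} W^{(L-1)}_x \cdots W^{(1)}_x$, where for $l<L$ the matrix $W^{(l)}_x\in\mathbb{R}^{n_l\times n_{l-1}}$ is the \emph{active} weight matrix of maxout layer $l$ at $x$, whose $i$-th row is $w^{(l)}_{i,k_i^\ast(x)}$ with $k_i^\ast(x)=\argmax_{k\in[K]}\zeta_{l,i,k}(x)$, and $W^{(L)}$ is the linear output layer. (This is justified by the chain rule and Lemma~\ref{lem:act_vs_lin}, which ensures that the represented function is almost-everywhere differentiable.) Sub-multiplicativity of the operator norm then gives $\|J_x\|^t \leq \|W^{(L)}\|^t \prod_{l=1}^{L-1}\|W^{(l)}_x\|^t$.

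Next, I would bound each maxout-layer factor. The key observation is that $\|W^{(l)}_x\|\leq \|W^{(l)}_x\|_F$, so
\[
\|W^{(l)}_x\|^2 \leq \sum_{i=1}^{n_l} \|w^{(l)}_{i,k_i^\ast(x)}\|^2 \leq \sum_{i=1}^{n_l} \max_{k\in[K]} \|w^{(l)}_{i,k}\|^2.
\]
The right-hand side no longer depends on $x$ nor on the activation pattern, only on the weights of layer $l$. Under the Gaussian assumption, $(n_{l-1}/c)\|w^{(l)}_{i,k}\|^2\sim \chi^2_{n_{l-1}}$, independently across $i$ and $k$, so $\max_{k\in[K]}\|w^{(l)}_{i,k}\|^2 \stackrel{d}{=} (c/n_{l-1})\,m^{(K)}_{n_{l-1},i}$, which gives $\|W^{(l)}_x\|^t \leq \bigl((c/n_{l-1})\sum_{i=1}^{n_l} m^{(K)}_{n_{l-1},i}\bigr)^{t/2}$. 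An analogous (simpler) computation handles the linear layer $W^{(L)}$, yielding a pure $\chi_{n_L}^t$ factor with the appropriate scaling. The final step is to take expectations: since the weight matrices of distinct layers are mutually independent and the bound for each layer is a measurable function of that layer's weights alone, the expectation of the product is the product of the expectations, which produces the claimed formula.

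The main obstacle is purely notational bookkeeping: keeping careful track of the scaling constants $c/n_{l-1}$ versus $c/n_l$ and of the dimensional factors (the role of $n_0$ as the input dimension and of $n_L$ as the last-layer width), and handling the data-dependent choice of active rows $k_i^\ast(x)$. The latter is resolved cleanly by the max-over-$k$ upper bound, which severs the coupling between the input $x$, the activation pattern, and the weights, and thereby allows Fubini/Tonelli and the layer-wise independence to deliver the product form. No further tail or concentration estimates are required at this stage; they would only enter if one wished to replace the raw moments $\mathbb{E}[(\cdot)^{t/2}]$ by explicit closed-form bounds in $K$, $n_l$, and $t$.
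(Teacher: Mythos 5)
Your treatment of the hidden maxout layers matches the paper's: both arguments reduce the layer factor via Cauchy--Schwarz/Frobenius to $\sum_{i}\max_{k}\|w^{(l)}_{i,k}\|^2$, identify this in distribution with $(c/n_{l-1})\sum_i m^{(K)}_{n_{l-1},i}$, and factor the expectation using independence across layers. The genuine gap is in the output layer. After applying plain sub-multiplicativity you are left with the operator norm $\|W^{(L)}\|$ of the full $n_L\times n_{L-1}$ Gaussian matrix, and this is \emph{not} distributed as $\sqrt{c/n_{L-1}}\,\chi_{n_L}$: it equals $\sup_{\|v\|=1}\|W^{(L)}v\|$, which strictly dominates $\|W^{(L)}v\|$ for any single fixed unit vector $v$ (the quantity that \emph{is} a scaled $\chi_{n_L}$), and its typical size is of order $\sqrt{c/n_{L-1}}(\sqrt{n_L}+\sqrt{n_{L-1}})$ rather than $\sqrt{c\,n_L/n_{L-1}}$. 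Bounding it instead by its Frobenius norm gives a scaled $\chi_{n_L n_{L-1}}$, still not the claimed $\chi_{n_L}$; even in the single-output case $n_L=1$ your route yields a $\chi_{n_{L-1}}$ factor rather than $\chi_1$. So the ``analogous (simpler) computation'' you invoke for $W^{(L)}$ cannot produce the stated factor $\mathbb{E}[\chi_{n_L}^t]$, and your argument only establishes a strictly weaker inequality than the proposition.

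The missing idea is the paper's telescoping normalization: starting from a fixed unit vector $u^{(0)}$, one writes $\|J_x u^{(0)}\|=\prod_{l}\|\overline{W}^{(l)}u^{(l-1)}\|$ with $u^{(l)}=\overline{W}^{(l)}u^{(l-1)}/\|\overline{W}^{(l)}u^{(l-1)}\|$ a unit vector at every stage. The hidden-layer factors are then bounded exactly as you do, by quantities independent of $u^{(l-1)}$ and of $x$, but the last factor is retained as $\|W^{(L)}u^{(L-1)}\|$ with $u^{(L-1)}$ a unit vector determined by the earlier layers and hence independent of $W^{(L)}$; conditionally on those layers this is the norm of an $n_L$-dimensional Gaussian vector with iid $N(0,c/n_{L-1})$ entries, which is exactly $\sqrt{c/n_{L-1}}\,\chi_{n_L}$. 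Keeping track of the vector the output layer acts on, instead of discarding it through sub-multiplicativity, is the one step your proposal is missing; the rest of your bookkeeping (the $c/n_{l-1}$ versus $c/n_l$ reindexing and the factorization of the expectation) goes through as you describe.
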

\begin{proof}

Our first goal will be to upper-bound $\|J_x\| = \sup_{\|u\| = 1} \| J_x u \|$. 
The Jacobian $J_x$ of $\mathcal{N}(x)\colon \mathbb{R}^{n_0} \rightarrow \mathbb{R}^{n_L}$ 
can be written as a product of matrices $\overline{W}^{(l)}$, $l=1,\ldots,L$ depending on the activation region of the input $x$. 
The matrix $\overline{W}^{(l)}$ consists of rows $\overline{W}_i^{(l)} = W_{i, k_i}^{(l)}\in\mathbb{R}^{n_{l-1}}$, where $k_i = \argmax_{k\in[K]}\{W^{(l)}_{i,k}x^{(l - 1)} + b^{(l)}_{i,k}\}$ for $i = 1, \ldots, n_l$,  
and $x^{(l-1)}$ is the $l$-th layer's input. 
For the last layer, which is linear, we have $\overline W^{(L)} = W^{(L)}$. 
Thus for any given $u\in\mathbb{R}^{n_0}$ we have 
\begin{align*}
    \| J_x u \| = \|W^{(L)} \overline{W}^{(L - 1)} \cdots \overline{W}^{(1)} u\|.
\end{align*}

Consider some $u^{(0)}$ with $\|u^{(0)}\|=1$ and assume $\|\overline{W}^{(1)} u^{(0)}\|\neq 0$. 
Note that for fixed $u^{(0)}$, the probability of $\overline{W}^{(1)}$ being such that $\|\overline{W}^{(1)} u^{(0)}\| = 0$ is $0$. 
Multiplying and dividing by $\| \overline{W}^{(1)} u^{(0)}\|$ we get
\begin{align*}
    &\|W^{(L)} \overline{W}^{(L - 1)} \cdots \overline{W}^{(1)} u^{(0)}\| \frac{\| \overline{W}^{(1)} u^{(0)}\|}{\| \overline{W}^{(1)} u^{(0)}\|}\\
    = &\left\|W^{(L)} \overline{W}^{(L - 1)} \cdots \overline{W}^{(2)} \frac{ \overline{W}^{(1)} u^{(0)}}{\| \overline{W}^{(1)} u^{(0)}\|}\right\| \| \overline{W}^{(1)} u^{(0)}\|\\
    = &\left\|W^{(L)} \overline{W}^{(L - 1)} \cdots \overline{W}^{(2)} u^{(1)}\right\| \| \overline{W}^{(1)} u^{(0)}\|,
\end{align*}
where $u^{(1)} = \frac{ \overline{W}^{(1)} u^{(0)}}{\| \overline{W}^{(1)} u^{(0)}\|}$. Notice, $\|u^{(1)}\| = 1$.
Repeating this procedure layer-by-layer, we get 
\begin{align*}
    \|W^{(L)} u^{(L-1)}\| \| \overline{W}^{(L-1)} u^{(L-2)}\| \cdots \| \overline{W}^{(2)} u^{(1)}\| \| \overline{W}^{(1)} u^{(0)}\| . 
\end{align*} 

Now consider one of the factors, $\| \overline{W}^{(l)} u^{(l-1)}\|$. 
We have 
\begin{align*}
    &\| \overline{W}^{(l)} u^{(l-1)}\|^2 = \sum_{i=1}^{n_l} \langle \overline{W}_i^{(l)}, u^{(l-1)} \rangle^2 
    \stackrel{\substack{\text{Cauchy–Schwarz}\\ \|u^{(l-1)}\| = 1}}{\leq}
    \sum_{i=1}^{n_l} \|\overline{W}_{i}^{(l)}\|^2
    \leq \sum_{i=1}^{n_l} \max_{k \in [K]} \left\{ \|W_{i,k}^{(l)}\|^2 \right\}.
\end{align*}
Notice that this upper bound only depends on $W^{(l)}$ and is independent of all other weight matrices and of the input vector. 

According to our assumptions, $W_{i,k}^{(l)} \stackrel{d}{=} \sqrt{\frac{c}{n_{l-1}}} v$, where $v$ is a standard Gaussian random vector in $\mathbb{R}^{n_{l-1}}$.  
Therefore, $\|W_{i,k}^{(l)}\|^2 \stackrel{d}{=} \frac{c}{n_{l-1}} \chi_{n_{l-1}}^2$ has the distribution of a  chi-squared random variable scaled by $c/n_{l-1}$. 
Moreover, since the vectors $W_{i,1}^{(l)},\ldots, W_{i,K}^{(l)}$ consist of the same number of separate iid entries, the variables $\|W_{i,1}^{(l)}\|^2,\ldots, \|W_{i,K}^{(l)}\|^2$ are iid. 
In turn, $\max_{k \in [K]} \left\{ \|W_{i,k}^{(l)}\|^2 \right\} \stackrel{d}{=} \frac{c}{n_{l-1}} m^{(K)}_{n_{l-1}, i}$, where $m^{(K)}_{n_{l-1}, i}$ is the largest order statistic in a sample of size $K$ of $\chi_{n_{l-1}}^2$ variables.

Notice that $\|W^{(L)} u^{(L-1)}\|^2 \stackrel{d}{=} \frac{c}{n_{L-1}} \chi^2_{n_{L}}$. 
To see this, recall that if $u$ is a fixed vector and $w$ is a Gaussian random vector with mean $\mu$ and covariance matrix $\Sigma$, then the product $u^\top w$ is Gaussian with mean $u^\top \mu$ and variance $u^\top \Sigma u$. 
Hence, since $W^{(L)}_i$ is a Gaussian vector with mean zero and covariance matrix $\Sigma = \frac{c}{n_{L-1}}I$, $W^{(L)}_i u^{(L-1)}$ is Gaussian with mean zero and variance $\frac{c}{n_{L-1}}\|u^{(L-1)}\|^2=\frac{c}{n_{L-1}}$.

Combining everything, we get
\begin{align*}
    &\|J_x\| = \sup_{\|u\| = 1} \| J_x u \| \leq \left( \frac{c}{n_{L-1}} \chi^2_{n_{L}}\right)^{1/2}
    \left( \frac{c}{n_{L - 2}} \sum_{i=1}^{n_{L - 1}} m^{(K)}_{n_{L - 2}} \right)^{1/2} \cdots
    \left( \frac{c}{n_{0}} \sum_{i=1}^{n_{1}} m^{(K)}_{n_0} \right)^{1/2}
    \\
    = &c^{L/2} \chi_{n_{L}} \left(\prod_{l=0}^{L-1} n_l^{-1/2}\right) \prod_{l = 1}^{L-1} \left(\sum_{i=1}^{n_l} m^{(K)}_{n_{l-1}, i} \right)^{1/2}.
\end{align*}

Now using the monotonicity of the expectation, the moments of the right hand side upper-bound those of the left hand side. 
Moreover, using the independence of the individual factors, the expectation factorizes. 
For the $t$-th moment we get 
    \begin{align*}
        \mathbb{E}[\| J_x \|^t]
        & \leq \mathbb{E} \left[ c^{t L/2} \chi_{n_{L}} \left(\prod_{l=0}^{L-1} n_l^{-1/2}\right) \prod_{l = 1}^{L-1} \left(\sum_{i=1}^{n_l}  m^{(K)}_{n_{l-1}, i} \right)^{t/2} \right]\\
        & = c^{t/2} n_0^{-t/2} \mathbb{E}[\chi_{n_{L}}^t]  \prod_{l = 1}^{L-1} \mathbb{E} \left[ \left( \frac{c}{n_l} \sum_{i=1}^{n_l}  m^{(K)}_{n_{l-1}, i} \right)^{t/2} \right].
    \end{align*}
\end{proof}

\begin{corollary}[Upper bound on $C_{\text{grad}}$]
    Under the same assumptions as in Proposition \ref{prop:jacobian_upper}, assuming that $c$ is set according to He initialization, meaning $c = 2$, or maxout-He initialization (see Table \ref{tb:std} for specific values of $c$ for various $K$), the following expression can be used as the value for $C_{\text{grad}}$: 
    \begin{align*}
        \left(\frac{c}{\nin}\right)^{1/2} \left(n_L (n_L + t)^{\frac{t}{2} - 1}\right)^{1/t} \prod_{l = 1}^{L-1} \left( \mathbb{E} \left[ \left( \frac{c}{n_l} \sum_{i=1}^{n_l}  m^{(K)}_{n_{l-1}, i} \right)^{t/2} \right]\right)^{1/t}, 
    \end{align*}
    where $m^{(K)}_{n_{l-1}, i}$ is the largest order statistic in a sample of size $K$ of $\chi_{n_{l-1}}^2$ variables. 
\end{corollary}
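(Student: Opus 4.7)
The plan is to chain together the observations already made in the paper: the gradient $\nabla \zeta_{z,k}(x)$ of any pre-activation feature is a row of the Jacobian of a truncated sub-network, hence $\|\nabla \zeta_{z,k}(x)\| \leq \|J^{(l(z))}_x\|$, where $J^{(l)}_x$ denotes the Jacobian of the first $l$ layers of $\mathcal{N}$ evaluated at $x$ (with a final linear layer of width $n_l$ replacing the rest). Since the quantity to control in the definition of $C_{\text{grad}}$ is the $t$-th moment of this norm, uniform in $x$, it suffices to upper bound $\mathbb{E}[\|J^{(L)}_x\|^t]$ via Proposition~\ref{prop:jacobian_upper} and take the $t$-th root, because the resulting bound is independent of $x$.

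Concretely, I would proceed as follows. First, note that by replacing any intermediate layer with a linear one of matching width we remain in the setting of Proposition~\ref{prop:jacobian_upper}, so the inequality
\begin{align*}
    \mathbb{E}[\|\nabla \zeta_{z,k}(x)\|^t]
    \leq c^{t/2}\,n_0^{-t/2}\,\mathbb{E}[\chi_{n_L}^t]\prod_{l=1}^{L-1}\mathbb{E}\!\left[\left(\tfrac{c}{n_l}\sum_{i=1}^{n_l} m^{(K)}_{n_{l-1},i}\right)^{\!t/2}\right]
\end{align*}
applies (and one can use the largest such sub-network bound as a uniform upper envelope). Taking the $1/t$ power gives all of the claimed factors except for the term $\left(n_L(n_L+t)^{t/2-1}\right)^{1/t}$, which arises from estimating $\mathbb{E}[\chi_{n_L}^t]$. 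The final step is therefore to prove the elementary inequality $\mathbb{E}[\chi_n^t]\leq n(n+t)^{t/2-1}$.

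The hard part, or at least the only nontrivial computation, is this chi-moment bound. Using $\mathbb{E}[\chi_n^t]=2^{t/2}\Gamma((n+t)/2)/\Gamma(n/2)$, for even integer $t$ the right-hand side telescopes to $n\prod_{i=1}^{t/2-1}(n+2i)$, and by AM--GM,
\begin{align*}
    \prod_{i=1}^{t/2-1}(n+2i) \leq \left(\tfrac{1}{t/2-1}\sum_{i=1}^{t/2-1}(n+2i)\right)^{t/2-1} = (n+t/2)^{t/2-1} \leq (n+t)^{t/2-1}.
\end{align*}
For general $t\geq 1$ one can extend this via the log-convexity of $\Gamma$, interpolating between the bounds for consecutive even integers; alternatively, Jensen's inequality applied to $x\mapsto x^{t/2}$ for $t\leq 2$ gives $\mathbb{E}[\chi_n^t]\leq n^{t/2}\leq n(n+t)^{t/2-1}$, handling the remaining range.

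Finally, since the derived upper bound on $(\mathbb{E}[\|\nabla\zeta_{z,k}(x)\|^t])^{1/t}$ is independent of the input $x$, of the particular neuron $z$, and of the feature index $k$ (as long as we use the worst sub-network depth $L$), the supremum in the definition of $C_{\text{grad}}$ is bounded by this same quantity. Specializing to $c=2$ (He) or to the entries of Table~\ref{tb:std} (maxout-He) does not alter the argument and simply pins down the constant.
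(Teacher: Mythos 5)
Your proposal follows essentially the same route as the paper: bound $\|\nabla\zeta_{z,k}(x)\|$ by the spectral norm of the (sub-)network Jacobian, invoke Proposition~\ref{prop:jacobian_upper}, take the $1/t$-th power, and reduce the remaining work to the estimate $\mathbb{E}[\chi_{n_L}^t]\leq n_L(n_L+t)^{t/2-1}$. The only difference is that the paper obtains that chi-moment bound by citing a known Gamma-ratio inequality of Jameson, whereas you derive it elementarily (telescoping plus AM--GM for even $t$, with interpolation for the rest), which is a perfectly acceptable substitute.
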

\begin{proof}
    The constant $C_{\text{grad}}$ was defined as an upper bound on 
    \begin{align*}
        \left(\sup\limits_{x \in \mathbb{R}^{\nin}} \mathbb{E}[\| \nabla \zeta_{z, k}(x) \|^t]\right)^{1/t}.
    \end{align*}

    Therefore, using the upper-bound on the moments of the Jacobian norm from Proposition \ref{prop:jacobian_upper}, an upper-bound on the following expression can be used as a value for $C_{\text{grad}}$: 
    \begin{align*}
        c^{1/2} n_0^{-1/2} \left(\mathbb{E}[\chi_{n_{L}}^t] \right)^{1/t} \prod_{l = 1}^{L-1} \left( \mathbb{E} \left[ \left( \frac{c}{n_l} \sum_{i=1}^{n_l}  m^{(K)}_{n_{l-1}, i} \right)^{t/2} \right]\right)^{1/t}.
    \end{align*}
    
    The moments of the chi distribution are 
    \begin{align*}
        \mathbb{E}\left[\chi_{n_{L}}^t\right] = 2^{t/2} \frac{\Gamma((n_L + t) / 2)}{\Gamma(n_L / 2)}.
    \end{align*}
    Using an upper-bound on a Gamma function ratio \citep[see][Equation 12]{jameson2013inequalities}, this can be upper-bounded with 
    \begin{align*}
        n_L (n_L + t)^{\frac{t}{2} - 1}. 
    \end{align*}
    
    The factor involving $m^{(K)}_{n_{l-1}}$ can be upper-bounded by considering the explicit expression for the moments of the largest order statistic of chi-squared variables.
    The closed form for these moments is available \citep[see][]{nadarajah2008explicit}, but they have complicated form and we will keep the factor involving $m^{(K)}_{n_{l-1}}$ as it is. 
    Then the total upper bound is
    \begin{align*}
        \left(\frac{c}{\nin}\right)^{1/2} \left(n_L (n_L + t)^{\frac{t}{2} - 1}\right)^{1/t} \prod_{l = 1}^{L-1} \left( \mathbb{E} \left[ \left( \frac{c}{n_l} \sum_{i=1}^{n_l}  m^{(K)}_{n_{l-1}, i} \right)^{t/2} \right]\right)^{1/t}.
    \end{align*}
\end{proof}

Estimating the moments of the gradient of maxout networks is a challenging topic, as can be seen from the above discussion, and is worthy of a separate investigation. 
It might be possible to obtain tighter upper-bounds on it and on $C_{\text{grad}}$, a question that we leave for future work. 

\section{Expected number of regions for networks without bias}
\label{app:zero_bias}

Zero biases of ReLU networks were discussed in \citet{NIPS2019_8328} and studied in detail in \citet{steinwart2019sober}.
There is no distribution on the biases in the zero bias case, meaning that conditions on the biases from Theorem \ref{th:main_result} are not satisfied.
We closely follow the proofs in \citet{NIPS2019_8328} and show that the arguments similar to those regarding the zero bias case in the ReLU networks also apply to the maxout networks.
According to Lemma \ref{lem:zero_regions}, activation regions of zero-bias maxout networks are convex cones, see Figure \ref{fig:zero_regions} for the illustration.
In Corollary \ref{cor:zero_bias_upper} we come to a conclusion that the number of activation regions in expectation in a network with zero biases grows as $O(\nin (K N)^{\nin - 1} \binom{K (\nin - 1)}{2 (\nin - 1)})$. 

\begin{figure}[!htb]  
    \centering
    \begin{tabular}{ccc}
         Zero bias & Small bias & Non-zero bias\\
         \includegraphics[width=0.15\textwidth]{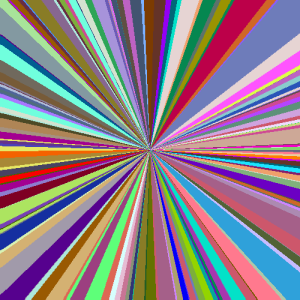} &
         \includegraphics[width=0.15\textwidth]{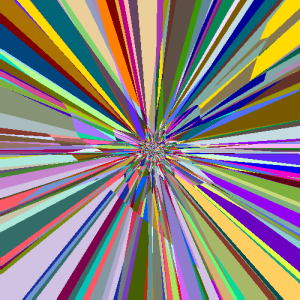} &
         \includegraphics[width=0.15\textwidth]{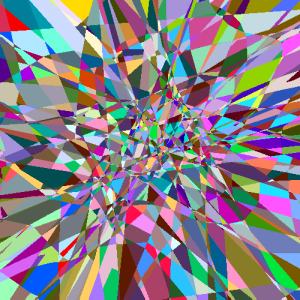}\\
    \end{tabular}
    \caption{
    Linear regions of a $3$ layer network with $100$ units and the maxout rank $K = 2$. The network was initialized with the maxout-He distribution.
    Activation regions of a maxout network with zero biases are convex cones.
    Small biases are initialized as the biases sampled from the maxout-He distribution multiplied by $0.1$.
    Majority of linear regions of a network with small biases are cones and the ones that are not are small and concentrated around zero.
    }
    \label{fig:zero_regions}
\end{figure}

\begin{lemma}
    \label{lem:zero_regions}
    Let $\net$ be a maxout network with biases set to zero. Then,
    \begin{enumerate}[(a)]
        \item $\net$ is
        nonnegative
        homogeneous
        : $\net(cx) = c \net (x)$ for each $c \geq 0$.
        \label{item:equivriant}
        \item For every activation region $\mathcal{R}$ of $\net$, and every point $x$ in $\mathcal{R}$, all points $cx$ are also in $\mathcal{R}$ for $c > 0$ and $\mathcal{R}$ is a convex polyhedral cone.
        \label{item:cone}
    \end{enumerate}
\end{lemma}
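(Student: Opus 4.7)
The plan is to handle the two parts of the lemma in order, with part (a) doing most of the work, and part (b) following as a direct consequence combined with Lemma~\ref{lem:conv_act_reg}.

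For part (\ref{item:equivriant}), I would proceed by induction on the number of layers, showing that each pre-activation feature $\zeta_{z,k}(x;\theta)$ and each post-activation value is positively homogeneous in $x$. The base case is a single maxout unit with zero bias: since $w_k \cdot (cx) = c\, (w_k \cdot x)$ and $c \geq 0$ preserves the ordering of the arguments of the max, we get $\max_{k\in[K]}\{w_k\cdot (cx)\} = c\, \max_{k\in[K]}\{w_k \cdot x\}$. For the inductive step, assume every unit in layers $1,\ldots, l-1$ is positively homogeneous; then the input $x_{l-1}$ to a unit $z$ in layer $l$ scales as $x_{l-1}(cx) = c\, x_{l-1}(x)$, so $\zeta_{z,k}(cx;\theta) = w_{z,k}\cdot (c x_{l-1}(x)) = c\, \zeta_{z,k}(x;\theta)$, and applying the max again preserves this scaling. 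Finally, the linear output layer $\psi$ (also with zero bias, or simply linear) is trivially positively homogeneous, giving $\net(cx) = c\,\net(x)$.

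For part (\ref{item:cone}), fix an activation region $\mathcal{R}(J,\theta)$ and any $x\in\mathcal{R}(J,\theta)$, so that $\argmax_{k\in[K]}\zeta_{z,k}(x;\theta) = J_z$ for every unit $z$. By the homogeneity established in part (\ref{item:equivriant}) applied to each sub-network terminating at unit $z$, we have $\zeta_{z,k}(cx;\theta) = c\, \zeta_{z,k}(x;\theta)$ for every $c \geq 0$. For $c>0$, multiplying by a positive scalar preserves the argmax set, so $\argmax_{k\in[K]}\zeta_{z,k}(cx;\theta) = J_z$ for every $z$, which is exactly the condition $cx\in\mathcal{R}(J,\theta)$. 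Thus $\mathcal{R}(J,\theta)$ is closed under multiplication by positive scalars. Combined with Lemma~\ref{lem:conv_act_reg}, which states that $\mathcal{R}(J,\theta)$ is a (relatively open) convex polyhedron, being additionally closed under positive scaling makes it a convex polyhedral cone.

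There is no real obstacle here; the only point requiring care is making explicit that the homogeneity of pre-activation features $\zeta_{z,k}$ relies on \emph{all} preceding layers having zero bias, which is why the induction must track homogeneity of intermediate activations rather than just of the final output. Once that is in place, part (\ref{item:cone}) is immediate from the fact that multiplying by a positive scalar preserves argmax sets.
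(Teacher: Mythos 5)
Your proposal is correct and follows essentially the same route as the paper: establish positive homogeneity of every neuron's output (the paper states this directly for a single neuron and notes it propagates through the network, while you spell out the induction), then observe that multiplying the input by $c>0$ preserves the inequalities/argmax defining each activation region, so regions are closed under positive scaling and hence polyhedral cones. The only cosmetic difference is that the paper justifies the cone conclusion with a citation whereas you combine the scaling invariance with Lemma~\ref{lem:conv_act_reg}; both are sound.
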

\begin{proof}[Proof of Lemma~\ref{lem:zero_regions}]
    Each neuron of the network computes a function of the form $z(x_1, \dots, x_n) = \max_{k \in [K]} \left\{ \sum_{i=1}^n w_{i, k} \cdot x_i \right\}$. Note that for any $c \geq 0$:
    \begin{align*}
        z(c x_1, \dots, c x_n) = \max_{k \in [K]} \left\{ c \sum_{i=1}^n w_{i, k} \cdot x_i \right\} = c \cdot \max_{k \in [K]} \left\{ \sum_{i=1}^n w_{i, k} \cdot x_i \right\} =  c \cdot z(x_1, \dots, x_n).
    \end{align*}
    Therefore, each neuron is equivariant under multiplication by a nonnegative constant $c$, and thus the overall network as well, proving \ref{item:equivriant}. If $c > 0$, the activation patterns for $x$ and $cx$ are also identical, since for any inequality in the activation region definition we have
    \begin{align*}
        \sum_{i=1}^n w_{i, j} \cdot c x_i > \sum_{i=1}^n w_{i, j'} \cdot c x_i \iff \sum_{i=1}^n w_{i, j} \cdot x_i > \sum_{i=1}^n w_{i, j'} \cdot x_i, \quad j,j' \in [K].
    \end{align*}
    
    This implies that $x$ and $cx$ lie in the same activation region, and that $\mathcal{R}$ is a convex polyhedral cone, see e.g. \citet{chandru2011optimization}. This proves \ref{item:cone}.
\end{proof}

\begin{proposition}[Networks without biases do not have more regions]
    \label{prop:zero_bias}
    Suppose that $\net$ is a maxout network with biases and conditions from Theorem \ref{th:main_result} are satisfied. Let $\net_{0}$ be the same network with all biases set to $0$. Then, the total number of activation regions (in all the input space) for $\net_{0}$ is no more than that for $\net$.
\end{proposition}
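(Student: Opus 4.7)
The plan is to show that every activation pattern $J$ realized as a non-empty region $\mathcal{R}(J,\theta_0)$ of $\net_0$ is also realized as a non-empty region $\mathcal{R}(J,\theta)$ of $\net$, where $\theta_0$ denotes $\theta$ with all biases set to zero. Since distinct activation patterns index disjoint regions, the identity map on patterns is an injection from realized patterns of $\net_0$ into realized patterns of $\net$, which immediately gives the desired inequality between the numbers of non-empty activation regions.

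Fix such a pattern $J$ and, using part (b) of Lemma~\ref{lem:zero_regions}, pick $x$ in the relative interior of the convex polyhedral cone $\mathcal{R}(J,\theta_0)$. I would show that $cx\in\mathcal{R}(J,\theta)$ for every sufficiently large scalar $c>0$, arguing inductively on the layer index $l$. At layer one, each comparison between pre-activation features that defines $J$ has the form $\langle w_{1,k}^{(1)}-w_{1,k'}^{(1)},x\rangle>0$ in $\net_0$ at $x$, and the form $c\,\langle w_{1,k}^{(1)}-w_{1,k'}^{(1)},x\rangle + (b_{1,k}^{(1)}-b_{1,k'}^{(1)})>0$ in $\net$ at $cx$. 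The strict inequality is therefore preserved whenever $c$ exceeds a finite threshold depending on the relevant weights and biases. Consequently the layer-one activation vector of $\net$ at $cx$ equals $c\,x_1^{(0)}(x)+\beta_1$, where $x_1^{(0)}(x)$ is the corresponding layer-one activation vector of $\net_0$ at $x$ and $\beta_1$ depends on the biases but not on $c$. Iterating, the input to layer $l$ of $\net$ at $cx$ has the form $c\,x_{l-1}^{(0)}(x)+\beta_{l-1}$, so the same argument shows that the comparisons defining the layer-$l$ part of $J$ are preserved once $c$ exceeds another finite threshold. Taking the maximum of the finitely many thresholds arising at all comparisons across all layers gives a single $c_0$ such that $cx\in\mathcal{R}(J,\theta)$ for every $c\geq c_0$.

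The main obstacle is to maintain the inductive bookkeeping: at each layer one must carry forward both the matching of activation patterns and the asymptotic form $c\,x^{(0)}_{l}(x)+\beta_l$ for the layer-$l$ activations of $\net$ at $cx$, so that the dominant $c$-linear term continues to drive the strict inequalities at deeper layers. For $0$-partial patterns (i.e., linear regions) this goes through cleanly because all defining constraints are strict inequalities that tolerate the bounded bias perturbations; for $r$-partial patterns with $r>0$ the equality constraints are only preserved up to the $O(1)$ perturbation and one would need a separate argument (or restrict the statement to full-dimensional regions, which is the relevant case for applying Theorem~\ref{th:main_result} in the zero-bias setting).
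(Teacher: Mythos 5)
Your proposal is correct and takes essentially the same approach as the paper: pick a point in each activation region of $\net_0$, scale it by a large factor $c$, and show that for $c$ large enough the scaled point realizes the same activation pattern in $\net$, which gives the required injection. The paper packages the large-$c$ argument via the homogeneity identity $\net(cx)=c\,\net_{1/c}(x)$ and the observation that $\net_{1/c}\to\net_0$, whereas you carry out the equivalent layer-by-layer induction directly; your closing caveat about $r$-partial patterns applies equally to the paper's argument, which likewise relies on the strictness of the inequalities defining $0$-partial regions.
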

\begin{proof}[Proof of Proposition \ref{prop:zero_bias}]
    We define an injective mapping from activation regions of $\net_0$ to regions of $\net$. For each region $\mathcal{R}$ of $\net_0$, pick a point $x_{\mathcal{R}} \in \mathcal{R}$. By Lemma \ref{lem:zero_regions}, $c x_{\mathcal{R}} \in \mathcal{R}$ for each $c > 0$. Let $\net_{1 / c}$ be the network obtained from $\net$ by dividing all biases by $c$, and observe that $\net(c x_{\mathcal{R}}) = c \net_{1/c} ( x_{\mathcal{R}})$, with the same activation pattern between the two networks.
    
    By picking $c$ sufficiently large, $\net_{1/c}$ becomes arbitrarily close to $\net_0$. Therefore, for some sufficiently large $c$, $\net_0(c x_{\mathcal{R}})$ and $\net(c  x_{\mathcal{R}})$ have the same pattern of activations. Regions of $\net$ in which $c x_{\mathcal{R}}$ lies are distinct for all distinct $\mathcal{R}$. Thus, the number of regions of $\net$ is at least as large as the number of regions of $\net_0$.
\end{proof}

We obtain following corollary of Theorem \ref{th:main_result} for the zero-bias case.

\begin{corollary}[Expected number of activation regions of zero-bias networks]
    \label{cor:zero_bias_upper}
    Suppose that $\net_0$ is a fully-connected feed-forward maxout network with zero biases, $\nin$ inputs, a total of $N$ rank $K$ maxout units. Also, suppose that all conditions from Theorem \ref{th:main_result}, except for the conditions on the biases, are satisfied.
    Then there exists a constant $T'$ depending on $C_{\text{grad}}$
    so that
    \begin{align*}
        \mathbb{E}[\# \text{\normalfont activation regions of } \mathcal{N}_0 ] \leq
        \begin{cases}
            K^N, \quad N \leq \nin
            \\[5pt]
            2 \nin \frac{(T' K N)^{\nin - 1} \binom{K (\nin - 1)}{2 (\nin - 1)}}{(\nin - 1)!}, \quad N \geq \nin
        \end{cases}.
    \end{align*}
    The expectation is taken with respect to the distribution of weights in $\net_0$.
\end{corollary}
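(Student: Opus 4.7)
The strategy is to reduce the zero-bias case to Theorem~\ref{th:main_result} applied in one lower input dimension, exploiting the conical shape of activation regions given by Lemma~\ref{lem:zero_regions}.

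For $N \leq \nin$ the bound $K^N$ is immediate: a network with $N$ rank-$K$ maxout units admits at most $K^N$ distinct activation patterns regardless of the parameters. For $N \geq \nin$, I would invoke Lemma~\ref{lem:zero_regions}, which states that every activation region of $\net_0$ is a convex polyhedral cone with apex at the origin. Consequently each non-trivial region is determined by its intersection with the boundary of the cube $C = [-1,1]^{\nin}$, which consists of $2\nin$ facets $F_i^{\pm} = C \cap \{x_i = \pm 1\}$, each an $(\nin - 1)$-dimensional cube. Since every region meets at least one such facet, I get the slicing inequality
\[
\#\{\text{activation regions of } \net_0\} \;\leq\; \sum_{i=1}^{\nin} \sum_{\sigma \in \{+,-\}} \#\{\text{activation regions of } \net_0|_{F_i^{\sigma}}\}.
\]

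The restriction $\net_0|_{F_{\nin}^{+}}$ is a maxout network on $\mathbb{R}^{\nin-1}$: each first-layer pre-activation $\sum_{j} w_{z,k,j} x_j$ becomes $\sum_{j<\nin} w_{z,k,j} x_j + w_{z,k,\nin}$, so the original first-layer weight $w_{z,k,\nin}$ plays the role of an effective bias and inherits a density from the assumption on the weights of $\net_0$. I would then apply the $r=0$ case of Theorem~\ref{th:main_result} to each restricted network in dimension $\nin - 1$, giving a per-facet bound of $\frac{(T' K N)^{\nin-1}\binom{K(\nin-1)}{2(\nin-1)}}{(\nin-1)!}$, and sum over the $2\nin$ facets to obtain the stated estimate. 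The constant $T'$ absorbs factors coming from the weight density (and hence depends only on the parameter distribution that also controls $\cgrad$).

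The principal obstacle is that the restriction $\net_0|_{F_i^{\sigma}}$ still has zero biases in layers $2,\ldots, L$, so Theorem~\ref{th:main_result} does not apply verbatim. I would handle this by combining the slicing step with a Proposition~\ref{prop:zero_bias}-style perturbation: perturb all deeper biases by iid noise of scale $\varepsilon$, apply Theorem~\ref{th:main_result} to the resulting fully-biased $(\nin-1)$-dimensional network, and choose the cube side length $\delta$ at the lower threshold $\delta \asymp 1/(\cgrad \cbias)$ permitted by the theorem. At this choice the potentially dangerous product $(\delta \,\cgrad \cbias)^{\nin-1}$ inside the per-unit-volume bound is a pure constant, so the $\cbias$ dependence coming from the perturbation cancels and letting $\varepsilon \to 0$ yields a finite limit. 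Proposition~\ref{prop:zero_bias} then upgrades this limiting bound to a bound on the region count of the original $\net_0$, closing the argument.
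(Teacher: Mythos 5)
Your overall strategy---exploit the cone structure from Lemma~\ref{lem:zero_regions} to reduce the count to the $2\nin$ facets of a cube and apply Theorem~\ref{th:main_result} in dimension $\nin-1$---is exactly the skeleton of the paper's argument, and your $N\leq\nin$ case and the slicing inequality are fine. But the step you yourself flag as the principal obstacle is where the proposal breaks. Perturbing the deeper biases by iid noise of scale $\varepsilon$ forces $\cbias\sim 1/\varepsilon$, and Theorem~\ref{th:main_result} bounds the expected number of regions in a cube of side $\delta$ by (a constant times) $(\delta\,\cgrad\cbias KN)^{\nin-1}$. Choosing $\delta\asymp 1/(\cgrad\cbias)$ does make that product a pure constant, but it only bounds the number of regions inside a cube of side $\asymp\varepsilon$; the facet you need to cover has fixed side length $2$, so you must either apply the theorem with $\delta=2$ (bound $\propto\cbias^{\nin-1}\to\infty$) or tile the facet with $(2/\delta)^{\nin-1}\propto\cbias^{\nin-1}$ small cubes (same divergence). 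Either way the $\varepsilon\to0$ limit is $+\infty$, not a finite bound, so the argument does not close.

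The paper sidesteps this by never restricting the zero-bias network itself. It adds fixed, nondegenerate biases to \emph{all} layers of $\net_0$, obtaining a network $\net$ that satisfies the hypotheses of Theorem~\ref{th:main_result} with some finite $C'_{\text{bias}}$. Proposition~\ref{prop:zero_bias} injects the regions of $\net_0$ into regions of $\net$, and because each region of $\net_0$ is an unbounded cone, each image region of $\net$ is unbounded. One then bounds the number of \emph{unbounded} regions of $\net$ by the number of regions meeting the $2\nin$ facets of a sufficiently large cube and applies Theorem~\ref{th:main_result} in dimension $\nin-1$ to the biased network $\net$, for which all bias conditions hold; the constant $T'=\delta\,2^5\cgrad C'_{\text{bias}}$ stays finite because $C'_{\text{bias}}$ is fixed rather than sent to infinity. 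To repair your write-up, perform the bias-addition and the Proposition~\ref{prop:zero_bias} comparison \emph{before} slicing, and slice the biased network rather than $\net_0$.
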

\begin{proof}[Proof of Corollary \ref{cor:zero_bias_upper}]
    Based on Proposition \ref{prop:zero_bias} we can use the same upper bound as for the networks with biases, thus for the case $N \leq \nin$, the expectation is upper bounded with $K^N$.
    
    Now consider the case $N \geq \nin$.
    We will add biases to $\net_0$ in such a way that the bias conditions of Theorem \ref{th:main_result} are satisfied with some $C_{\text{bias}}'$. Denote the resulting network with $\net$.
    Then, by Proposition~\ref{prop:zero_bias}, $\net$ has a region corresponding to each region of $\net_0$.
    All the corresponding regions in $\net$ are unbounded because according to Proposition~\ref{prop:zero_bias} for any $x_{\mathcal{R}}$ from a region of $\net_0$ there exists a constant $c > 0$ so that $c x_{\mathcal{R}}$ belongs to a region in $\net$. Since all regions in $\net_0$ are unbounded, all corresponding regions in $\net$ are unbounded under such a mapping.
    
    Therefore, to obtain the result, it is enough to upper-bound the number of unbounded activation regions of $\net$. 
    Similarly to the proof of Theorem~\ref{th:main_result}, consider a hypercube with a side length $\delta>\delta_0$, large enough to interest all the unbounded regions. Then the total number of unbounded activation regions of $\net$ is upper bounded by the sum of the numbers of activation regions intersecting each of the hypercube $2 \nin$ facets, each of dimension $(\nin - 1)$.
    By Theorem~\ref{th:main_result}, the expected number of activation regions of $\net$ in $\mathbb{R}^{\nin - 1}$ is upper bounded with ${(\delta 2^5 C_{\text{grad}} C_{\text{bias}}' K N)^{\nin - 1} \binom{K (\nin - 1)}{2 (\nin - 1)}} / {(\nin - 1)!}$. Denoting $\delta 2 ^5 C_{\text{grad}} C_{\text{bias}}'$ with $T'$ and combining everything we get the desired result.
\end{proof}

\section{Proofs related to the decision boundary}
\label{app:decision_boundary}

\subsection{Simple upper bound on the number of pieces of the decision boundary}
A network used for multi-class classification into $M \in \mathbb{N}, M \geq 2$ classes can be seen as a network with a rank $M$ maxout unit on top.
Therefore, to discuss the decision boundary, we consider $r$-partial activation regions, $r \geq 1$, with at least one equation in the last unit. 
With $J_{\text{DB}}^r$, we denote the $r$-partial activation patterns corresponding to such regions and with $\mathcal{X}_{\text{DB}, r} \defeq \bigcup_{J_{\text{DB}}^r \in \mathcal{P}_{\text{DB}, r}} \mathcal{R}(J_{\text{DB}}^r; \theta)$ their union.
All decision boundary is then written as $\mathcal{X}_{DB}$. 

\begin{lemma}[Simple upper bound on the number of $r$-partial activation patterns of the decision boundary]
    \label{lem:decision_patterns}
    Let $r\in\mathbb{N}_+$. The number of $r$-partial activation patterns in the decision boundary of a network with a total of $N$ rank-$K$ maxout units is upper bounded by $|\mathcal{P}_{\text{DB}, r}| \leq \sum_{i = 1} ^{\min\{M - 1, r\}}  \binom{M}{i + 1} \binom{K(r - i)}{2 (r - i)} \binom{N}{r - i} K^{N - r + i}$. 
    The number of $r$-partial activation sub-patterns is upper bounded by $|\mathcal{S}_{\text{DB}, r}|\leq \sum_{i = 1} ^{\min\{M - 1, r\}}  \binom{M}{i + 1} \binom{K(r - i)}{2 (r - i)} \binom{N}{r - i}$. 
\end{lemma}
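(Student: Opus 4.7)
The plan is to decompose the count according to how many equations come from the output (classification) unit. Viewing the $M$-class classifier as a rank-$M$ maxout unit appended to the network, an $r$-partial activation pattern in the decision boundary is one in which the argmax set $J_{\text{last}} \subseteq [M]$ of the output unit has cardinality at least $2$. Let $i + 1 \defeq |J_{\text{last}}|$, so that the output unit contributes $i$ equations, with $i \geq 1$ because we are on the decision boundary and $i \leq M - 1$ because $J_{\text{last}} \subseteq [M]$. The remaining $r - i$ equations must come from the underlying $N$ rank-$K$ maxout units, which forces $i \leq r$. Thus $i$ ranges over $\{1, \ldots, \min\{M-1, r\}\}$.

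For a fixed $i$ in this range, I would count as follows. First, the number of choices of $J_{\text{last}} \subseteq [M]$ with $|J_{\text{last}}| = i+1$ is $\binom{M}{i+1}$. Independently, the restriction of the pattern to the $N$ underlying maxout units is an $(r-i)$-partial activation pattern (resp.\ sub-pattern) in a network of $N$ rank-$K$ maxout units. By Lemma~\ref{lem:trivial_upper_bound}, the number of such $(r-i)$-partial sub-patterns is at most $\binom{K(r-i)}{2(r-i)}\binom{N}{r-i}$, and the number of $(r-i)$-partial full patterns is at most $\binom{K(r-i)}{2(r-i)}\binom{N}{r-i} K^{N - (r-i)} = \binom{K(r-i)}{2(r-i)}\binom{N}{r-i} K^{N - r + i}$, where the factor $K^{N-r+i}$ accounts for choosing one attaining pre-activation feature in each of the $N - (r-i)$ units not contributing an equation.

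Multiplying the contributions from the output unit and from the underlying network, and summing over the admissible values of $i$, gives
\begin{align*}
|\mathcal{P}_{\text{DB}, r}| &\leq \sum_{i=1}^{\min\{M-1, r\}} \binom{M}{i+1} \binom{K(r-i)}{2(r-i)} \binom{N}{r-i} K^{N-r+i}, \\
|\mathcal{S}_{\text{DB}, r}| &\leq \sum_{i=1}^{\min\{M-1, r\}} \binom{M}{i+1} \binom{K(r-i)}{2(r-i)} \binom{N}{r-i},
\end{align*}
which are exactly the claimed bounds.

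There is no real obstacle here beyond being careful with the index ranges and the fact that the count decomposes as a product because the choice of $J_{\text{last}}$ is independent of the activation data in the previous layers (the output unit is a separate unit appended on top). The looseness in Lemma~\ref{lem:trivial_upper_bound} (in particular, ignoring that each selected unit needs at least two attaining features) is inherited here and is the same slack that makes the stated bound non-sharp but simple to apply downstream.
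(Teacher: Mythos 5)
Your proposal is correct and follows essentially the same route as the paper's proof: decompose by the number $i$ of equations contributed by the appended rank-$M$ output unit, count its argmax sets of size $i+1$ by $\binom{M}{i+1}$, apply Lemma~\ref{lem:trivial_upper_bound} with parameter $r-i$ to the underlying $N$ units, and sum over $i \in \{1,\ldots,\min\{M-1,r\}\}$. Your write-up is merely more explicit about the index ranges and the independence of the two choices.
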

\begin{proof}[Proof of Lemma~\ref{lem:decision_patterns}]
    Activation patterns for the decision boundary regions should have at least one equality in the upper unit.
    At the same time, the maximum possible number of equations in the last unit is $\min\{M - 1, r\}$.
    To get all suitable activation patterns we need to sum over all these options. 
    
    Now consider a fixed number of equations $i \in \{1, \dots, \min\{M - 1, r\}\}$. The number of ways to choose them is $\binom{M}{i + 1}$ and the number of options for the all other units in the network is given by Lemma~\ref{lem:trivial_upper_bound} for $r - i$. Combining everything, we get the claimed statement. 
\end{proof}

\subsection{Lower bound on the maximum number of pieces of the decision boundary}
The lower bound in the second item of Theorem~\ref{thm:upper_bound} is based on a construction of parameters for which the network maps an $n$-cube in the input space to an $n$-cube in the output space in many-to-one fashion. 
This means that any feature implemented over the last layer will replicate multiple times over the input layer.
We infer the following lower bound on the maximum number of pieces of the decision boundary of a maxout network. 

\begin{proposition}[Lower bound on the maximum number of pieces of the decision boundary]
\label{cor:max-decision}
Consider a network $\mathcal{N}$ with $n_0$ inputs and $L$ layers of $n_1,\ldots, n_L$ rank-$K$ maxout units followed by an $M$-class classifier.
Suppose $n\leq n_0$, $\frac{n_l}{n}$ even, and $e_l = \min\{n_0,\ldots, n_{l-1}\}$. 
Denote by $N(M,n)$ the maximum number of boundary pieces implemented by an $M$-class classifier over an $n$-cube. 
Then the maximum number of linear pieces of the decision boundary of $\mathcal{N}$ is lower bounded by $N(M,n) \prod_{l=1}^{L} (\frac{n_l}{n} (K-1)+1)^n$.
If $n\geq M$ or $n\geq 4$, $N(M,n)=\binom{M}{2}$. 
\end{proposition}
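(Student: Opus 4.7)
The plan is to combine the cube-folding construction underlying the lower bound of Theorem~\ref{thm:upper_bound} with a classifier head whose decision boundary realizes $N(M,n)$ linear pieces on the target cube. Setting $R \defeq \prod_{l=1}^{L}(\tfrac{n_l}{n}(K-1)+1)^n$, the goal is to exhibit parameters for which the decision boundary contains at least $N(M,n)\cdot R$ linear pieces.

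First I would invoke the construction underlying the second item of Theorem~\ref{thm:upper_bound}, spelled out explicitly in the deep-network case of the proof of Theorem~\ref{thm:positive_measure}: with an appropriate choice of parameters for the hidden layers, the composite map $\phi_L \circ \cdots \circ \phi_1$ sends an $n$-dimensional cube $C_{\mathrm{in}} \subset \mathbb{R}^{n_0}$ onto an $n$-dimensional cube $C_{\mathrm{out}} \subset \mathbb{R}^{n_L}$ in an $R$-to-one fashion, with each of the $R$ source pieces mapped affinely and bijectively onto $C_{\mathrm{out}}$. Next, by the very definition of $N(M,n)$, I can pick an $M$-class linear output head $\psi\colon \mathbb{R}^{n_L}\to\mathbb{R}^M$ whose decision boundary, restricted to $C_{\mathrm{out}}$, consists of exactly $N(M,n)$ linear pieces, and append it as the output layer. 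Because each source piece is mapped affinely and bijectively onto $C_{\mathrm{out}}$, each of the $N(M,n)$ output-side boundary pieces pulls back to a distinct linear piece inside each source piece, yielding at least $N(M,n)\cdot R$ linear pieces in the decision boundary of the composite network.

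To evaluate $N(M,n)$: the upper bound $N(M,n) \le \binom{M}{2}$ is immediate because the set $\{x : f_i(x) = f_j(x) \ge f_k(x)\text{ for all }k\}$ is a single convex polyhedron and thus contributes at most one linear piece per pair. For the matching lower bound when $n \ge M - 1$ (in particular when $n \ge M$), I would take $M$ points $p_1,\ldots,p_M$ in general position in $\mathbb{R}^n$ and set $f_i(x) = \langle p_i, x\rangle - \tfrac12\|p_i\|^2$: the induced cells are the Voronoi cells of the $p_i$, and $M$ affinely independent points form a single Delaunay simplex with all $\binom{M}{2}$ edges, so every pair of cells shares a facet. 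For the case $n \ge 4$ with arbitrary $M$, I would appeal to the existence of $2$-neighborly convex polytopes in dimension $\ge 4$: choosing the lifted parameters $(a_i,b_i) \in \mathbb{R}^{n+1}$ as $M$ generic points on a moment curve produces a cyclic polytope whose $1$-skeleton is complete, so every pair of top-dimensional regions of the upper envelope of the associated hyperplanes is adjacent.

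The hardest step will be the $n \ge 4$ case of the last paragraph, where one must verify that the $2$-neighborly cyclic-polytope construction leaves all $\binom{M}{2}$ pairwise adjacencies visible inside a single bounded cube $C_{\mathrm{out}}$, rather than only on an unbounded domain: a suitable affine change of coordinates followed by a sufficient enlargement of $C_{\mathrm{out}}$ should suffice. The remainder of the argument is direct bookkeeping, reusing the folding construction already established in the paper.
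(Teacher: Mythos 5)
Your proposal is correct and follows essentially the same route as the paper: the many-to-one cube-folding construction from \citet[Proposition~3.11]{sharp2021} supplies the factor $R=\prod_{l}(\frac{n_l}{n}(K-1)+1)^n$, and the decision boundary of the linear classifier head over the output cube is pulled back into each of the $R$ affine pieces, with $N(M,n)$ identified via the correspondence between pairwise class boundaries and edges of a polytope on $M$ vertices. Your explicit realizations of $N(M,n)=\binom{M}{2}$ — Voronoi/Delaunay cells for $n\geq M$ and a $2$-neighborly cyclic polytope for $n\geq 4$ — are exactly the extremal configurations behind the paper's appeal to the Upper Bound Theorem, so this is the same argument with the cited step spelled out.
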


The asymptotic order of this bound is $\Omega(M^2\prod_{l=1}^L(n_lK)^{n_0})$.  
\begin{proof}
We use the construction of parameters from \citet[Proposition 3.11]{sharp2021} refining a previous construction for ReLU networks \citep{NIPS2014_5422} to have the network represent a many-to-one map. 
There are $\prod_{l=1}^{L} (\frac{n_l}{n} (K-1)+1)^n$ distinct linear regions whose image in the output space of the last layer contains an $n$-cube. 
The linear pieces of the decision boundary of an $M$-class classifier over an $n$-cube at the $L$-th layer will have a corresponding multiplicity over the input space.
An $M$-class classifier is implemented as $\mathbb{R}^M\to[M]$; $y=(y_1,\ldots, y_M) \mapsto \operatorname{argmax}_{r\in [M]}y_r$. 
This has $\binom{M}{2}$ boundaries, one between any two classes. 
If $n\geq M$, then the image of the preceding layers intersects all of these boundaries.
More generally, the number of boundary pieces of an $M$-class classifier over $n$-dimensional space can be seen to correspond to the number of edges of a polytope with $M$ vertices in $n$-dimensional space. 
The trivial upper bound $N(M,n) \leq \binom{M}{2}$ is attained if $1<\lfloor \frac{n}{2} \rfloor$. This follows form the celebrated Upper Bound Theorem for the maximum number of faces of convex polytopes \citep{mcmullen_1970}. 
\end{proof}

\subsection{Upper bound on the expected volume of the decision boundary}

\lemmaboundaryvolume*
\begin{proof}[Proof of Theorem~\ref{th:decision_boundary_volume}]
    Using Lemma~\ref{lem:calc_vol}, but considering only $r$-partial activation patterns that belong to the decision boundary, volume of the $(\nin-r)$-skeleton of the decision boundary can be upper-bounded with
    \begin{align*}
        \sum_{\hat{J}^r_{\text{DB}}} \int\limits_{S} \mathbb{E} \left[ \rho_{\mathbf{b}^r}((\mathbf{w}^m - \mathbf{w}^r) \cdot \mathbf{x}^m_{-1} + \mathbf{b}^m) \enspace \| \mathbf{J} ((\mathbf{w}^m - \mathbf{w}^r) \cdot \mathbf{x}^m_{-1} + \mathbf{b}^m)\| \right] \enspace d x.
    \end{align*}
    Upper-bounding the integral as in Theorem \ref{th:semi_main_upper_bound}, but using Lemma~\ref{lem:decision_patterns} to count the number of entries in the sum, we get the final upper-bound
    \begin{align*}
        (2 \cgrad \cbias)^r \sum_{i = 1} ^{\min\{M - 1, r\}} \binom{N}{r - i} \binom{K(r - i)}{2 (r - i)} \binom{M}{i + 1} \vol_{\nin}(S).
    \end{align*}
\end{proof}

\subsection{Upper bound on the expected number of pieces of the decision boundary}
\begin{restatable}[Upper bound on the expected number of $r$-partial activation regions of the decision boundary]{lemma}{lemma-decision-boundary}
    \label{lem:decision_boundary}
    
    Let $\net$ be a fully-connected feed-forward maxout network, with $\nin$ inputs, a total of $N$ rank $K$ maxout units, and $M$ linear output units used for multi-classification. 
    Fix $r \in \{1, \dots, \nin\}$.
    Then, under the assumptions of Theorem \ref{th:main_result}, there exists $\delta_0\leq 1/(2 \cgrad \cbias)$ such that for all cubes $\mathcal{C}\subseteq\mathbb{R}^{\nin}$ with side length $\delta>\delta_0$,
    \begin{align*}
        \frac{\mathbb{E}\big[\substack{\# \text{ \normalfont$r$-partial activation regions in} \\ \text{\normalfont the decision boundary of } \mathcal{N} \text{ \normalfont in } \mathcal{C}}\big]}{\vol(\mathcal{C})} \leq
        \begin{cases}
            \sum_{i = 1}^{\min\{M - 1, r\}}  \binom{M}{i + 1}  \binom{K(r - i)}{2 (r - i)} \binom{N}{r - i} K^{N - r + i} , \quad N \leq \nin
            \\[5pt]
            \frac{(2^4 \cgrad \cbias N)^{\nin} (2 K)^{\nin - 1} }{\nin!}\\
            \ \times \sum_{i = 1}^{\min\{M - 1, \nin\}}  \binom{M}{i + 1}  \binom{K(\nin - i)}{2 (\nin - i)} \frac{\prod_{j=1}^{i}(\nin - j + 1)}{\prod_{j=1}^{i}(N - 1 + j)}, \quad N \geq \nin
        \end{cases}.
    \end{align*}
    Here the expectation is taken with respect to the distribution of weights and biases in $\net$. 
\end{restatable}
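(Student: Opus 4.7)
The plan is to mirror the proof of Theorem \ref{th:main_result} with Theorem \ref{th:decision_boundary_volume} playing the role of Theorem \ref{th:semi_main_upper_bound} and Lemma \ref{lem:decision_patterns} the role of Lemma \ref{lem:trivial_upper_bound}. In the regime $N \leq \nin$ the bound is immediate: the number of non-empty $r$-partial decision-boundary regions is at most the number of $r$-partial decision-boundary activation patterns, which Lemma \ref{lem:decision_patterns} bounds by the claimed expression.

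For the regime $N \geq \nin$, I would fix a closed cube $C \subseteq \mathbb{R}^{\nin}$ of side length $\delta \geq \delta_0$, and for each $t \in \{0, \ldots, \nin\}$ set $C_t$ to be its $t$-skeleton, $C_{t,\varepsilon}$ its $\varepsilon$-thickening, and $\mathcal{V}_{\text{DB},t}(\theta) := \mathcal{X}_{\text{DB},t}(\theta) \cap C_t$. The first step is to establish a decision-boundary analog of Lemma \ref{lem:intersection_upper_bound}:
\begin{align*}
    \#\{r\text{-partial DB regions with } \mathcal{R}(J_{\text{DB}}^r;\theta)\cap C\neq\emptyset\} \leq \sum_{t=r}^{\nin} \binom{t}{r} K^{t-r}\,\#\mathcal{V}_{\text{DB},t}(\theta).
\end{align*}
The vertex-counting argument transfers verbatim: the closure of each such region is a non-empty polytope inside $C$ and has a vertex on some $t$-face of $C$; that vertex sits on a $t$-partial activation region which still carries at least one decision-boundary equation, and is therefore in $\mathcal{V}_{\text{DB},t}$. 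The factor $\binom{t}{r} K^{t-r}$ harmlessly overcounts by also allowing the $r$ kept equations not to contain an output-layer one.

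Next, I would follow the $\varepsilon$-thickening argument of Lemma \ref{lem:intersection_volume} and use Fatou's lemma to push the limit $\varepsilon \to 0$ inside the expectation. Applying Theorem \ref{th:decision_boundary_volume} with $S = C_{t,\varepsilon}$ and using $\lim_{\varepsilon \to 0} \vol_{\nin}(C_{t,\varepsilon})/(\varepsilon^{\nin-t}\omega_{\nin-t}) = \vol_t(C_t) = \binom{\nin}{t} 2^{\nin-t}\delta^t$ yields
\begin{align*}
    \mathbb{E}[\#\mathcal{V}_{\text{DB},t}] \leq (2\cgrad\cbias)^t \binom{\nin}{t} 2^{\nin-t} \delta^t \sum_{i=1}^{\min\{M-1,t\}} \binom{M}{i+1}\binom{K(t-i)}{2(t-i)}\binom{N}{t-i}.
\end{align*}
Combining this with the vertex bound, using $\delta \geq 1/(2\cgrad\cbias)$ to fold $(2\cgrad\cbias)^t \delta^t$ into $(2\cgrad\cbias\delta)^{\nin}$, and Lemma \ref{lem:technical1} to replace $\binom{K(t-i)}{2(t-i)}$ by $\binom{K(\nin-i)}{2(\nin-i)}$ uniformly in $t$, produces the stated bound once the outer $t$-sum is collapsed to its dominant $t = \nin$ contribution. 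The uniform-in-$r$ prefactor $(2K)^{\nin-1}$ arises from bounding $\binom{t}{r} K^{t-r}\cdot 2^{\nin-t}$ with $t\leq \nin$, $r \geq 1$, and $\binom{t}{r} \leq 2^{t}$.

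The main obstacle will be the final combinatorial reshuffling to recover the precise factor $\prod_{j=1}^{i}(\nin-j+1)/\prod_{j=1}^{i}(N-1+j)$ in the statement. Expanding $\binom{N}{t-i}$ as $\binom{N}{\nin}$ times a ratio of falling/rising factorials and bounding $\binom{N}{\nin} \leq N^{\nin}/\nin!$ produces a nearby expression; the sum over $t\in\{r,\ldots,\nin\}$ must then be reorganized so that the $t=\nin$ term dominates the rest, with all smaller-$t$ contributions absorbed into absolute constants of the form $16^{\nin}$ that are built into $(2^4\cgrad\cbias N)^{\nin}$. This is the same style of manipulation as at the end of the proof of Theorem \ref{th:main_result}, but now with an additional output-layer index $i$ to carry along in parallel.
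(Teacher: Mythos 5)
Your proposal is correct and follows essentially the same route as the paper: the $N\le \nin$ case is read off from Lemma~\ref{lem:decision_patterns}, and for $N\ge \nin$ you reuse the skeleton/$\varepsilon$-thickening machinery of Theorem~\ref{th:main_result} with Theorem~\ref{th:decision_boundary_volume} supplying the volume bound on $\mathbb{E}[\#\mathcal{V}_{\operatorname{DB},t}]$, the factor $\binom{t}{r}K^{t-r}$ carried over from Lemma~\ref{lem:intersection_upper_bound}, and the same $\binom{\nin}{t}^2\le 4^{\nin}$ and $\binom{N}{t-i}/\binom{\nin}{t}$ manipulations at the end. The ``obstacle'' you flag is real but harmless: the derivation naturally produces $(2K)^{\nin-r}$ and $\prod_{j=1}^{i}(N-r+j)$ rather than the $(2K)^{\nin-1}$ and $\prod_{j=1}^{i}(N-1+j)$ printed in the statement, and these coincide precisely at $r=1$, which is the only case invoked downstream in Theorem~\ref{th:decision_boundary}.
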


\begin{proof}[Proof of Lemma~\ref{lem:decision_boundary}]
    Result for the case $N \leq \nin$ arises from Lemma~\ref{lem:decision_patterns}.
    Consider $N \geq \nin$.
    The proof closely follows the proof of Theorem~\ref{th:semi_main_upper_bound}, and we only highlight the differences.
    Based on Lemma~\ref{th:decision_boundary_volume},
    \begin{align*}
        \mathbb{E} \left [ \# \mathcal{V}_t \right ] \leq (
        2 \cgrad \cbias)^t \sum_{i = 1} ^{\min\{M - 1, t\}} \binom{N}{t - i} \binom{K(t - i)}{2 (t - i)} \binom{M}{i + 1} \vol_t(\mathcal{C}_t). 
    \end{align*}

    Therefore, the upper bound on the expected number of $r$-partial activation regions in the decision boundary is
    \begin{align*}
        &\sum\limits_{t = r}^{\nin} \binom{t}{r} K^{t-r} (2 \cgrad \cbias)^t \sum_{i = 1}^{\min\{M - 1, t\}} \binom{N}{t - i} \binom{K(t - i)}{2 (t - i)} \binom{M}{i + 1} \binom{\nin}{t} 2^{\nin - t} \delta^t\\
        &\leq (4 \delta \cgrad \cbias)^{\nin} (2 K) ^{\nin - r} \sum_{i = 1}^{\min\{M - 1, \nin\}}  \binom{M}{i + 1}  \binom{K(\nin - i)}{2 (\nin - i)} \sum\limits_{t = r}^{\nin} \binom{N}{t - i}\binom{\nin}{t}
    \end{align*}

    Re-writing $\binom{N}{t - i} \binom{\nin}{t}$ as $\binom{\nin}{t}^2 \frac{\binom{N}{t - i}}{\binom{\nin}{t}}$ we can upper-bound it with
    \begin{align*}
        4^{\nin} \frac{\prod_{j=1}^{i}(t - j + 1)}{\prod_{j=1}^{i}(N - t + j)} \frac{N^{\nin}}{\nin!} \leq 4^{\nin} \frac{\prod_{j=1}^{i}(\nin - j + 1)}{\prod_{j=1}^{i}(N - r + j)} \frac{N^{\nin}}{\nin!}.
    \end{align*}

    The final upper bound is then
    \begin{align*}
        \frac{(2^5 \cgrad \cbias K N)^{\nin}}{(2 K) ^{r} \nin!} \sum_{i = 1}^{\min\{M - 1, \nin\}}  \binom{M}{i + 1}  \binom{K(\nin - i)}{2 (\nin - i)} \frac{\prod_{j=1}^{i}(\nin - j + 1)}{\prod_{j=1}^{i}(N - r + j)} \vol(C).
    \end{align*}
    Dividing this expression by $\vol(C)$ we get the desired result.
\end{proof}

The next theorem follows immediately from Lemma~\ref{lem:decision_boundary} if $r$ is set to $1$. 

\theoremdecisionboundary*

\subsection{Lower bound on the expected distance to the decision boundary}

Now, using an approach similar to \citet[Corollary~5]{pmlr-v97-hanin19a}, who provided a lower bound on the expected distance to the boundary of linear regions, we discuss a lower bound on the distance to the decision boundary. 
We will use the following result from that work. 

\begin{lemma}[{\citealt[Lemma 12]{pmlr-v97-hanin19a}}]
\label{lem:hr_tube}
    Fix a positive integer $n \geq 1$, and let $Q \subseteq \mathbb{R}^n$ be a compact continuous piecewise linear submanifold
    with finitely many pieces. Define $Q_0 = \emptyset$ and let $Q_t$ be the union of the interiors of all $k$-dimensional pieces of $Q \setminus (Q_0 \cup \dots \cup Q_{t - 1})$. Denote by $T_{\varepsilon}(X)$ the $\varepsilon$-tubular neighborhood of any $X \subset \mathbb{R}^n$. We have $\vol_n (T_{\varepsilon}(Q)) \leq \sum_{t=0}^n \omega_{n - t} \varepsilon^{n - t} \vol_k (Q_t)$,where $\omega_d \defeq $ volume of ball of radius $1$ in $\mathbb{R}^d$.
\end{lemma}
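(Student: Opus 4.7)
The strategy is to partition the tubular neighborhood by the stratum of $Q$ containing the nearest point, and then to bound each piece using the fact that the normal strip of a flat polytopal cell is a metric product. I read $Q_t$ as the union of the relative interiors of the $t$-dimensional polytopal pieces of $Q$, and $\vol_k(Q_t)$ as the $t$-dimensional Hausdorff measure $\vol_t(Q_t)$ (the appearance of two indices looks like a typographical slip).

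Since $Q$ is compact, every $y \in T_{\varepsilon}(Q)$ admits at least one nearest point in $Q$. For each such $y$ I would let $t(y)$ be the smallest $t$ for which some nearest point of $y$ in $Q$ lies in $Q_t$, and set
$$A_t := \{ y \in T_{\varepsilon}(Q) \,:\, t(y) = t \}.$$
Then $T_{\varepsilon}(Q) = \bigsqcup_t A_t$ and by subadditivity
$$\vol_n(T_{\varepsilon}(Q)) \;\leq\; \sum_{t=0}^n \vol_n(A_t).$$

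Fix $t$ and enumerate the $t$-dimensional open pieces of $Q_t$ as $\{P_j\}$, each sitting inside an affine $t$-plane $L_j \subset \mathbb{R}^n$. If $y \in A_t$ has a nearest point $x$ in some $P_j$, then because $x$ is a local minimum of $z \mapsto |y-z|$ on the relatively open subset $P_j$ of $L_j$, the displacement $y - x$ must be orthogonal to $L_j$, and $|y-x| < \varepsilon$. Hence
$$A_t \;\subseteq\; \bigcup_j N_{\varepsilon}(P_j), \qquad N_{\varepsilon}(P_j) := \{ x + v \,:\, x \in P_j,\; v \in L_j^{\perp},\; |v| < \varepsilon \}.$$
Using the orthogonal decomposition $\mathbb{R}^n = L_j \oplus L_j^{\perp}$, the strip $N_{\varepsilon}(P_j)$ is a metric product of $P_j$ with the $(n-t)$-dimensional $\varepsilon$-ball, so its $n$-volume equals $\vol_t(P_j)\cdot \omega_{n-t}\varepsilon^{n-t}$. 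Summing over $j$ yields $\vol_n(A_t) \leq \omega_{n-t}\varepsilon^{n-t}\vol_t(Q_t)$, and summing over $t$ gives the stated inequality.

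The main subtlety lies in the stratification bookkeeping. The nearest-point map is multivalued on a closed null set, and a single nearest point may lie on the common boundary of several pieces of different dimensions. The rule of selecting the smallest-dimensional stratum among all nearest points is what makes $\{A_t\}$ a genuine partition and, crucially, it forces the chosen nearest point of any $y \in A_t$ to lie in the relative interior of a $t$-cell, which is precisely what the orthogonality step needs; without this convention, a nearest point sitting on a $(t{-}1)$-face of a $t$-cell would yield only a one-sided first-order condition and the bound $\omega_{n-t}\varepsilon^{n-t}\vol_t(Q_t)$ could fail. Measurability of $A_t$ follows from continuity of $z \mapsto \dist(y,z)$ together with the finiteness of the pieces, and the convention $Q_0 = \emptyset$ merely absorbs a possibly vacuous $t=0$ term.
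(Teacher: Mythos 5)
The paper does not prove this lemma at all: it is imported verbatim (typos included) from \citet[Lemma~12]{pmlr-v97-hanin19a}, so there is no internal proof to compare against. Your argument --- partitioning $T_{\varepsilon}(Q)$ by the lowest-dimensional stratum containing a nearest point, using the first-order condition on the relatively open cell to get orthogonality of the displacement, and computing the normal strip's volume as a product via the orthogonal decomposition --- is correct and is essentially the standard proof given in the cited source, including the right handling of the boundary-of-a-cell subtlety and the reading of $\vol_k(Q_t)$ as $\vol_t(Q_t)$.
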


We will prove the following. 

\cordbdistance*
\begin{proof}[Proof of Corollary \ref{cor:dist_to_db}]
    Let $x \in K$ be uniformly chosen. Then, for any $\varepsilon > 0$, using Markov’s inequality and Lemma \ref{lem:hr_tube}, we have
    \begin{align*}
        \mathbb{E} [\text{distance}(x, \mathcal{X}_{DB} )]
        \geq &\varepsilon P(\text{distance} (x, \mathcal{X}_{DB} ) > \varepsilon)
        = \varepsilon (1 - P (\text{distance}(x, \mathcal{X}_{DB} ) \leq \varepsilon))\\
        = & \varepsilon \left(1 - \mathbb{E} \left[\text{vol}_{\nin} (T_{\varepsilon} (\mathcal{X}_{DB} ) \right] \right)
        \geq \varepsilon \left( 1 - \sum_{t=1}^{\nin} \omega_{\nin - t} \varepsilon^{\nin - t} \mathbb{E}\left[ \text{vol}_{\nin - t} (\mathcal{X}_{DB}) \right] \right)
    \end{align*}
    
    The upper bound from Theorem \ref{th:decision_boundary_volume} can be upper bounded further with
    \begin{align*}
        \mathbb{E}[\vol_{\nin - t}(\mathcal{X}_{\text{DB}, t} \cap S)] \leq & (2 \cgrad \cbias)^t\sum_{i = 1} ^{\min\{M - 1, t\}}  \binom{M}{i + 1} \binom{K(t - i)}{2 (t - i)} \binom{N}{t - i} \vol_{\nin}(S)\\
        \leq & (2 \cgrad \cbias)^t (4 K^2 N)^{t-1} M^{m^* + 1} m^* \vol_{\nin}(S),
    \end{align*}
    where $m^* \defeq \min\{M - 1, t\}$.
    Then expectation of the distance can be lower bounded with
    \begin{align*}
        \varepsilon \left( 1 - \sum_{t=1}^{\nin} (2 \cgrad \cbias \varepsilon)^t (4 \varepsilon K^2 N)^{t-1} M^{m^* + 1} m^* \right)
        \geq
        \varepsilon \left( 1 - 2 \cgrad \cbias M^{m + 1} m \varepsilon \right),
    \end{align*}
    where $m \defeq \min\{M - 1, \nin \}$.
    Taking $\varepsilon$ to be 
    a small constant $c$ times
    $1 / ( 2 \cgrad \cbias M^{m + 1} m)$ completes the proof.
\end{proof}

\begin{remark}[Decision boundary of ReLU networks]
    All proofs consider the indecision locus of the last unit on top of the network and reuse results on the volume of the boundary and the number of activation regions. If one sets $K$ to $2$, these results differ only in $2^{-r}$ from those for the ReLU networks. Therefore, the decision boundary analysis should also apply to the ReLU networks if one sets $K$ to $2$ with a difference only in the constant. 
\end{remark}

\section{Counting algorithms}
\label{app:algorithm}

\subsection{Approximate counting of the activation regions}

First, we describe an approximate method for counting linear regions that is useful for quickly estimating the number of linear regions or plotting them.

We generate a grid of inputs in an $\nin$-dimensional cube, compute the gradients with respect to the input, which is simply a product of weights on the path that corresponds to a given input, and then sum the gradient values for each input dimension of one input. Then, we compute the number of unique sums and use it as the number of linear regions.
 
The method is not exact because it works by computing network gradients on a grid, so it is possible to miss a small region. Also, it does not distinguish between regions with the same gradient value, which is one more reason it might miss some linear regions and why it counts linear regions, not activation regions. However, from what we have seen, if the grid has many points, the difference between the exact and approximate method is not that big.

\subsection{Exact counting of the activation regions}
\label{subseq:exact_regions}

The algorithm starts with a cube in which we want to count the activation regions defined with a set of linear inequalities in $\mathbb{R}^{\nin}$.
We go through the network layer by layer, unit by unit, and for each unit, we determine if its pre-activation features attain a maximum on the regions obtained so far by checking the feasibility of the corresponding linear inequalities systems.
For this, we use linear programming.
More specifically, an interior-point method implementation from \texttt{scipy.optimize.linprog}.
The use of linear programming is justified since, according to Lemma \ref{lem:conv_act_reg}, the activation regions are convex.

The input to the simplex method becomes the combined system of inequalities for the region and the pre-activation feature. We set the objective to zero, meaning that any $x$ can satisfy it. 
One has to use non-strict inequalities in linear programming methods, implying the boundary of activation regions is also included.
We also add a small $\varepsilon = 1\mathrm{e}{-6}$ to avoid zero solutions in a zero bias case.
The inequalities for a pre-activation feature of some neuron $z$ have the form
\begin{align*}
    \begin{split}
        \{x \in \mathbb{R}^{\nin} \ | \  a_{z, j_0}(x; \theta) + b_{z, j_0} \geq a_{z, i}(x; \theta) + b_{z, i} + \varepsilon, \quad \forall i \in [K] \backslash [j_0] \}.
    \end{split}
\end{align*}
As a result, we get a new list of activation regions and pass it to the next unit.

To correctly estimate inequalities corresponding to a pre-activation feature on a specific region, one has to keep track of the function computed on this region, which has the form: $w_{{J}}^{(l)} \dots (w_{{J}}^{(0)} \cdot x + b_{{J}}^{(0)}) + \dots + b_{{J}}^{(l)}$, where $J$ is an activation pattern of the region.

The pseudocode for the algorithm is in Algorithm~\ref{alg:exact_maxout}, and the pseudocode for a check for one pre-activation feature is in Algorithm~\ref{alg:feature_check}.

\subsection{Exact counting of linear pieces in the decision boundary}

We define an algorithm for exactly counting linear pieces in the decision boundary based on the algorithm from Section \ref{subseq:exact_regions}.
Consider a classification problem with $M$ classes, and to describe the decision boundary, add a maxout unit of rank $M$ on top of the network.
To count the number of linear pieces in the decision boundary, for each pair of classes, go through all the activation regions of the network.
Construct a linear program for which the set of inequalities is given by a union of the region inequalities and inequalities which determine if the given classes attain maximum. Also, add the equality between these two classes. 
If the problem is feasible, there is a piece in the decision boundary.
At the end of this process, one gets the total number of linear pieces in the decision boundary. 

\subsection{Algorithm discussion}

There are two useful modifications to the method.
First, to count the number of regions in a ReLU network instead of systems of $(K - 1)$ linear inequalities, one can use inequalities of the form $w \cdot x + b \geq 0$ and $w \cdot x + b \leq 0$.

Second, to compute the number of activation regions in a slice, one can define a parametrization of the input space.
We consider as the slice of a cube $\mathcal{C}$ the 2-space through three points $x_1,x_2,x_3\in\mathbb{R}^{\nin}$, meaning the slice has the form $V=\{x = v_0 + v_1 y_1 + v_2 y_2 \in \mathbb{R}^{\nin} \colon (y_1,y_2)\in \mathbb{R}^2\cap \mathcal{C} \}$, where $v_0 = (x_1+x_2+x_3)/3\in\mathbb{R}^{\nin}$, and $v_1,v_2\in\mathbb{R}^{\nin}$ are an orthogonal basis of $\operatorname{span}\{x_2-x_1, x_3-x_1\}$, 
and $v_1,v_2$ are orthonormal.
We can evaluate the network function over such a slice by augmenting the network by a linear layer $\phi\colon \mathbb{R}^2 \to \mathbb{R}^{\nin}$ with weights $v_1,v_2$ and biases $v_0$. 
We used images from 3 different classes as the points that define the slice. 

\begin{algorithm}[t]
    \caption{Exactly Count the Number of Activation Regions in a Maxout Network}
    \label{alg:exact_maxout}
    \begin{algorithmic}[1]
        \Function{CountActivationRegions}{}
            \State \texttt{activation\_regions} = [\texttt{starting\_cube}]
            \For{\texttt{layer} in $\{1, \dots, L\}$}
                \For{\texttt{unit} in \texttt{layer}}
                \State \texttt{new\_activation\_regions} = []
                   \For{\texttt{region} in \texttt{activation\_regions}}
                        \For{ \texttt{feature} in \texttt{unit}}
                            \LineComment See Algorithm \ref{alg:feature_check}
                            \If{NewRegionCheck(\texttt{unit.features}, \texttt{feature}, \texttt{region})}
                                \State                                \texttt{new\_activation\_regions}.append(\texttt{new\_region})
                            \EndIf
                        \EndFor
                    \EndFor
                \State \texttt{activation\_regions} = \texttt{new\_activation\_regions}
                \EndFor
                \For{\texttt{region} in \texttt{activation\_regions}}
                    \State \texttt{region.function} = \texttt{region.next\_layer\_function}
                    \State \texttt{region.next\_layer\_function} = []
                \EndFor
            \EndFor
            \State \Return length(\texttt{activation\_regions)}
        \EndFunction
    \end{algorithmic}
    \label{algorith:exact_count}
\end{algorithm}

\begin{algorithm}[t]
    \caption{Auxiliary Function That Checks if a Pre-Activation Feature Creates a New Region}
    \label{alg:feature_check}
    \begin{algorithmic}[1]
        \Function{NewRegionCheck}{\texttt{unit\_features}, \texttt{feature}, \texttt{region}}
            \State \texttt{objective} = \texttt{zeros}
            \State \texttt{inequalites} = \texttt{region.inequalities}
            \State \texttt{unit\_features.weights} = \texttt{unit\_features.weights} $\times$ \texttt{region.weights}
            \State \texttt{unit\_features.biases} = \texttt{unit\_features.weights} $\times$ \texttt{region.biases}
            \State \hspace{22em} + \texttt{unit\_features.biases}
            \For{ \texttt{another\_feature} in \texttt{unit\_features} $\setminus$ \texttt{feature}}
                \State \texttt{inequalities}.append(\texttt{another\_feature.weights} - \texttt{feature.weights} $\times$ $x$
                \State \hspace{15em} $\leq$ \texttt{feature.bias} - \texttt{another\_feature.bias})
            \EndFor
            \If{LinearProgramming.Solve(\texttt{objective}, \texttt{inequalities})}
                \State \texttt{next\_layer\_function} = \texttt{region.next\_layer\_function}
                \State \hspace{15em} + [\texttt{feature.weights}, \texttt{feature.bias}]
                \State \Return Region(\texttt{inequalites}, \texttt{region.function}, \texttt{next\_layer\_function})
            \EndIf
            \State \Return \texttt{None}
        \EndFunction
    \end{algorithmic}
\end{algorithm}

We usually performed the computation in a 2D slice, which is reasonably fast because the number of regions is not large if the input dimension is not high, as suggested by Theorem~\ref{th:main_result}. 
Additionally, note that the check for a given unit is embarrassingly parallel, meaning the computation can be accelerated.
To demonstrate that the computation can be carried out in a reasonable time, we also analyse the algorithm's space-time complexity.

\paragraph{Space-time complexity of the algorithm}\mbox{}

To start, we estimate complexities for some number of activation regions $R$. 
Firstly, consider the space complexity.
Since we store all activation regions, the space requirement grows as $R$ multiplied by an activation region size.
We store a region as a constant size function computed on it and as a system of linear inequalities.
The maximum number of inequalities is attained when each of $N$ neurons adds a new system of inequalities to the region, while $K - 1$ inequalities determine that one pre-activation feature attains a maximum.
Therefore, the space complexity of the algorithm is $\mathcal{O} (R K N)$.

Now consider the time complexity.
Since we traverse the network unit by unit, and for each pre-activation feature of a unit and each available activation region, we solve a linear programming problem, the time complexity is $\mathcal{O}$ of $R K N$ times the time complexity of a linear programming method.
We have used an interior point method that has a polynomial-time complexity of $\mathcal{O}(\frac{n^3}{\log n} L)$, see \citet{anstreicher1999linear}, where $n$ is the dimension of the variables, which is the dimension of the network input $\nin$, and $L$ is the number of bits used to represent the method input.
The input is the set of inequalities, and as we have just discussed, its size is $\mathcal{O}(K N)$.
Combining everything, and using $\mathcal{O}(n^3 L)$ instead of $\mathcal{O}(\frac{n^3}{\log n} L)$ for simplicity, we get that the time complexity of the whole algorithm is $\mathcal{O}(R K^2 N^2 \nin^3 )$.

To get complexities for the average case, assume $N \geq \nin$.
Then, based on Theorem~\ref{th:main_result}, $R$ grows as $\mathcal{O}((K^3 N)^{\nin})$.
Therefore, the space complexity is $\mathcal{O} ( K N (K^3 N)^{\nin})$ and the time complexity is $\mathcal{O}( K^2 N^2 \nin^3 (K^3 N)^{\nin})$.
Both space and time complexities grow exponentially with the input dimension but polynomially with the number of neurons and a maxout unit's rank.

\section{Parameter initialization}
\label{app:init}

\subsection{He initialization}
We briefly recall the parameter initialization procedure for ReLU networks which is commonly referred to as ``He initialization'' \citep{he2015delving}. 
This follows the motivation of the work by Xavier and co-authors \citep{glorot2010understanding}. 
To train deep networks, one would like to avoid vanishing or exploding gradients. 
The approach formulates a sufficient condition for the norms of the activations across layers to not blow up or vanish. 
For ReLU networks this leads to sampling the weights from a distribution with standard deviation $\sqrt{2/n_l}$.

\subsection{He-like initialization for maxout (Maxout-He)}
\label{app:he-like-maxout}
    We follow the derivation from \citet{glorot2010understanding} and \citet{he2015delving} but for the case of maxout units. 
    We note that a He-like initialization for maxouts was considered by \citet{sun2018improving} but only for $K = 2$. 
    We focus on the forward pass and consider fully-connected layers. 
    The idea is to investigate the variance of the responses in each layer. 
    We use the following notations. 
    For a given layer $l$ with $d$ units and $n_l$ inputs, a (pre-activation) response is $\mathbf{y}_l = W_l \mathbf{x}_l + \mathbf{b}_l$, where $\mathbf{x}_l \in \mathbb{R}^{n_l}$ is an input vector to the layer, $W_l \in \mathbb{R}^{d \times n_l}$ is a matrix, $\mathbf{b}_l \in \mathbb{R}^d$ is a vector of biases.
    We have $\mathbf{x}_l = \phi(\mathbf{y}_{l-1})$, where $\phi$ is the activation function. 
    
    We assume the elements in $W_l$ are independent and identically distributed (iid). 
    We assume that the elements in $\mathbf{x}_l$ are also iid. 
    We assume that $\mathbf{x}_l$ and $W_l$ are independent of each other. 
    Denote $y_l$, $w_l$, and $x_l$ the random variables of each element in $\mathbf{y}_l$, $W_l$, and $\mathbf{x}_l$ respectively. 
    In the following we assume that biases are zero. 
    Then we have: 
    \begin{align*}
        \var[y_l] = n_l \var[w_l \cdot x_l].
    \end{align*}
    If we assume further that $w_l$ has zero mean, then the variance of the product of independent variables gives us:  
    \begin{align}
        \var[y_l] = n_l \var[w_l] \mathbb{E}[x_l^2].
        \label{eq:he_init_var}
    \end{align}

    We need to estimate $\mathbb{E}[x^2_l]$. 
    For ReLU, $\mathbb{E}[x_l^2] = \frac{1}{2} \var[y_{l-1}]$. 
    For maxout we get a different result. 
    Let $K$ be the rank of a maxout unit. 
    Then $x_l = \phi(y_{l-1}) = \max_{k \in [K]}\{y_{l-1,k}\}$.
    The $y_{l-1,1},\ldots, y_{l-1,K}$ are independent and have the same distribution.
    We denote $f(t)$ and $F(t)$ the pdf and cdf of this distribution.
    The cdf for $x_l = \max_{k\in[K]}\{y_{l-1,k}\}$ is, dropping the subscript $l-1$ of $y_{l-1,k}$ for simplicity of notation, 
    \begin{align*} 
        \Pr \left(\max_{k \in [K]}\{y_{k}\} < t \right) = \Pr \left( y_1,\ldots, y_K < t \right) = \prod_{k = 1}^K \Pr\left( y_k < t \right)= (F(t))^K. 
    \end{align*}
    In turn, the expectation of the square is 
    \begin{align*}
        \mathbb{E} \left[ \max_{k \in [K]}\{y_k\}^2 \right] = \int_{\mathbb{R}} t^2 \frac{d}{dt} \left[ \left( F(t) \right)^K \right] dt = K \int_{\mathbb{R}} t^2 \left( F(t) \right)^{K - 1} f(t) dt.
    \end{align*}
    
    Now we can apply this formula to discuss the cases of a uniform distribution on an interval and a normal distribution. If we assume that $w_{l - 1}$ has a symmetric distribution around zero, then $y_{l - 1}$ has zero mean and has a symmetric distribution around zero. 
    
    \paragraph{Uniform Distribution}
    Assuming $y_{l-1}$ has a uniform distribution on the interval $[-a, a]$, we get $\var[y_{l-1}] = a^2/3$, and 
    \begin{align*}
        &K = 2: \mathbb{E}[x_l^2] = \frac{a^2}{3} = \var[y_{l-1}],\\
        &K = 3: \mathbb{E}[x_l^2] = \frac{2 a^2}{5} =  \frac{6}{5}\var[y_{l-1}],\\
        &K = 4: \mathbb{E}[x_l^2] = \frac{7 a^2}{15} =  \frac{7}{5}\var[y_{l-1}],\\
        &K = 5: \mathbb{E}[x_l^2] = \frac{11 a^2}{21} =  \frac{11}{7}\var[y_{l-1}].
    \end{align*}
    More generally, $\mathbb{E}[x_l^2] = 4 a^2 (\frac{K}{K+2} - \frac{K}{K+1}+\frac{K}{4K})$. 
    
    \paragraph{Normal Distribution}
    Assuming $y_{l-1}$ has a normal distribution $\mathcal{N} (0, \sigma^2)$, the closed form solution is available for up to $K = 4$. We have:
    \begin{align*}
        &K = 2: \mathbb{E}[x_l^2] = \var[y_{l-1}],\\
        &K = 3: \mathbb{E}[x_l^2] = \frac{\sqrt{3} + 2 \pi}{2 \pi}\var[y_{l-1}],\\
        &K = 4: \mathbb{E}[x_l^2] = \frac{\sqrt{3} + \pi}{\pi}\var[y_{l-1}],\\
        &K = 5: \mathbb{E}[x_l^2] \approx 1.80002 \var[y_{l-1}].
    \end{align*}
    
    Inserting the expressions for $\mathbb{E}[x_l^2]$ into \eqref{eq:he_init_var}, 
    \begin{align*}
        \var[y_l] = n_l \var[w_l] c \var[y_{l-1}],
    \end{align*}
    where $c$ depends on the distribution and on $K$.
    Putting the results together for all layers,
    \begin{align*}
        \var[y_L] = \var[y_1] \prod_{l = 2}^L c n_l \var [w_l].
    \end{align*}
    A sufficient condition for this product not to increase or decrease exponentially in $L$ is that, for each layer, $c n_l \var [w_l] = 1$. This is achieved by setting the standard deviation (std) of $w_l$ as $\sqrt{1/c n_l}$. 
    For $K=2$ this is $\sqrt{1/n_l}$ for both uniform and normal distribution. 
    For a uniform distribution, we obtain the condition $\var[w_l] = \frac{1}{n_l (\frac{1}{4} - \frac{K}{(K+2)(K+1)})}$. 

We notice that for ReLU, the particular shape of the distribution of the (pre-activation) response $y_{l-1}$ does not impact the expected square of the activation $x_l$. Indeed, as soon as $w_l$ is assumed to be symmetric around zero, one obtains $\mathbb{E}[x_l^2] = \frac12 \var[y_{l-1}]$. 
In contrast, for maxout units of rank $K>2$, the particular shape of the distribution of $y_{l-1}$ does affect the value of $\mathbb{E}[x_l^2]$. This is why we obtain different conditions on the standard deviation of the weight distributions depending on the assumed response distribution. 
The situation is illustrated in Figure~\ref{fig:init}. 
    Among the possible distributions that one might assume for $y_{l-1}$, a normal distribution appears most natural. Therefore, we take the standard deviations obtained under this assumption as the ones defining the maxout-He initialization procedure. 
    The values of the std of $w_l$ for $K$ up to $5$ for normal distributions are shown in Table \ref{tb:std}.

\begin{figure}
    \centering
    \includegraphics[width=10cm]{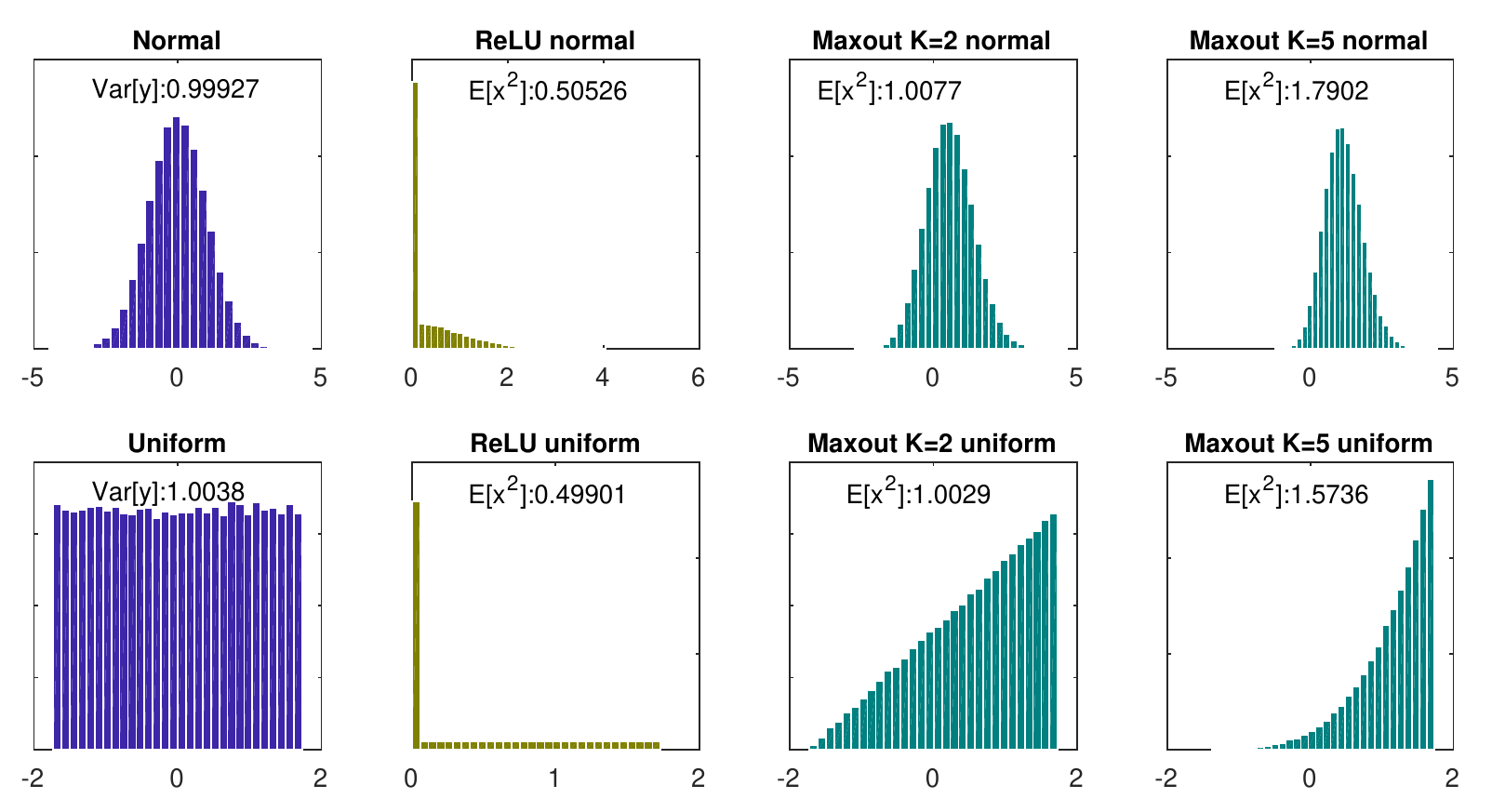}
    \caption{Shown are normal (top) and uniform (bottom) input distributions, as well as the corresponding response distributions for ReLU, maxout of rank $K=2$, and maxout of rank $K=5$. The expectation of the square response for maxouts of rank $K>2$ depends not only on the variance but also on the particular shape of the input distribution. }
    \label{fig:init}
\end{figure}
    
  \subsection{Sphere initialization}
  If we initialize the pre-activation features of a maxout unit independently, then we expect the number of regions of the unit will be significantly smaller than $K$, as discussed in Appendix~\ref{app:single_unit}.
  In view of Proposition~\ref{prop:Newton}, the number of regions of a maxout unit with weights $w_1,\ldots, w_K\in\mathbb{R}^n$ and biases $b_1,\ldots, b_K\in\mathbb{R}$ is equal to the number of upper vertices of the polytope $\operatorname{conv}\{(w_r,b_r)\colon r\in[K]\}$. 
  Hence one way to have each rank-$K$ maxout unit have $K$ linear regions over its input at initialization is to initialize the pre-activation feature parameters as points in the upper half-sphere $\{(w,b)\in\mathbb{R}^{n+1} \colon \|(w,b)\|=1, b>0\}$. 
  This can be done as follows.  
  For each pre-activation feature $i=1,\ldots, K$:  
  \begin{enumerate}[leftmargin=*]
    \item Sample $(w_i,b_i)$ from a Gaussian on $\mathbb{R}^{n+1}$. 
    \item Normalize $(w_i,b_i)/\|(w_i,b_i)\|$. 
    \item Replace $b_i$ with $|b_i|$.
  \end{enumerate}
  If desired, subtract a constant $c$ from each of the biases $b_1,\ldots, b_K$. 
  For instance one may choose $c$ so that the mean output of the maxout unit is approximately $0$ for inputs from a Gaussian distribution.
  We have used $c = 1 / \sqrt{K n_l}$ in our implementation, and Gaussian had zero mean and unit covariance. 
  
  \subsection{Many regions initialization}
  We can initialize the parameters of a maxout layer so that the layer has the largest possible number of linear regions over its input space. 
  A description of parameter choices maximizing the number of regions for a layer of maxout units has been given by \citet[Proposition~3.4]{sharp2021}. The number of regions of a layer of maxout units corresponds to the number of upper vertices of a Minkowski sum of polytopes. A construction maximizing the number of vertices of Minkowski sums was presented earlier by \cite{Weibel12}. 
 The procedure is as follows. Let the layer have input dimension $n$. 
 For each unit $j=1,\ldots, m$:  
  \begin{enumerate}[leftmargin=*]
      \item Sample a vector $v_j\in\mathbb{R}^{n}$ from a distribution which has a density. 
      \item For each pre-activation feature $i=1,\ldots, K$ set the weights and bias as $w_{j,i} = v_j  \cos(\pi i/K)$ and $b_{j,i} = \sin(\pi i/K)$. 
  \end{enumerate} 
  This construction ensures that each unit has $K$ linear regions separated by $K-1$ parallel hyperplanes, and the hyperplanes of different units are in general position. Then the number of regions of the layer is the one indicated in the first item of Theorem~\ref{thm:upper_bound}. 
 
 If desired, one can add some noise to each of the above parameters (e.g.\ standard normal times a small constant) in order to have a parameter distribution which has a density. 
 If desired, one can also normalize the initialization by subtracting an appropriate constant (e.g.\ to achieve a zero mean activation) and dividing by an appropriate standard deviation (e.g.\ to achieve that the activations have a unit mean norm). 
We were sampling $v_j$ from a Gaussian distribution with mean zero and std chosen according to maxout-He.
  
\subsection{Steinwart-like initialization for maxout}
\citet{steinwart2019sober} investigated initialization in ReLU networks. 
He suggested that having the nonlinear locus of different units evenly spaced over the input space at initialization could lead to faster convergence of training, which he also supported with experiments on the datasets from the UCI repository. 
We can formulate a version of this general idea for the case of maxout networks as follows. 
\begin{enumerate}[leftmargin=*]
\item 
Assume we have some generic initialization procedure for individual units, which gives us weights $w_1,\ldots,  w_K\in \mathbb{R}^n$ and biases $b_1,\ldots, b_K\in\mathbb{R}$. 
The initialization procedure could be for instance ``Sphere''. 
Upon initialization, our unit is computing a function $x\mapsto \max\{\langle w_1,x\rangle +b_1,\ldots, \langle w_K , x\rangle +b_K\}$ with non-linear locus that we denote $L$. 

\item 
For each unit, sample a vector $c$ uniformly from the cube $[-1,1]^{n}$.
Alternatively, sample $c$ as a random convex combination of a random subset of the training data, so that $c=\sum_{i=1}^m p_i x_i$, where $(p_1,\ldots, p_m)$ is a random probability vector and $x_1,\ldots, x_m$ are $m$ randomly selected training input examples. 
\item Now set the weights as $w_1,\ldots, w_K$ and the biases as $b_1 + \langle w_1,c\rangle, \ldots, b_K + \langle w_K,c\rangle$. 
Now our unit is computing a function $x\mapsto \max\{\langle w_k, x\rangle + b_k + \langle w_k, c\rangle \} = \max\{\langle w_k, x + c\rangle + b_k\}$. Hence the nonlinear becomes $L-c$. 
\end{enumerate}

\section{Experiments}
\label{app:experiments}

In this section, we provide details on the implementation and additional experimental results. 
All the experiments were implemented in Python using PyTorch \citep{NEURIPS2019_9015}, numpy \citep{harris2020array}, scipy \citep{scipy} and mpi4py \citep{dalcin2011parallel}, with plots created using matplotlib \citep{Hunter:2007}. 
In the experiments concerning the network training, we used the MNIST dataset \citep{lecun2010mnist}.
PyTorch, numpy, scipy and mpi4py are made available under the BSD license, matplotlib under the PSF license and MNIST dataset under the Creative Commons Attribution-Share Alike 3.0 license.
We conducted all experiments on a CPU cluster that uses Intel Xeon IceLake-SP processors (Platinum 8360Y) with 72 cores per node and 256 GB RAM. The most extensive experiments were usually running for 2-3 days on 32 nodes.
The computer implementation of the key functions is available on GitHub at \url{https://github.com/hanna-tseran/maxout_complexity}. 

For the MNIST experiments we use the Adam optimizer with mini-batches of size $128$ with learning rate $0.001$ and the standard Adam hyperparameters from PyTorch (betas are $0.9$ and $0.999$).
Counting at initialization was performed in the window $[-50, 50]^2$, in the training experiments in the window $[-400, 400]^2$ defined on the slice, and images of the regions and the decision boundary were obtained in the window $[-50, 50]^2$ also defined on the slice.
All results are averaged over 30 instances where applicable. 
The network architectures are specified in the individual experiments.
The parameter initialization procedures are implemented
following the descriptions in Appendix~\ref{app:init}. 
For the experiments counting the number of activation regions and pieces in the decision boundary we use home made implementations of the algorithms described in Appendix \ref{app:algorithm}. 
Further below we present the details and additional results of the individual experiments. 

\paragraph{Details on Figure~\ref{fig:1}} 
We consider a network with 2 input units, three layers of rank-3 maxout units of width 3, and a single linear output unit. 
We fix three parameter vectors $\theta_0,\theta_1,\theta_2$ drawn from a normal distribution over the parameter space and define a grid of parameter values $\theta(\xi_1,\xi_2) = \theta_0 + \xi_1\theta_1 + \xi_2\theta_2$ with $(\xi_1,\xi_2)$ taking 102400 uniformly spaced values in $[-1,1]^2$. 
For each of these parameter values, we estimate the number of linear regions that the represented function has over the square $[-1,1]^2$ in the input space. 
To this end, we evaluate the gradient of the function over 102400 uniformly spaced input points and take the number of distinct values an estimate for the number of linear regions.
Then we plot the estimated number of linear regions as a function of $(\xi_1,\xi_2)$. 
A subset of 25 out of the evaluated functions is shown in Figure~\ref{fig:1a}. 
    
\paragraph{Comparison to the upper bound} 
Figures \ref{fig:full_formula} and  \ref{fig:formula_different_K} complement Figure \ref{fig:formula}. Figure \ref{fig:full_formula} compares the number of activation regions and linear pieces in the decision boundary to the formulas both with and without the constants, while Figure \ref{fig:formula_different_K} demonstrates the results for different values of $K$. 

\paragraph{Effects of the depth and the number of units on the number of linear regions}
Results adding more information to Figure \ref{fig:neurons_to_depth} are in Figure \ref{fig:neurons_to_depth2}. It shows that ReLU networks and maxout networks with $K = 2$ have a similar number of activation regions that does not depend on the network depth but rather on the total number of units. This figure also shows that maxout networks with ranks $K > 2$ tend to have fewer regions as the depth increases, but the number of units remains constant and that the difference in the number of regions becomes more apparent for larger ranks. 

\paragraph{Effects of different initializations on training}
Figure \ref{fig:training_dif_params} is a more detailed version of Figure \ref{fig:loss}. It shows how convergence speed changes for different network depths and different maxout ranks given different initializations. 
The improvement from maxout-He, sphere, and many regions initializations compared to ReLU-He initialization becomes more noticeable with larger network depth and larger maxout rank. 
We have also checked how the Steinwart initialization affects the convergence speed, but found no significant difference in this particular experiment. We used the approach where $c$ is taken as a convex combination of all training data points (weights $p$ uniformly at random from the probability simplex). The results are shown in Figure~\ref{fig:steinwart}. 
    
\paragraph{Effects of different initializations on the number of activation regions and pieces in the decision boundary during training}
Figure \ref{fig:init_in_training} adds more information to Figure \ref{fig:training} and demonstrates how the number of activation regions and linear pieces in the decision boundary changes for different initializations during training on the MNIST dataset. 
We observe that the number of activation regions and pieces of the decision boundary increase for all tested initialization procedures as training progresses. Nonetheless, the number remains much lower than the theoretical maximum. 
Figure \ref{fig:db_regions_evolution} illustrates how linear regions and the decision boundary evolve during training. 

\begin{figure}
    \centering
    \includegraphics[width=5cm]{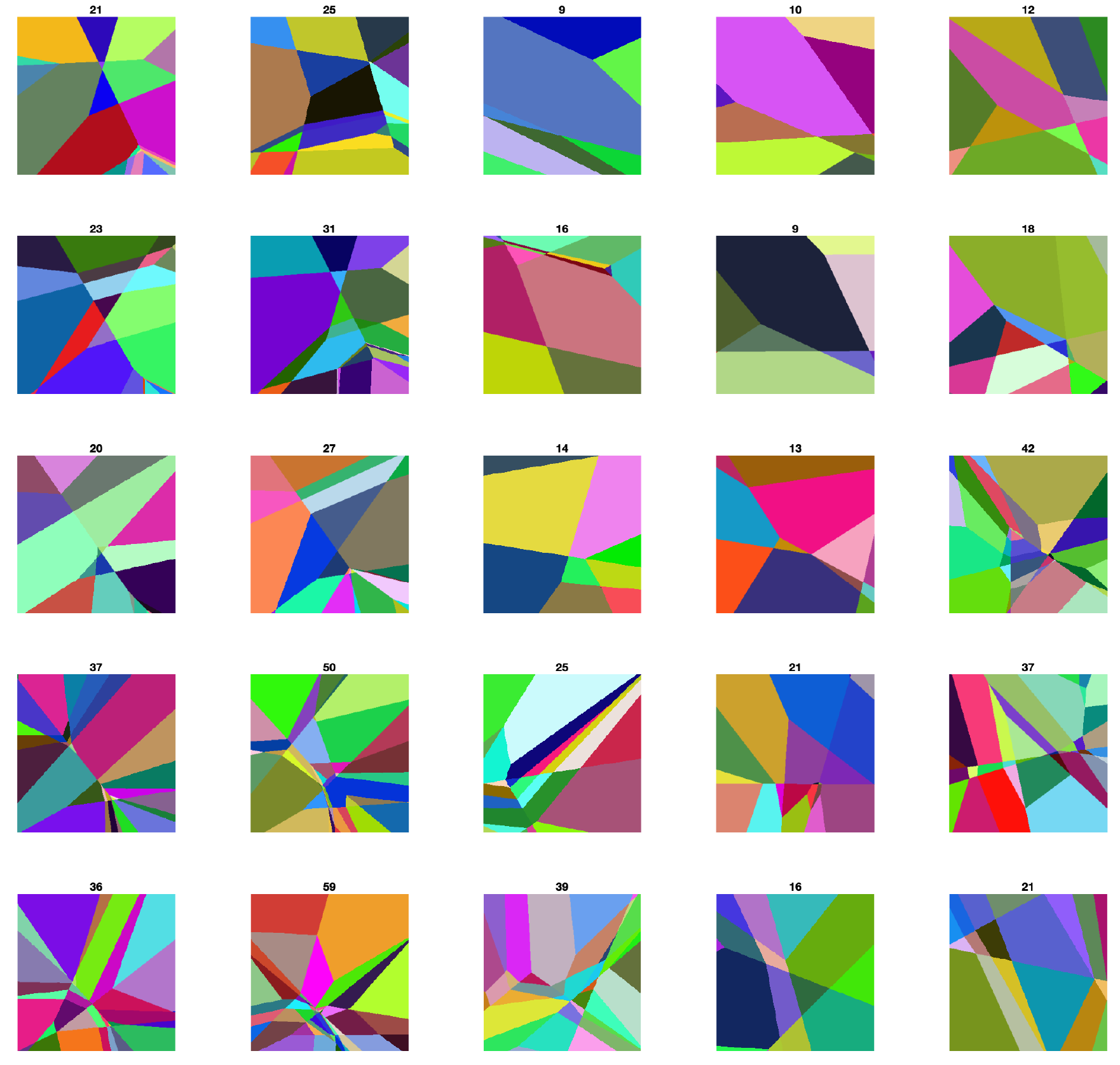}
    \caption{A few functions represented by a maxout network for different parameter values in a 2D slice of parameter space. 
    For each function we plot regions of the input space with different gradient values using different colors. 
    }
    \label{fig:1a}
\end{figure}

\begin{figure}
    \begin{subfigure}{\textwidth}
        \setlength\tabcolsep{2pt}
        \begin{tabular}{ccc}
            \centering
            \small{Formula without $K$ and constants} &
            \small{Formula with $K$ without constants} &
            \small{Full formula} \\
            
            \includegraphics[width=0.32\textwidth]{images/formula/78186f13e936442abc08d02c3e9879c12.png} &
            \includegraphics[width=0.32\textwidth]{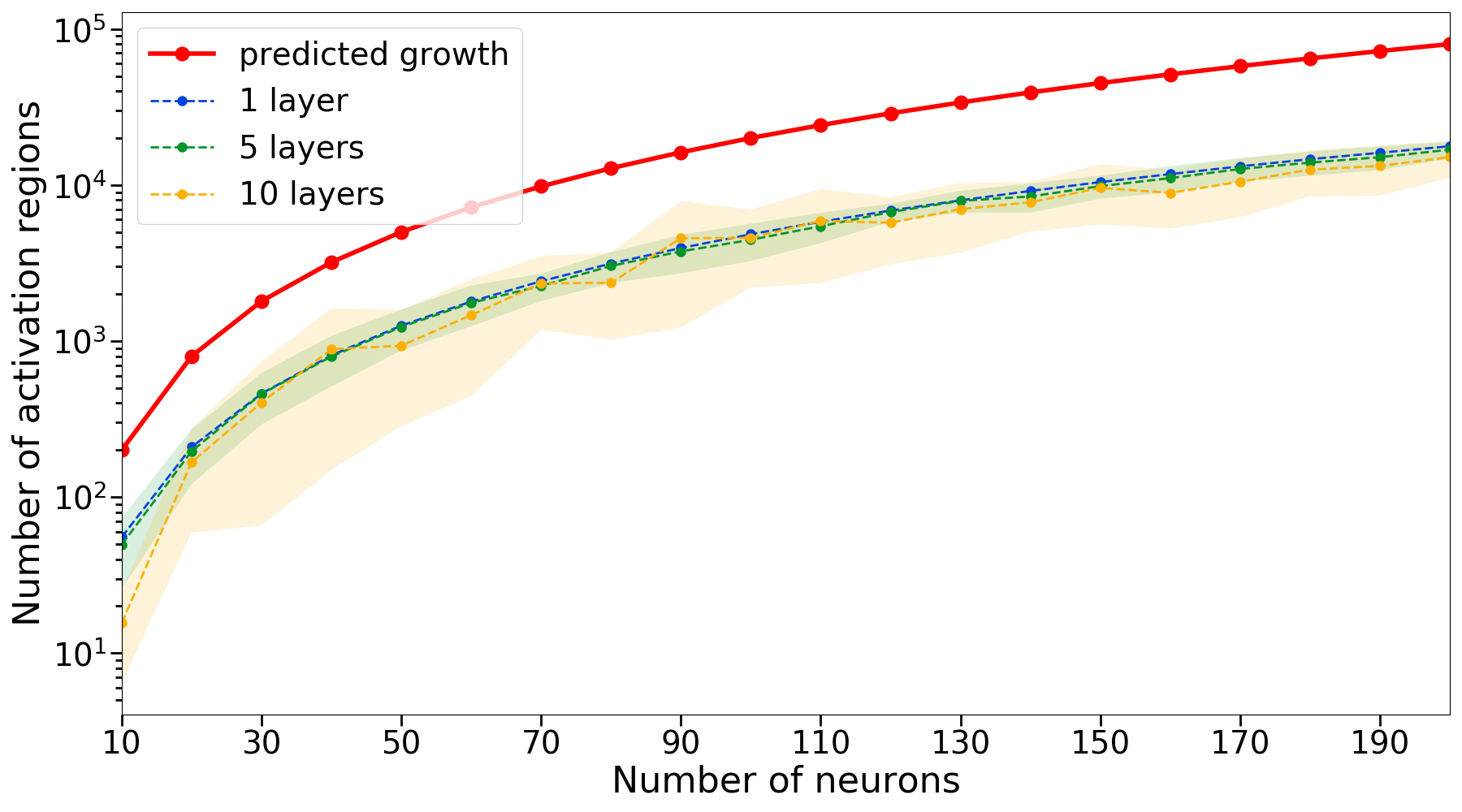} &
            \includegraphics[width=0.32\textwidth]{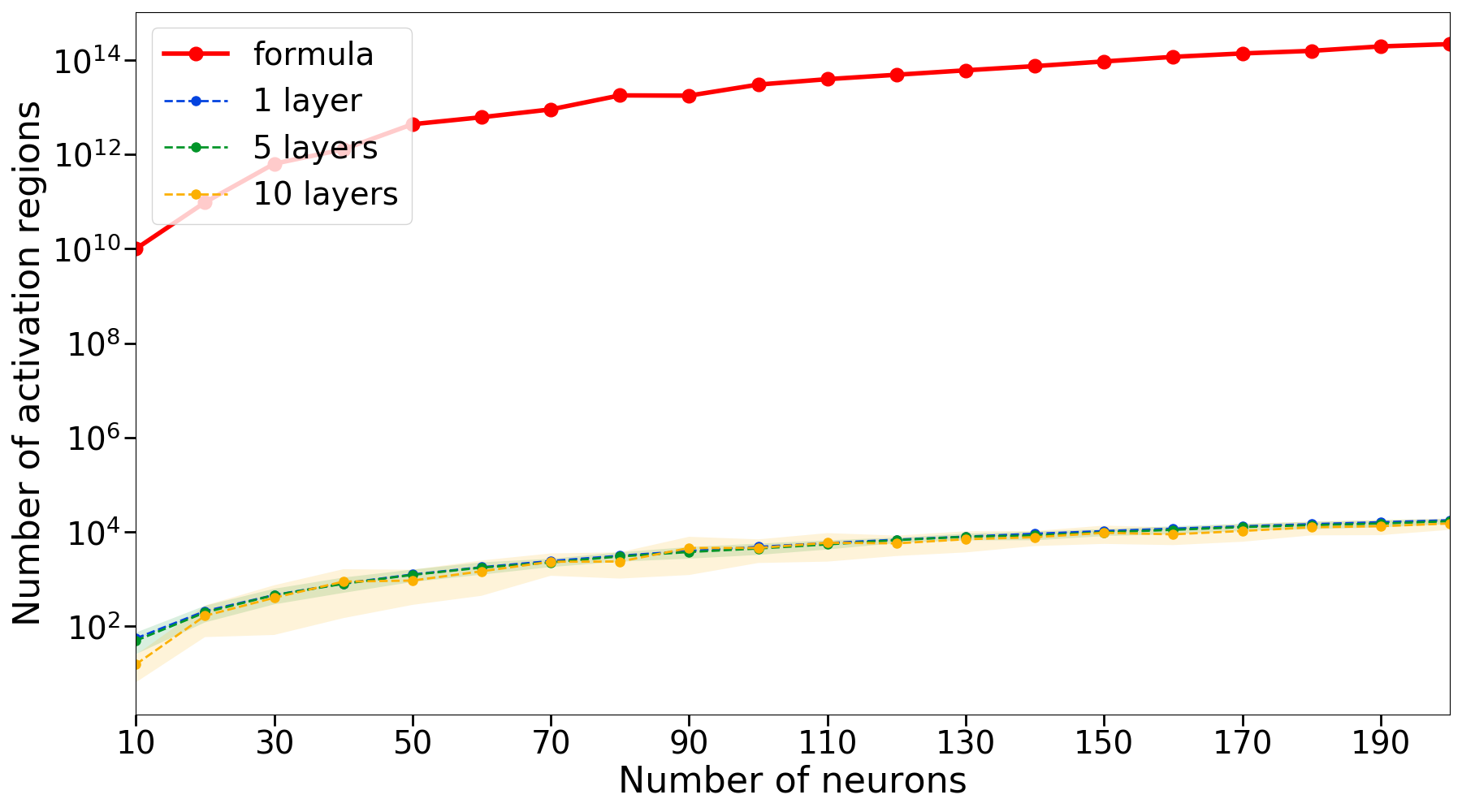}
        \end{tabular}
        \caption{\small Number of activation regions for a network with ReLU-He normal initialization.}
    \end{subfigure}
    \vspace{.2cm}
    
    \begin{subfigure}{\textwidth}
        \setlength\tabcolsep{2pt}
        \begin{tabular}{ccc}
            \centering
            \small{Formula without $K$ and constants} &
            \small{Formula with $K$ without constants} &
            \small{Full formula} \\
            \includegraphics[width=0.32\textwidth]{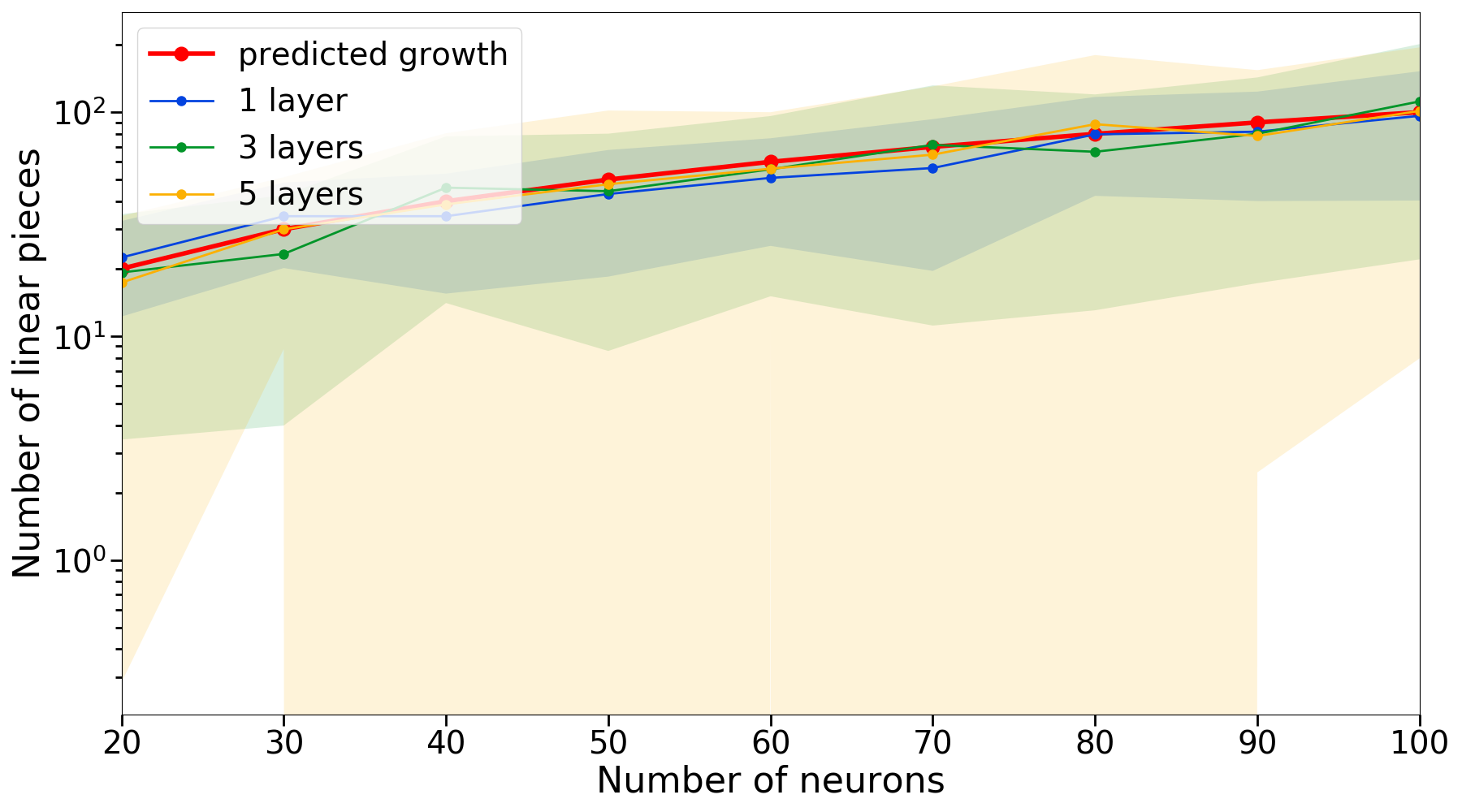}&
            \includegraphics[width=0.32\textwidth]{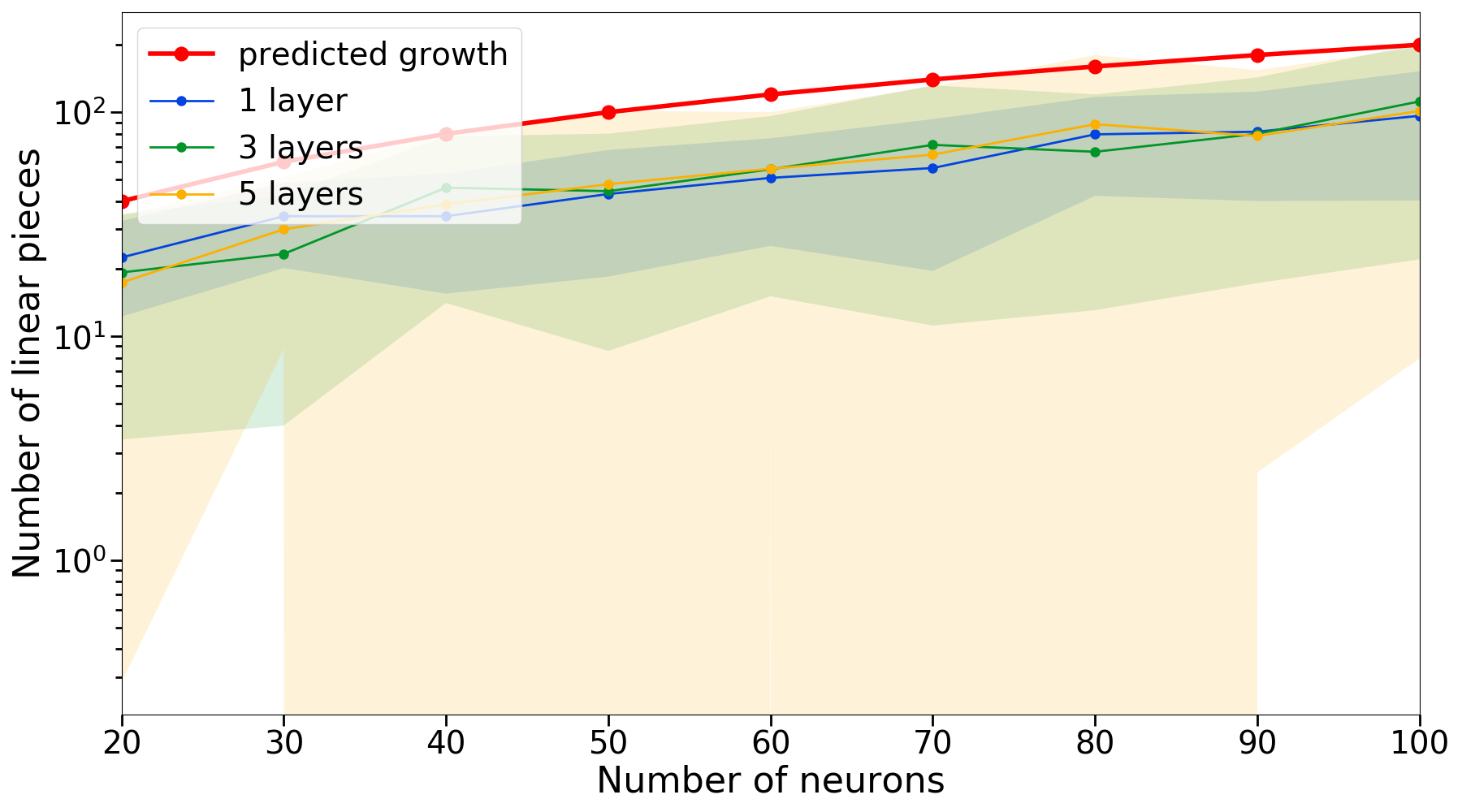} &
            \includegraphics[width=0.32\textwidth]{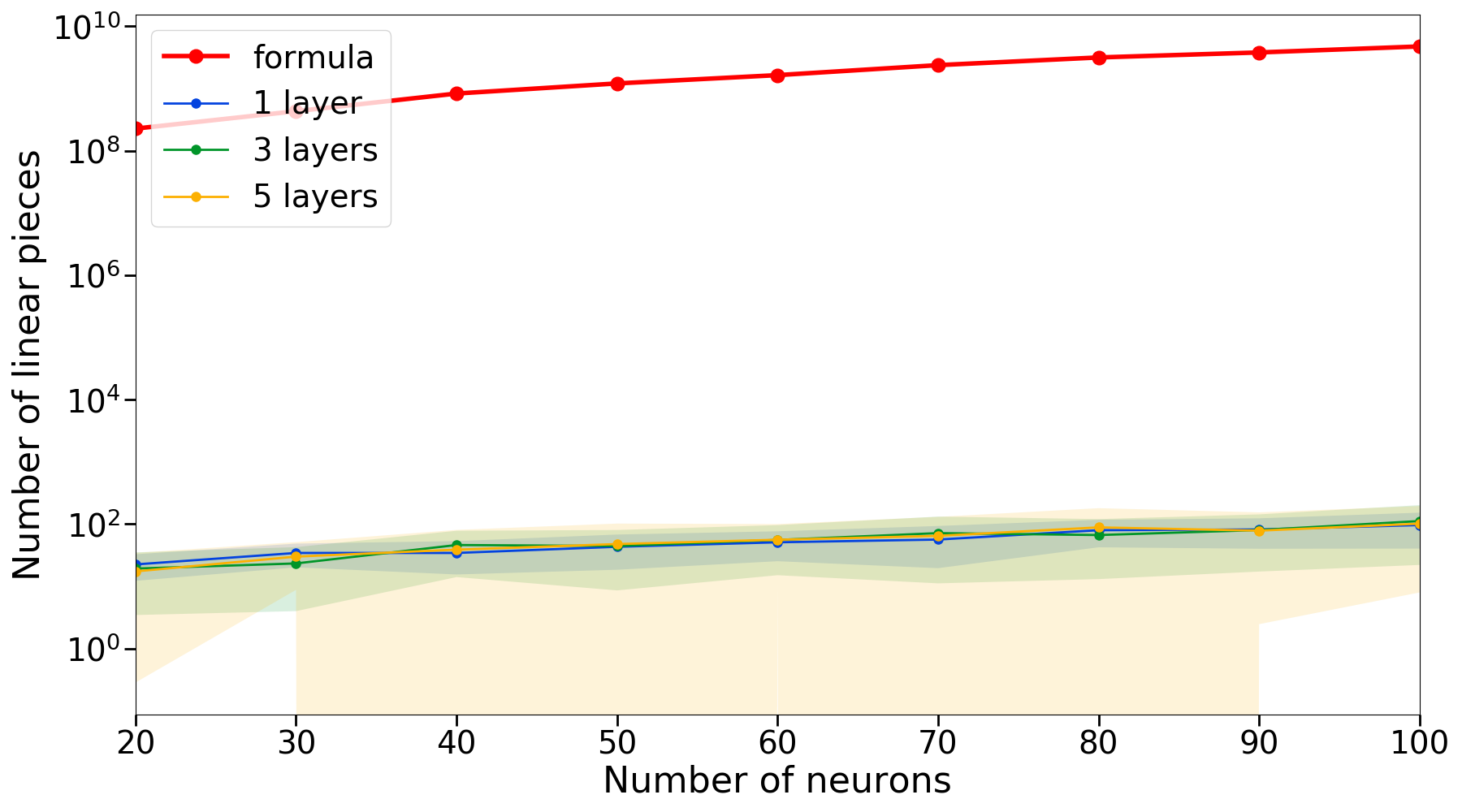}
        \end{tabular}
        \caption{\small Number of linear pieces in the decision boundary for a network with maxout-He normal initialization.}
    \end{subfigure}
    \caption{Comparison to the formulas with and without the constants for the number of activation regions and linear pieces in the decision boundary from Theorem \ref{th:main_result} and Theorem \ref{th:decision_boundary} respectively. Networks had $100$ units and maxout rank $K = 2$. The settings are similar to those in Figure~\ref{fig:formula}.}
    \label{fig:full_formula} 
\end{figure}

\begin{figure}
    \begin{subfigure}{\textwidth}
        \setlength\tabcolsep{2pt}
        \begin{tabular}{ccc}
            \centering
            \small{Formula without $K$ and constants} &
            \small{Formula with $K$ without constants} &
            \small{Full formula} \\
            
            \includegraphics[width=0.32\textwidth]{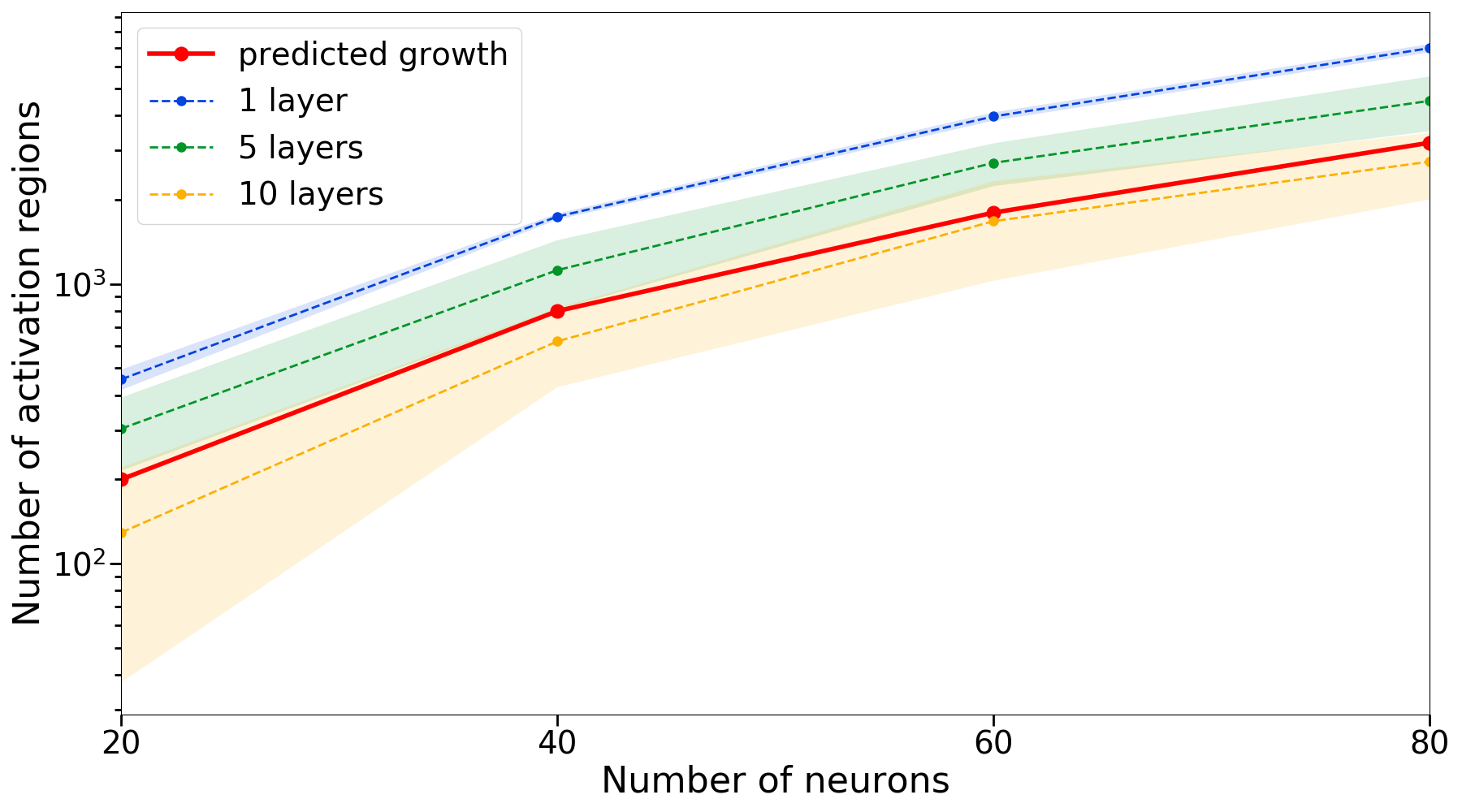} &
            \includegraphics[width=0.32\textwidth]{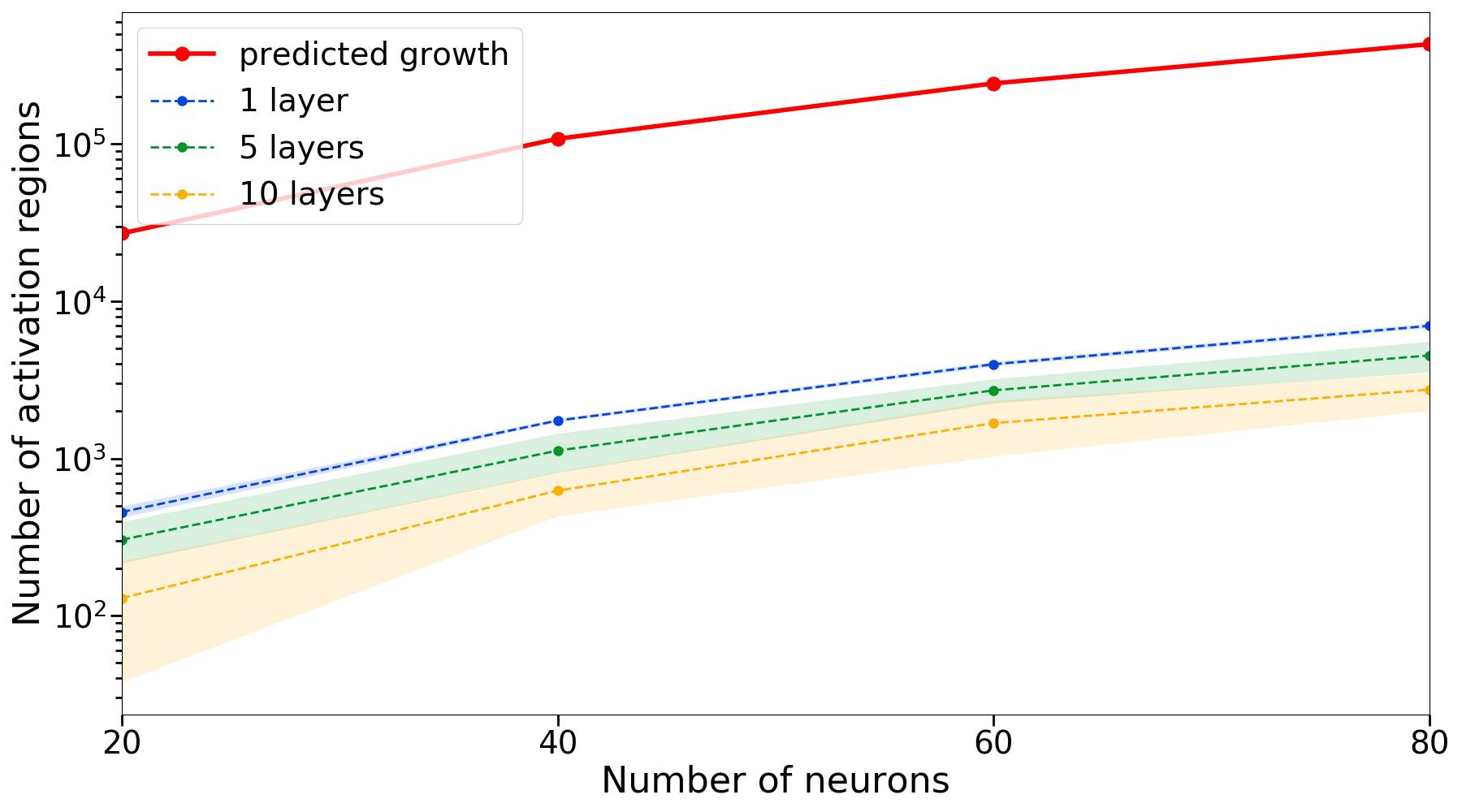} &\includegraphics[width=0.32\textwidth]{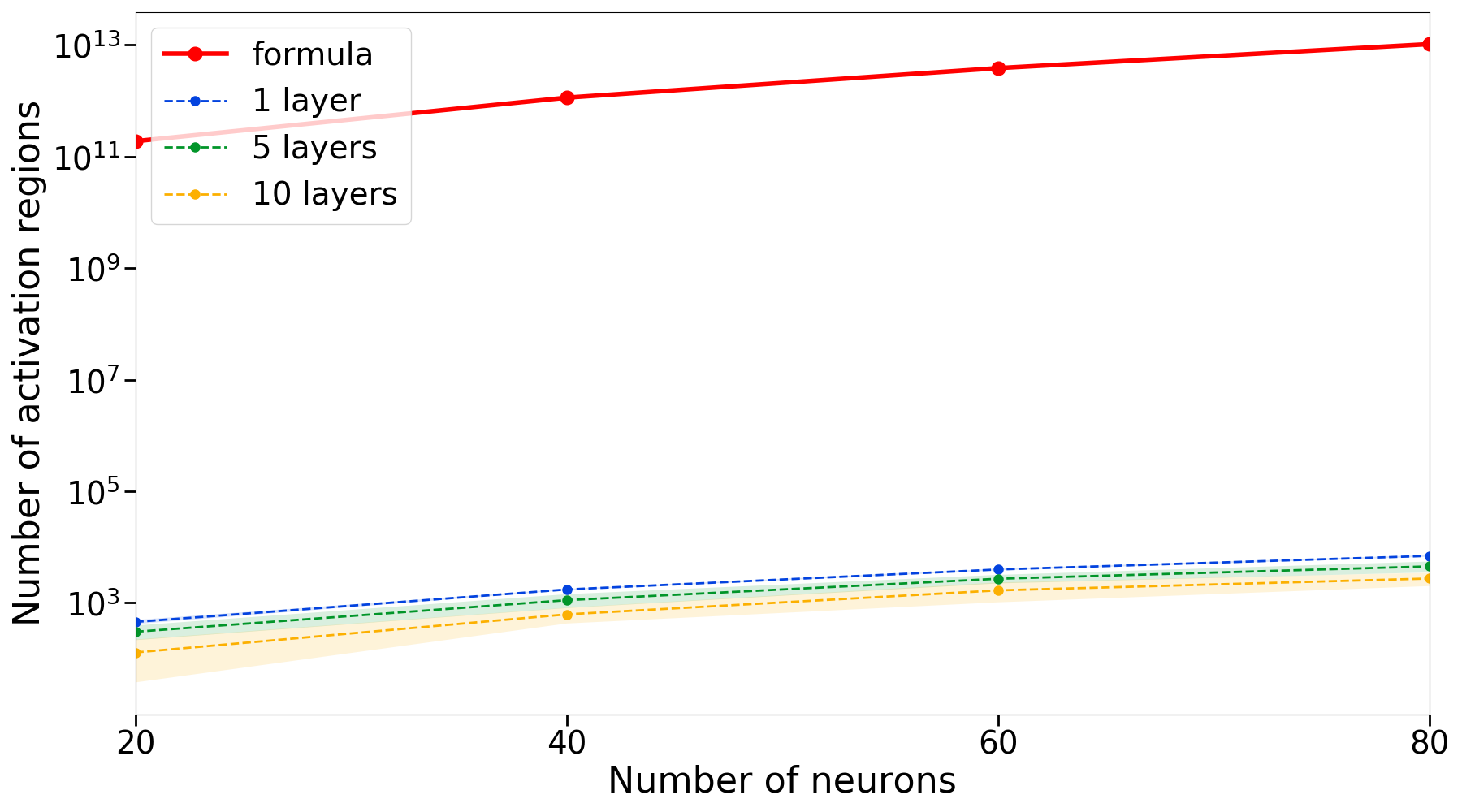}
        \end{tabular}
        \caption{\small $K = 3$.}
    \end{subfigure}
    \vspace{.2cm}
    
    \begin{subfigure}{\textwidth}
        \setlength\tabcolsep{2pt}
        \begin{tabular}{ccc}
            \centering
            \small{Formula without $K$ and constants} &
            \small{Formula with $K$ without constants} &
            \small{Full formula} \\
            \includegraphics[width=0.32\textwidth]{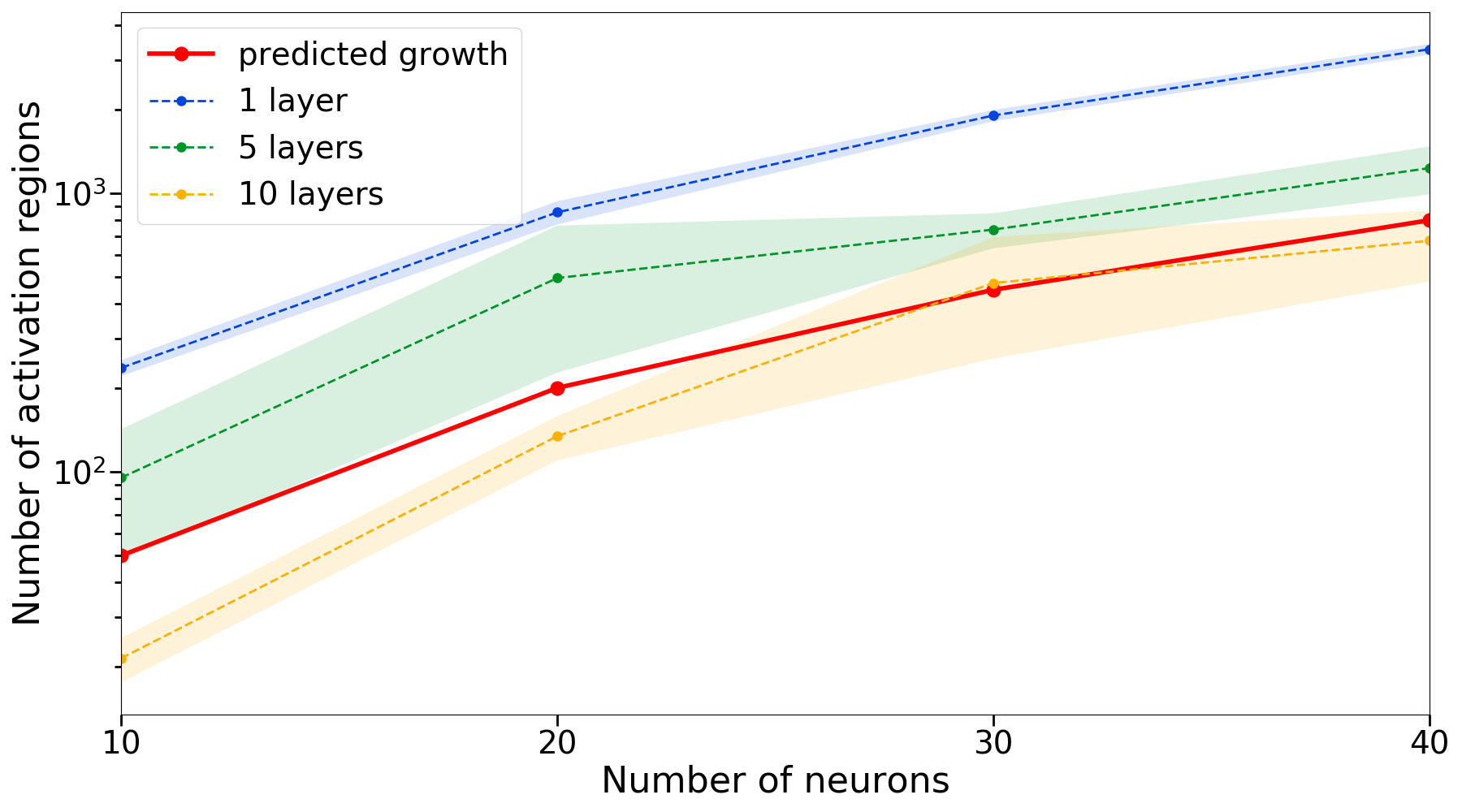} &
            \includegraphics[width=0.32\textwidth]{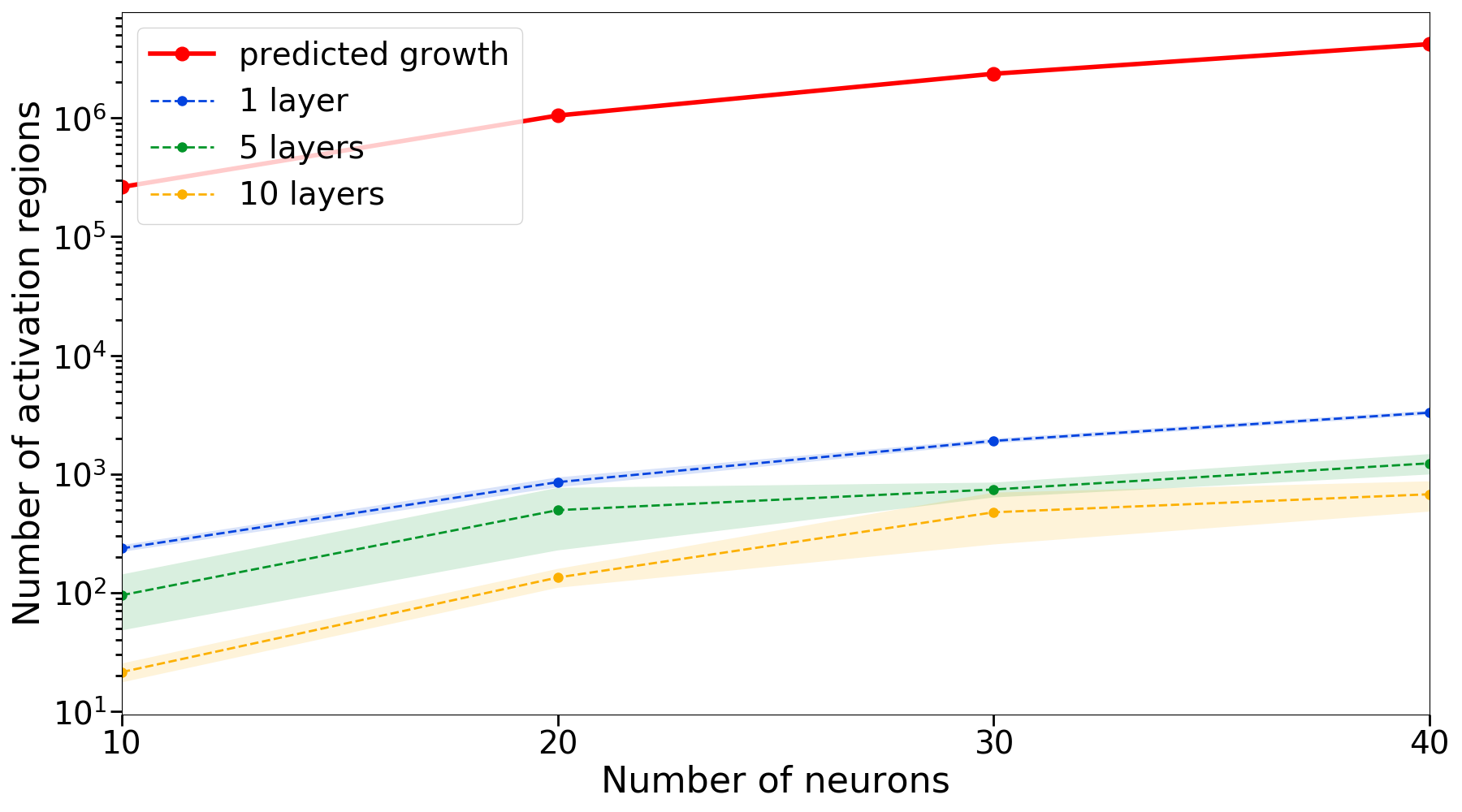} &
            \includegraphics[width=0.32\textwidth]{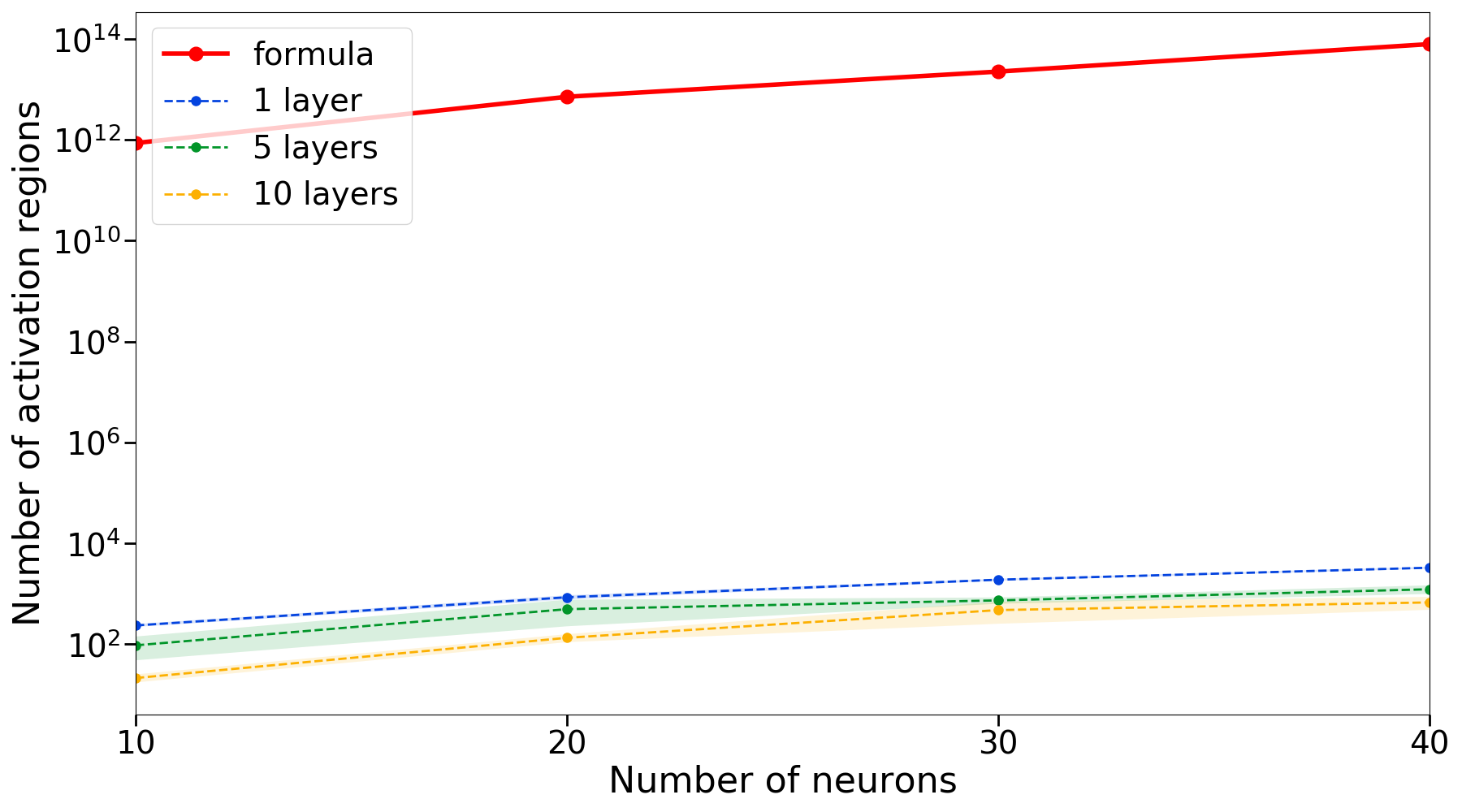}
        \end{tabular}
        \caption{\small $K = 5$.}
    \end{subfigure}
    
    \caption{Comparison to the formula from Theorem~\ref{th:main_result} for maxout ranks $K = 3$ and $K = 5$. The networks were initialized with maxout-He normal initialization.
    We observe the increase in the number of activation regions as the maxout rank increases, and for networks with higher maxout rank deeper networks tend to have less regions than less deep networks with the same rank.}
    \label{fig:formula_different_K} 
\end{figure}

\begin{figure}
    \centering
    \begin{subfigure}{0.46\textwidth}
        \centering
        \includegraphics[trim=10 10 10 10, clip, width=.46\textwidth] {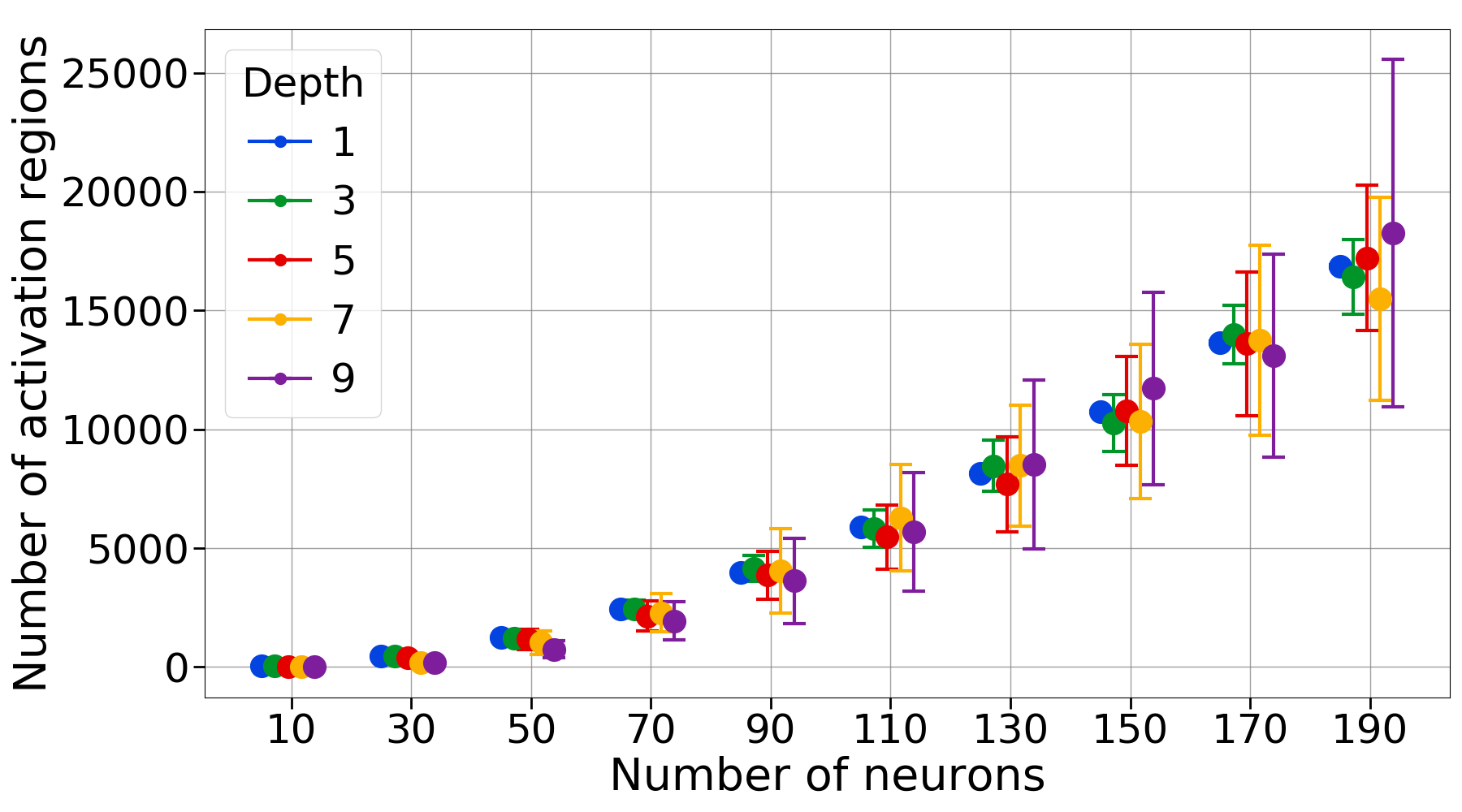}
        \includegraphics[trim=10 10 10 10, clip, width=.52\textwidth] {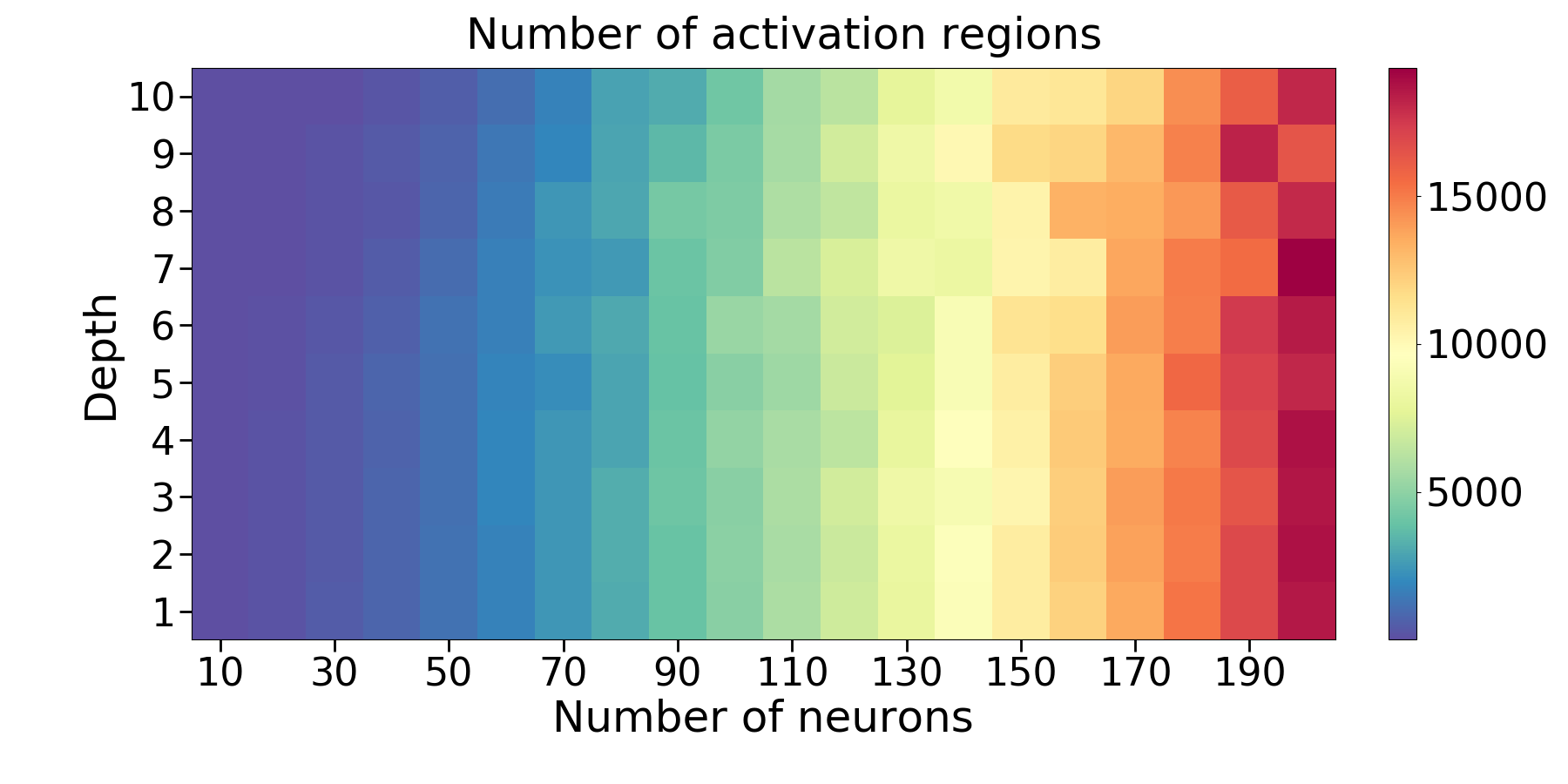}
        \caption{ReLU network with ReLU-He normal initialization.}
    \end{subfigure}
    \hspace{10pt}
    \begin{subfigure}{0.46\textwidth}
        \centering
        \includegraphics[trim=10 10 10 10, clip, width=.46\textwidth] {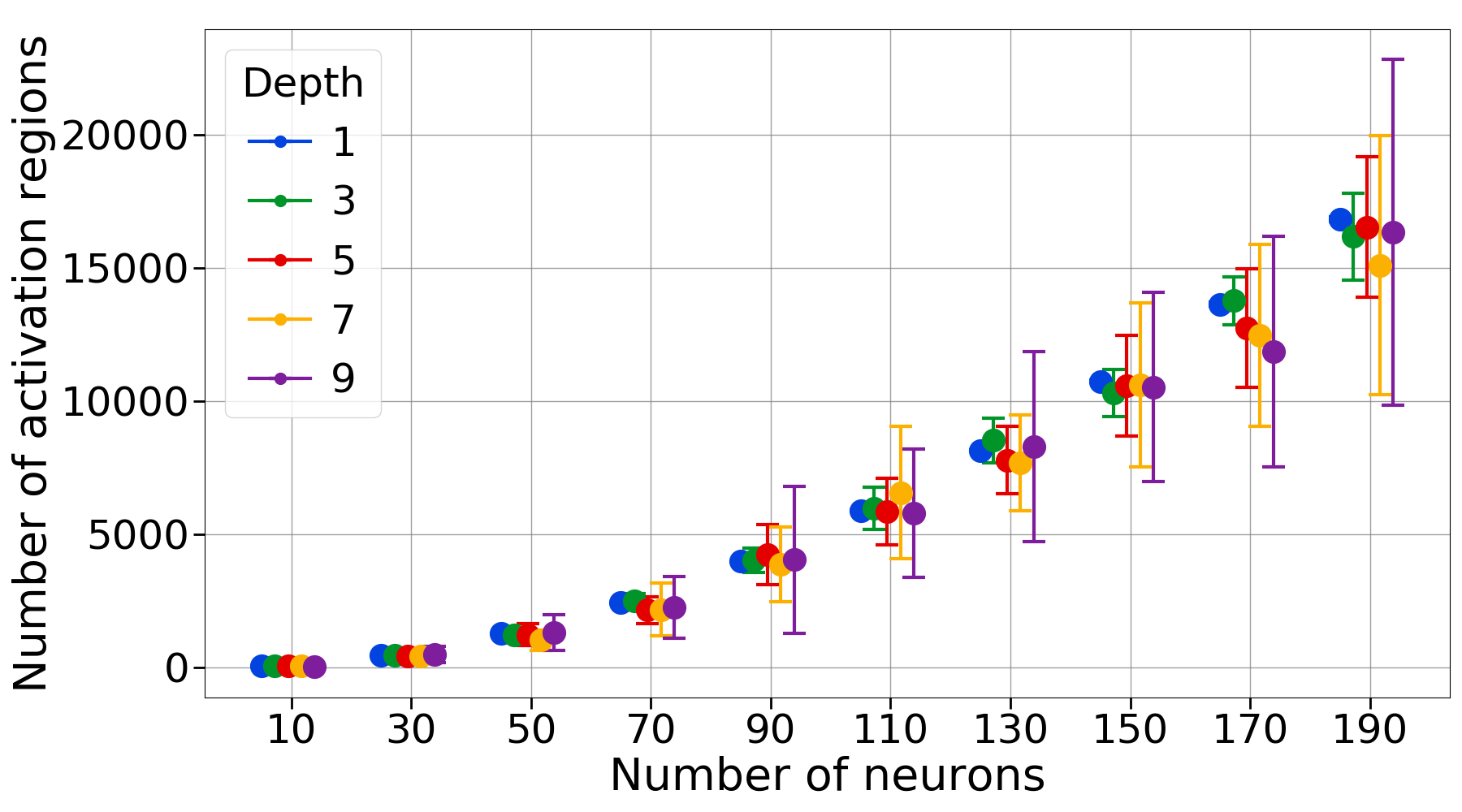}
        \includegraphics[trim=10 10 10 10, clip, width=.52\textwidth] {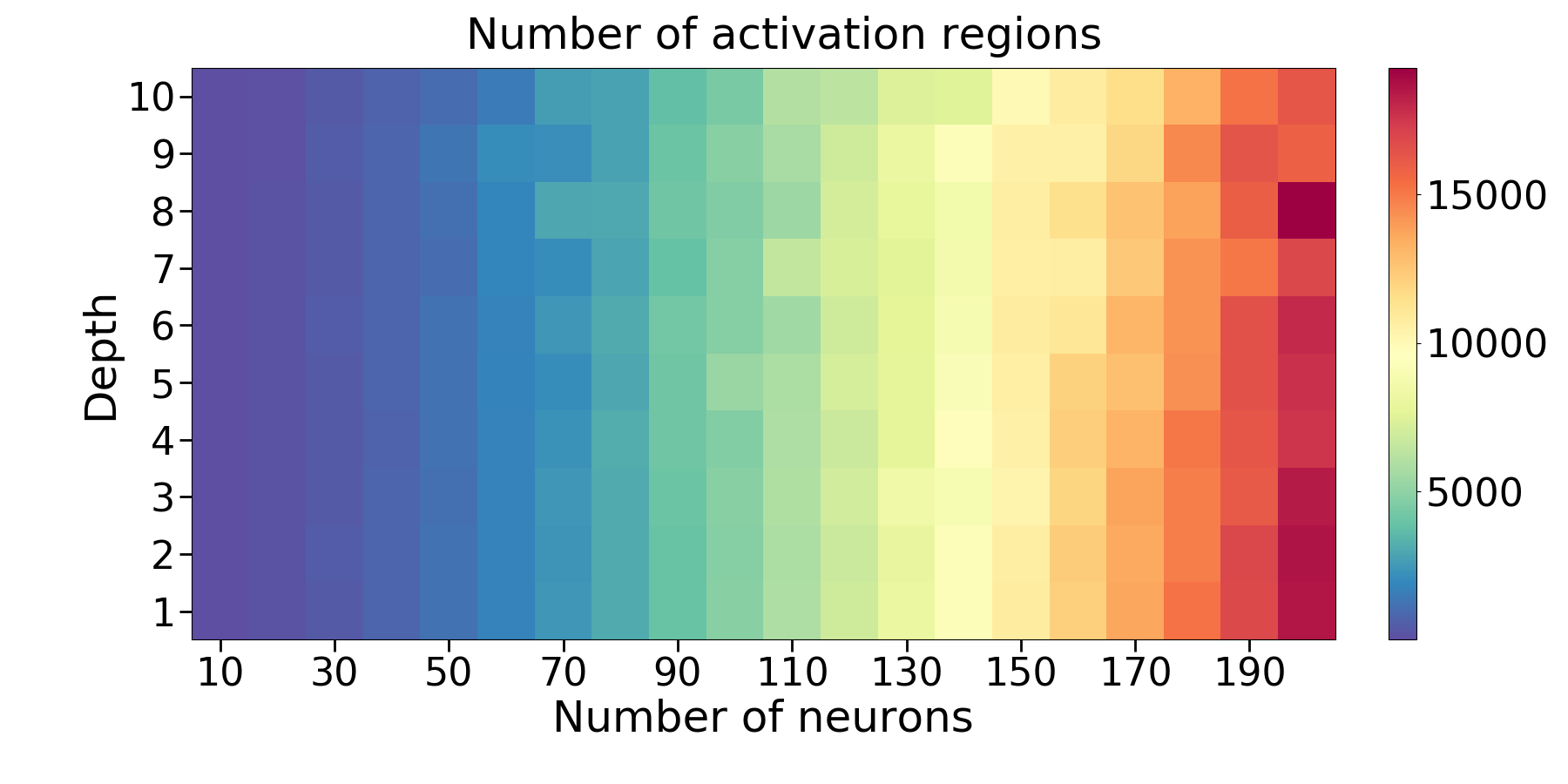}
        \caption{Maxout network with maxout rank $K = 2$ and ReLU-He normal initialization.}
    \end{subfigure}
    
    \begin{subfigure}{0.46\textwidth}
        \centering
        \includegraphics[trim=10 10 10 20, clip, width=.44\textwidth] {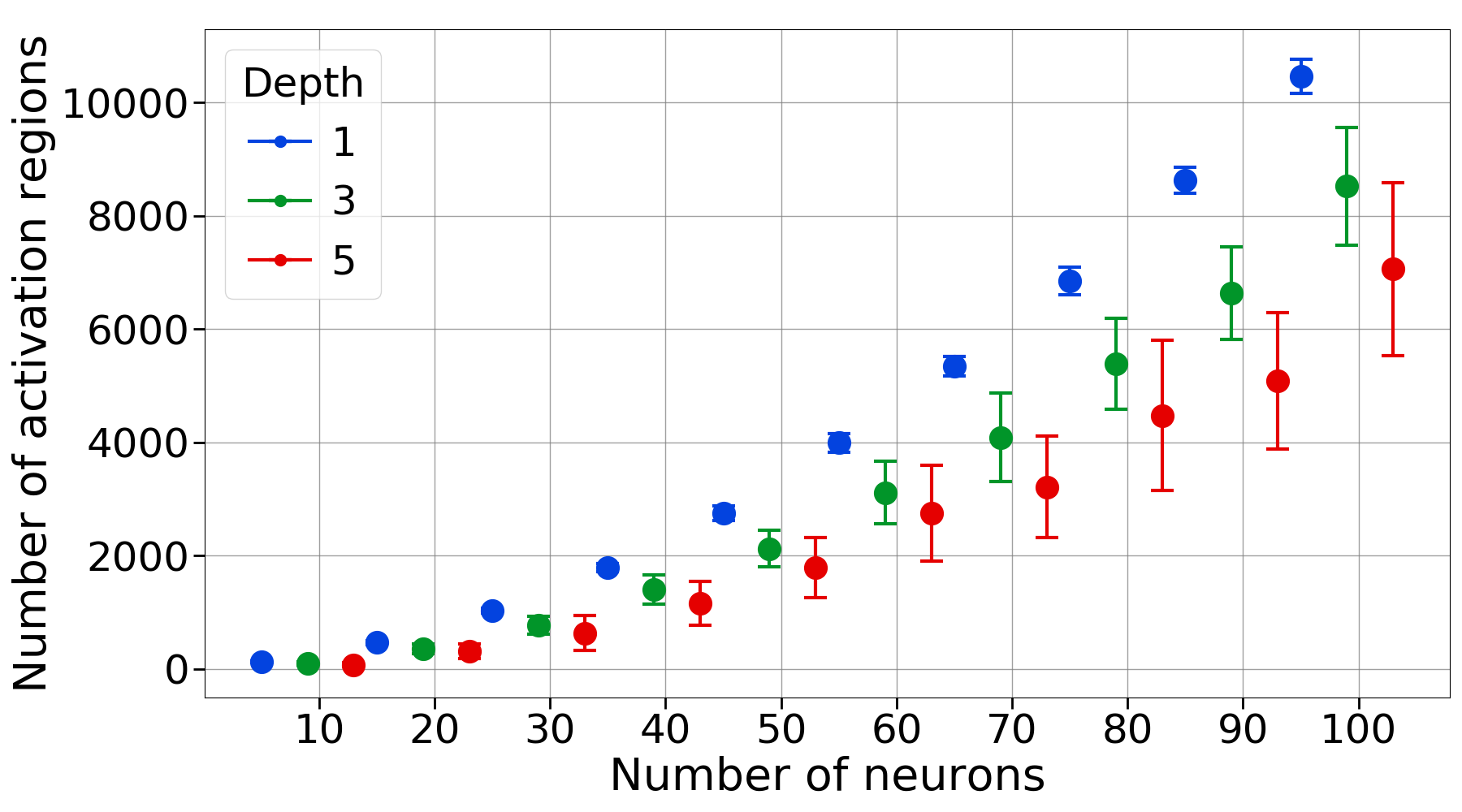}
        \includegraphics[trim=30 40 15 10, clip, width=.48\textwidth] {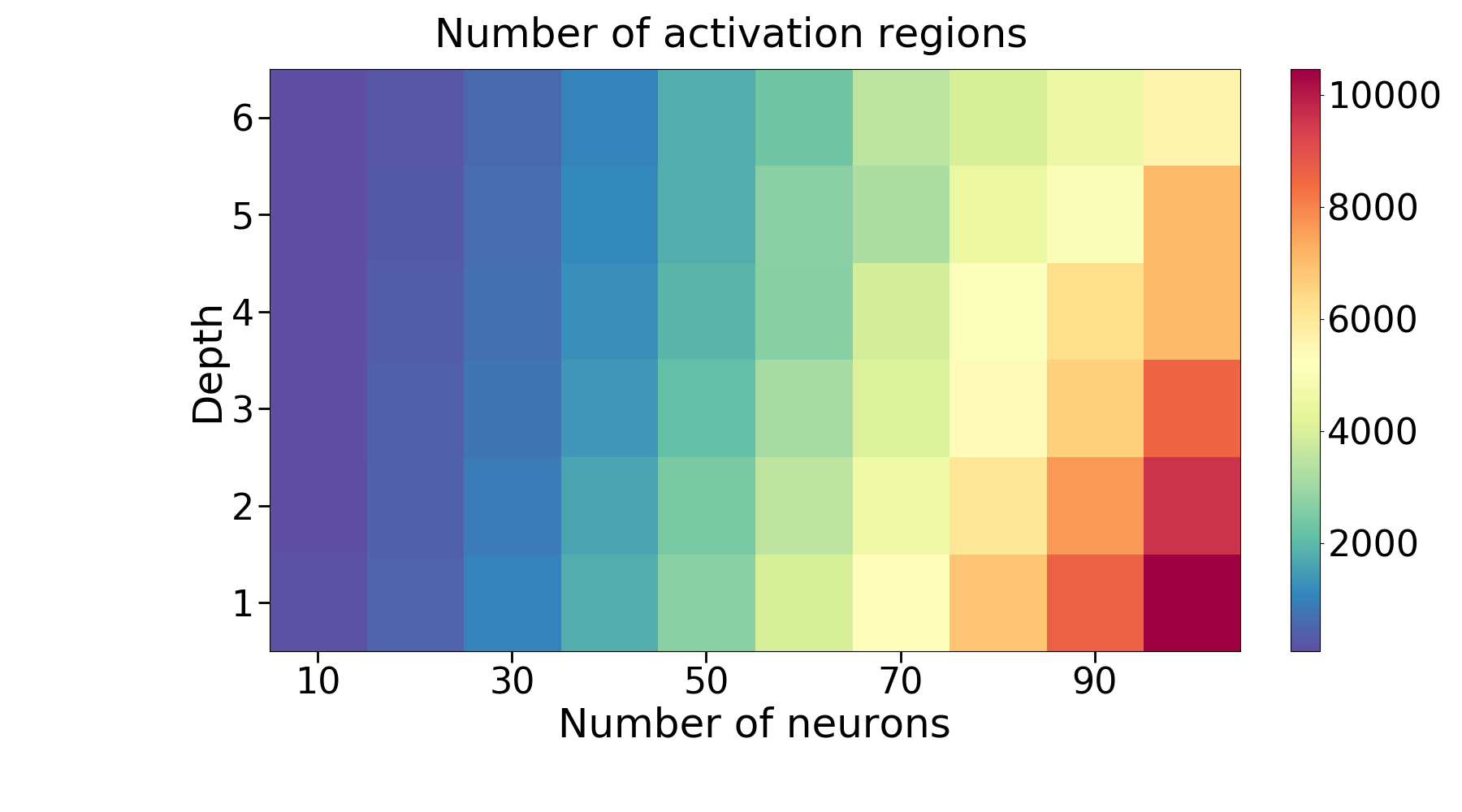}
        \caption{Maxout network with $K = 3$. Maxout-He normal initialization.}
    \end{subfigure}
    \hspace{10pt}
    \begin{subfigure}{0.46\textwidth}
        \centering
        \includegraphics[trim=10 10 10 20, clip, width=.44\textwidth] {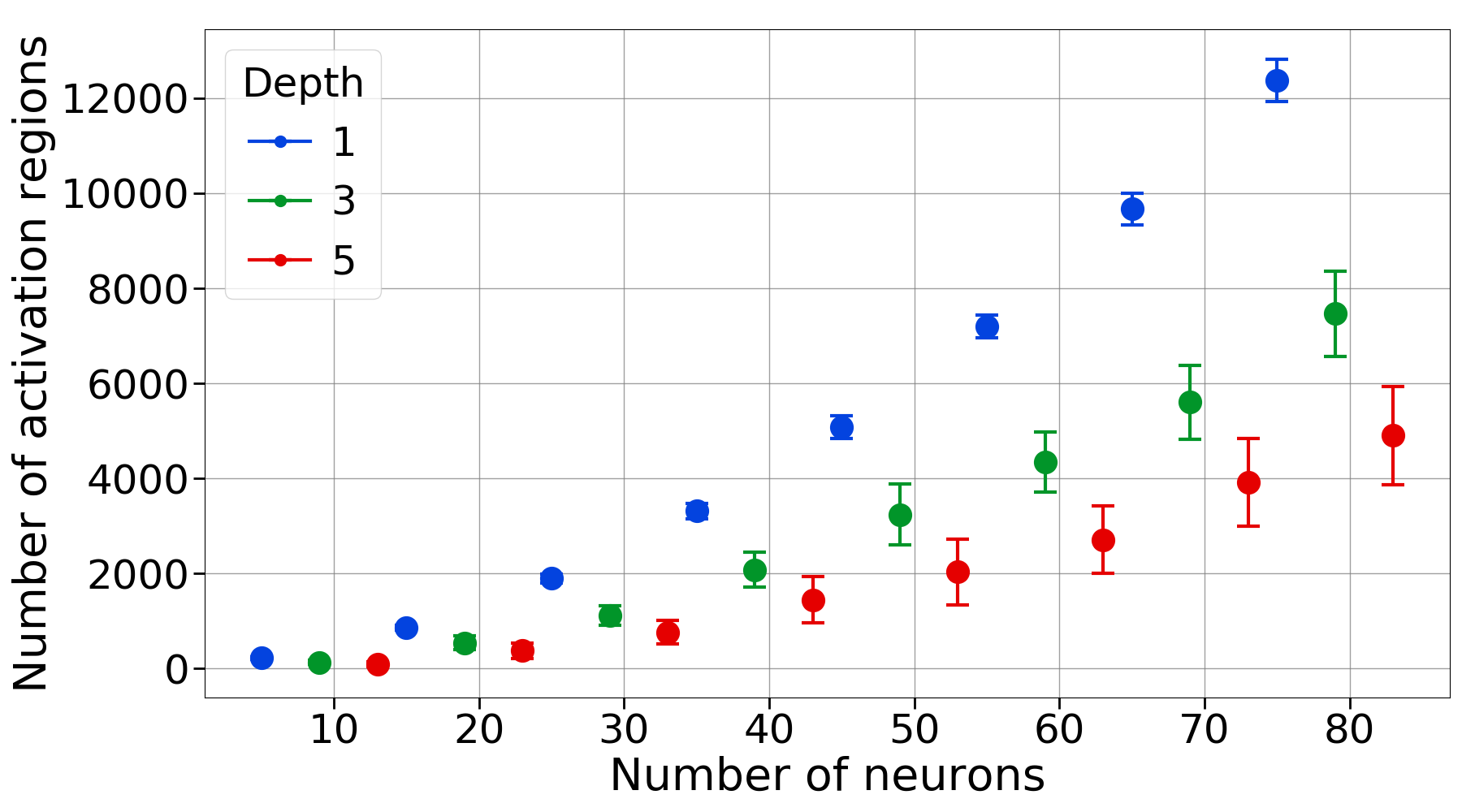}
        \includegraphics[trim=30 40 15 10, clip, width=.36\textwidth] {images/neurons_to_depth/f42a25d6537d431fa276af1075803f321.png}
        \caption{Maxout network with $K = 5$. Maxout-He normal initialization.}
    \end{subfigure}
    
    \caption{Difference between the effects of depth and number of neurons on the number of activation regions. These plots are additional to Figure \ref{fig:neurons_to_depth} and have similar settings.
    ReLU and maxout networks with $K = 2$ have a similar number of linear regions.
    For maxout rank $K > 2$ deeper networks tend to have less regions than less deep networks with the same rank. For $K = 3$ the gaps between different depths are smaller than for $K=5$.
    }
    \label{fig:neurons_to_depth2} 
\end{figure}

\begin{figure}
    \centering
    \begin{tabular}{cc}
        \centering
        \small{Loss} &
        \small{Accuracy} \\
        
        \includegraphics[width=0.4\textwidth]{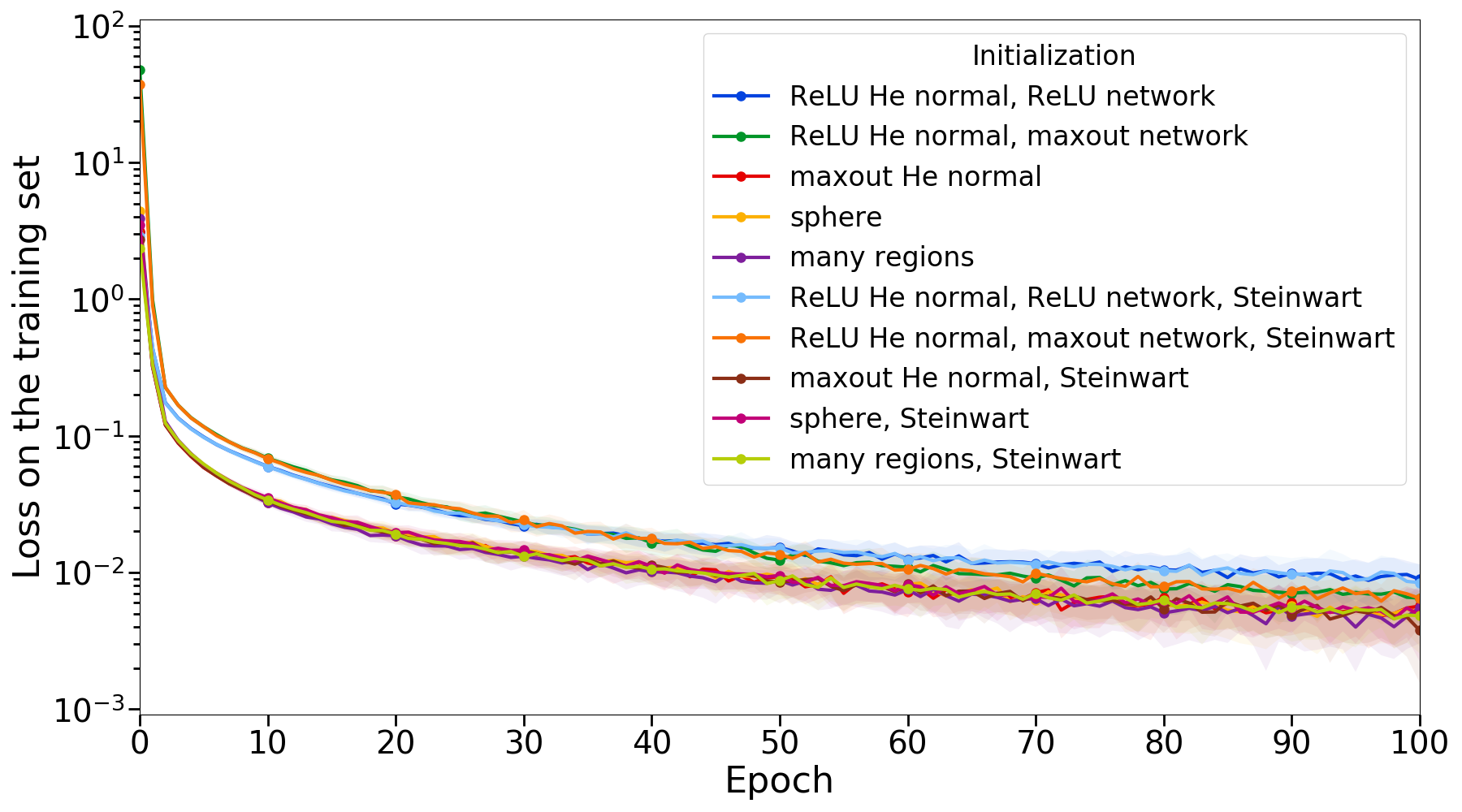} &
        \includegraphics[width=0.4\textwidth]{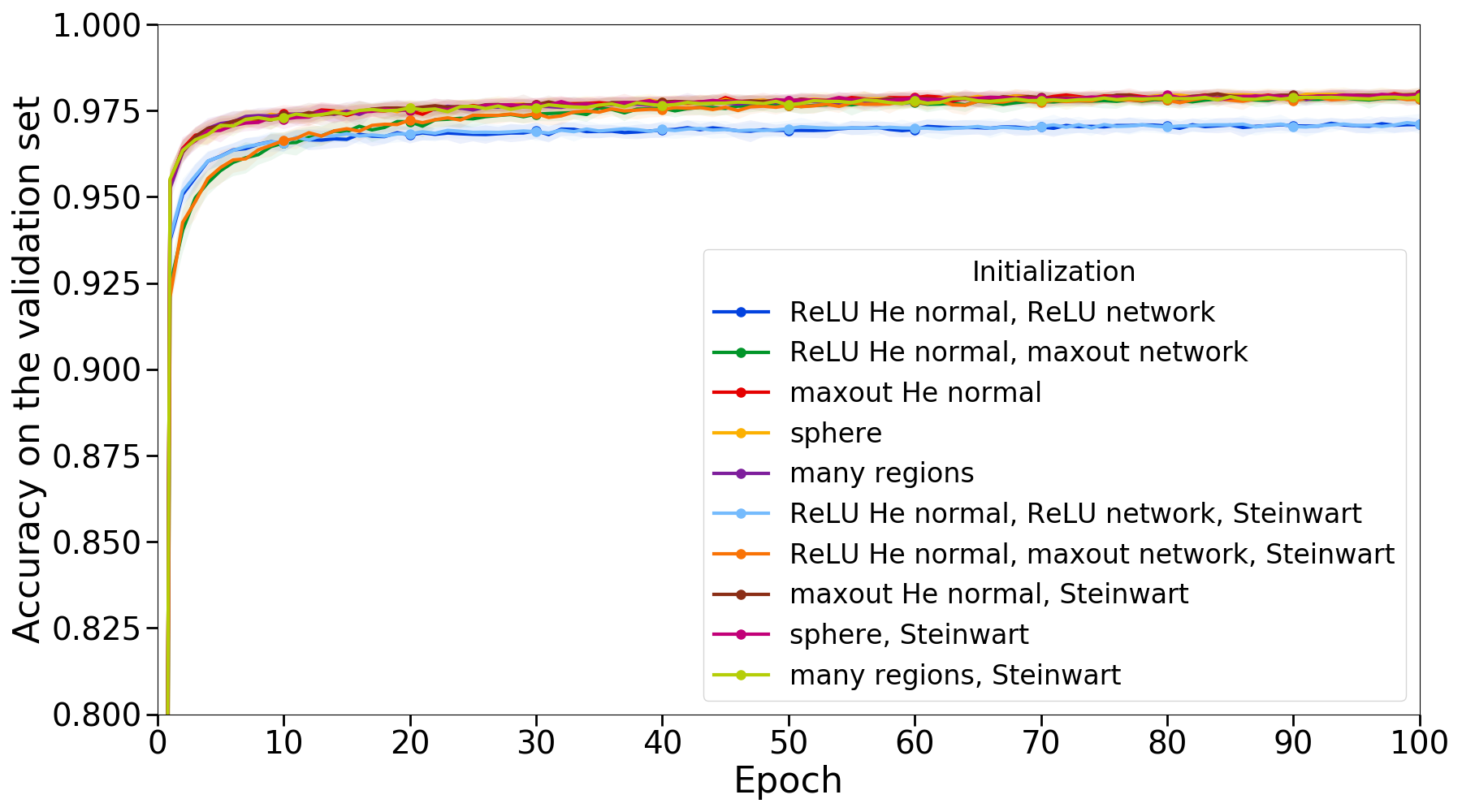}
    \end{tabular}
    \caption{Effect of the Steinwart initialization approach on the convergence speed during training on the MNIST dataset for a network with $200$ units and $5$ layers. Maxout rank was $K = 5$. 
    In this experiment, for various initialization procedures, the addition or omission of a random shift of the non-linear regions of different units led to similar training curves. }
    \label{fig:steinwart} 
\end{figure}

\begin{figure}

    \begin{subfigure}{\textwidth}
        \centering
        \setlength\tabcolsep{0pt}
        \begin{tabular}{ccc}
            \centering
            &
            \small{$K = 2$} &
            \small{$K = 5$} \\
            
            \begin{minipage}[c]{0.03\textwidth}
                \begin{flushright}
                    {\rotatebox{90}{\small $1$ layer}}
                \end{flushright}
            \end{minipage} &
            
            \begin{tabular}{cc}
                \centering
                \small{Loss} &
                \small{Accuracy} \\ \includegraphics[width=0.24\textwidth]{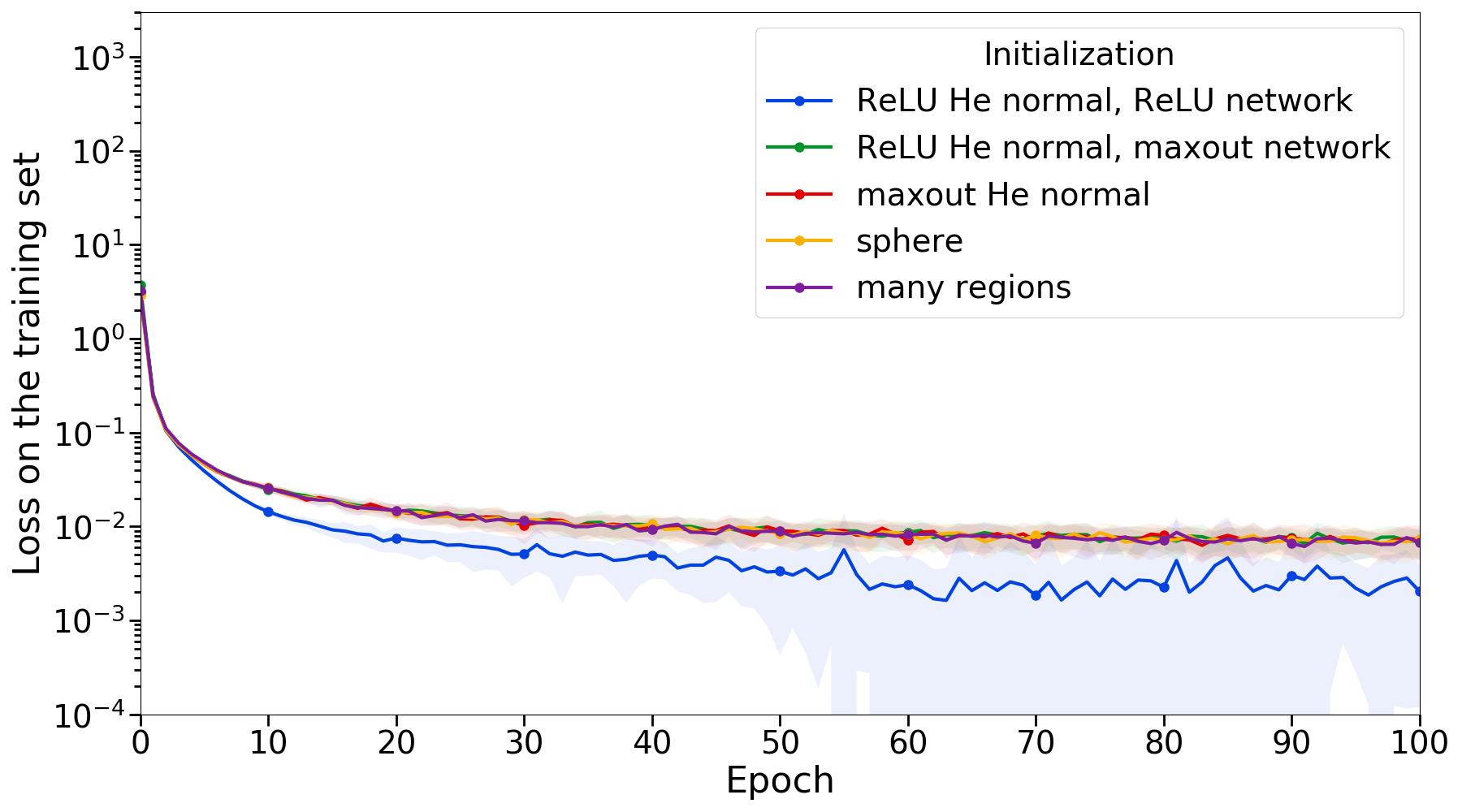} &
                \includegraphics[width=0.24\textwidth]{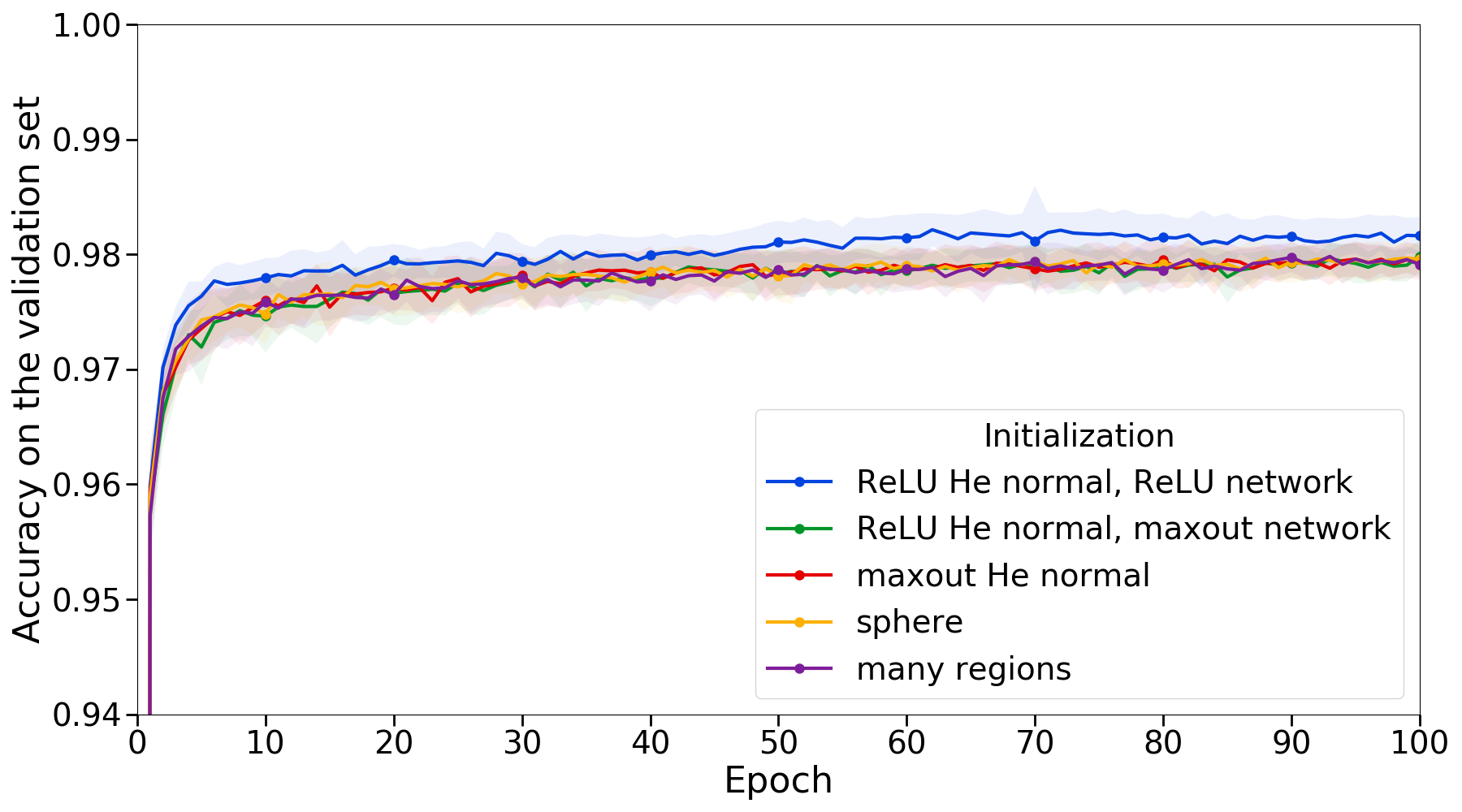}
            \end{tabular} &
            
            \begin{tabular}{cc}
                \centering
                \small{Loss} &
                \small{Accuracy} \\
                
                \includegraphics[width=0.24\textwidth]{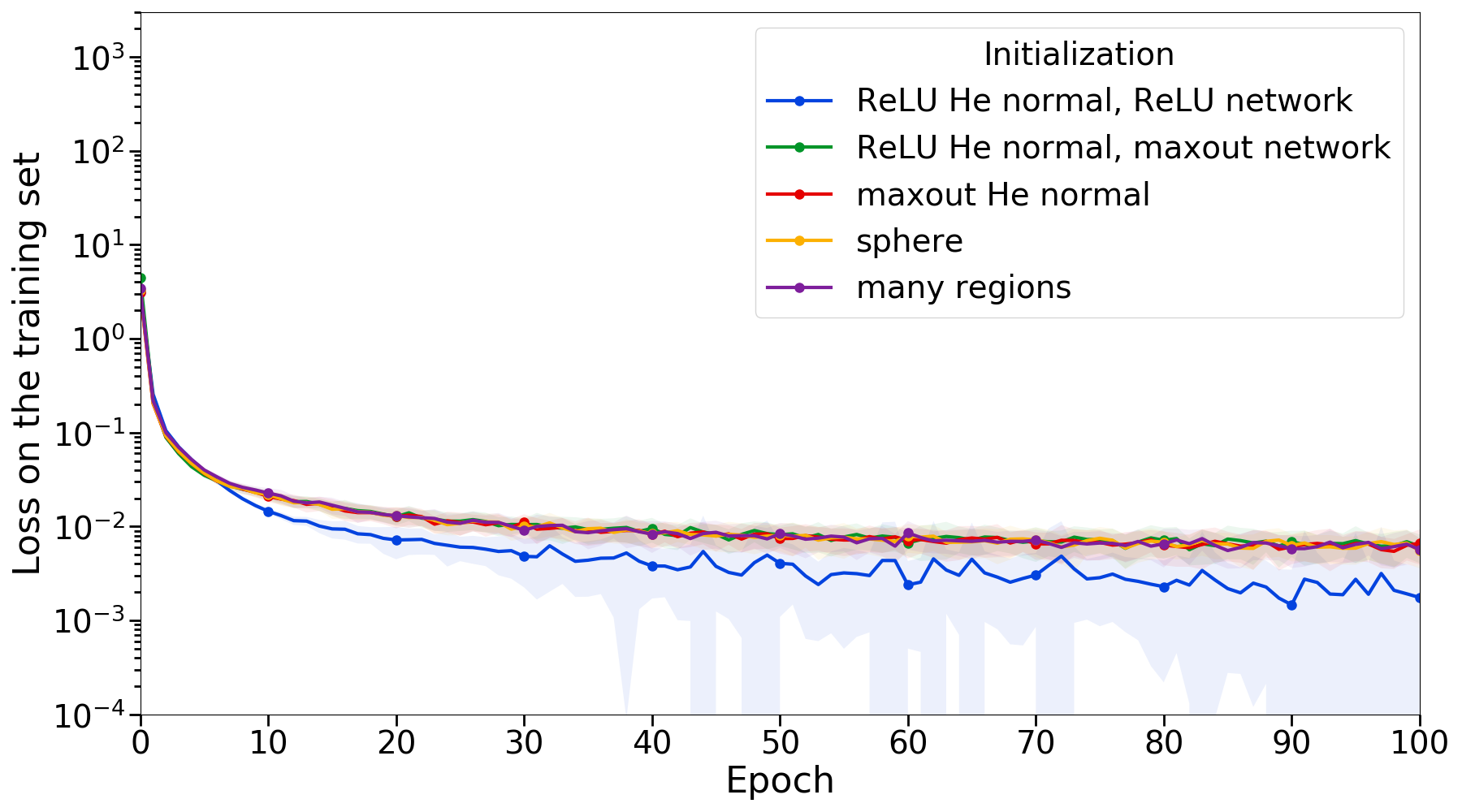} &
                \includegraphics[width=0.24\textwidth]{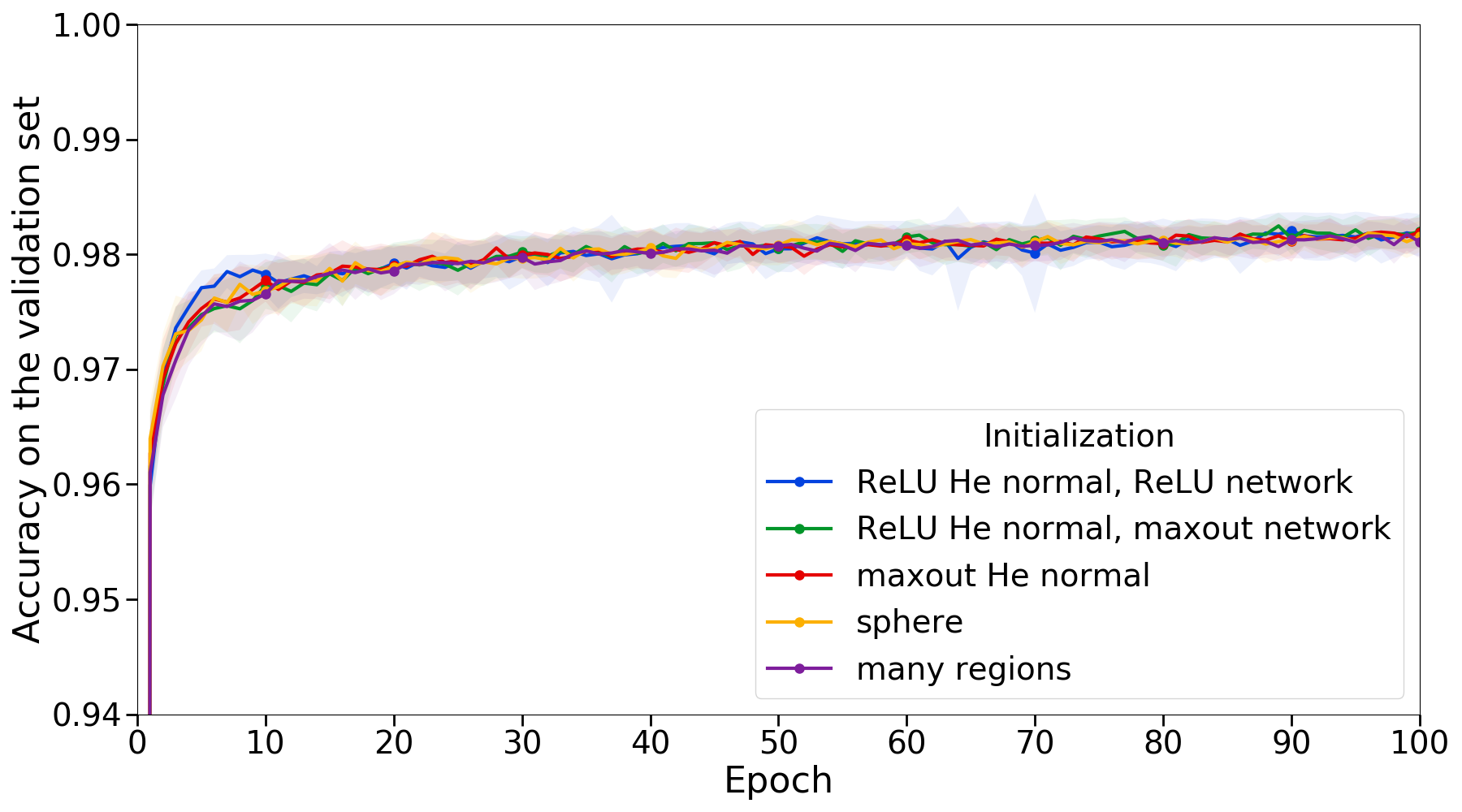}
            \end{tabular}
        \end{tabular}
    \end{subfigure}
    \vspace{.2cm}
    
    \begin{subfigure}{\textwidth}
        \centering
        \setlength\tabcolsep{0pt}
        \begin{tabular}{ccc}
            \centering
            
            \begin{minipage}[c]{0.03\textwidth}
                \begin{flushright}
                    {\rotatebox{90}{\small $3$ layers}}
                \end{flushright}
            \end{minipage} &
            
            \begin{tabular}{cc}
                \centering
                \includegraphics[width=0.24\textwidth]{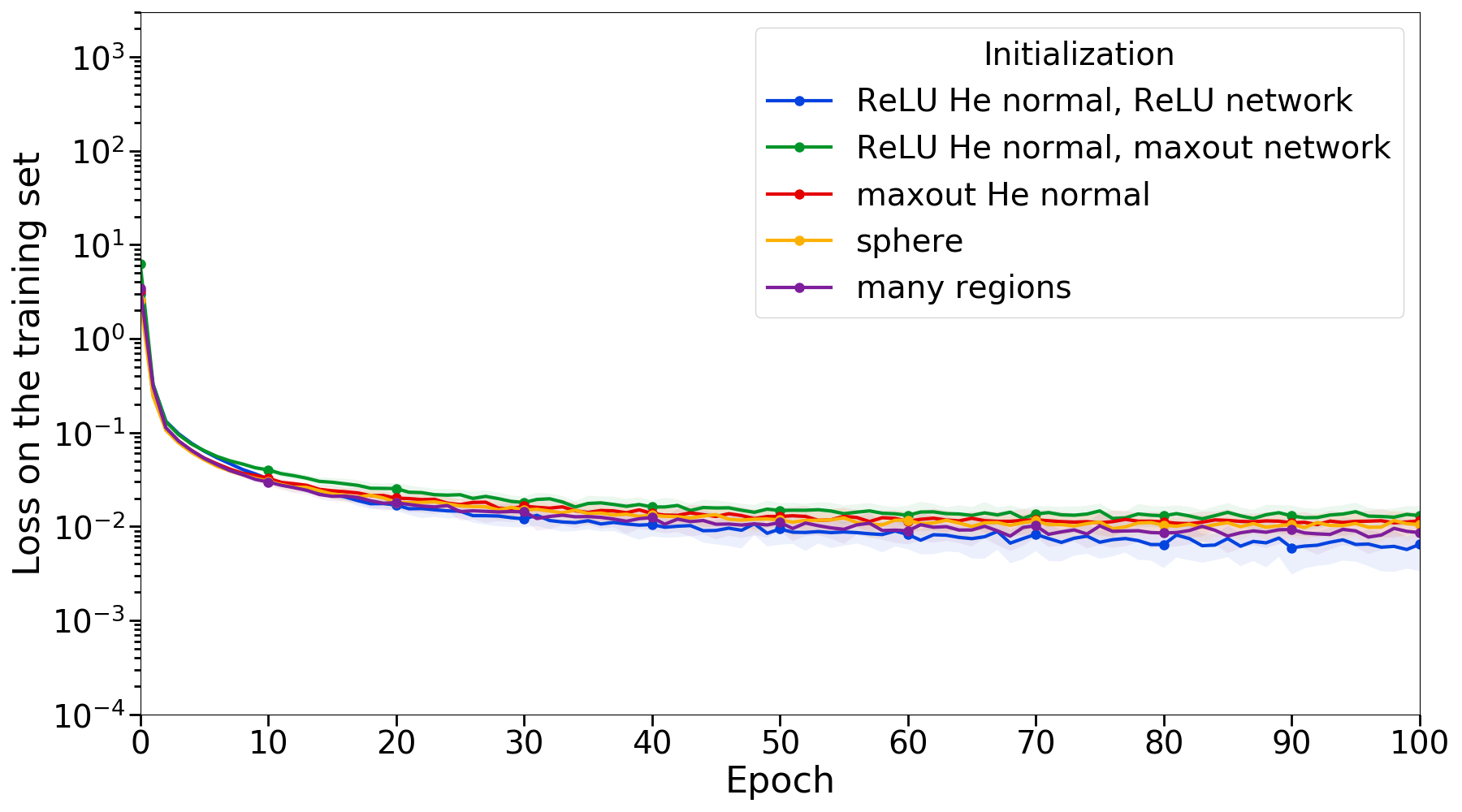} &
                \includegraphics[width=0.24\textwidth]{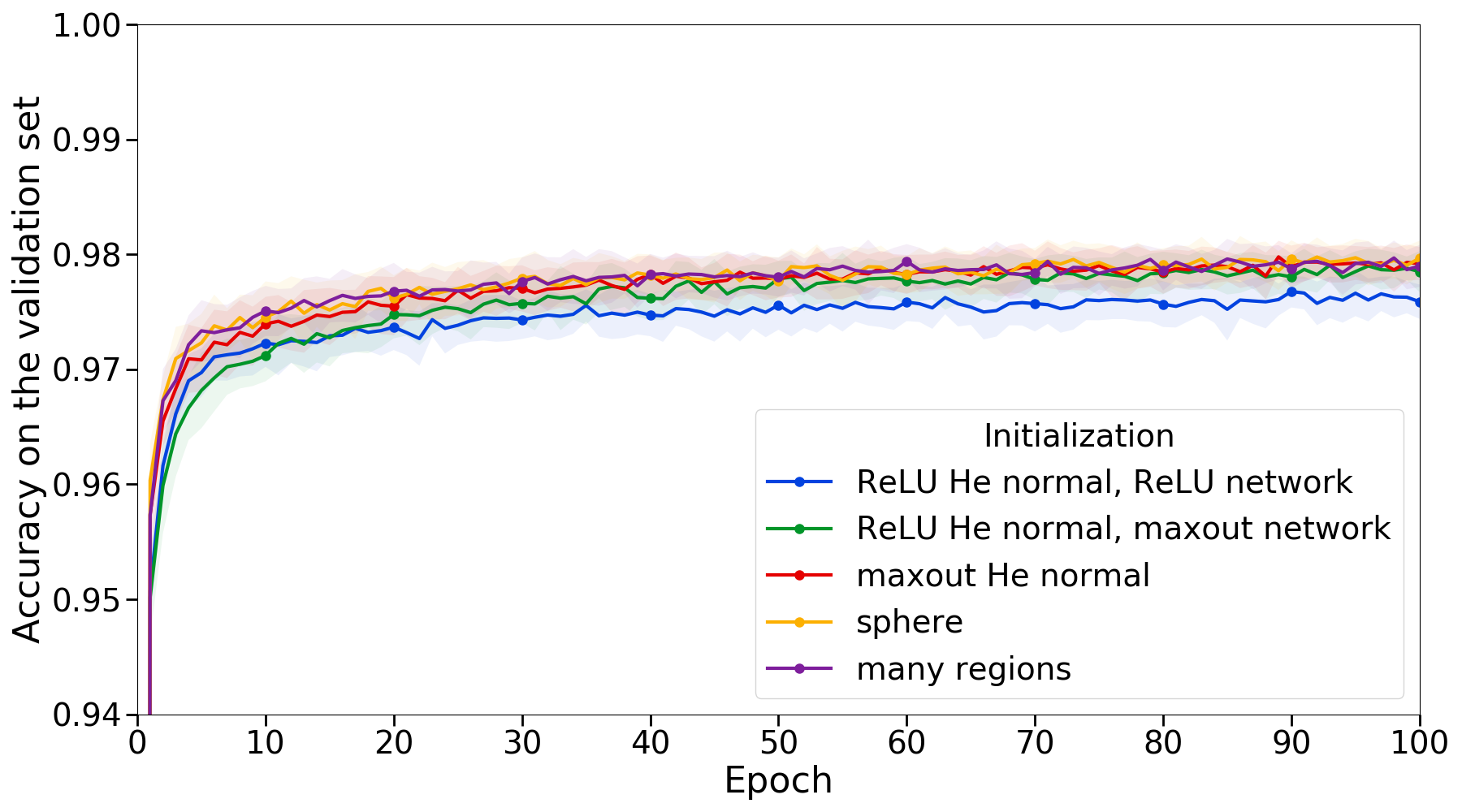}
            \end{tabular} &
            
            \begin{tabular}{cc}
                \centering
                \includegraphics[width=0.24\textwidth]{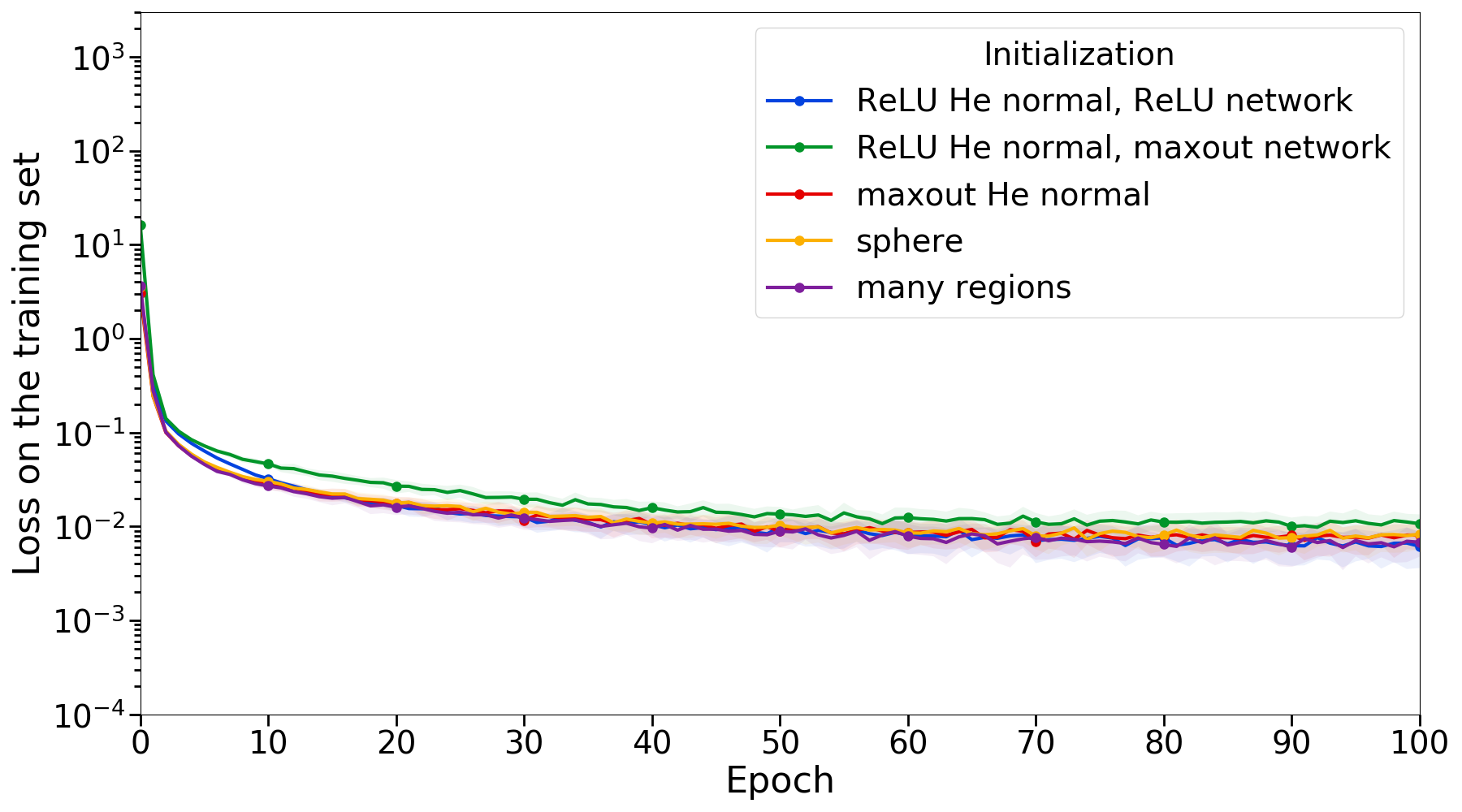} &
                \includegraphics[width=0.24\textwidth]{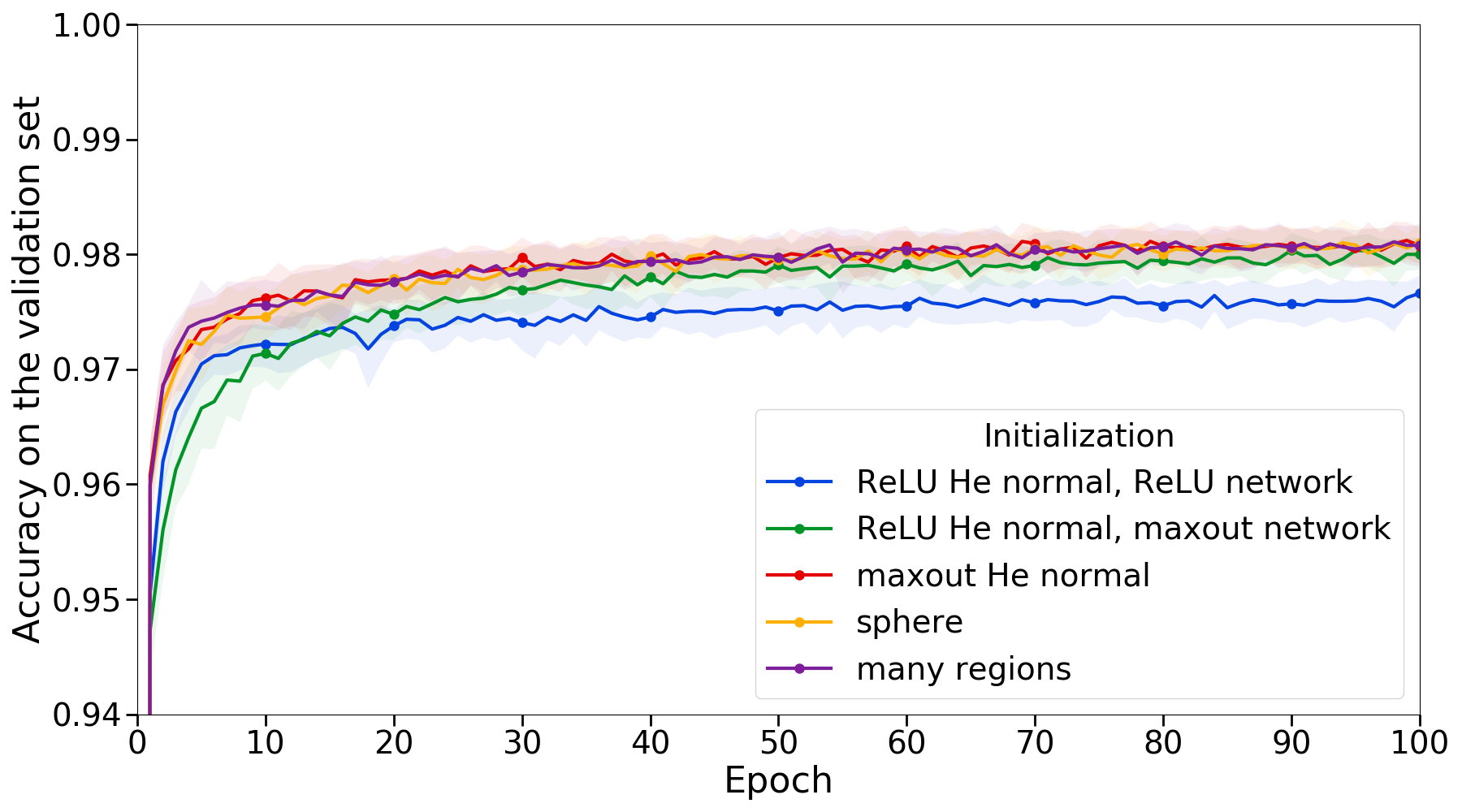}
            \end{tabular}
        \end{tabular}
    \end{subfigure}
    \vspace{.2cm}
    
    \begin{subfigure}{\textwidth}
        \centering
        \setlength\tabcolsep{0pt}
        \begin{tabular}{ccc}
            \centering
            
            \begin{minipage}[c]{0.03\textwidth}
                \begin{flushright}
                    {\rotatebox{90}{\small $5$ layers}}
                \end{flushright}
            \end{minipage} &
            
            \begin{tabular}{cc}
                \centering
                \includegraphics[width=0.24\textwidth]{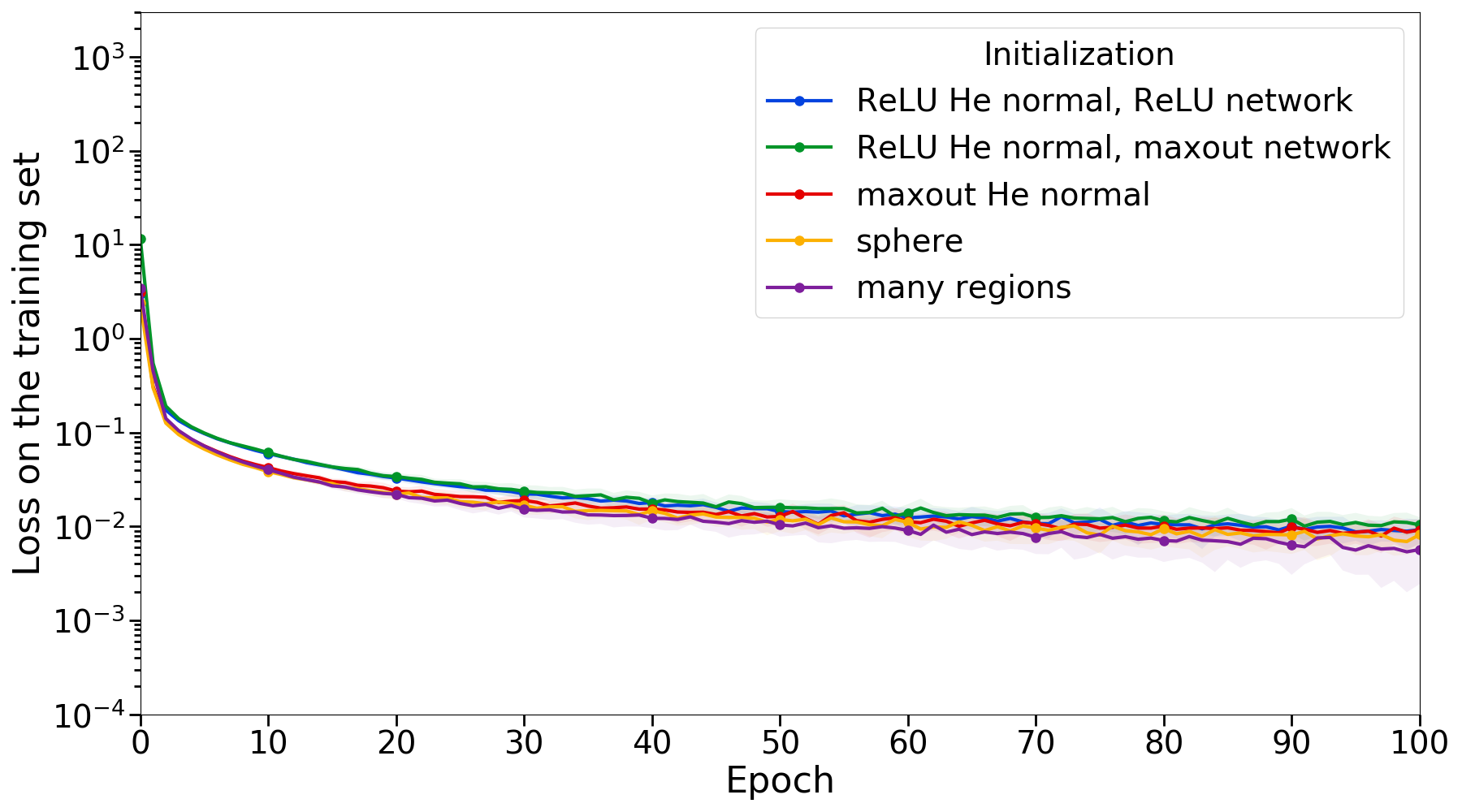} &
                \includegraphics[width=0.24\textwidth]{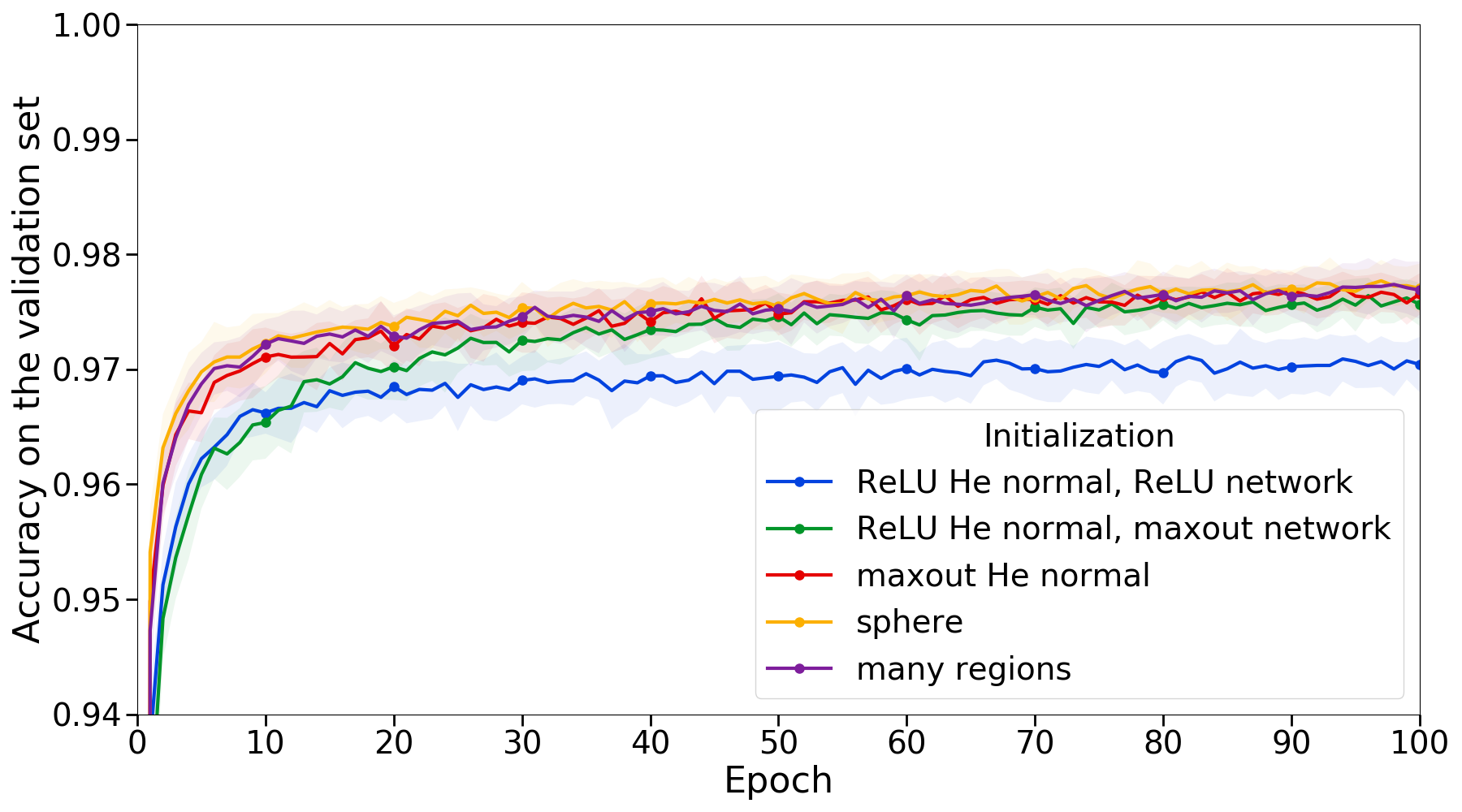}
            \end{tabular} &
            
            \begin{tabular}{cc}
                \centering
                \includegraphics[width=0.24\textwidth]{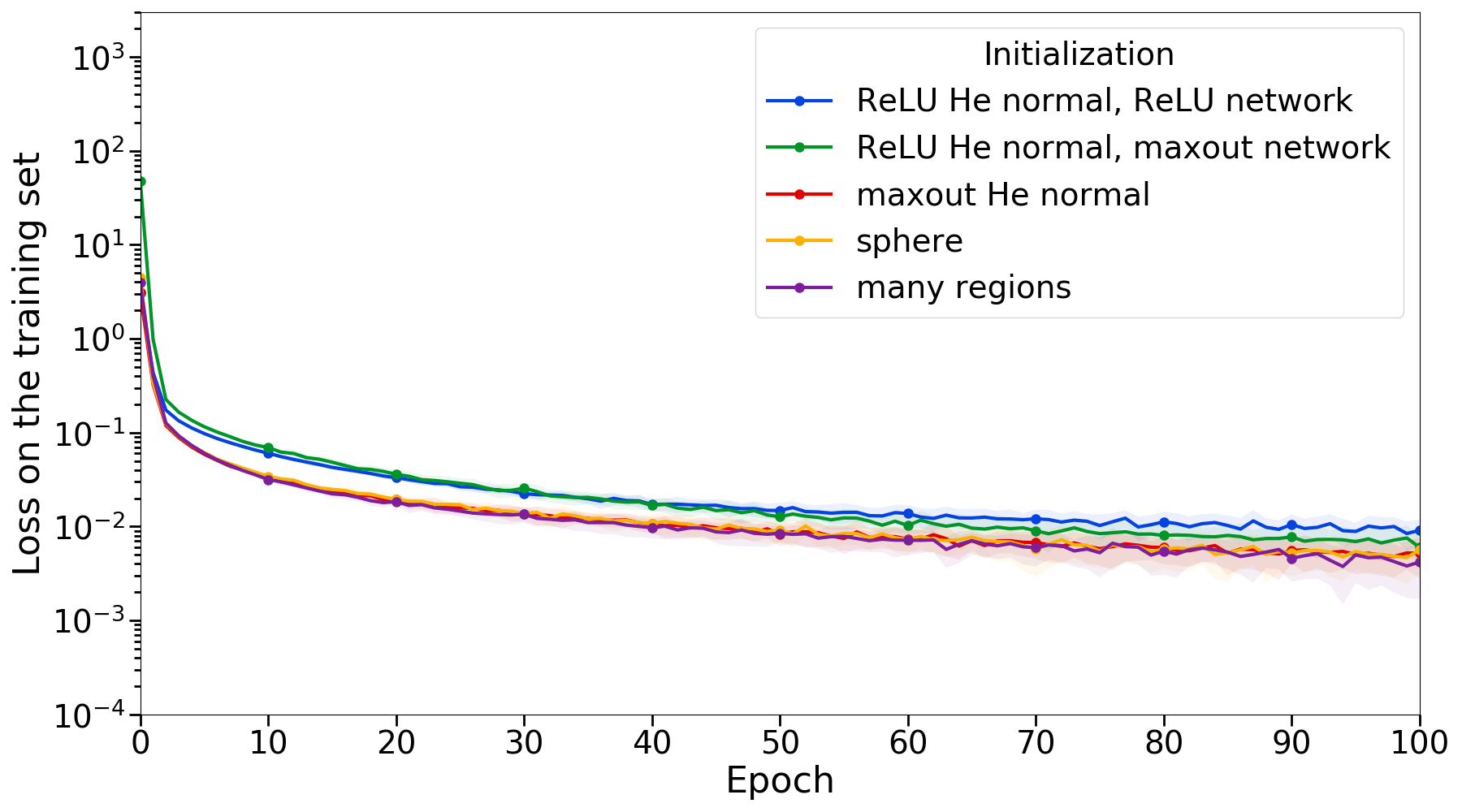} &
                \includegraphics[width=0.24\textwidth]{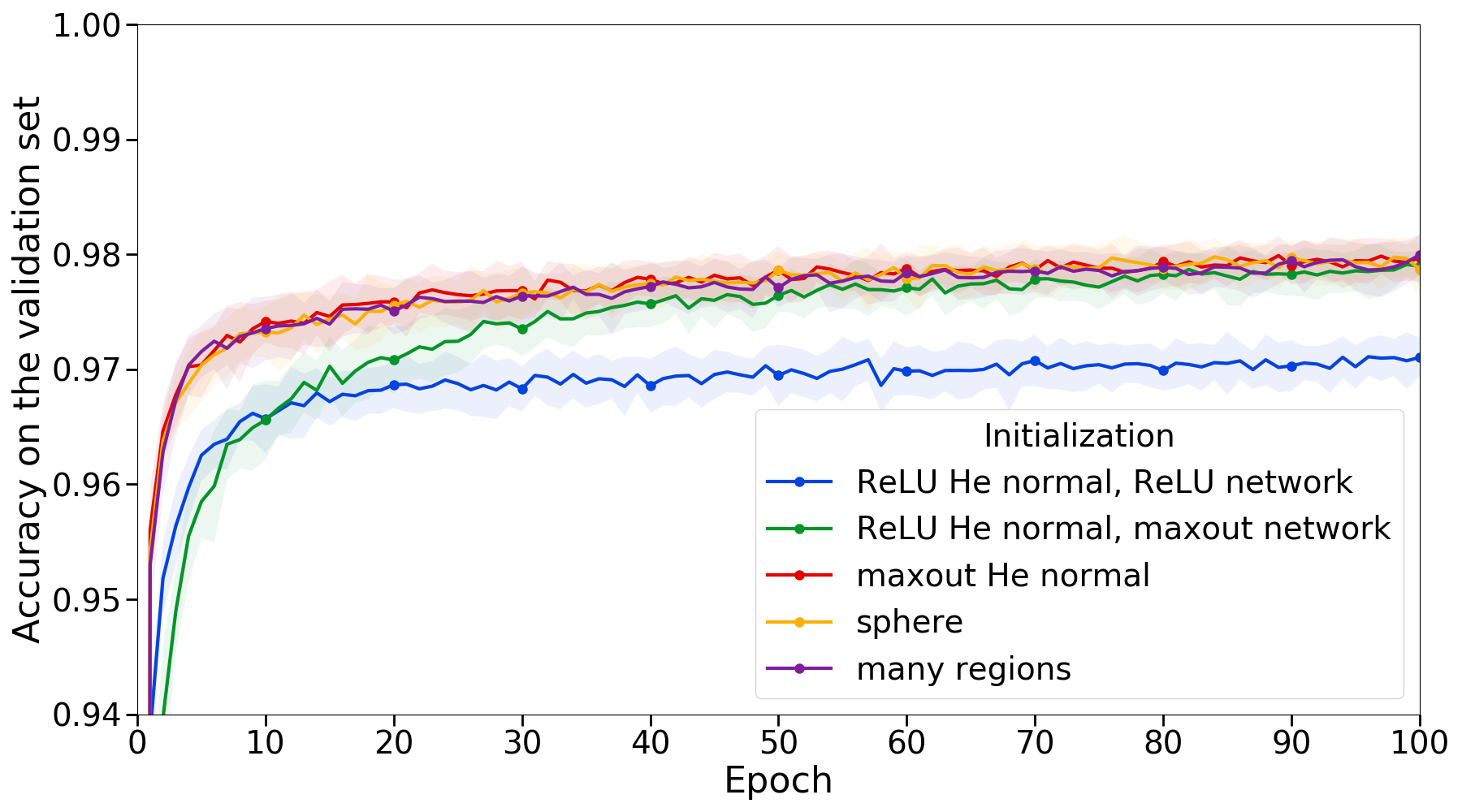}
            \end{tabular}
        \end{tabular}
    \end{subfigure}
    \vspace{.2cm}
    
    \begin{subfigure}{\textwidth}
        \centering
        \setlength\tabcolsep{0pt}
        \begin{tabular}{ccc}
            \centering
            
            \begin{minipage}[c]{0.03\textwidth}
                \begin{flushright}
                    {\rotatebox{90}{\small $10$ layers}}
                \end{flushright}
            \end{minipage} &
            
            \begin{tabular}{cc}
                \centering
                \includegraphics[width=0.24\textwidth]{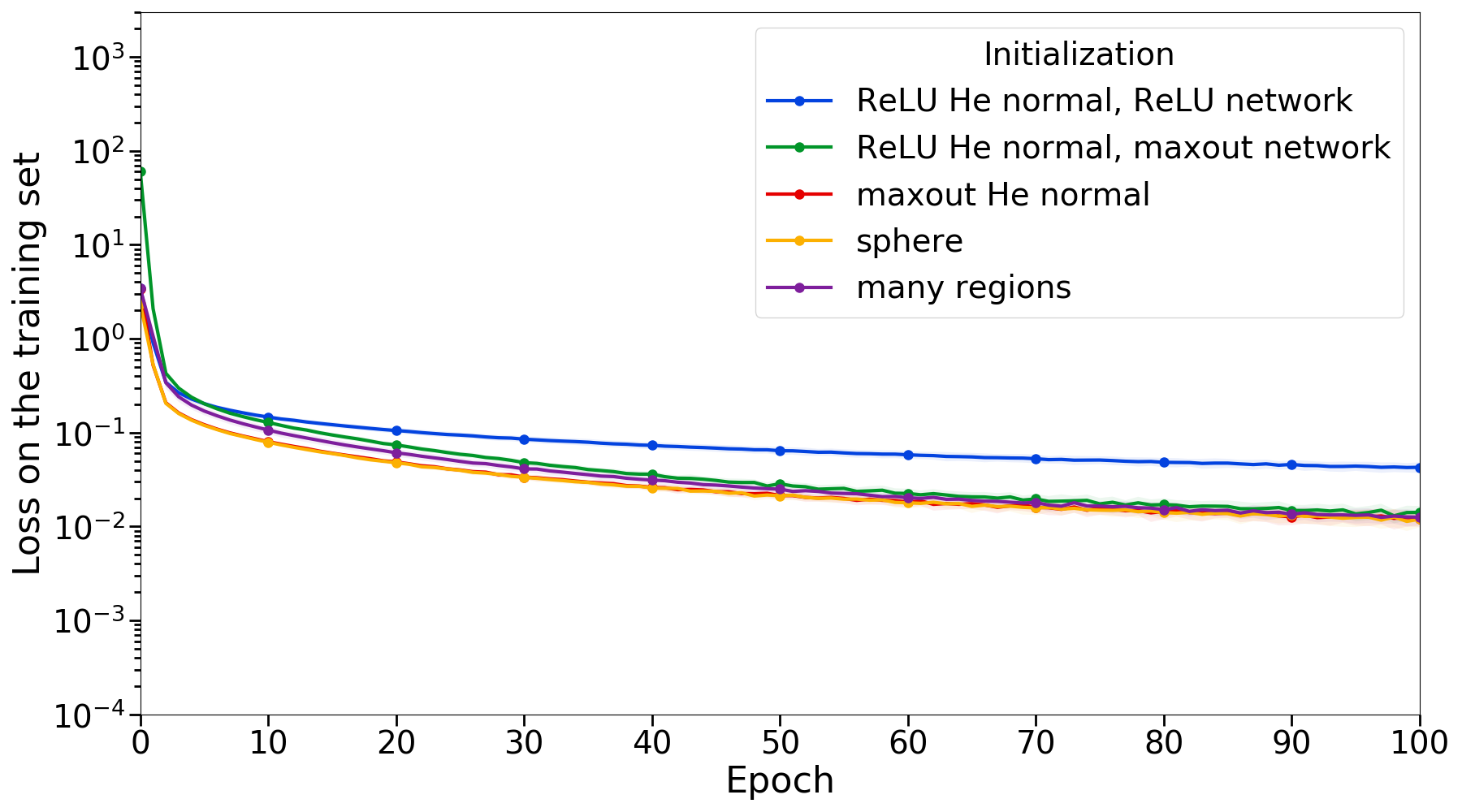} &
                \includegraphics[width=0.24\textwidth]{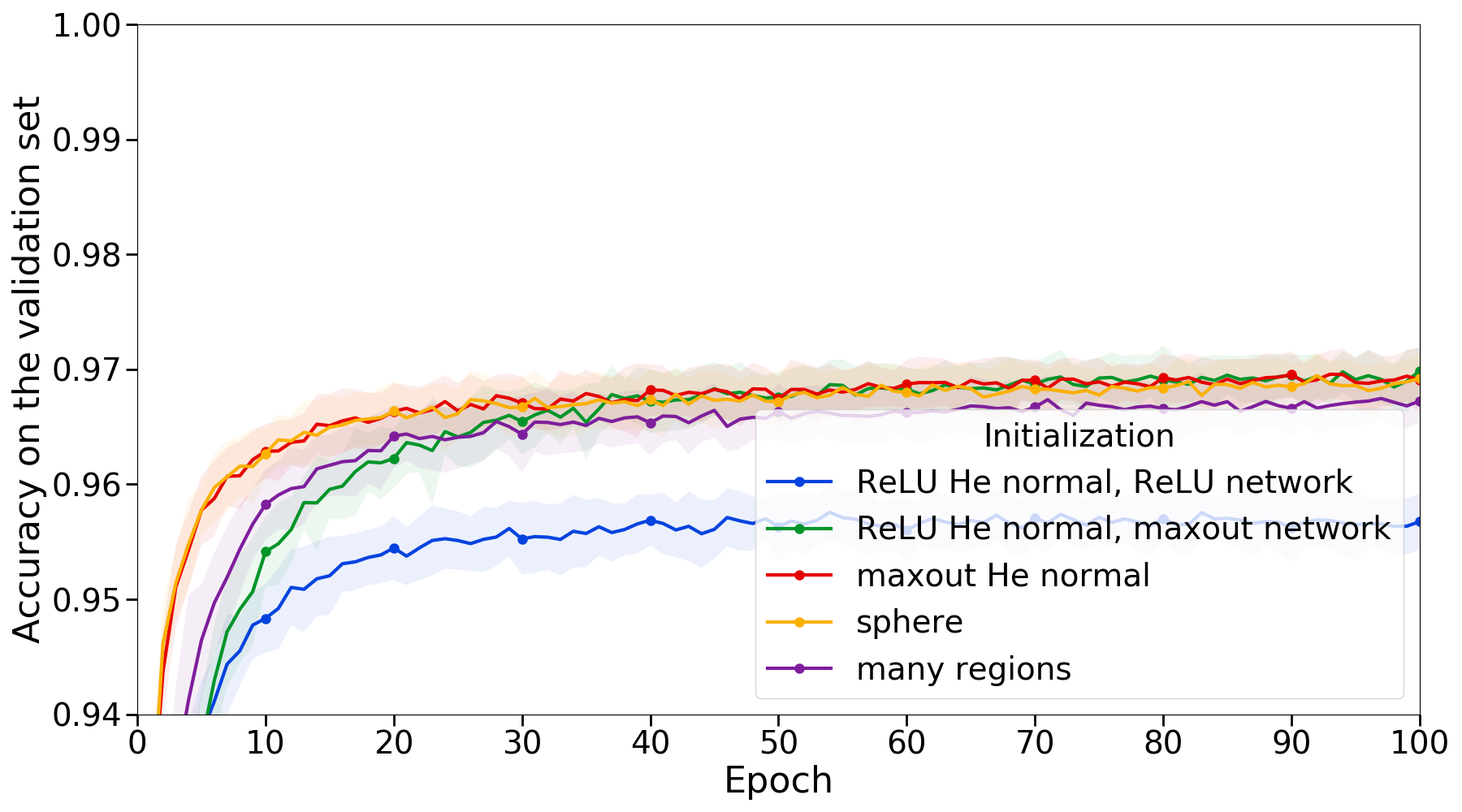}
            \end{tabular} &
            
            \begin{tabular}{cc}
                \centering
                \includegraphics[width=0.24\textwidth]{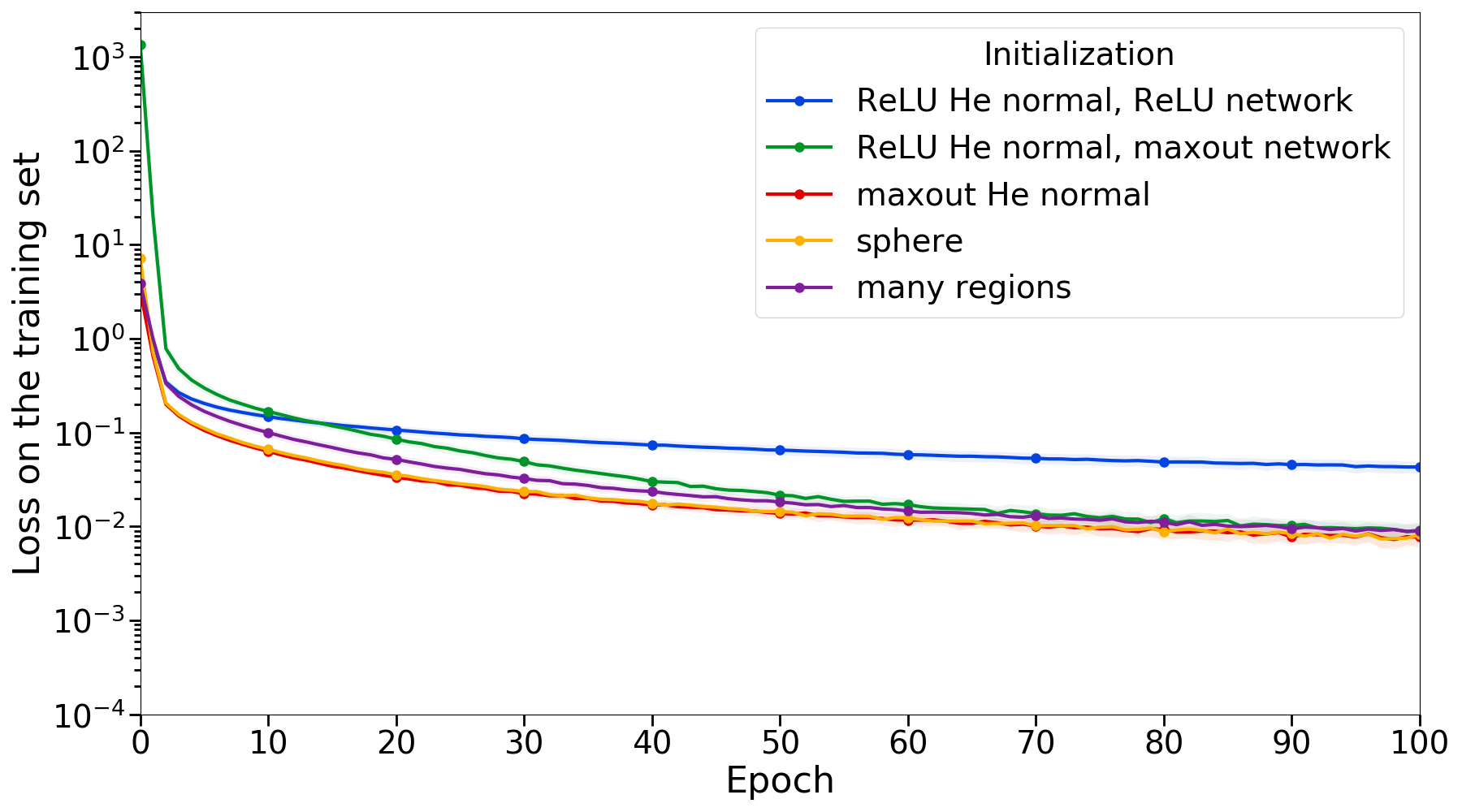} &
                \includegraphics[width=0.24\textwidth]{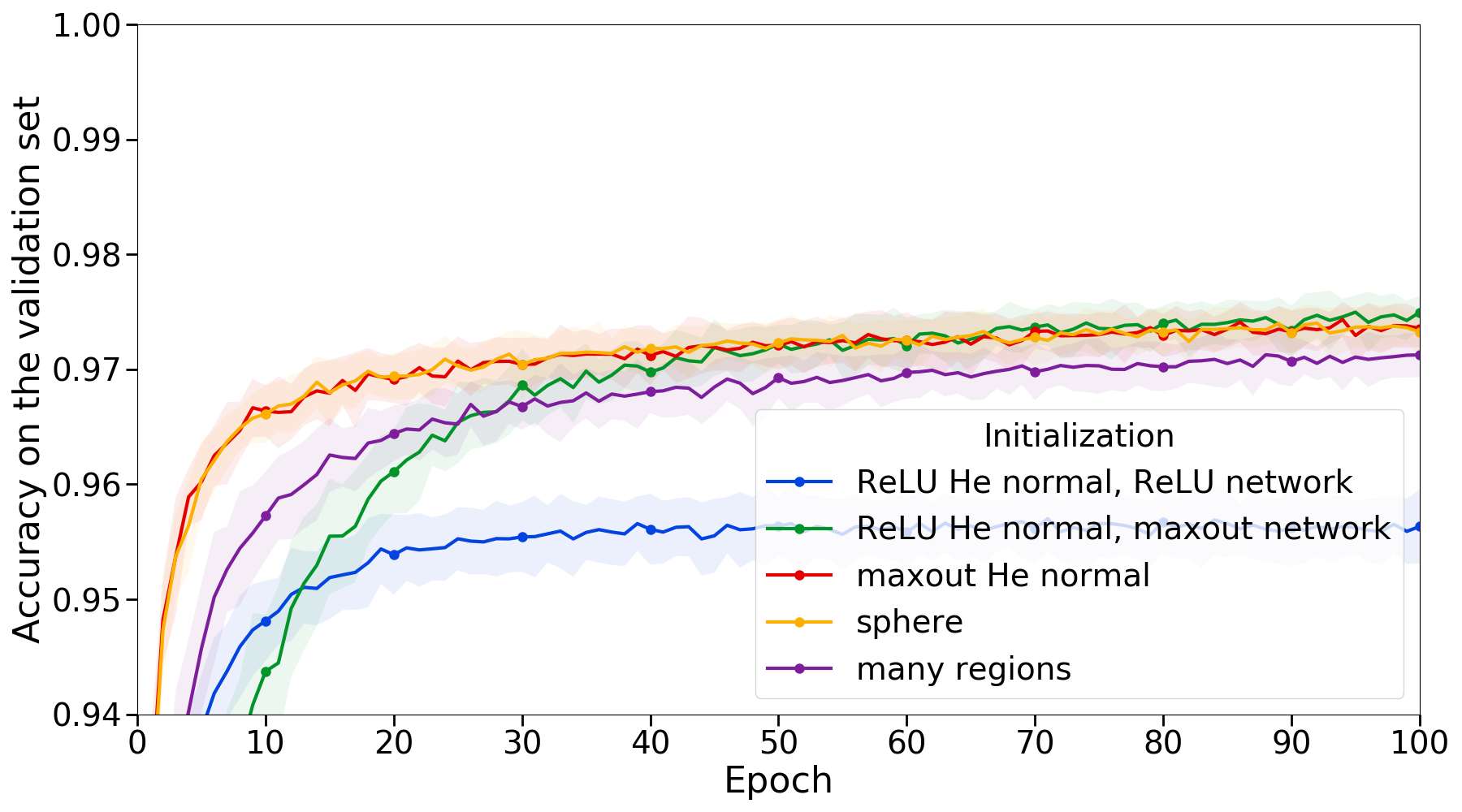}
            \end{tabular}
        \end{tabular}
    \end{subfigure}
    
    \caption{Effect of the initialization on the convergence speed during training on the MNIST dataset of networks with $200$ units depending on the network depth and the maxout rank. Maxout-He, sphere, and many regions initializations behave similarly, and the improvement in the convergence speed becomes more noticeable for larger network depth and maxout rank.}
    \label{fig:training_dif_params} 
\end{figure}

\begin{figure}
    
    \begin{subfigure}{\textwidth}
        \centering
        \setlength\tabcolsep{1pt}
        \begin{tabular}{cccccc}
            \centering
            \small{Before training} &
            \small{$20$ epochs} &
            \small{$40$ epochs} &
            \small{$60$ epochs} &
            \small{$80$ epochs} &
            \small{$100$ epochs} \\
            
            \includegraphics[width=0.16\textwidth]{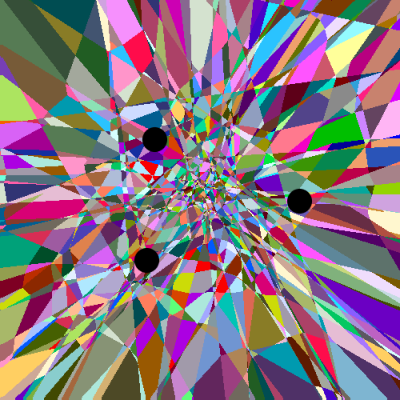} &
            \includegraphics[width=0.16\textwidth]{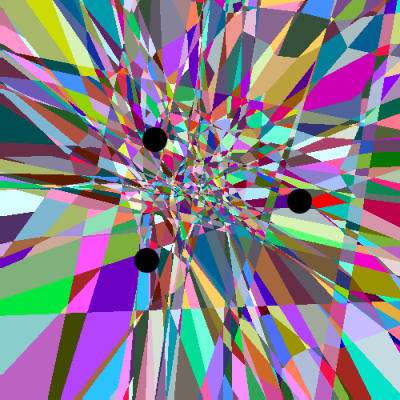} &
            \includegraphics[width=0.16\textwidth]{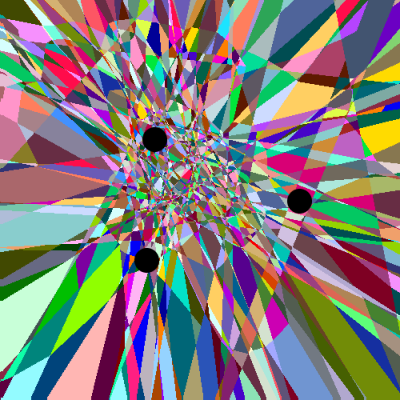} &
            \includegraphics[width=0.16\textwidth]{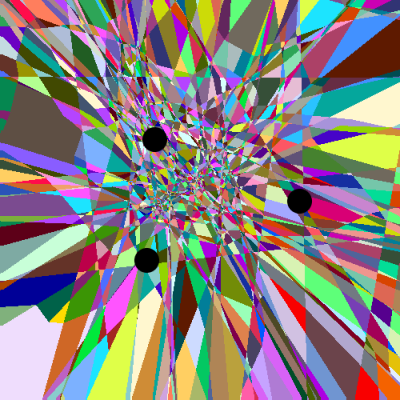} &
            \includegraphics[width=0.16\textwidth]{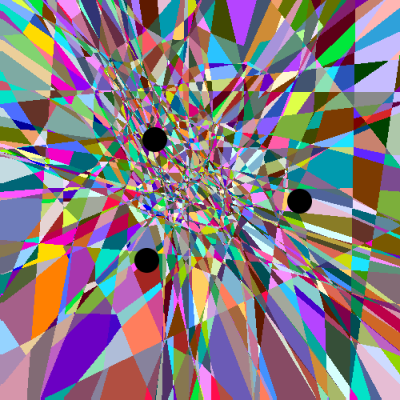} &
            \includegraphics[width=0.16\textwidth]{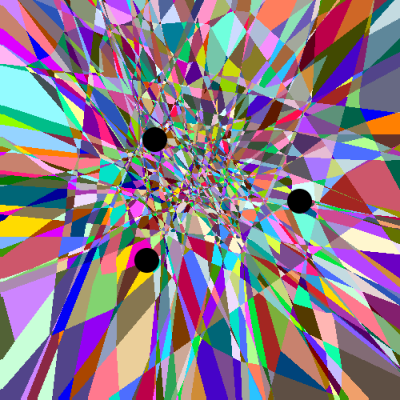}
        \end{tabular}
        \caption{\small Linear regions.}
    \end{subfigure}
    \vspace{.2cm}
    
    \begin{subfigure}{\textwidth}
        \centering
        \setlength\tabcolsep{1pt}
        \begin{tabular}{cccccc}
            \centering
            \small{Before training} &
            \small{$20$ epochs} &
            \small{$40$ epochs} &
            \small{$60$ epochs} &
            \small{$80$ epochs} &
            \small{$100$ epochs} \\
            
            \includegraphics[width=0.16\textwidth]{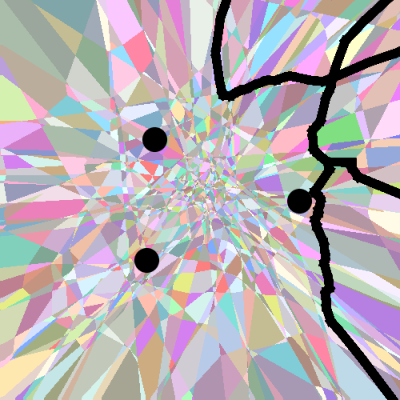} &
            \includegraphics[width=0.16\textwidth]{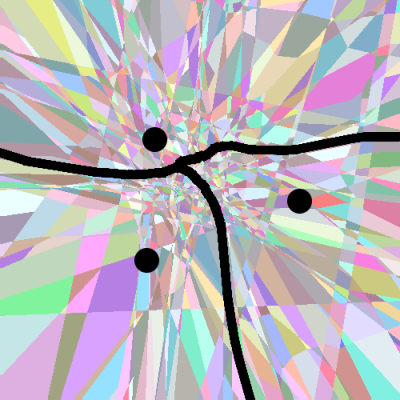} &
            \includegraphics[width=0.16\textwidth]{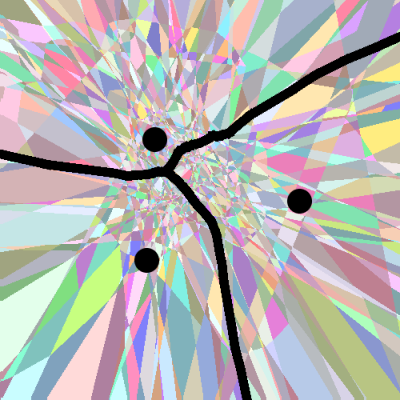} &
            \includegraphics[width=0.16\textwidth]{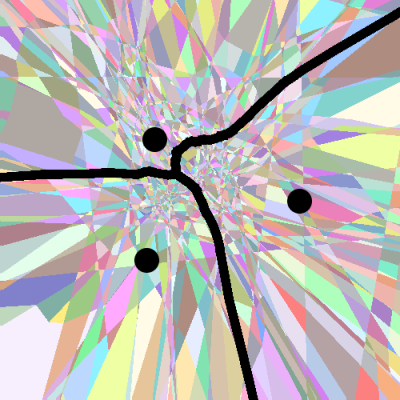} &
            \includegraphics[width=0.16\textwidth]{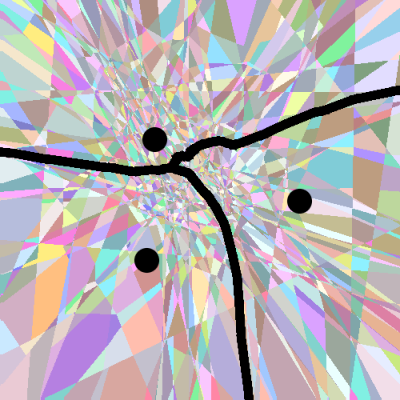} &
            \includegraphics[width=0.16\textwidth]{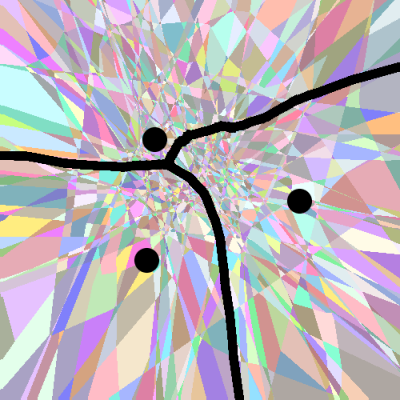}
        \end{tabular}
        \caption{\small Decision boundary.}
    \end{subfigure}
    
    \caption{Evolution of the linear regions and the decision boundary during training on MNIST in a 2D slice determined by three random input points from the dataset. The network had $3$ layers, a total of 100 maxout units of rank $K = 2$, and was initialized with the maxout-He initialization.}
    \label{fig:db_regions_evolution} 
\end{figure}

\begin{figure}

    \begin{subfigure}{\textwidth}
        \setlength\tabcolsep{2pt}
        \begin{tabular}{ccc}
            \centering
            \small{Linear regions} &
            \small{Decision boundary} &
            \small{Loss}  \\
            
            \includegraphics[width=0.32\textwidth]{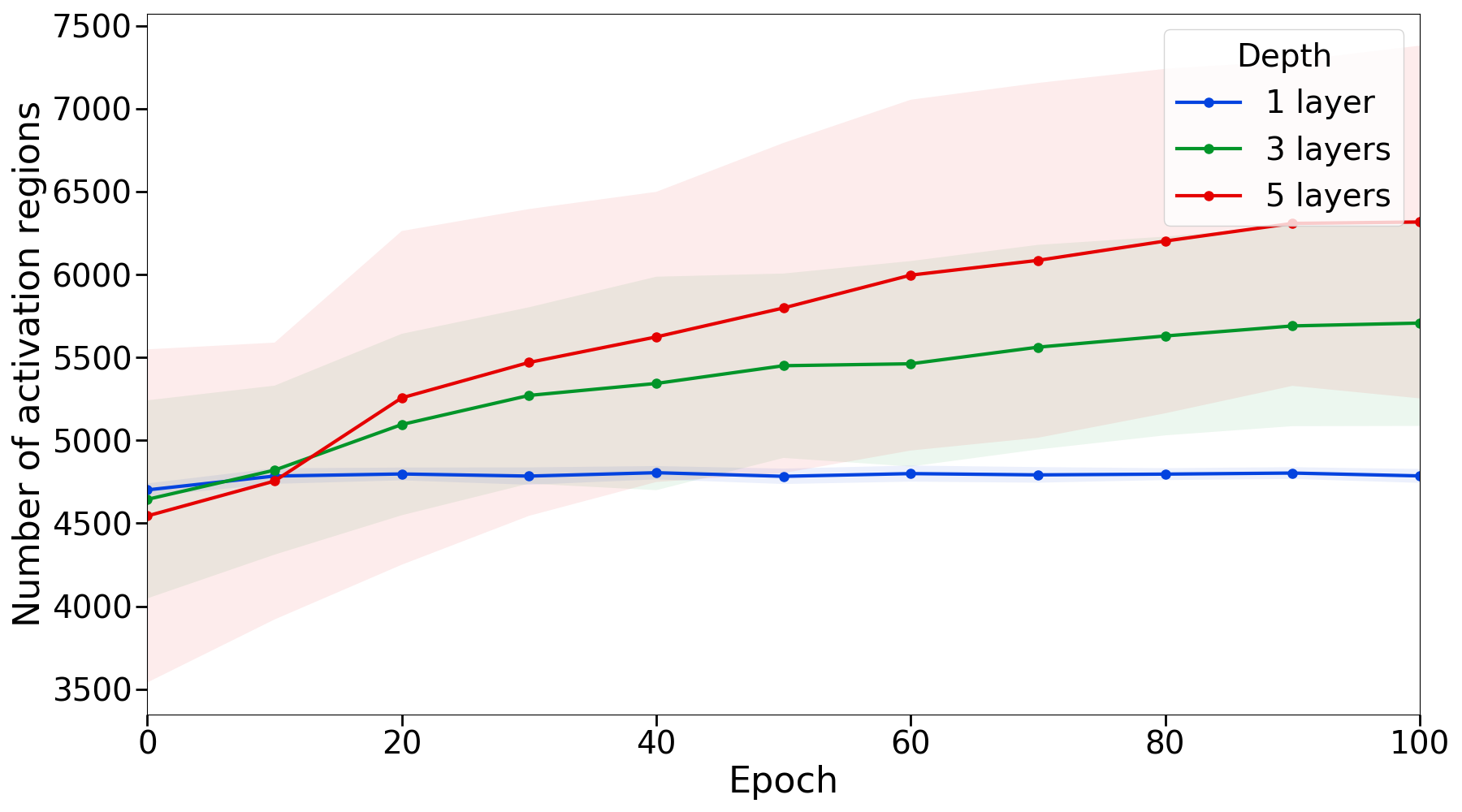} &
            \includegraphics[width=0.32\textwidth]{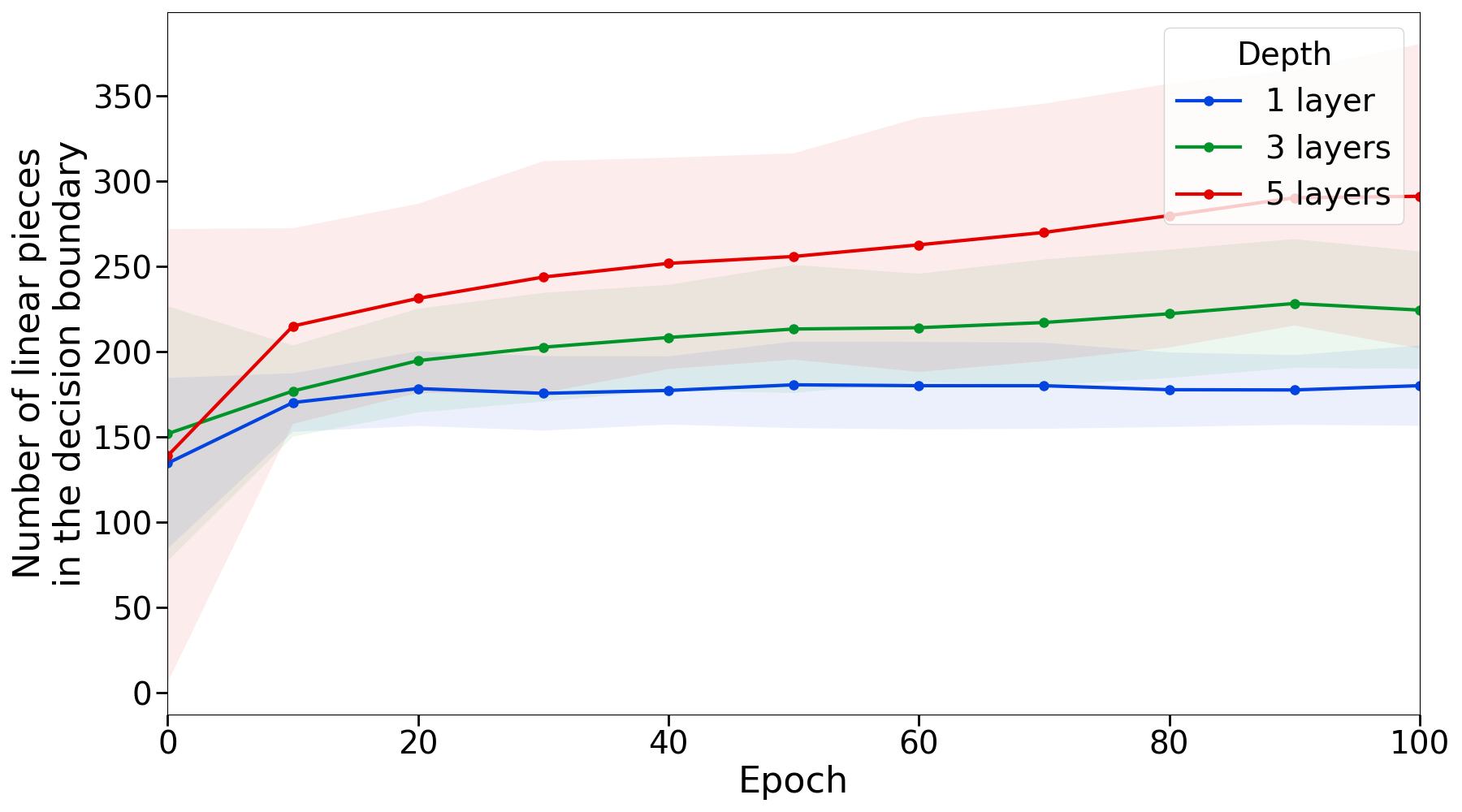} &
            \includegraphics[width=0.32\textwidth]{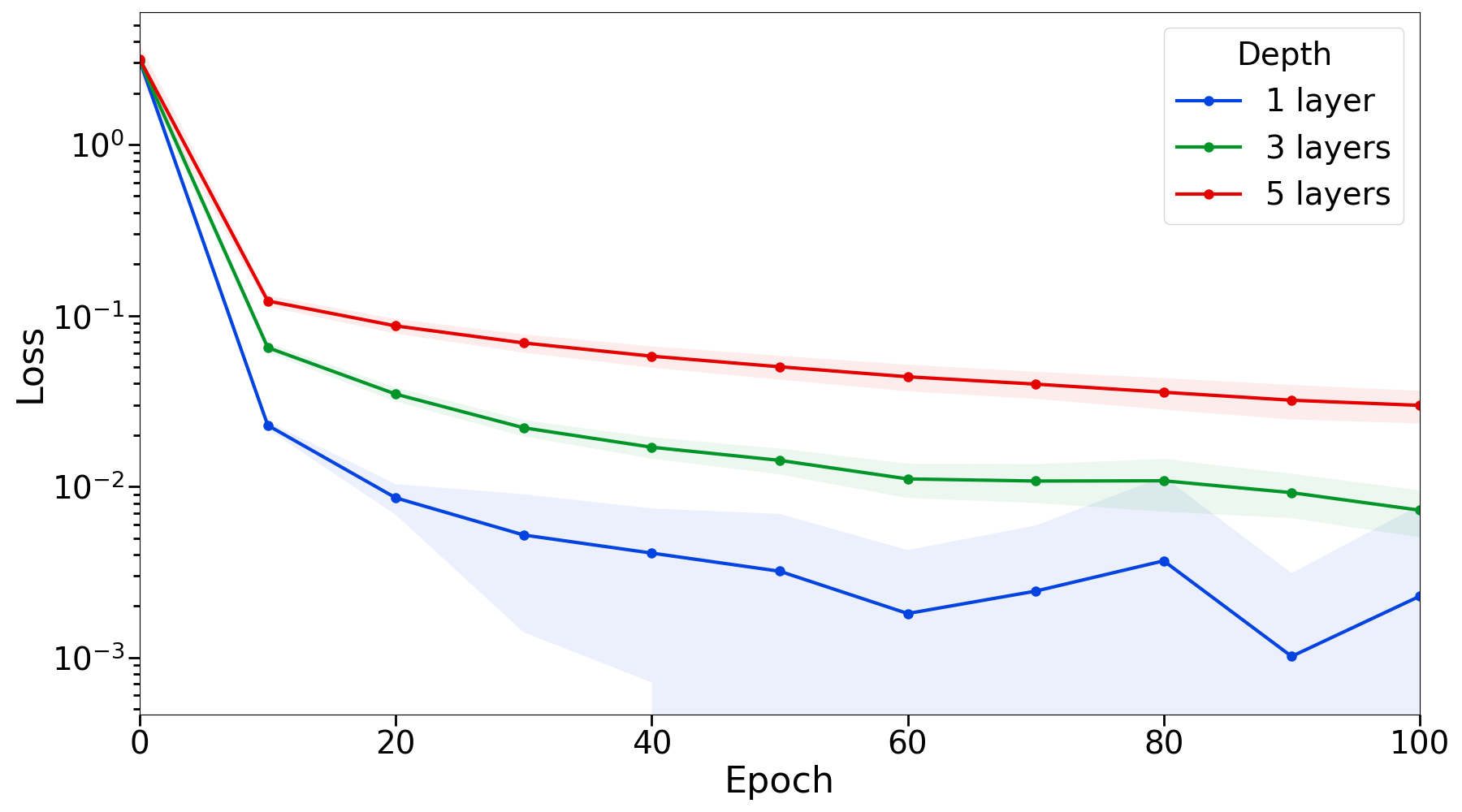}
        \end{tabular}
        \caption{\small ReLU network with the ReLU-He normal initialization.}
    \end{subfigure}
    \vspace{.2cm}

    \begin{subfigure}{\textwidth}
        \setlength\tabcolsep{2pt}
        \begin{tabular}{ccc}
            \centering
            \includegraphics[width=0.32\textwidth]{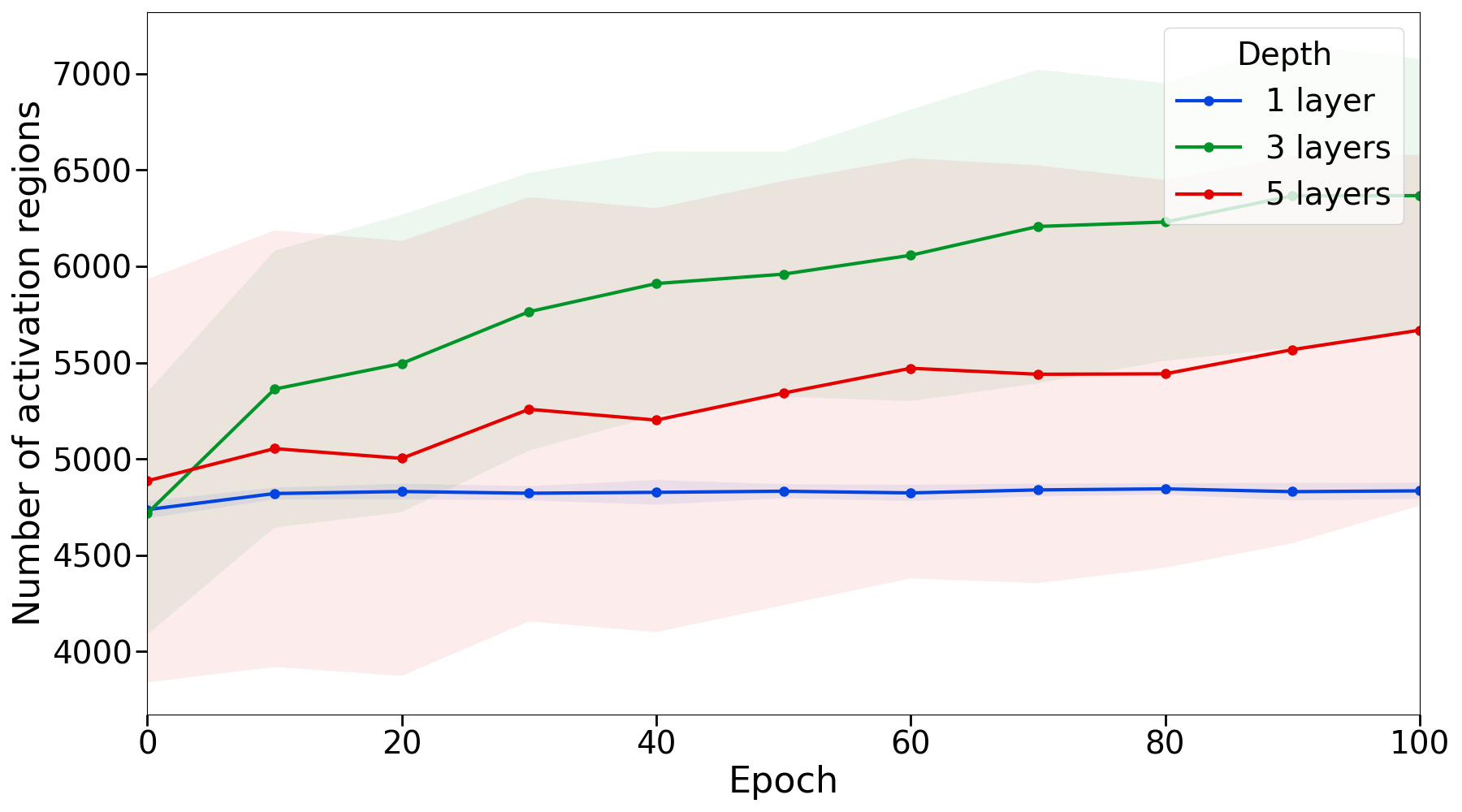} &
            \includegraphics[width=0.32\textwidth]{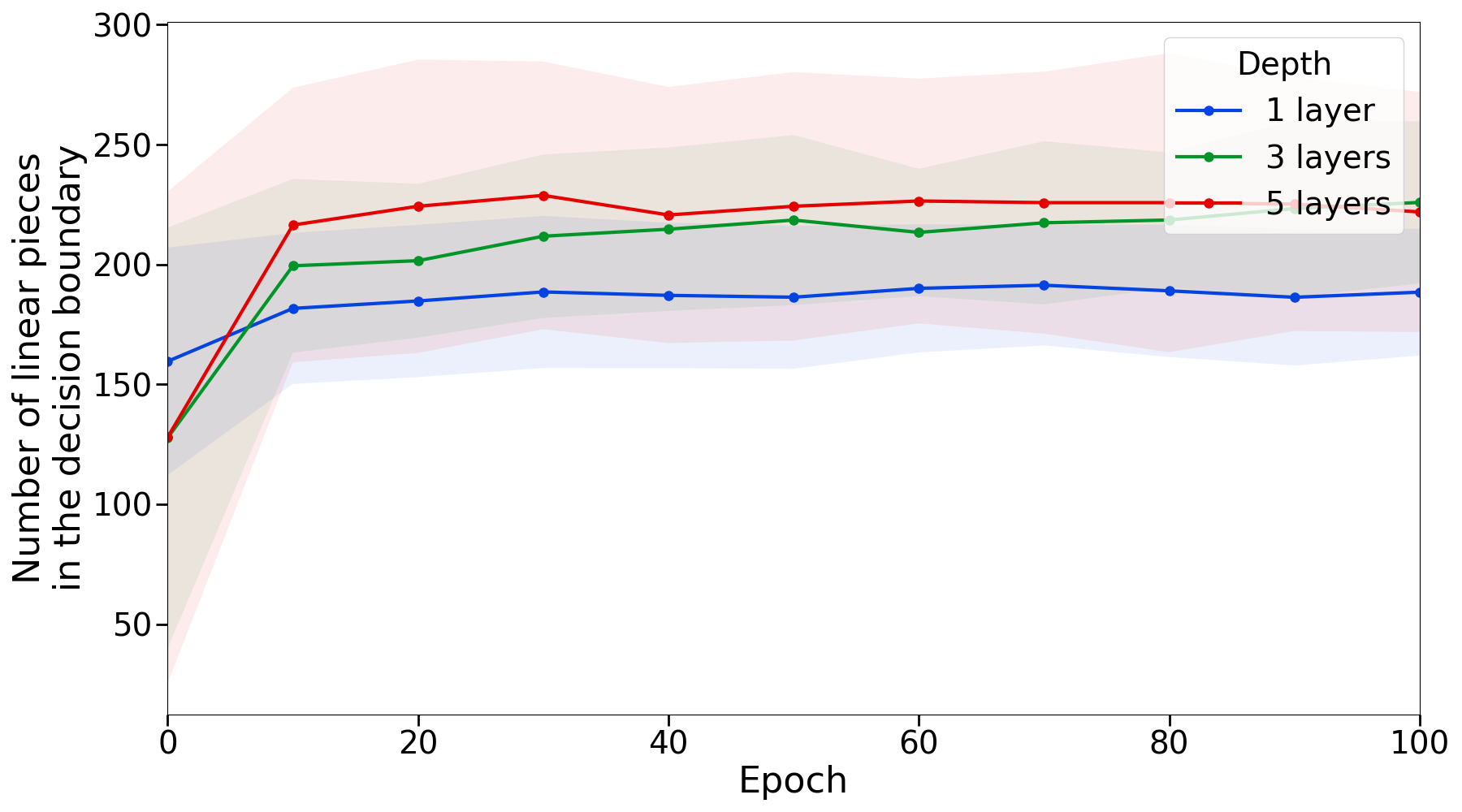} &
            \includegraphics[width=0.32\textwidth]{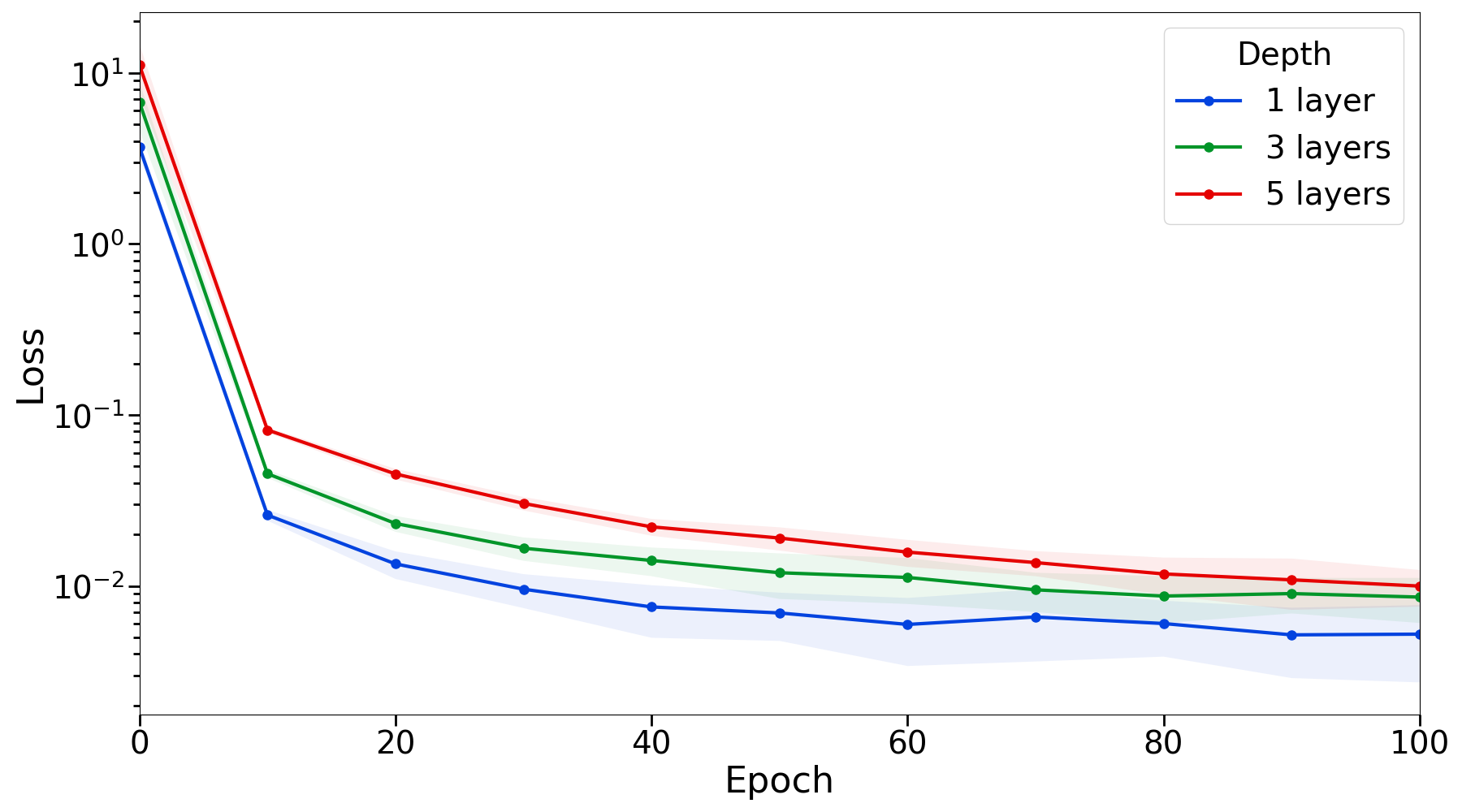}
        \end{tabular}
        \caption{\small Maxout network with the ReLU-He normal initialization.}
    \end{subfigure}
    \vspace{.2cm}

    \begin{subfigure}{\textwidth}
        \setlength\tabcolsep{2pt}
        \begin{tabular}{ccc}
            \centering
            \includegraphics[width=0.32\textwidth]{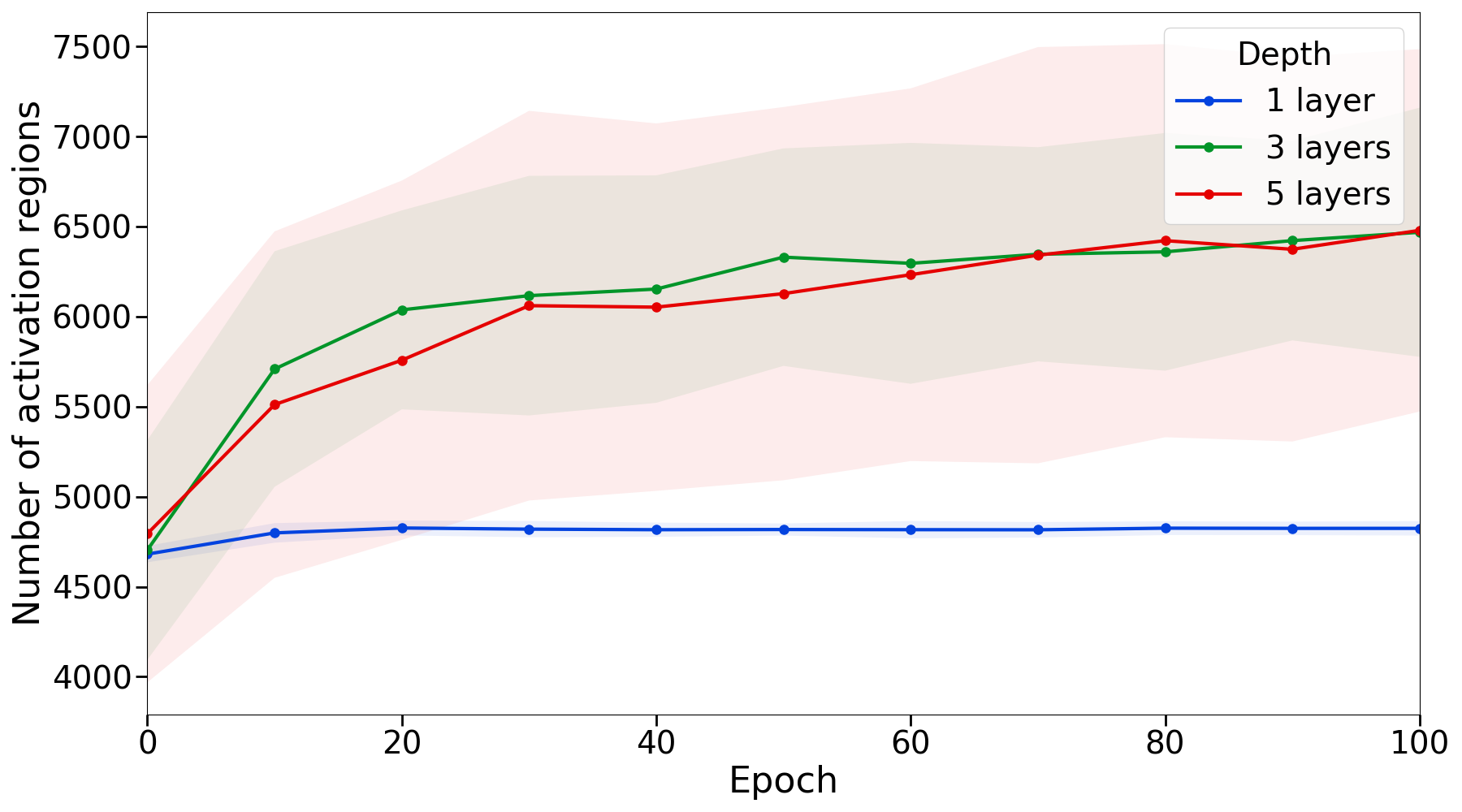} &
            \includegraphics[width=0.32\textwidth]{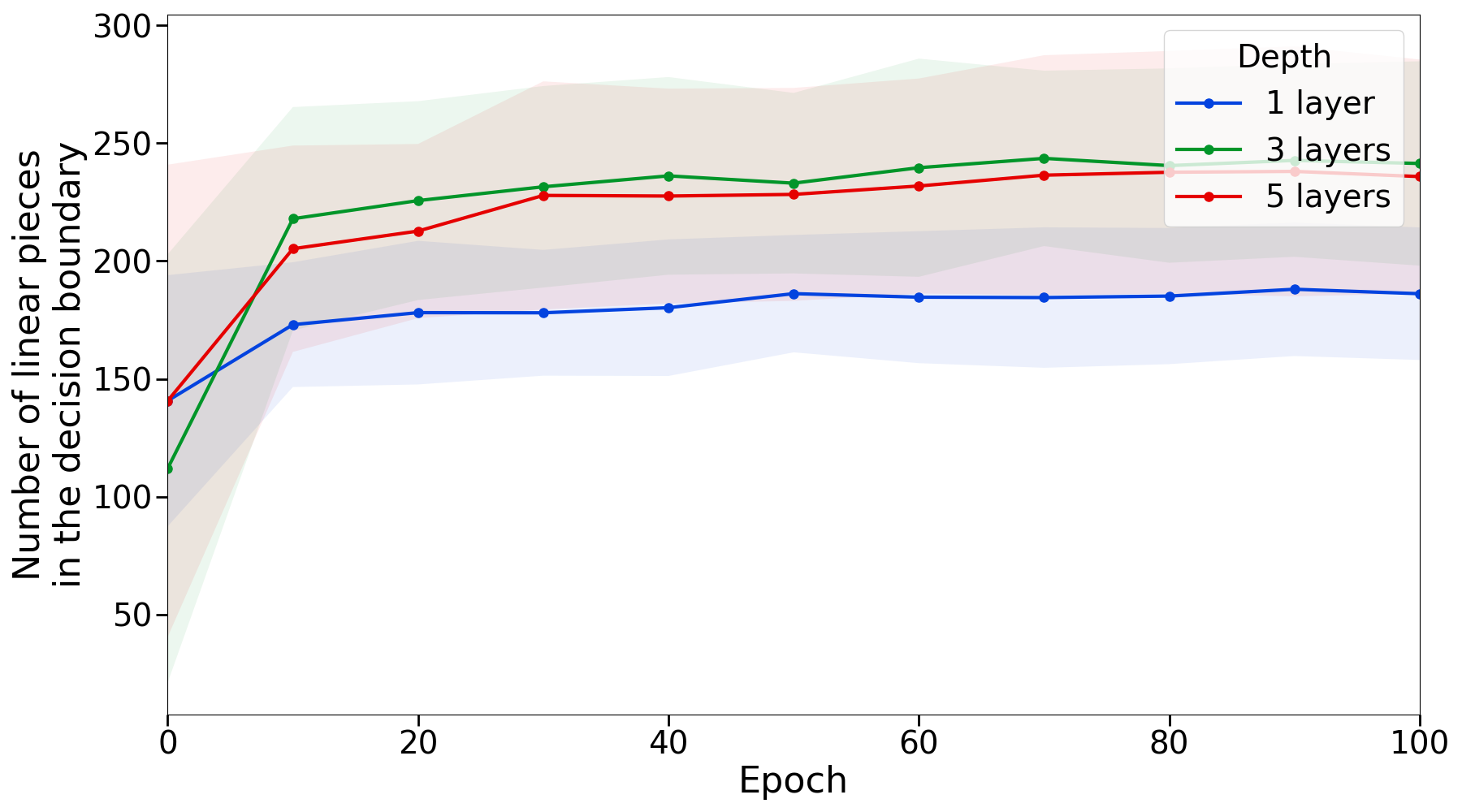} &
            \includegraphics[width=0.32\textwidth]{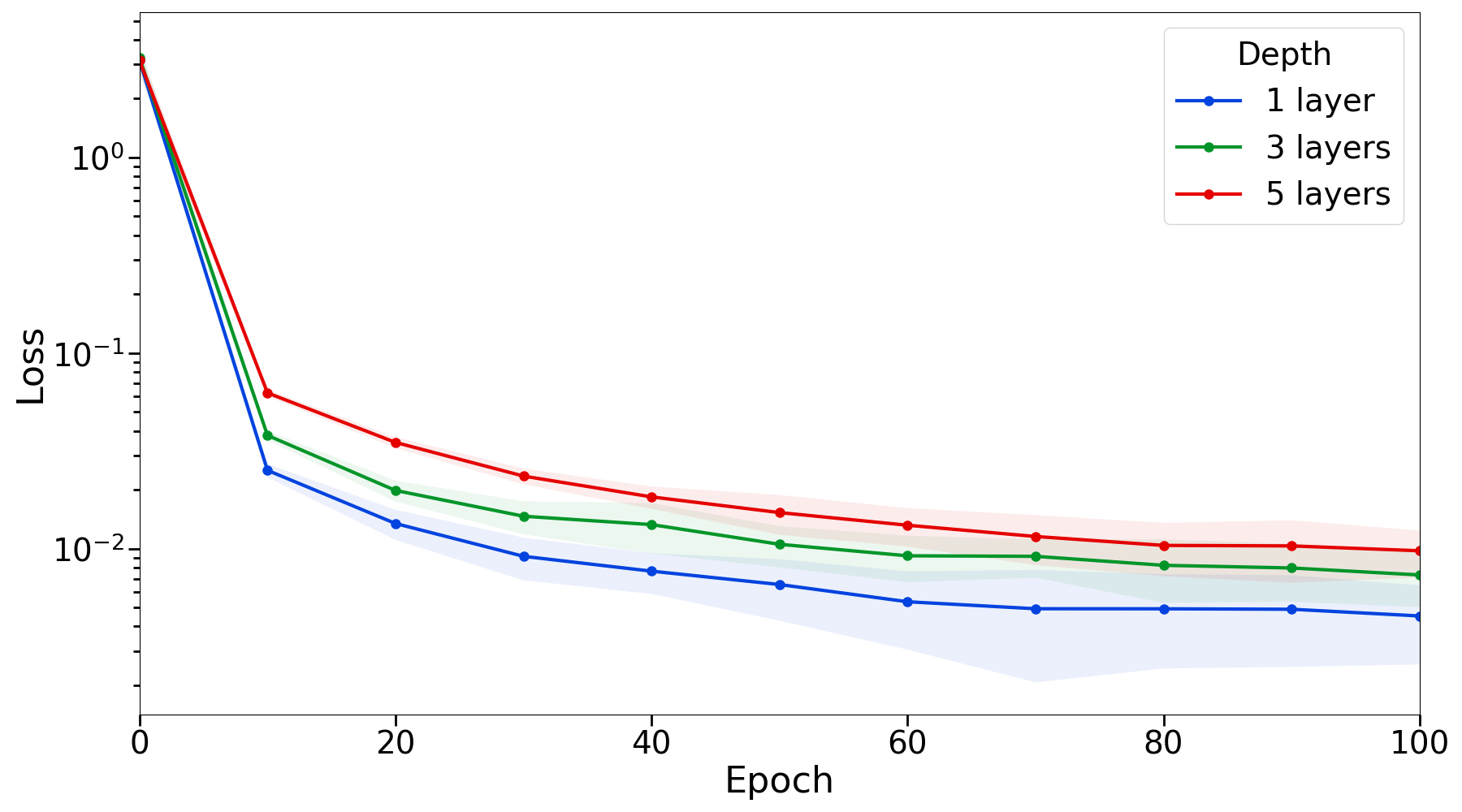}
        \end{tabular}
        \caption{\small Maxout network with the maxout-He normal initialization.}
    \end{subfigure}
    \vspace{.2cm}

    \begin{subfigure}{\textwidth}
        \setlength\tabcolsep{2pt}
        \begin{tabular}{ccc}
            \centering
            \includegraphics[width=0.32\textwidth]{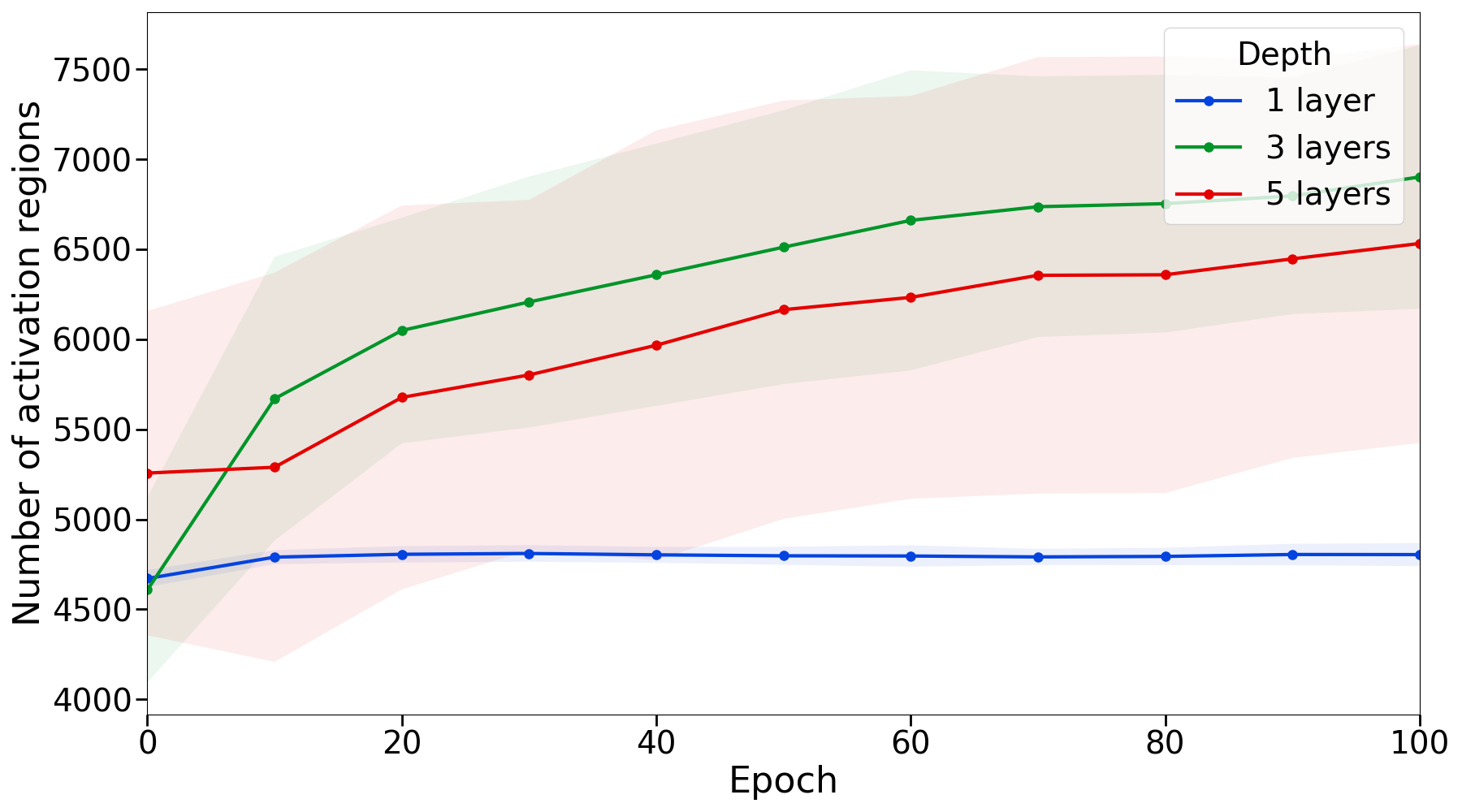} &
            \includegraphics[width=0.32\textwidth]{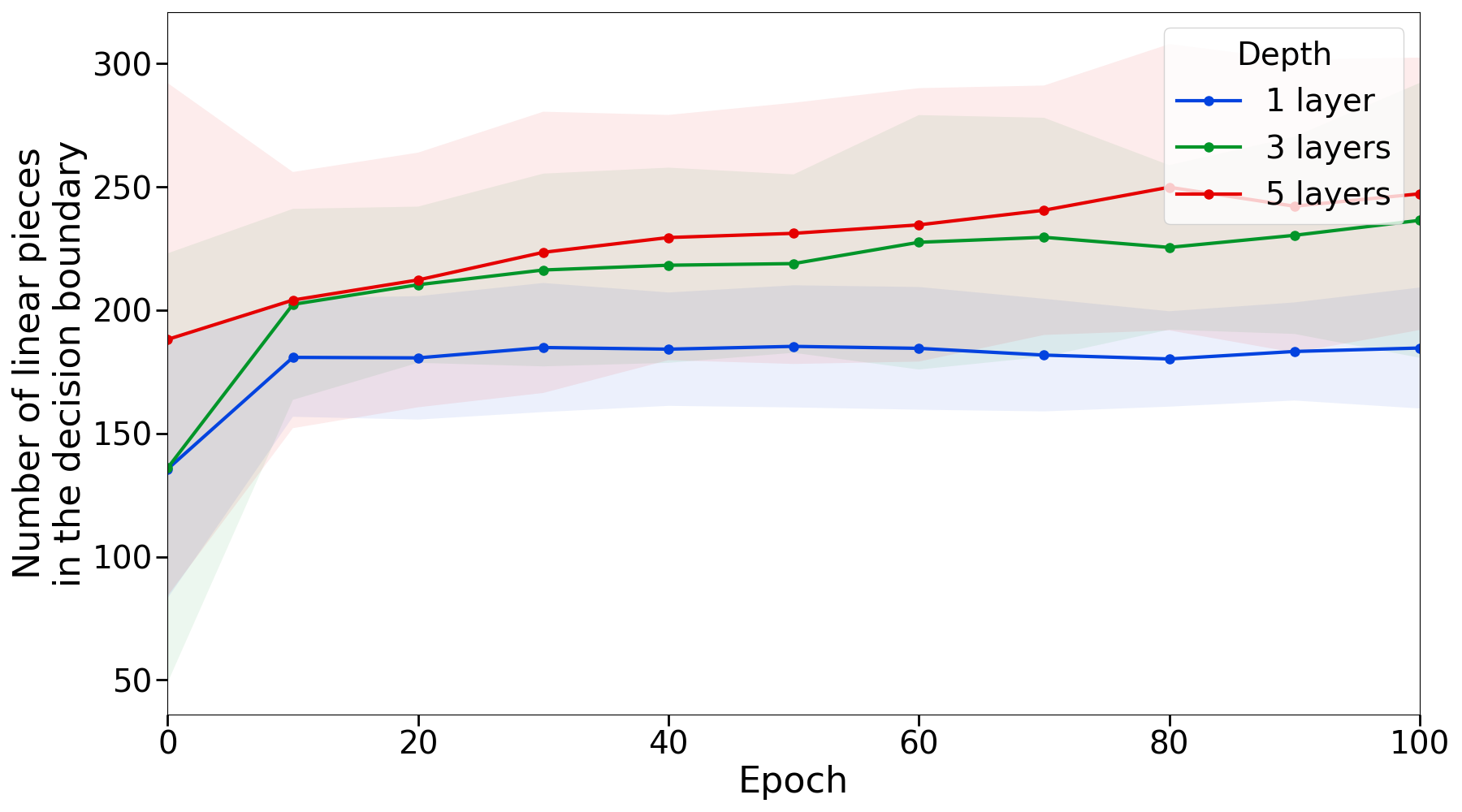} &
            \includegraphics[width=0.32\textwidth]{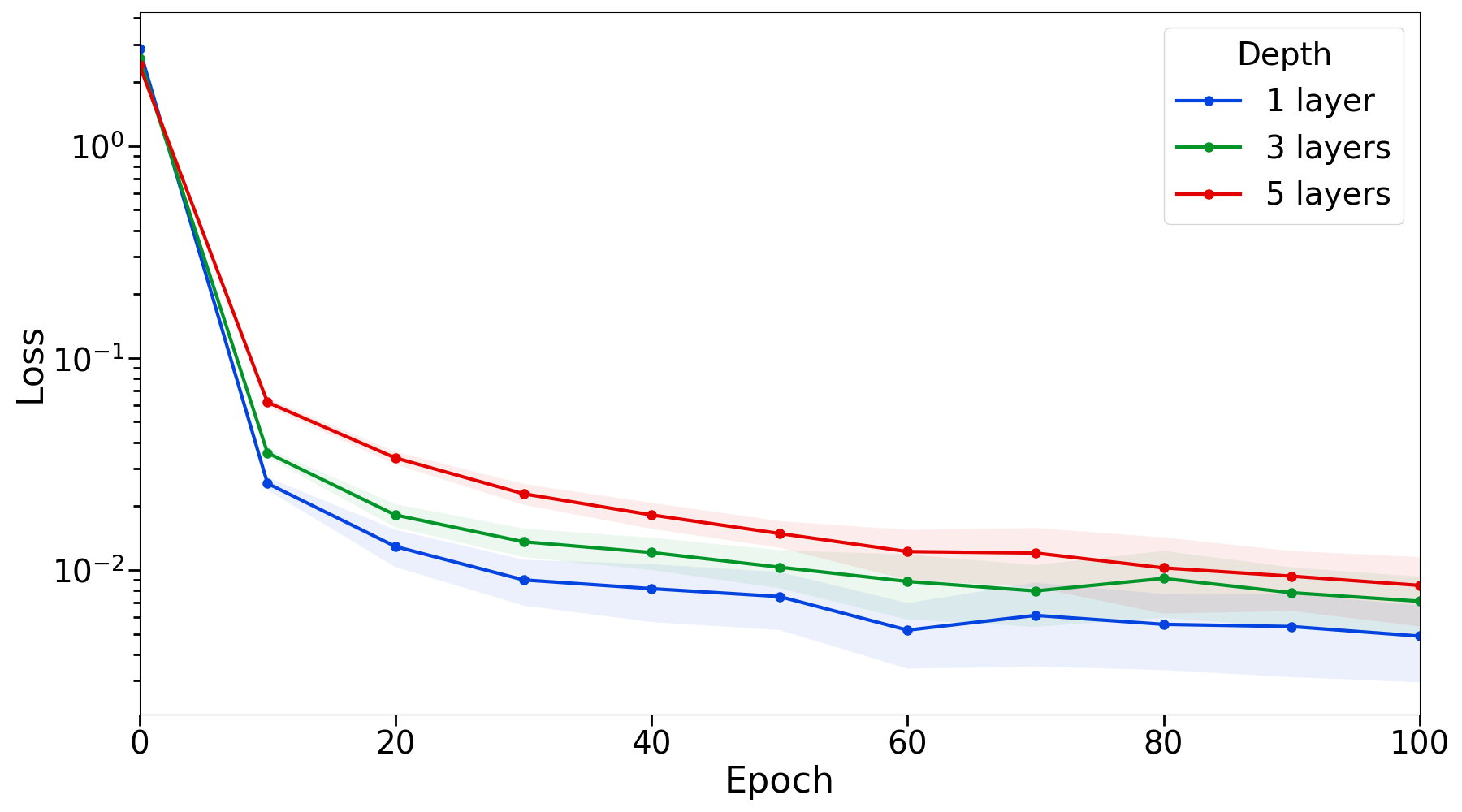}
        \end{tabular}
        \caption{\small Maxout network with the sphere initialization.}
    \end{subfigure}
    \vspace{.2cm}

    \begin{subfigure}{\textwidth}
        \setlength\tabcolsep{2pt}
        \begin{tabular}{ccc}
            \centering
            \includegraphics[width=0.32\textwidth]{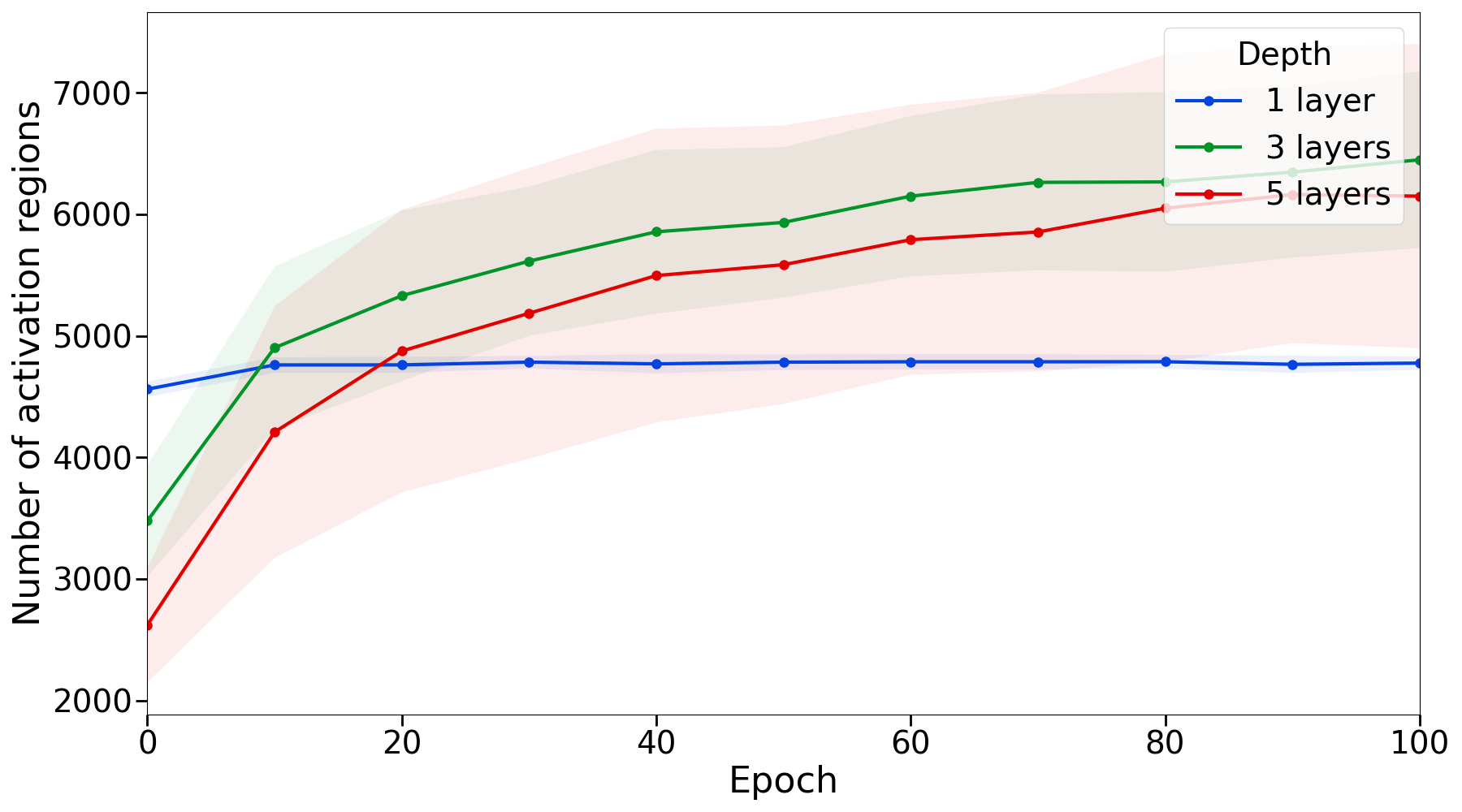} & 
            \includegraphics[width=0.32\textwidth]{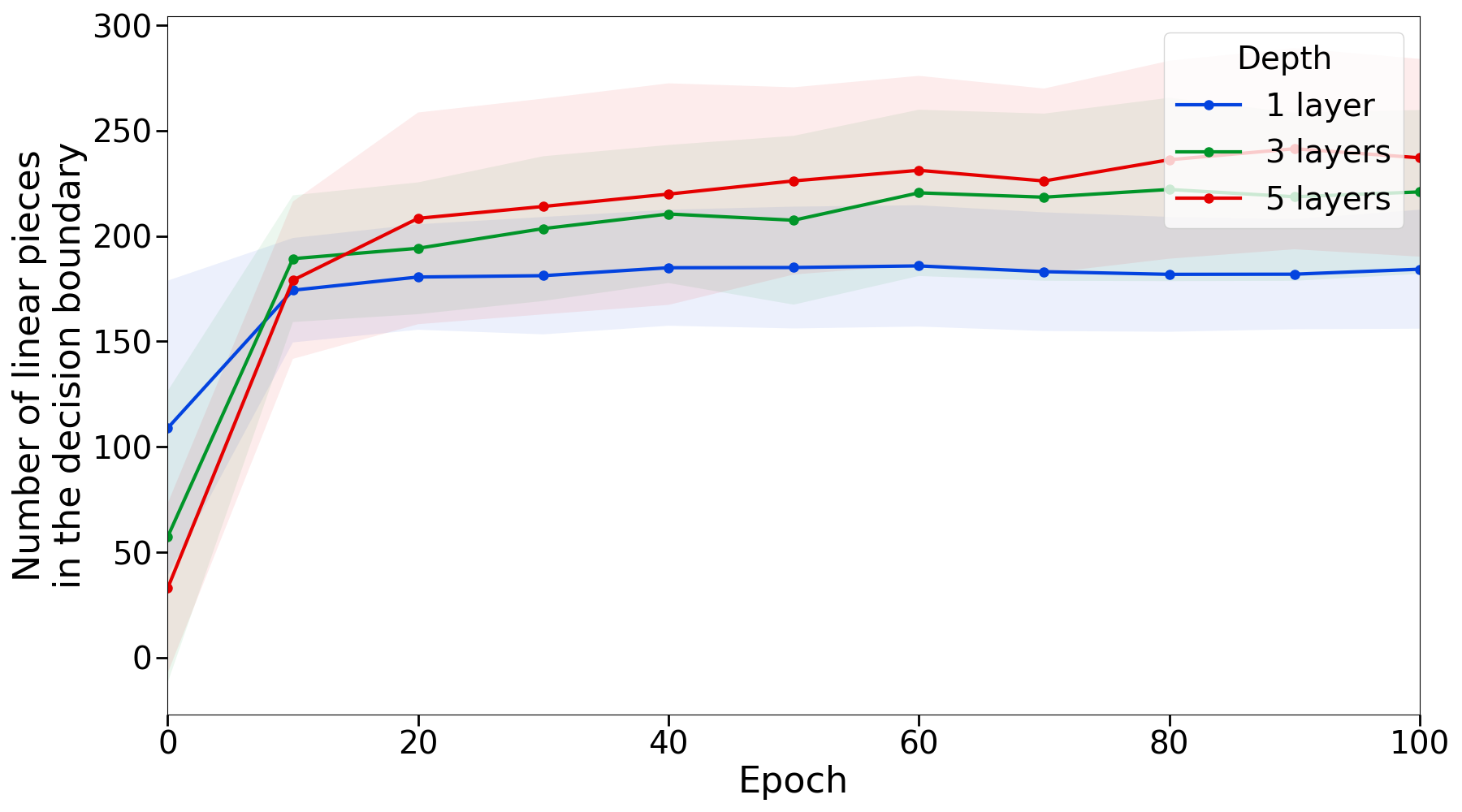} &
            \includegraphics[width=0.32\textwidth]{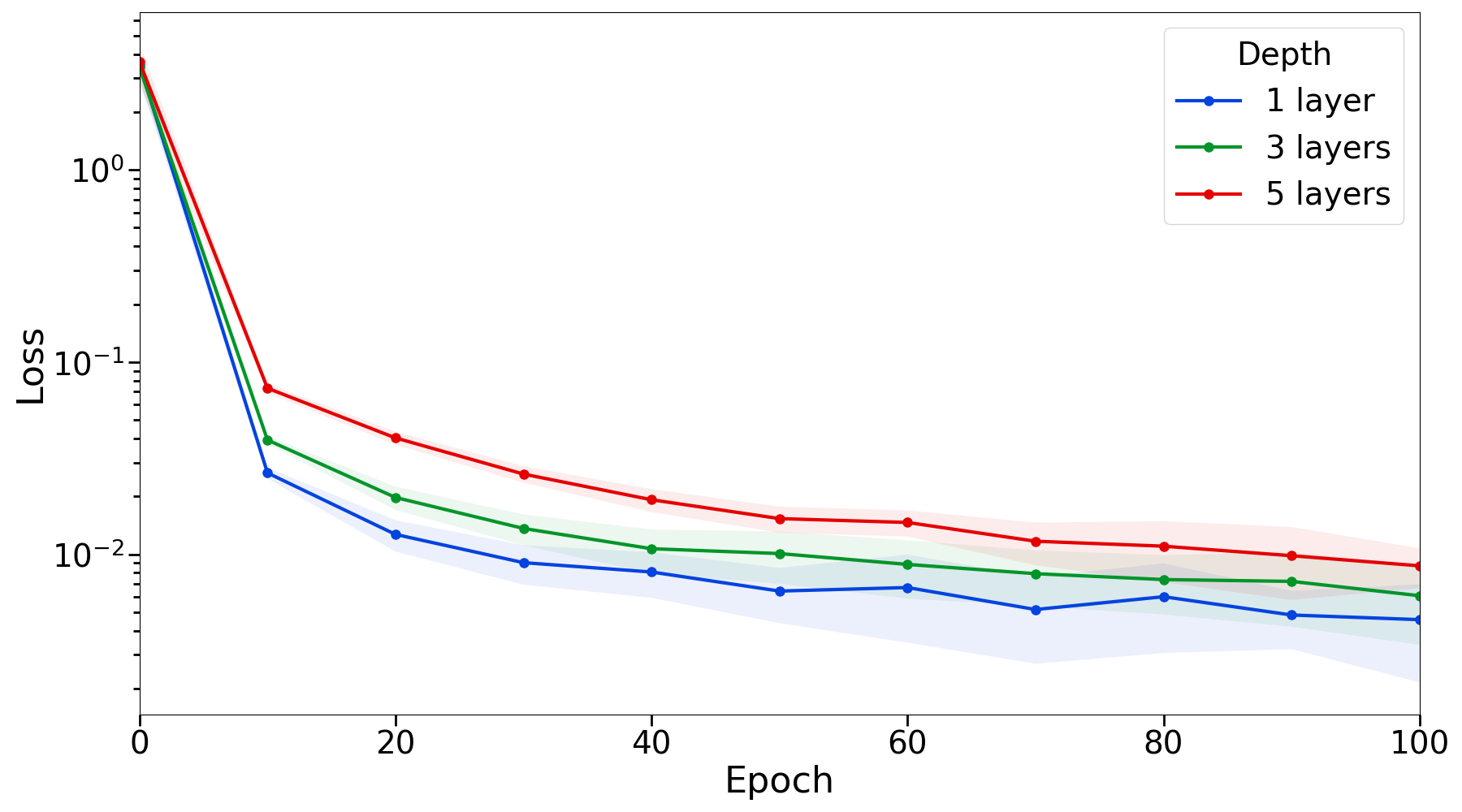}
        \end{tabular}
        \caption{\small Maxout network with the many regions initialization.}
    \end{subfigure}
    
    \caption{Change in the number of linear regions and the decision boundary pieces during $100$ training epochs given different initializations. Networks had $100$ neurons and for maxout networks $K=2$. Both the number of linear regions and linear pieces of the decision boundary increases during training for all initializations but remain much smaller than the theoretical maximum. The settings were the same as in Figure \ref{fig:training}.}
    \label{fig:init_in_training} 
\end{figure}

\end{document}